\definecolor{Gray}{gray}{0.85}
\definecolor{yxc}{RGB}{255,0,0}
\definecolor{yjc}{RGB}{125,0,0}
\definecolor{ytw}{RGB}{255,69,0}
\definecolor{gen}{RGB}{0,0,200}
\definecolor{lxs}{RGB}{138,43,226}
\definecolor{own_pink}{RGB}{217,25,169}
\definecolor{own_blue}{RGB}{0,100,223}
\newcommand{\Datab}{\mathcal{D}_{\mu}}
\newcommand{\pib}{\mu}
\DeclareMathOperator{\ind}{\mathds{1}}  
\newcommand{\defn}{\coloneqq}
\newcommand{\Var}{\mathsf{Var}}
\newcommand{\cb}{c_{\mathrm{b}}}
\newcommand{\cA}{\mathcal{A}}
\newcommand{\cJ}{\mathcal{J}}
\newcommand{\cS}{{\mathcal{S}}}
\newcommand{\mymid}{\,|\,} 
\newcommand{\LCBQ}{{\sf LCB-Q}\xspace}
\newcommand{\VILCB}{{\sf VI-LCB}\xspace}
\newcommand{\LCB}{{\sf LCB}\xspace}
\newcommand{\LCBQR}{{\sf LCB-Q-Advantage}\xspace}
\newcommand{\sumb}{B}
\newcommand{\re}{\mathsf{ref}}
\newcommand{\adv}{\mathsf{adv}}
\newcommand{\refmu}{\mu^{\mathsf{ref}}} 
\newcommand{\refsg}{\sigma^{\mathsf{ref}}}
\newcommand{\advmu}{\mu^{\mathsf{adv}}}
\newcommand{\advsg}{\sigma^{\mathsf{adv}}}
\newcommand{\nextb}{B^{\mathsf{next}}}
\newcommand{\nnext}{\mathsf{next}}
\newcommand{\epo}{\mathsf{epo}}
\newcommand\reallywidehat[1]{%
\savestack{\tmpbox}{\stretchto{%
  \scaleto{%
    \scalerel*[\widthof{\ensuremath{#1}}]{\kern-.6pt\bigwedge\kern-.6pt}%
    {\rule[-\textheight/2]{1ex}{\textheight}}
  }{\textheight}%
}{0.5ex}}%
\stackon[1pt]{#1}{\tmpbox}%
}
\newcommand\reallywidecheck[1]{%
\savestack{\tmpbox}{\stretchto{%
  \scaleto{
    \scalerel*[\widthof{\ensuremath{#1}}]{\kern-.6pt\bigwedge\kern-.6pt}%
    {\rule[-\textheight/2]{1ex}{\textheight}}
  }{\textheight}%
}{0.5ex}}%
\stackon[1pt]{#1}{\scalebox{-1}{\tmpbox}}%
}
\title{Pessimistic Q-Learning for Offline Reinforcement Learning: Towards Optimal Sample Complexity }
\author{
	Laixi Shi\thanks{Department of Electrical and Computer Engineering, Carnegie Mellon University, Pittsburgh, PA 15213, USA.}\\
	CMU\\
	\and
	Gen Li\thanks{Department of Statistics and Data Science, The Wharton School, University of Pennsylvania, Philadelphia, PA 19104, USA.}    \\
	UPenn
	\and
	Yuting Wei\footnotemark[2] \\
	UPenn   \\
	\and
	Yuxin Chen\footnotemark[2] \\
	UPenn\\
	\and
	Yuejie Chi\footnotemark[1] \\
 	CMU  
	}
\date{\today}
\begin{document}

\theoremstyle{plain} \newtheorem{lemma}{\textbf{Lemma}}
\newtheorem{proposition}{\textbf{Proposition}}\newtheorem{theorem}{\textbf{Theorem}} \newtheorem{assumption}{Assumption}

\theoremstyle{remark}\newtheorem{remark}{\textbf{Remark}}

\maketitle

\begin{abstract}

Offline or batch reinforcement learning seeks to learn a near-optimal policy using history data without active exploration of the environment. To counter the insufficient coverage and sample scarcity of many offline datasets, the principle of pessimism has been recently introduced to mitigate high bias of the estimated values. While pessimistic variants of model-based algorithms (e.g., value iteration with lower confidence bounds) have been theoretically investigated, their model-free counterparts --- which do not require explicit model estimation --- have not been adequately studied, especially in terms of sample efficiency. To address this inadequacy, we study a pessimistic variant of Q-learning in the context of finite-horizon Markov decision processes, and characterize its sample complexity under the single-policy concentrability assumption which does not require the full coverage of the state-action space. In addition, a variance-reduced pessimistic Q-learning algorithm is proposed to achieve near-optimal sample complexity. Altogether, this work highlights the efficiency of model-free algorithms in offline RL when used in conjunction with pessimism and variance reduction. 
\end{abstract}

\noindent \textbf{Keywords:} offline/batch reinforcement learning, Q-learning, pessimism, variance reduction

\allowdisplaybreaks

\setcounter{tocdepth}{2}
\tableofcontents

\section{Introduction}
  
Reinforcement Learning (RL) has achieved remarkable success in recent years, including matching or surpassing human performance in robotics control and strategy games \citep{silver2017mastering,mnih2015human}. 
Nevertheless, these success stories often come with nearly prohibitive cost, where an astronomical number of samples are required to train the learning algorithm to a satisfactory level. Scaling up and replicating the RL success in many real-world problems face considerable challenges, due to limited access to large-scale simulation data. In applications such as online advertising and clinical trials, real-time data collection could be expensive, time-consuming, or constrained in sample sizes as a result of experimental limitations.

On the other hand, it is worth noting that tons of samples might have already been accumulated and stored --- albeit not necessarily with the desired quality --- during previous data acquisition attempts. It is therefore natural to wonder whether such history data can be leveraged to improve performance in future deployments. 
In reality, the history data was often obtained by executing some (possibly unknown) behavior policy, 
which is typically not the desired policy. 
This gives rise to the problem of offline RL or batch RL \citep{lange2012batch,levine2020offline},\footnote{Throughout this paper, we will be using the term offline RL (resp. dataset) or batch RL (resp. dataset) interchangeably.} namely, 
how to make the best use of history data to learn an improved or even optimal policy, without further exploring the environment. 
In stark contrast to online RL that relies on active interaction with the environment, 
the performance of offline RL depends critically not only on the quantity, but also the quality of history data (e.g., coverage over the space-action space), 
given that the agent is no longer collecting new samples for the purpose of exploring the unknown environment.

Recently, the principle of pessimism (or conservatism) --- namely, being conservative in Q-function estimation when there are not enough samples --- has been put forward as an effective way to solve offline RL \citep{buckman2020importance,kumar2020conservative}. 
This principle has been implemented in, for instance,  
a model-based offline value iteration algorithm, which modifies classical value iteration \citep{azar2017minimax} by subtracting a penalty term in the estimated Q-values and has been shown to achieve appealing sample efficiency \citep{jin2021pessimism,rashidinejad2021bridging,xie2021policy}. It is noteworthy that the model-based approach is built upon the construction of an empirical transition kernel, and therefore, requires specific representation of the environment (see, e.g.~\cite{agarwal2019optimality,li2020breaking}). It remains unknown whether the pessimism principle can be incorporated into model-free algorithms --- another class of popular algorithms that performs learning without model estimation --- in a provably effective fashion for offline RL.

\newcommand{\topsepremove}{\aboverulesep = 0mm \belowrulesep = 0mm} \topsepremove

\begin{table}[t]
	\begin{center}
\begin{tabular}{c|c|c}
\toprule

	Algorithm &Type & Sample  complexity   \tabularnewline
\toprule
\hline 
VI-LCB \vphantom{$\frac{1^{7}}{1^{7^{7}}}$} & \multirow{2}{*}{model-based}  & \multirow{2}{*}{$\frac{H^6SC^{\star}}{\varepsilon^2} $}   \tabularnewline
\citep{xie2021policy} &    \tabularnewline
\hline
PEVI-Adv \vphantom{$\frac{1^{7}}{1^{7^{7}}}$} & \multirow{2}{*}{model-based}  & \multirow{2}{*}{$ \frac{H^4SC^{\star}}{\varepsilon^2} $
}  \tabularnewline
\citep{xie2021policy} & \tabularnewline \hline
\rowcolor{Gray} 
Q-LCB \vphantom{$\frac{1^{7}}{1^{7^{7}}}$} &     &  \tabularnewline
\rowcolor{Gray}
	{\bf (this work)} &   \multirow{-2}{*}{\cellcolor{Gray}\textcolor{red}{model-free}} &  \multirow{-2}{*}{\cellcolor{Gray} $ \frac{H^6SC^{\star}}{\varepsilon^2} $}  \tabularnewline
\hline
\rowcolor{Gray} 
Q-LCB-Adv \vphantom{$\frac{1^{7}}{1^{7^{7}}}$}  &   &   \tabularnewline
\rowcolor{Gray}
	{\bf (this work)} &   \multirow{-2}{*}{\cellcolor{Gray}\textcolor{red}{model-free}} & \multirow{-2}{*}{\cellcolor{Gray} $ \frac{H^4SC^\star}{\varepsilon^2} 
$}  \tabularnewline
\toprule
lower bound \vphantom{$\frac{1^{7}}{1^{7^{7}}}$} & \multirow{2}{*}{n/a}  &  \multirow{2}{*}{$ \frac{H^4SC^{\star}}{\varepsilon^2} $} \tabularnewline
\citep{xie2021policy} & & \tabularnewline
\toprule
\end{tabular}

	\end{center}

	\caption{Comparisons between our results and prior art for finding an $\varepsilon$-optimal policy in finite-horizon non-stationary MDPs. The sample complexities included in the table are valid for sufficiently small $\varepsilon$, with all logarithmic factors omitted.  
	\label{tab:prior-work}}  
	
\end{table}

\subsection{Main contributions}

In this paper, we consider finite-horizon non-stationary Markov decision processes (MDPs) with $S$ states, $A$ actions, and horizon length $H$. The focal point is to pin down the sample efficiency for pessimistic variants of model-free algorithms, under the mild single-policy concentrability assumption (cf. Assumption~\ref{assumption}) of the batch dataset introduced in \citet{rashidinejad2021bridging,xie2021policy} (in short, this assumption captures how close the batch dataset is to an expert dataset, and will be formally introduced in Section~\ref{sec:offline-concentrability}). Given $K$ episodes of history data each of length $H$ (which amounts to a total number of $T=KH$ samples), our main contributions are summarized as follows. 
\begin{itemize}
	
	\item We first study a natural pessimistic variant of the Q-learning algorithm, which simply modifies the classical Q-learning update rule by subtracting a penalty term (via certain lower confidence bounds). We prove that pessimistic Q-learning finds an $\varepsilon$-optimal policy as soon as the sample size $T$ exceeds the order of (up to log factor)
$$  
	\frac{H^6SC^{\star}}{\varepsilon^2}, 
$$
where $C^{\star}$ denotes the single-policy concentrability coefficient of the batch dataset. 
In comparison to the minimax lower bound  $\Omega \big( \frac{H^4SC^{\star}}{\varepsilon^2}\big)$ developed in \citet{xie2021policy}, 
the sample complexity of pessimistic Q-learning is at most a factor of $H^2$ from optimal (modulo some log factor).

	\item To further improve the sample efficiency of pessimistic model-free algorithms, we introduce a variance-reduced variant of pessimistic Q-learning. This algorithm is guaranteed to find an $\varepsilon$-optimal policy as long as the sample size $T$ is above the order of
$$  
	\frac{H^4SC^{\star}}{\varepsilon^2} + \frac{H^5SC^{\star}}{\varepsilon}
$$
		up to some log factor. In particular, this sample complexity is minimax-optimal (namely, as low as $\frac{H^4SC^{\star}}{\varepsilon^2}$ up to log factor) for small enough $\varepsilon$ (namely, $\varepsilon \leq \left(0, 1/H\right]$). The $\varepsilon$-range that enjoys near-optimality is much larger compared to $\varepsilon \leq \left(0, 1/H^{2.5}\right]$  in \citet{xie2021policy} for model-based algorithms.

\end{itemize}
Both of the proposed algorithms achieve low computation cost (i.e., $O(T)$) and low memory complexities (i.e., $O(\min\{T, SAH\})$). Additionally, more complete comparisons with prior sample complexities of pessimistic model-based algorithms \citep{xie2021policy} are provided in Table~\ref{tab:prior-work}. 
In comparison with model-based algorithms, model-free algorithms require drastically different technical tools to handle the complicated statistical dependency between the estimated Q-values at different time steps.

\subsection{Related works}

In this section, we discuss several lines of works which are related to ours, with an emphasis on value-based algorithms for tabular settings with finite state and action spaces.

\paragraph{Offline RL.} One of the key challenges in offline RL lies in the insufficient coverage of the batch dataset, due to lack of interaction with the environment \citep{levine2020offline,liu2020provably}. To address this challenge, most of the recent works can be divided into two lines: 1) regularizing the policy to avoid visiting under-covered state and action pairs \citep{fujimoto2019off,dadashi2021offline}; 2) penalizing the estimated values of the under-covered state-action pairs \citep{buckman2020importance,kumar2020conservative}. Our work follows the latter line (also known as the principle of pessimism), which has garnered significant attention recently. In fact, pessimism has been incorporated into recent development of various offline RL approaches, such as policy-based approaches \citep{rezaeifar2021offline,xie2021bellman,zanette2021provable}, model-based approaches \citep{rashidinejad2021bridging,uehara2021pessimistic,jin2021pessimism,yu2020mopo,kidambi2020morel,xie2021policy,yin2021towards,uehara2021representation,yan2022model,yu2021combo,yin2022near}, and model-free approaches \citep{kumar2020conservative,yu2021conservative,yan2022efficacy}.

\paragraph{Finite-sample guarantees for pessimistic approaches.}
While model-free approaches with pessimism \citep{kumar2020conservative,yu2021conservative} have achieved considerable empirical successes in offline RL, prior theoretical guarantees of pessimistic schemes have been confined almost exclusively to model-based approaches. Under the same single-policy concentrability assumption used in prior analyses of model-based approaches \citep{rashidinejad2021bridging,xie2021policy,yin2021near_double}, the current paper provides the first finite-sample guarantees for model-free approaches with pessimism in the tabular case without explicit model construction. In addition, \citet{yin2021towards} directly employed the occupancy distributions of the behavior policy and the optimal policy in bounding the performance of a model-based approach, rather than the worst-case upper bound of their ratios as done under the single-policy concentrability assumption.

\paragraph{Non-asymptotic guarantees for variants of Q-learning.} 
Q-learning, which is among the most famous model-free RL algorithms \citep{watkins1989learning,jaakkola1994convergence,watkins1992q}, has been adapted in a multitude of ways to deal with different RL settings. Theoretical analyses for Q-learning and its variants have been established in, for example, the online setting via regret analysis \citep{jin2018q,bai2019provably,zhang2020almost,li2021breaking,dong2019q,zhang2020reinforcement,zhang2020model,jafarnia2020model,yang2021q}, and the simulator setting via probably approximately correct (PAC) bounds \citep{chen2020finite,wainwright2019variance,li2021tightening}. The variant that is most closely related to ours is asynchronous Q-learning, which aims to find the optimal Q-function from Markovian trajectories following some behavior policy \citep{even2003learning,beck2012error,qu2020finite,li2021sample,yin2021near,yin2021near_double,yin2021near_double}. Different from ours, these works typically require full coverage of the state-action space by the behavior policy, a much stronger assumption than the single-policy concentrability assumed in our offline RL setting.

\paragraph{Variance reduction in RL.}
Variance reduction, originally proposed to accelerate stochastic optimization (e.g., the SVRG algorithm proposed by \citet{johnson2013accelerating}), has been successfully leveraged to improve the sample efficiency of various RL algorithms, including but not limited to policy evaluation \citep{du2017stochastic,wai2019variance,xu2019reanalysis,khamaru2020temporal},
planning \citep{sidford2018near,sidford2018variance}, Q-learning and its variants \citep{wainwright2019variance,zhang2020almost,li2021breaking,li2021sample}, and offline RL \citep{xie2021policy,yin2021near_double}.

\subsection{Notation and paper organization}

Let us introduce a set of notation that will be used throughout.  
We denote by $\Delta(\cS)$ the probability simplex over a set $\cS$, and introduce the notation $[N]\coloneqq \{1,\cdots,N\}$ for any integer $N>0$. For any vector $x \in \mathbb{R}^{SA}$ (resp.~$x\in \mathbb{R}^S$) that constitutes certain values for each of the state-action pairs (resp.~state),
we shall often use $x(s,a)$ (resp.~$x(s)$) to denote the entry associated with the $(s,a)$ pair (resp. state $s$). 
Similarly, we shall denote by $x \defn \{x_h\}_{h\in [H]}$ the set composed of certain vectors for each of the time step $h\in[H]$. We let $e_i$ represent the $i$-th standard basis vector, with the only non-zero element being in the $i$-th entry.

Let $\mathcal{X}\defn ( S, A,  H , T )$. 
The notation $f(\mathcal{X})\lesssim g(\mathcal{X})$ (resp.~$f(\mathcal{X})\gtrsim g(\mathcal{X})$) means that there exists a universal constant $C_{0}>0$ such that $|f(\mathcal{X})|\leq C_{0}|g(\mathcal{X})|$ (resp.~$|f(\mathcal{X})|\geq C_{0}|g(\mathcal{X})|$). In addition, we often overload scalar functions and expressions to take vector-valued arguments, with the interpretation that they are applied in an entrywise manner. For example, for a vector $x=[x_i]_{1\leq i\leq n}$, we have $x^2 =[x_i^2]_{1\leq i\leq n}$. For any two vectors $x=[x_i]_{1\leq i\leq n}$ and $y=[y_i]_{1\leq i\leq n}$, the notation $ {x}\leq {y}$ (resp.~$ {x}\geq {y}$) means
$x_{i}\leq y_{i}$ (resp.~$x_{i}\geq y_{i}$) for all $1\leq i\leq n$.

\paragraph{Paper organization.} The rest of this paper is organized as follows. Section~\ref{sec:problem-formulation} introduces the backgrounds on finite-horizon MDPs and formulates the offline RL problem. Section~\ref{sec:algorithm-theory-lcbq} starts by introducing a natural pessimistic variant of Q-learning along with its sample complexity bound, and further enhances the sample efficiency via variance reduction in  Section~\ref{sec:pessimistic-Q-vr}. 
Section~\ref{sec:analysis} presents the proof outline and key lemmas. 
Finally, we conclude in Section~\ref{sec:discussions} with a discussion and defer the proof details to the supplementary material.

\section{Background and problem formulation}
\label{sec:problem-formulation}

\subsection{Tabular finite-horizon MDPs}

\paragraph{Basics.}  
This work focuses on an episodic finite-horizon MDP as represented by 
$$\mathcal{M}= \big(\mathcal{S},\mathcal{A},H, \{P_h\}_{h=1}^H, \{r_h\}_{h=1}^H \big),$$ where $H$ is the horizon length, $\mathcal{S}$ is a finite state space of cardinality $S$, 
$\mathcal{A}$ is a finite action space of cardinality $A$, 
and $P_h : \cS \times \cA \rightarrow \Delta (\cS) $ (resp.~$r_h: \cS \times \cA \rightarrow [0,1]$)
represents the probability transition kernel (resp.~reward function) at the $h$-th time step $(1\leq h\leq H)$.
Throughout this paper, we shall adopt the following convenient notation
\begin{equation} \label{eq:transition_vector}
P_{h,s,a} \coloneqq P_h(\cdot \mymid s,a ) \in [0,1]^{1\times S},
\end{equation}
which stands for the transition probability vector given the current state-action pair $(s,a)$ at time step $h$. 
The parameters $S$, $A$ and $H$ can all be quite large, allowing one to capture the challenges arising in MDPs with large state/action space and long horizon.

A policy (or action selection rule) of an agent is represented by $\pi =\{\pi_h\}_{h=1}^H$, where $\pi_h: \mathcal{S} \rightarrow \Delta(\mathcal{A})$ specifies the associated selection probability over the action space at time step $h$ (or more precisely, we let $\pi_h(a\mymid s)$ represent the probability of selecting action $a$ in state $s$ at step $h$). When $\pi$ is a deterministic policy, we abuse the notation and let $\pi_h(s)$ denote the action selected by policy $\pi$ in state $s$ at step $h$. 
In each episode,  the agent generates an initial state $s_1\in \cS$ drawn from an initial state distribution $\rho\in \Delta(\cS)$, and rolls out a trajectory over the MDP by executing a policy $\pi$ as follows:
\begin{equation}\label{eq:single_trajectory}
\{s_h, a_h, r_h\}_{h=1}^H =\{ s_1,\, a_1, \, r_1,\,   \ldots, s_H,\, a_H, \, r_H \},
\end{equation}
where at time step $h$, $a_h \sim \pi_h(\cdot \mymid s_h)$ indicates the action selected in state $s_h$, $r_h= r_h(s_h,a_h) $ denotes the deterministic immediate reward, and $s_{h+1} $ denotes the next state drawn from the transition probability vector $P_{h,s_h,a_h}\coloneqq P_h(\cdot \mymid s_h,a_h )$. 
In addition, let $d_h^{\pi}(s)$ and $d_h^{\pi}(s,a)$ denote respectively the occupancy distribution induced by $\pi$ at time step $h\in [H]$, namely, 

\begin{align} \label{eq:visitation_dist}
	d_h^{\pi}(s) & \defn \mathbb{P}(s_h = s \mymid s_1 \sim \rho, \pi),  \qquad
	d_h^{\pi}(s, a)  \defn \mathbb{P}(s_h = s \mymid s_1 \sim \rho, \pi) \, \pi_h(a \mymid s); 
\end{align} 
here and throughout, we denote $[H]\coloneqq \{1,\cdots,H\}$. 
Given that the initial state $s_1$ is drawn from $\rho$, the above definition gives
\begin{align}
	d_1^{\pi}(s) = \rho(s) \qquad \text{for any policy }\pi. 
	\label{eq:d1-pi-s-deterministic}
\end{align}

\paragraph{Value function, Q-function, and optimal policy.}
The value function $V^{\pi}_{h}(s)$ of policy $\pi$ in state $s$ at step $h$ is defined as the expected cumulative rewards when  this policy is executed starting from state $s$ at step $h$, i.e., 
\begin{align}
	\label{eq:def_Vh}
	V^{\pi}_{h}(s) &\defn  \mathbb{E} 
	\left[  \sum_{t=h}^{H} r_{t}\big(s_{t},a_t \big) \,\Big|\, s_{h}=s \right], 
\end{align}
where the expectation is taken over the randomness of the trajectory \eqref{eq:single_trajectory} induced by the policy $\pi$ as well as the MDP transitions. Similarly, the Q-function $Q^{\pi}_h(\cdot,\cdot)$ of a policy $\pi$ at step $h$ is defined as
\begin{align} 
	\label{eq:def_Qh}
	Q^{\pi}_{h}(s,a) & \defn r_{h}(s,a)+ \mathbb{E} \left[  \sum_{t=h +1}^{H} r_t (s_t, a_t ) \,\Big|\, s_{h}=s, a_h  = a\right] ,
\end{align}
where the expectation is again over the randomness induced by $\pi$ and the MDP except that the state-action pair at step $h$ is now conditioned to be $(s,a)$. 
By convention, we shall also set 
\begin{equation}\label{eq:value-H-1-zero}
	V^{\pi}_{H+1}(s)= Q^{\pi}_{H+1}(s,a)=0  ~~\text{ for any }\pi\text{ and }(s,a)\in \cS\times \cA.
\end{equation}

A policy $\pi^{\star} =\{\pi_h^{\star}\}_{h=1}^H$ is said to be an optimal policy if it maximizes the value function (resp.~Q-function) {\em simultaneously} for all states (resp.~state-action pairs) among all policies, whose existence is always guaranteed \citep{puterman2014markov}. 
The resulting optimal value function $V^{\star} =\{ V_h^{\star} \}_{h=1}^H $ and optimal Q-functions $Q^{\star} =\{ Q_h^{\star} \}_{h=1}^H $ are denoted respectively by  
\begin{align*}
	V_h^{\star}(s) & \coloneqq V_h^{\pi^{\star}}(s) = \max_{\pi} V_h^{\pi}(s), \qquad
	Q_h^{\star}(s,a)  \coloneqq Q_h^{\pi^{\star}}(s,a)  = \max_{\pi} Q_h^{\pi}(s,a) 
\end{align*}
for any $(s,a,h)\in \cS\times \cA \times [H]$. Throughout this paper, we assume that $\pi^{\star}$ is a {\em deterministic optimal policy}, which always exists \citep{puterman2014markov}. 

Additionally, when the initial state is drawn from a given distribution $\rho$, 
the expected value of a given policy $\pi$ and that of the optimal policy at the initial step are defined respectively by
\begin{align}\label{eq:defn-V-rho}
	V_1^{\pi}(\rho) \coloneqq \mathop{\mathbb{E}}\limits_{s_1\sim \rho} \big[ V_1^\pi (s_1) \big] 
	 \qquad \text{and}  \qquad
	V_1^{\star}(\rho) \coloneqq \mathop{\mathbb{E}}\limits_{s_1\sim \rho} \big[ V_1^\star (s_1) \big].
\end{align}

\paragraph{Bellman equations.} The Bellman equations play a fundamental role in dynamic programming \citep{bertsekas2017dynamic}. 
Specifically, the value function and the Q-function of any policy $\pi$ satisfy the following Bellman consistency equation:
\begin{align}
	\label{eq:bellman}
 Q^{\pi}_{h}(s,a)=r_{h}(s,a)+ \mathop{\mathbb{E}}\limits_{s'\sim P_{h,s,a}} \big[V^{\pi}_{h+1}(s') \big]
\end{align}
for all $(s,a,h)\in \cS\times \cA\times [H]$. Moreover, the optimal value function and the optimal Q-function satisfy the Bellman optimality equation:
\begin{align} \label{eq:bellman_optimality}
	Q^{\star}_{h}(s,a)=r_{h}(s,a)+ \mathop{\mathbb{E}}\limits_{s'\sim P_{h,s,a}} \big[V^{\star}_{h+1}(s') \big]
\end{align}
for all $ (s,a,h)\in \cS\times \cA\times [H]$.

\subsection{Offline RL under single-policy concentrability}
	\label{sec:offline-concentrability}

Offline RL assumes the availability of a history dataset $\Datab$ containing $K$ episodes each of length $H$.
These episodes are independently generated based on a certain policy $\pib = \{ \pib_{h} \}_{h=1}^H$ --- called the {\em behavior policy}, resulting in a dataset 
\[
	\Datab \coloneqq  \Big\{  \big( s_1^k,\, a_1^k, \, r_1^k,\, \ldots, s_H^k,\, a_H^k, \, r_H^k  \big) \Big\}_{k=0}^{K-1} . 
\]
Here, the initial states $\{s_1^k \}_{k=1}^K$ are independently drawn from $\rho \in \Delta(\cS)$ such that $s_1^k \overset{\mathrm{i.i.d.}}{\sim} \rho $, 
while the remaining states and actions are generated by the MDP induced by the behavior policy $\mu$. 
The total number of samples is thus given by 
$$T= KH. $$ 
With the notation \eqref{eq:defn-V-rho} in place, 
the goal of offline RL amounts to finding an $\varepsilon$-optimal policy $\widehat{\pi} =\{ \widehat{\pi}_h \}_{h=1}^H$ satisfying
\[   
	V_1^{\star}(\rho) - V_1^{\widehat{\pi}}(\rho) 
	\leq \varepsilon 
\]
with as few samples as possible, and ideally, in a computationally fast and memory-efficient manner.

Obviously, efficient offline RL cannot be accomplished without imposing proper assumptions on the behavior policy, which also provide means to gauge the difficulty of the offline RL task through the quality of the history dataset. Following the recent works \citet{rashidinejad2021bridging,xie2021policy}, we assume that the behavior policy $\mu$ satisfies the following property called {\em single-policy concentrability}.

\begin{assumption}[single-policy concentrability]
\label{assumption}
The single-policy concentrability coefficient $C^\star \in [1, \infty)$ of a behavior policy $\mu$ is defined to be the smallest quantity that satisfies 
\begin{equation}\label{equ:concentrability-assumption}
    \max_{(h, s, a) \in [H] \times \mathcal{S} \times \cA} \frac{d^{\pi^\star}_h(s,a)}{d^{\mu}_h(s,a)} \leq C^\star,
\end{equation}
where we adopt the convention $0/0=0$. 
\end{assumption}

Intuitively, the single-policy concentrability coefficient measures the discrepancy between the optimal policy $\pi^{\star}$ and the behavior policy $\mu$ in terms of the resulting density ratio of the respective occupancy distributions. 
It is noteworthy that a finite $C^{\star}$ does not necessarily require $\mu$ to cover 
the entire state-action space; instead, it can be attainable when its coverage subsumes that of the optimal policy $\pi^{\star}$. 
This is in stark contrast to, and in fact much weaker than, other assumptions that require either full coverage of the behavior policy (i.e., $\min_{(h, s, a) \in [H] \times \mathcal{S} \times \cA}  d^{\mu}_h(s,a)>0$ \citep{li2021sample,yin2021near,yin2021near_double}), or uniform concentrability over all possible policies \citep{chen2019information}. Additionally, the single-policy concentrability coefficient is minimized (i.e., $C^{\star}=1$) when the behavior policy $\mu$ coincides with the optimal policy $\pi^{\star}$, a scenario closely related to imitation learning or behavior cloning \citep{rajaraman2020toward}.

\section{Pessimistic Q-learning: algorithms and theory}
\label{sec:algorithm-theory-lcbq}

In the current paper, we present two model-free algorithms --- namely, \LCBQ and {\LCBQR} --- for offline RL, along with their respective theoretical guarantees. The first algorithm can be viewed as a pessimistic variant of the classical Q-learning algorithm, 
while the second one further leverages the idea of variance reduction to boost the sample efficiency. In this section, we begin by introducing \LCBQ.

\subsection{\LCBQ: a natural pessimistic variant of Q-learning}

Before proceeding, we find it convenient to first review the classical Q-learning algorithm \citep{watkins1989learning,watkins1992q}, which can be regarded as a stochastic approximation scheme to solve the Bellman optimality equation \eqref{eq:bellman_optimality}. 
Upon receiving a sample transition $(s_h,a_h,r_h,s_{h+1})$ at time step $h$, 
Q-learning updates the corresponding entry in the Q-estimate as follows
\begin{align}    \label{eq:classical-Q-update}
    Q_h(s_h,a_h) ~\leftarrow~ & (1-\eta )Q_h(s_h,a_h)  + \eta \Big \{ r_h(s_h,a_h) + 
  V_{h+1}(s_{h+1}) \Big \} ,
\end{align}
where $Q_h$ (resp.~$V_h$) indicates the running estimate of $Q_h^{\star}$ (resp.~$V_h^{\star}$), and $0<\eta<1$ is the learning rate. 
In comparison to model-based algorithms that require estimating the probability transition kernel based on all the samples, Q-learning, as a popular kind of model-free algorithms, is simpler and enjoys more flexibility without explicitly constructing the model of the environment.
The wide applicability of Q-learning motivates one to adapt it to accommodate offline RL.

Inspired by recent advances in incorporating the pessimism principle for offline RL \citep{rashidinejad2021bridging,jin2021pessimism}, 
we study a pessimistic variant of Q-learning called \LCBQ, which modifies the Q-learning update rule as follows
\begin{align}
    \label{equ:lcb-q-update}
    Q_h(s_h, a_h) \leftarrow & (1-\eta_n )Q_h(s_h, a_h) + \eta_n \Big\{ r_h(s_h,a_h) +  V_{h+1}(s_{h+1})  - b_n \Big\}, 
\end{align}
where $\eta_n$ is the learning rate depending on the number of times $n$ that the state-action pair $(s_h,a_h)$ has been visited at step $h$, and the penalty term $b_n>0$ (cf.~line~\ref{line:2-nok} of Algorithm~\ref{algo:lcb-index}) reflects the uncertainty of the corresponding Q-estimate and implements pessimism in the face of uncertainty. 
The entire algorithm, which is a {\em single-pass} algorithm that only requires reading the offline dataset once, 
is summarized in Algorithm~\ref{algo:lcb-index}.

\begin{algorithm}[h]
 \textbf{Parameters:} some constant $\cb>0$, target success probability $1-\delta \in(0,1)$, and $\iota = \log\big(\frac{SAT}{\delta}\big)$. \\
\textbf{Initialize}  $Q_h(s, a) \leftarrow 0$, $N_h(s, a)\leftarrow 0$, and $V_h(s) \leftarrow 0$ for all $(s,h)\in \cS \times[H+1]$; $\widehat{\pi}$ s.t. $\widehat{\pi}_h(s) = 1$ for all $(h,s)\in [H]\times\cS$.  \\
\For{Episode $ k = 1$ \KwTo $K$}{
	Sample a new trajectory $\{s_h, a_h, r_h\}_{h=1}^H$ from $\mathcal{D}_{\mu}$. {\small\color{own_blue}\tcp{sampling from batch dataset}} 
   {\small\color{own_blue}\tcp{update the policy} }  \normalsize
  \For{Step $ h = 1$ \KwTo $H$}{

         $N_h(s_h, a_h) \leftarrow N_h(s_h, a_h) + 1$. {\small\color{own_blue}\tcp{update the counter} }
   
         $n \leftarrow N_h(s_h, a_h)$; 
        $\eta_n \leftarrow \frac{H + 1}{H + n}$.  
        {\small\color{own_blue}\tcp{update the learning rate}}      
        $b_n \leftarrow \cb \sqrt{\frac{H^3\iota^2}{n}}$.  \label{line:2-nok}  {\small\color{own_blue}\tcp{update the bonus term}}
        {\small\color{own_blue}\tcp{run the Q-learning update with LCB} }  \normalsize
  $Q_h(s_h, a_h) \leftarrow Q_h(s_h, a_h) + \eta_n\Big\{ r_h(s_h, a_h) + V_{h+1}(s_{h+1})-Q_h(s_h, a_h) - b_n \Big\} 
  $. \label{line:lcb-9-nok} \\
          
     {\small\color{own_blue}\tcp{update the value estimates}} \normalsize
      
    $V_{h}(s_h) \leftarrow \max\Big\{ V_{h}(s_h), \,  \max_a Q_{h}(s_h, a)  \Big\}$. \label{line:lcb_v_update}\\
    \label{line:lcb-10}
    If $V_{h}(s_h) = \max_a Q_{h}(s_h, a)$: update $\widehat{\pi}_h(s) \leftarrow \arg\max_a Q_{h}(s, a)$.
 
 }
 }
 \textbf{Output:} the policy $\widehat{\pi}$.
\caption{\LCBQ for offline RL}
\label{algo:lcb-index}
\end{algorithm}

\subsection{Theoretical guarantees for \LCBQ{}}
 
The proposed {\LCBQ{}} algorithm manages to achieve an appealing sample complexity as formalized by the following theorem. 
\begin{theorem}\label{thm:lcb}
  Consider any $\delta \in (0,1)$.
  Suppose that the behavior policy $\mu$ satisfies Assumption~\ref{assumption} with single-policy concentrability coefficient $C^{\star}\geq 1$. Let $\cb>0$ be some sufficiently large constant, and take $\iota \coloneqq \log\big(\frac{SAT}{\delta}\big)$. Assume that $T>SC^\star \iota$, then the policy $\widehat{\pi}$ returned by Algorithm~\ref{algo:lcb-index} satisfies
\begin{align}\label{equ:lcb-result}
    V_1^\star(\rho) - V_1^{\widehat{\pi}}(\rho) 
  &\leq c_\mathrm{a} \sqrt{\frac{H^6 SC^\star \iota^3}{T}} 
\end{align}
with probability at least $1-\delta$, where $c_\mathrm{a}>0$ is some universal constant. 
\end{theorem}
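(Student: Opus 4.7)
My plan is to prove Theorem~\ref{thm:lcb} in three main stages — pessimism, error decomposition, and concentrability-based aggregation — mirroring the standard template for offline value iteration but adapted to Q-learning's non-monotone iterates. First, I would establish a \emph{pessimism} guarantee: with probability at least $1-\delta$, the iterates satisfy $Q_h(s,a)\le Q_h^\star(s,a)$, hence $V_h(s)\le V_h^\star(s)$, for every $(s,a,h)$. The standard Q-learning device is to expand the final $Q_h(s,a)$ as a convex combination
$$Q_h(s,a) \;=\; \sum_{i=1}^{N} \eta_N^i \Big(r_h(s,a) + V_{h+1}^{(k_i)}(s_{h+1}^{k_i}) - b_i\Big)$$
over the $N=N_h(s,a)$ visits of $(s,a)$ at step $h$ (occurring at episodes $k_1<\cdots<k_N$), where the aggregated weights $\eta_N^i = \eta_i\prod_{j=i+1}^N(1-\eta_j)$ induced by $\eta_n=(H+1)/(H+n)$ satisfy $\sum_i \eta_N^i=1$ and $\sum_i (\eta_N^i)^2 \lesssim H/N$. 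Subtracting $Q_h^\star(s,a)=r_h(s,a)+P_{h,s,a}V_{h+1}^\star$ and inducting downward from $h=H+1$, pessimism reduces to showing that $\sum_i \eta_N^i b_i$ dominates the martingale $\sum_i \eta_N^i[V_{h+1}^\star(s_{h+1}^{k_i}) - P_{h,s,a}V_{h+1}^\star]$; the latter is controlled via Azuma–Hoeffding (yielding $\lesssim H\sqrt{\iota\sum_i(\eta_N^i)^2}\lesssim\sqrt{H^3\iota/N}$) and absorbed by $b_n=\cb\sqrt{H^3\iota^2/n}$ after a union bound over $(s,a,h,N)$.

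Next, I would perform an error decomposition. Writing $V_1^\star(\rho)-V_1^{\widehat{\pi}}(\rho)\le V_1^\star(\rho)-V_1(\rho)$, which uses a complementary ``policy-pessimism'' $V_1\le V_1^{\widehat{\pi}}$ proved by an analogous Q-learning decomposition comparing with $Q_h^{\widehat{\pi}}$, the task becomes bounding $\xi_h(s)\defn V_h^\star(s)-V_h(s)\ge 0$ at states visited by $\pi^\star$. Because line~\ref{line:lcb_v_update} enforces $V_h(s)\ge Q_h(s,\pi_h^\star(s))$ at every state ever seen, I get $\xi_h(s)\le Q_h^\star(s,\pi_h^\star(s))-Q_h(s,\pi_h^\star(s))$, and unfolding the same convex-combination identity once more (transferring the gap to the next step) yields, after taking expectations along $\pi^\star$-trajectories, the recursion
$$\mathbb{E}_{\pi^\star}\!\big[\xi_h(s_h)\big]\;\lesssim\;\Big(1+\tfrac{1}{H}\Big)\,\mathbb{E}_{\pi^\star}\!\big[\xi_{h+1}(s_{h+1})\big]+\mathbb{E}_{\pi^\star}\!\big[b_{N_h(s_h,a_h)}\big]+(\text{concentration residual}).$$
Iterating over $h$, the $(1+1/H)^H=O(1)$ prefactor yields $V_1^\star(\rho)-V_1(\rho)\lesssim\sum_{h=1}^H\mathbb{E}_{\pi^\star}\!\big[b_{N_h(s_h,a_h)}\big]$ up to log factors.

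Finally, I would bound the aggregated expected penalty using single-policy concentrability. On the high-probability event that $N_h(s,a)\gtrsim K d_h^\mu(s,a)$ whenever $K d_h^\mu(s,a)\gtrsim\iota$ (a Bernstein bound on i.i.d.\ visit indicators generated by $\mu$), and exploiting the determinism of $\pi^\star$ so that $d_h^{\pi^\star}$ is supported on $\{(s,\pi_h^\star(s))\}_{s\in\cS}$, a Cauchy–Schwarz step gives
$$\mathbb{E}_{\pi^\star}\!\big[b_{N_h}\big]\;\lesssim\;\sum_s d_h^{\pi^\star}(s)\sqrt{\frac{H^3\iota^2 C^\star}{K\,d_h^{\pi^\star}(s)}}\;\le\;\sqrt{\frac{H^3\iota^2 S C^\star}{K}},$$
so summing over $h$ delivers $V_1^\star(\rho)-V_1^{\widehat{\pi}}(\rho)\lesssim\sqrt{H^6 S C^\star\iota^3/T}$, matching \eqref{equ:lcb-result}. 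The mass on pairs with tiny $d_h^\mu$ (where pessimism forces $Q_h=0$ and $\xi_h$ could be as large as $H$) contributes only a higher-order term using the standing assumption $T\gtrsim SC^\star\iota$.

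The main technical obstacle, I expect, is the coupling between the two pessimism statements: because $Q_h$ in LCB-Q is non-monotone across iterations and $\widehat{\pi}_h(s)$ is the argmax at the \emph{latest} time $V_h(s)$ was increased, ensuring $V_1(\rho)\le V_1^{\widehat{\pi}}(\rho)$ requires delicate bookkeeping of ``historical'' Q-values, probably via tracking the specific epoch at which each $V_h(s)$ attained its final value. A secondary delicacy is sharpening the martingale bound so that $b_n$ need only scale as $\sqrt{H^3/n}$ (i.e., with $\iota^2$ inside, not $H\iota$), since a looser bound would inflate the final rate by a factor of $\sqrt{H}$; this likely requires Freedman-type bounds exploiting that the martingale increments are bounded by $\eta_N^i H$ rather than $H$.
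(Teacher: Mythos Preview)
Your plan captures the right three-stage structure (pessimism, recursion in $h$, concentrability via Cauchy--Schwarz), and Stages~1 and~3 are essentially correct. The gap is in Stage~2: the one-step recursion you write for the \emph{final}-iterate gap $\xi_h(s)=V_h^\star(s)-V_h(s)$ does not close. When you unfold the convex-combination identity at $(s,\pi_h^\star(s))$, the next-step term is
\[
\sum_{i=1}^{N}\eta_N^i\,P_{h,s,a}\big(V_{h+1}^\star - V_{h+1}^{(k_i)}\big),
\]
which involves the \emph{historical} iterates $V_{h+1}^{(k_i)}$, not the final $V_{h+1}$. By monotonicity $V_{h+1}^{(k_i)}\le V_{h+1}$, so each summand is \emph{at least} $P_{h,s,a}\xi_{h+1}$, and you cannot upper-bound the sum by $(1+1/H)P_{h,s,a}\xi_{h+1}$ as your recursion requires. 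No amount of bookkeeping about $k_o(h)$ or the policy update fixes this: the inequality goes the wrong way.

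The paper's fix is to insert a Ces\`aro averaging step before the recursion. Using $V_1^{\pi^K}\ge V_1^K$ and monotonicity of $V_1^k$ in $k$, one has
\[
V_1^\star(\rho)-V_1^{\widehat\pi}(\rho)\;\le\;\frac{1}{K}\sum_{k=1}^K\sum_s d_1^{\pi^\star}(s)\big(V_1^\star(s)-V_1^k(s)\big),
\]
so the object that recurses is $\sum_k\sum_s d_h^{\pi^\star}(s)(V_h^\star(s)-V_h^k(s))$. After unfolding, the troublesome term becomes a \emph{double} sum over $k$ and over the visits $k^n\le k$; swapping the order of summation and using $\sum_{N\ge n}\eta_n^N\le 1+1/H$ converts it back into $(1+1/H)\sum_k\sum_s d_{h+1}^{\pi^\star}(s)(V_{h+1}^\star(s)-V_{h+1}^k(s))$, i.e., the same quantity at step $h+1$. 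The paper implements this swap at the empirical level (introducing importance-weighted surrogates $Y_{h,k},Z_{h,k}$ with ratios $d_h^{\pi^\star}/d_h^\mu$) and controls the resulting martingale residual via Freedman. This averaging-and-swap device is the key idea missing from your plan; without summing over $k$, the recursion in $h$ cannot be established. Two smaller remarks: in your pessimism argument the martingale should be built on the adapted iterates $V_{h+1}^{(k_i)}$, not on the fixed $V_{h+1}^\star$; and the $\sqrt{H^3/n}$ scale of $b_n$ follows already from Azuma--Hoeffding with $\sum_i(\eta_N^i)^2\lesssim H/N$ and $\|V_{h+1}\|_\infty\le H$, so no Freedman sharpening is needed for Theorem~\ref{thm:lcb}.
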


 As asserted by Theorem~\ref{thm:lcb}, the {\LCBQ{}} algorithm is guaranteed to find an $\varepsilon$-optimal policy with high probability, 
 as long as the total sample size $T=KH$ exceeds
 \begin{equation}
  \widetilde{O}\left( \frac{H^6 S C^\star}{\varepsilon^2} \right),
   \label{eq:lcb-sample-complexity}
\end{equation} 
where $\widetilde{O}(\cdot)$ hides logarithmic dependencies. When the behavior policy is close to the optimal policy, the single-policy concentrability coefficient $C^{\star}$ is closer to 1; if this is the case, then our bound indicates that the sample complexity does not depend on the size $A$ of the action space, which can be a huge saving when the action space is enormous.

\paragraph{Comparison with model-based pessimistic approaches.} 

A model-based approach --- called Value Iteration with Lower Confidence Bounds  (\VILCB) ---
has been recently proposed for offline RL \citep{rashidinejad2021bridging,xie2021policy}. 
In the finite-horizon case, \VILCB{} incorporates an additional LCB penalty into the classical value iteration algorithm, and updates {\em all} the entries in the Q-estimate simultaneously as follows
\begin{align}
    Q_h(s,a) ~\leftarrow~ r_h(s,a) + \widehat{P}_{h,s,a}V_{h+1} - b_h(s,a), \label{equ:VI-UCB}
\end{align}
with the aim of tuning down the confidence on those state-action pairs that have only been visited infrequently. 
Here, $\widehat{P}_{h,s,a}$ represents the empirical estimation of the transition kernel $P_{h,s,a}$, 
and $b_h(s,a) >0$ is chosen to capture the uncertainty level of $(\widehat{P}_{h,s,a} -P_{h,s,a})V_{h+1}$. 
Working backward, the algorithm estimates the Q-value $Q_h$ recursively over the time steps $h = H, H-1, \cdots, 1$.  In comparison with \VILCB, our sample complexity bound for  {\LCBQ{}} matches the bound developed for \VILCB{} by \citet{xie2021policy}, while enjoying enhanced flexibility without the need of specifying the transition kernel of the environment (as model estimation might potentially incur a higher memory burden).

\subsection{\LCBQR for near-optimal offline RL}
\label{sec:pessimistic-Q-vr}

\begin{algorithm}[t]
\textbf{Parameters:} number of epochs $M$, universal constant $\cb>0$, probability of failure $\delta \in(0,1)$, and $\iota = \log\big(\frac{SAT}{\delta}\big)$;\\
\textbf{Initialize:} \\
$Q_h(s, a), Q_h^{\LCB}(s, a), \overline{Q}_h(s, a), \overline{\mu}_h(s,a), \overline{\mu}^{\nnext}_h(s,a), N_h(s, a)  \leftarrow 0$ for all $(s,a,h) \in \cS\times\cA \times [H]$;\\
$V_h(s), \overline{V}_h(s), \overline{V}_h^{\nnext}(s) \leftarrow 0$ for all $(s,h) \in \cS \times [H+1]$;\\
$\refmu_h(s, a), \refsg_h(s, a), \advmu_h(s, a)$, $\advsg_h(s, a)$, $\overline{\delta}_h(s, a), \overline{B}_h(s, a) \leftarrow 0$ for all $(s, a, h) \in \cS\times \cA \times [H]$.\\[0.4em]

\For{Epoch $ m = 1$ \KwTo $M$}{

  $L_m= 2^m$; \small {\color{own_blue}\tcp{specify the number of episodes in the current epoch} }\normalsize

  $\widehat{N}_h(s,a) = 0$ for all $(h, s,a) \in [H] \times \cS\times \cA.$ \small {\color{own_blue}\tcp{reset the epoch-wise counter} }\normalsize

  \small {\color{own_blue}\tcc{Inner-loop: update value-estimates $V_h(s,a)$ and Q-estimates $Q_h(s,a)$ }} \normalsize

  \For{In-epoch Episode $ t = 1$ \KwTo $L_m$}{
    Sample a new trajectory $\{s_h, a_h,r_h\}_{h=1}^H$. {\small\color{own_blue}\tcp{sampling from batch dataset}}
  
    \For{Step $ h = 1$ \KwTo $H$}{
        $N_h(s_h, a_h) \leftarrow N_h(s_h, a_h) + 1$; $n \leftarrow N_h(s_h,a_h)$. 
        {\small\color{own_blue}\tcp{update the overall counter} }
        $\eta_n \leftarrow \frac{H + 1}{H + n}$;  \small {\color{own_blue}\tcp{update the learning rate}} \normalsize
           
          {\small\color{own_blue}\tcp{run the Q-learning update rule with LCB}} \normalsize 
        $Q_h^{\LCB}(s_h, a_h) \leftarrow \texttt{update-lcb-q()}$. 
     
             {\small\color{own_blue}\tcp{update the Q-estimate with LCB and reference-advantage}} \normalsize 
             
        $\overline{Q}_h(s_h, a_h) \leftarrow \texttt{update-lcb-q-ra()}$.

    \small {\color{own_blue}\tcp{update the Q-estimate $Q_h$ and value estimate $V_h$}} \normalsize
    $Q_h(s_h,a_h) \leftarrow \max \big\{Q_h^{\LCB}(s_h, a_h), \overline{Q}_h(s_h, a_h), Q_h(s_h, a_h) \big\}.$ \label{eq:line-q-update}\\
    $V_{h}(s_h) \leftarrow \max_a Q_h(s_h, a)$. \label{eq:line-v-update}\\

    \small {\color{own_blue}\tcp{update the epoch-wise counter and $\overline{\mu}^{\nnext}_h$ for the next epoch}} \normalsize
    $\widehat{N}_h(s_h, a_h) \leftarrow \widehat{N}_h(s_h, a_h) + 1$;\\
    $\overline{\mu}_h^{\nnext}(s_h, a_h) \leftarrow \left(1-\frac{1}{\widehat{N}_h(s_h,a_h)}\right)\overline{\mu}_h^{\nnext}(s_h, a_h) + \frac{1}{\widehat{N}_h(s_h,a_h)} \overline{V}^{\nnext}_{h+1}(s_{h+1})$;  \label{line:ref-mean-update}
    }
 }

 \small {\color{own_blue}\tcc{Update the reference ($\overline{V}_h$, $\overline{V}^\nnext_h$) and ($\overline{\mu}_h$, $\overline{\mu}^\nnext_h$)} } \normalsize
 \For{$(s,a,h) \in \cS\times\cA\times [H+1]$}{
$\overline{V}_h(s) \leftarrow \overline{V}_h^{\nnext}(s)$; $\overline{\mu}_h(s,a)\leftarrow \overline{\mu}_h^{\nnext}(s,a)$. \label{eq:update-mu-reference-v}   {\small\color{own_blue}\tcp{set $\overline{V}_h$ and $\overline{\mu}_h$ for the next epoch} }
$\overline{V}_h^\nnext(s) \leftarrow V_h(s)$; $\overline{\mu}_h^\nnext(s,a) \leftarrow {0}$. {\small\color{own_blue}\tcp{restart $\overline{\mu}_h^\nnext$ and set $\overline{V}_h^\nnext$ for the next epoch} }
}
  
 }
\KwOut{the policy $\widehat{\pi}$ s.t.~$\widehat{\pi}_h(s) = \arg\max_a Q_{h}(s, a)$ for any $(s,h)\in \cS\times[H]$.}

\caption{Offline \LCBQR RL}
\label{alg:lcb-advantage-per-epoch}
\end{algorithm}

\begin{figure}[t]
\begin{center}
\includegraphics[width=0.8\linewidth]{./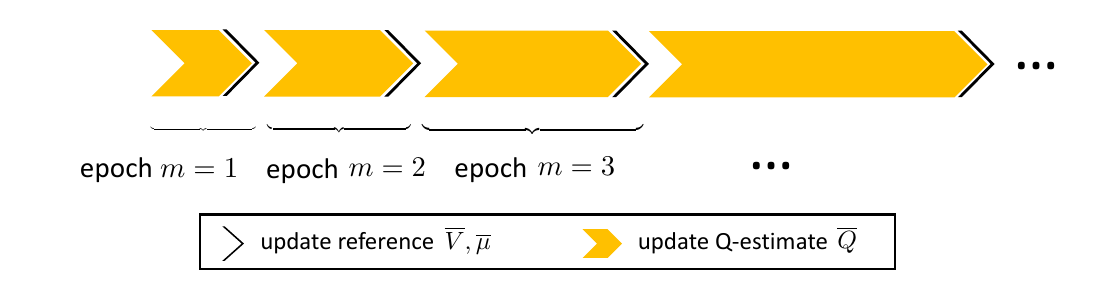}
\end{center}
\caption{An illustration of the epoch-based \LCBQR algorithm.}
\label{alg:illu}
\end{figure}

The careful reader might notice that the sample complexity \eqref{eq:lcb-sample-complexity} derived for \LCBQ remains a factor of $H^2$ away from the minimax lower bound (see Table~\ref{tab:prior-work}). 
To further close the gap and improve the sample complexity, we propose a new variant called \LCBQR, which leverages the idea of variance reduction to accelerate convergence \citep{johnson2013accelerating,sidford2018variance,wainwright2019variance,zhang2020almost,xie2021policy,li2021sample,li2021breaking}.

Inspired by the reference-advantage decomposition adopted in \citep{zhang2020almost,li2021breaking} for online Q-learning, 
\LCBQR maintains a collection of reference values $\{\overline{V}_h\}_{h=1}^H$, which serve as running proxy for the optimal values $\{V^\star_h\}_{h=1}^H$ and allow for reduced variability in each iteration. To be more specific, the \LCBQR algorithm (cf.~Algorithm~\ref{alg:lcb-advantage-per-epoch} as well as the subroutines in Algorithm~\ref{algo:subroutine} that closely resemble \citet{li2021breaking}) proceeds in an epoch-based style (the $m$-th epoch consists of $L_{m}=2^m$ episodes of samples), where the reference values are updated at the end of each epoch to be used in the next epoch, and the Q-estimates are iteratively updated during the remaining time of each epoch. By maintaining two auxiliary sequences of {\em pessimistic} Q-estimates --- that is, $Q^{\LCB}$ constructed by the pessimistic Q-learning update, and $\overline{Q}$ constructed by the pessimistic Q-learning update based on the reference-advantage decomposition --- the Q-estimate is updated by taking the maximum over the three candidates (cf.~line~\ref{eq:line-q-update} of Algorithm~\ref{alg:lcb-advantage-per-epoch})
\begin{equation}
Q_h(s,a) \leftarrow \max \{Q_h^{\LCB}(s, a),\, \overline{Q}_h(s, a),\, Q_h(s, a)\}
\end{equation}
when the state-action pair $(s,a)$ is visited at the step $h$. We now take a moment to discuss the key ingredients of the proposed algorithm in further detail.

\paragraph{Updating the references $\overline{V}_h$ and $\overline{\mu}_h$.} 
At the end of each epoch, the reference values $\{\overline{V}_h\}_{h=1}^H$, as well as the associated running average $\{\overline{\mu}_h\}_{h=1}^H$, are determined using what happens during the current epoch. 
More specifically, the following update rules for $\overline{V}_h$ and $\overline{\mu}_h$ are carried out at the end of the $m$-th epoch: 
\begin{subequations}
\begin{align}
  \overline{V}_h(s) &\leftarrow \overline{V}^{\nnext}_h(s) , \\
  \overline{\mu}_h(s,a) &\leftarrow \frac{\sum_{t=1}^{L_{m}} \ind(s_h^t = s, a_h^t = a) \overline{V}_{h+1}(s_{h+1}^t)}{ \max\Big\{ \big\{ \sum_{t=1}^{L_{m}} \ind(s_h^t = s, a_h^t = a) \big\} , 1 \Big\} } 
  \label{eq:update-mu-principle}
\end{align}
\end{subequations}
\normalsize
for all $(h,s,a)\in [H]\times \cS\times\cA$. 
Here, $\overline{V}_h(s)$ is assigned by $\overline{V}^{\nnext}_h(s)$, which is maintained as the value estimate $V_h(s)$ at the end of the $(m-1)$-th epoch, and the update of $\overline{\mu}_h(s,a) $ is implemented in a recursive manner in the current $m$-th epoch. See also line~\ref{eq:update-mu-reference-v} and line~\ref{line:ref-mean-update} of Algorithm~\ref{alg:lcb-advantage-per-epoch}. 

\paragraph{Learning Q-estimate $\overline{Q}_h$ based on the reference-advantage decomposition.}
Armed with the references $\overline{V}_h$ and $\overline{\mu}_h$ updated at the end of the previous $(m-1)$-th epoch, \LCBQR iteratively updates the Q-estimate $\overline{Q}_h$ in all episodes during the $m$-th epoch.
At each time step $h$ in any episode, whenever $(s,a)$ is visited, \LCBQR updates the reference Q-value as follows:
\begin{align}
  &\overline{Q}_h(s, a) \leftarrow (1-\eta)\overline{Q}_h(s, a) + \eta\Big\{ r_h(s,a) + \underset{\text{estimate of } P_{h,s,a} (V_{h+1} - \overline{V}_{h+1})}{\underbrace{\widehat{P}_{h,s,a} \big(V_{h+1}- \overline{V}_{h+1}\big)}} \hspace{-0.3em} + \hspace{-0.3em} \underset{\text{estimate of } P_{h,s,a}\overline{V}_{h+1}}{\underbrace{\overline{\mu}_h}}  \hspace{-1.0em}   - \overline{b}_h(s, a) \Big\}. \label{eq:lcb-adv-principle-rule}
\end{align}
Intuitively, we decompose the target $P_{h,s,a} V_{h+1}$ into a reference part $P_{h,s,a}\overline{V}_{h+1}$ and an advantage part $P_{h,s,a} (V_{h+1} - \overline{V}_{h+1})$, and cope with the two parts separately.  
In the sequel, let us take a moment to discuss three essential ingredients of the update rule \eqref{eq:lcb-adv-principle-rule}, which shed light on the design rationale of our algorithm. 
\begin{itemize}
\item
Akin to \LCBQ, the term $\widehat{P}_{h,s,a} \big(V_{h+1}- \overline{V}_{h+1}\big)$ serves as an unbiased stochastic estimate of $P_{h,s,a} \left(V_{h+1}- \overline{V}_{h+1}\right)$ if a sample transition $(s,a,s_{h+1})$ at time step $h$ is observed. 
If $V_{h+1}$ stays close to the reference $\overline{V}_{h+1}$ as the algorithm proceeds, the variance of this stochastic term can be lower than that of the stochastic term $\widehat{P}_{h,s,a} V_{h+1}$ in \eqref{equ:lcb-q-update}.

\item 
The auxiliary estimate $\overline{\mu}_h$ introduced in \eqref{eq:update-mu-principle} serves as a running estimate of the reference part $P_{h,s,a} \overline{V}_{h+1}$. 
    Based on the update rule \eqref{eq:update-mu-principle}, we design $\overline{\mu}_h(s,a)$ to estimate  the running mean of the reference part $\big[P_{h,s,a} \overline{V}_{h+1}\big]$ using a number of previous samples. As a result, we expect the variability of this term to be well-controlled, particularly as the number of samples in each epoch grows exponentially (recall that $L_m=2^m$).

\item In each episode, the term $\overline{b}_h(s,a)$ serves as the additional confidence bound on the error between the estimates of the reference/advantage and the ground truth. More specifically,  $\refmu_h(s, a)$ and $\refsg_h(s, a)$ are respectively the running mean and 2nd moment of the reference part $\big[P_{h,s,a} \overline{V}_{h+1}\big]$  (cf.~lines~\ref{line:refmu_h}-\ref{line:refsigma_h} of Algorithm~\ref{algo:subroutine}); $\advmu_h(s, a)$ and $\advsg_h(s, a)$ represent respectively the running mean and 2nd moment of the advantage part $\big[P_{h,s,a}(V_{h+1} - \overline{V}_{h+1})\big]$ (cf.~lines~\ref{line:advmu_h}-\ref{line:advsigma_h} of Algorithm~\ref{algo:subroutine});
   $\overline{B}_h(s, a)$ aggregates the empirical standard deviations of the reference and the advantage parts. The LCB penalty term $ \overline{b}_h(s, a)$ is updated using $ \overline{B}_h(s, a)$ and $\overline{\delta}_h(s_h, a_h)$ (cf.~lines~\ref{line:bonus_1}-\ref{line:bonus_2} of Algorithm~\ref{algo:subroutine}), taking into account the confidence bounds for both the reference and the advantage.

\end{itemize}

In a nutshell, the auxiliary sequences of the reference values are designed to help reduce the variance of the stochastic Q-learning updates, 
which taken together with the principle of pessimism play a crucial role in the improvement of sample complexity for offline RL.

\subsection{Theoretical guarantees for \LCBQR}
Encouragingly, the proposed \LCBQR algorithm provably achieves near-optimal sample complexity for sufficiently small $\varepsilon$, as demonstrated by the following theorem. 
\begin{theorem}\label{thm:lcb-adv} 
Consider any $\delta \in (0,1)$, and recall that $\iota=\log\big( \frac{SAT}{\delta} \big)$ and $T=KH$. 
Suppose that $\cb>0$ is chosen to be a sufficiently large constant, and that the behavior policy $\mu$ satisfies Assumption~\ref{assumption}. Then there exists some universal constant $c_{\mathrm{g}}>0$ such that with probability at least $1-\delta$, the policy $\widehat{\pi}$ output by Algorithm~\ref{alg:lcb-advantage-per-epoch} satisfies
\begin{align}\label{equ:lcb-result}
    V_1^\star(\rho) - V_1^{\widehat{\pi}}(\rho) 
    &\leq c_{\mathrm{g}} \bigg(\sqrt{ \frac{H^4SC^\star  \iota^5}{T}} + \frac{H^5 SC^\star \iota^4}{T} \bigg).
\end{align}
\end{theorem}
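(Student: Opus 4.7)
The proof plan is to combine the pessimism principle with the reference-advantage variance-reduction technique, piggybacking on the analytical framework that established Theorem~\ref{thm:lcb} but amplifying it with tighter variance control. I would proceed in four stages.

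\textbf{Stage 1 (Pessimism).} First establish that with probability at least $1-\delta$, the estimates produced by the algorithm satisfy $Q_h^{\LCB}(s,a) \le Q_h^\star(s,a)$ and $\overline{Q}_h(s,a) \le Q_h^\star(s,a)$ uniformly in $(s,a,h)$ and throughout the run. The bound on $Q_h^{\LCB}$ reuses the argument behind Theorem~\ref{thm:lcb}. For $\overline{Q}_h$, I would unroll the update rule~\eqref{eq:lcb-adv-principle-rule} using the learning rate $\eta_n=(H+1)/(H+n)$ into a convex combination of past stochastic targets, and then show via Freedman's inequality that the penalty $\overline{b}_h(s,a)$ (built from the running second moments $\refsg_h$ and $\advsg_h$ via $\overline{B}_h$) dominates the martingale deviations of both $\widehat{P}_{h,s,a}(V_{h+1}-\overline{V}_{h+1})$ and $\overline{\mu}_h - P_{h,s,a}\overline{V}_{h+1}$. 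Combined with the $\max$ update in line~\ref{eq:line-q-update}, this yields $Q_h \le Q_h^\star$ and hence $V_h \le V_h^\star$, so that the suboptimality gap satisfies $V_1^\star(\rho)-V_1^{\widehat\pi}(\rho) \le \mathbb{E}_{s_1\sim\rho}[V_1^\star(s_1)-V_1(s_1)]$.

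\textbf{Stage 2 (Error recursion along $\pi^\star$).} Using the Bellman equations and the deterministic optimal policy $\pi^\star$, reduce $V_1^\star(s_1)-V_1(s_1)$ to controlling $Q_h^\star(s,\pi^\star_h(s))-Q_h(s,\pi^\star_h(s))$, which via $Q_h \ge \overline{Q}_h$ at every visited $(s,a)$ is further bounded in terms of $Q_h^\star - \overline{Q}_h$. Unrolling the $\overline{Q}_h$ recursion as in Stage~1 yields a representation of this gap as a weighted sum of (i) penalty terms $\overline{b}_h$ evaluated at past visits and (ii) a residual $P_{h,s,a}(V_{h+1}^\star-V_{h+1})$ that feeds the same quantity back at step $h+1$, producing an $H$-layer telescoping bound whose leading term is $\sum_h \mathbb{E}_{\pi^\star}[\overline{b}_h(s_h,a_h)]$ after suitable absorption of the lower-order martingale residuals.

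\textbf{Stage 3 (Variance reduction and conversion by single-policy concentrability).} Show that as the epochs progress, the reference $\overline{V}_h$ becomes close enough to $V_h^\star$ that $\operatorname{Var}_{P_{h,s,a}}(V_{h+1}-\overline{V}_{h+1})$ shrinks and $\operatorname{Var}_{P_{h,s,a}}(\overline{V}_{h+1})\approx \operatorname{Var}_{P_{h,s,a}}(V_{h+1}^\star)$. Substituting into the Bernstein-type bonus $\overline{b}_h$ and invoking the law of total variance $\sum_h \mathbb{E}_{\pi^\star}\operatorname{Var}_{P_{h,s,a}}(V_{h+1}^\star)\lesssim H^2$ saves an $H^2$ factor compared to the $\sqrt{H^3/n}$ bonus in Theorem~\ref{thm:lcb}. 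Finally convert the occupancy-weighted bonus sum into one expressed in terms of the empirical counts $N_h(s,a)$ via Assumption~\ref{assumption}: using $d_h^{\pi^\star}(s,a)\le C^\star d_h^{\mu}(s,a)$ and a high-probability lower bound $N_h(s,a)\gtrsim K d_h^\mu(s,a)$ (valid whenever $K d_h^\mu(s,a)\gtrsim \iota$), with the under-visited pairs being absorbed by pessimism, leads after a Cauchy--Schwarz step over states and steps to the advertised bound $\sqrt{H^4 S C^\star \iota^5/T} + H^5 S C^\star \iota^4/T$.

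\textbf{Main obstacle.} The hardest part will be Stages~1 and~3 jointly. The reference $\overline{V}_{h+1}$ is not independent of the samples used in the updates that consume it, so $\widehat{P}_{h,s,a}(V_{h+1}-\overline{V}_{h+1})$ is not a clean martingale difference; one must exploit the epoch structure by freezing $\overline{V}$ at epoch boundaries (line~\ref{eq:update-mu-reference-v}) and handling $\overline{\mu}_h$ through a separate Freedman inequality on the recursion in line~\ref{line:ref-mean-update}, calibrating the bonus $\overline{b}_h$ so it is simultaneously (a) large enough to maintain the pessimism invariant and (b) small enough to deliver the improved rate. A secondary subtlety is that the $\max$ update on line~\ref{eq:line-q-update} makes $V_h$ depend nonlinearly on $Q_h^{\LCB}$ and $\overline{Q}_h$ across epochs, so the inductive control of $V_h-\overline{V}_h$ must be carried out monotonically in the epoch index alongside the pessimism invariant.
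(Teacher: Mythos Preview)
Your plan is essentially the paper's approach, but two technical points need sharpening before it goes through.

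First, in Stage~1 the implication ``$Q_h\le Q_h^\star$ and hence $V_h\le V_h^\star$, so $V_1^\star(\rho)-V_1^{\widehat\pi}(\rho)\le \mathbb{E}[V_1^\star-V_1]$'' does not follow: $V_h\le V_h^\star$ alone says nothing about $V_1^{\widehat\pi}$. What is required is the stronger invariant $V_h^k\le V_h^{\pi^k}$ for the \emph{greedy} policy $\pi^k$ (the paper's Lemma~\ref{lem:lcb-adv-lower}); only then does $V_1^{\widehat\pi}=V_1^{\pi^K}\ge V_1^K$. Proving this needs an induction that tracks how $\pi^k$ changes with the $\max$ update and uses $V_{h+1}^{\pi^k}\ge V_{h+1}^k\ge V_{h+1}^{k^n}$ (monotonicity plus the hypothesis at step $h{+}1$), which is exactly the ``secondary subtlety'' you flag at the end.

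Second, in Stage~3 the mechanism is not that the advantage variance ``shrinks'' a priori. The paper bounds the $d_h^{\pi^\star}$-weighted, $k$-summed advantage variance by the quantity $x\coloneqq\max_h\sum_{k}\sum_s d_h^{\pi^\star}(s)\big(V_h^\star(s)-V_h^k(s)\big)$ itself (Lemma~\ref{lemma:lcb-adv-bound-terms}, inequality~\eqref{eq:lemma6-b}), producing a self-referential inequality $x\lesssim\sqrt{H^4SC^\star\iota^3\,x}+\sqrt{H^3SC^\star K\iota^5}+H^4SC^\star\iota^4$ that is then solved. Relatedly, the recursion in Stage~2 is not run on the final iterate: one first averages over all $k$ via monotonicity of $V_h^k$ (cf.~\eqref{equ:regret2pac-2}), and only then does the learning-rate identity $\sum_{N\ge n}\eta_n^N\le 1+1/H$ collapse the past $V_{h+1}^{k^n}$ terms back into a step-$(h{+}1)$ sum. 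Without this averaging the telescoping you describe does not close.
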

As a consequence, Theorem~\ref{thm:lcb-adv} reveals that
the  \LCBQR algorithm is guaranteed to find an $\varepsilon$-optimal policy (i.e., $V_1^\star(\rho) - V_1^{\widehat{\pi}}(\rho)\leq \varepsilon$) as long as the total sample size $T$ exceeds 
\begin{equation}
  \widetilde{O}\left(\frac{H^4 S C^\star}{\varepsilon^2} + \frac{H^5 S C^\star}{\varepsilon}\right).
  \label{eq:sample-complexity-LCB-ADV-full}
\end{equation}
For sufficiently small accuracy level $\varepsilon$ (i.e., $\varepsilon\leq 1/H$), this results in a sample complexity of 
\small
\begin{equation}
  \widetilde{O}\left(\frac{H^4 S C^\star}{\varepsilon^2} \right),
  \label{eq:sample-complexity-LCB-ADV}
\end{equation}
\normalsize
thereby matching the minimax lower bound developed in \citet{xie2021policy} up to logarithmic factor. Compared with the minimax lower bound $\Omega \big( \frac{H^4SA}{\varepsilon^2}\big)$ in the online RL setting \citep{domingues2021episodic}, this suggests that offline RL can be fairly sample-efficient when the behavior policy closely mimics the optimal policy in terms of the resulting state-action occupancy distribution (a scenario where $C^{\star}$ is potentially much smaller than the size of the action space).

\paragraph{Comparison with offline model-based approaches.}
  In the same offline finite-horizon setting, the state-of-art model-based approach called {\sf PEVI-Adv} has been proposed by \citet{xie2021policy}, which also leverage the idea of reference-advantage decomposition. In comparison with {\sf PEVI-Adv}, \LCBQR not only enjoys the flexibility of model-free approaches, but also achieves optimal sample complexity for a broader range of target accuracy level $\varepsilon$. More precisely, the $\varepsilon$-range for which the algorithm achieves sample optimality can be compared as follows:
  \begin{align}
    \underbrace{\varepsilon \leq \left(0, H^{-1}\right]}_{\text{ (Our \LCBQR)}} \quad \text{vs.} \quad \underbrace{\varepsilon \leq \left(0, H^{-2.5}\right]}_{\text{ ({\sf PEVI-Adv})}},
  \end{align}
  offering an improvement by a factor of $H^{1.5}$.

\section{Analysis}
\label{sec:analysis}

In this section, we outline the main steps needed to establish the main results in Theorem~\ref{thm:lcb} and Theorem~\ref{thm:lcb-adv}.
Before proceeding,
let us first recall the following rescaled learning rates
\begin{equation}
	\eta_n = \frac{H+1}{H+n}
	\label{eq:eta-n-definition}
\end{equation}
for the $n$-th visit of a given state-action pair at a given time step $h$, which are adopted in both \LCBQ and \LCBQR.
For notational convenience, we further introduce two sequences of related quantities defined for any integers $N\geq 0$ and $n\geq 1$:  
\begin{equation}
	\label{equ:learning rate notation}
	\eta_0^N \defn \begin{cases} \prod_{i=1}^N(1-\eta_i) =0 , & \text{if }N>0, \\ 1, & \text{if }N=0,\end{cases}
		\qquad \text{and} \qquad   \eta_n^N \defn \begin{cases} \eta_n \prod_{i = n+1}^N(1-\eta_i), & \text{if }N>n, \\ \eta_n,  & \text{if }N=n,\\
	0,&\text{if } N<n.  \end{cases} 
\end{equation}
The following identity can be easily verified:  

\begin{align}
	\sum_{n=0}^N \eta_n^N = 1.
	\label{eq:sum-eta-n-N}
\end{align}

\begin{algorithm}[t]

        \SetKwFunction{FMain}{update-lcb-q}
  \SetKwProg{Fn}{Function}{:}{}
  \Fn{\FMain{}}{

        $Q_h^{\LCB}(s_h, a_h) \leftarrow (1 - \eta_n)Q_h^{\LCB}(s_h, a_h) + \eta_n\big(r(s_h, a_h) + V_{h+1}(s_{h+1}) - \cb \sqrt{\frac{H^3\iota^2}{n}} \big)$. \label{line:dadv4} \\
        
        }    

        \SetKwFunction{FMain}{update-lcb-q-ra}
  \SetKwProg{Fn}{Function}{:}{}
  \Fn{\FMain{}}{
    \small {\color{own_blue}\tcc{update the moment statistics of the interested terms}} \normalsize
   $[\refmu_h, \refsg_h, \advmu_h, \advsg_h](s_h,a_h) \leftarrow \texttt{update-moments()}$;  \\     
          \small {\color{own_blue}\tcc{update the bonus difference and accumulative bonus} }\normalsize
          $[\overline{\delta}_h , \overline{B}_h](s_h, a_h) \leftarrow \texttt{update-bonus()}$; \label{line:bonus_1}    

          $\overline{b}_h(s_h, a_h) \leftarrow \overline{B}_h(s_h, a_h) + (1-\eta_n) \frac{\overline{\delta}_h(s_h,a_h)}{\eta_n} + \cb \frac{H^{7/4}\iota}{n^{3/4}} + \cb\frac{H^2\iota}{n} $; \label{line:bonus_2}\\
           \small {\color{own_blue}\tcc{update the Q-estimate based on reference-advantage} }\normalsize   
          $\overline{Q}_h(s_h, a_h) \leftarrow (1 - \eta_n) \overline{Q}_h(s_h, a_h) + \eta_n\big(r_h(s_h, a_h) + V_{h+1}(s_{h+1}) - \overline{V}_{h+1}(s_{h+1}) + \overline{\mu}_h(s_h, a_h) - \overline{b}_h \big);$  \label{line:ref-q-update} \\

        }

    \SetKwFunction{FMain}{update-moments}
  \SetKwProg{Fn}{Function}{:}{}
  \Fn{\FMain{}}{

        ${\refmu_h(s_h, a_h) \leftarrow  (1-\tfrac{1}{n}) \refmu_h(s_h, a_h) + \tfrac{1}{n} \overline{V}^{\nnext}_{h+1}(s_{h+1})}$; \label{eq:line-mu-mean}
         {\small\color{own_blue}\tcp{mean of the reference} }\normalsize   \label{line:refmu_h}

        ${\refsg_h(s_h, a_h) \leftarrow  (1-\tfrac{1}{n}) \refsg_h(s_h, a_h) + \tfrac{1}{n}\big(\overline{V}^{\nnext}_{h+1}(s_{h+1}) \big)^2}$;
        \small {\color{own_blue}\tcp{$2^{\text{nd}}$ moment of the reference}} \normalsize  \label{line:refsigma_h}

        $\advmu_h(s_h, a_h) \leftarrow (1-\eta_n)\advmu_h(s_h, a_h) + \eta_n \big( V_{h+1}(s_{h+1}) - \overline{V}_{h+1}(s_{h+1}) \big)$;
        \small {\color{own_blue}\tcp{mean of the advantage}} \normalsize \label{line:advmu_h}
        
        $\advsg_h(s_h, a_h) \leftarrow  (1-\eta_n)\advsg_h (s_h, a_h) + \eta_n \big( V_{h+1}(s_{h+1}) - \overline{V}_{h+1}(s_{h+1}) \big)^2$. 
        \small {\color{own_blue}\tcp{$2^{\text{nd}}$ moment of the advantage} }\normalsize \label{line:advsigma_h}

    }

     \SetKwFunction{FMain}{update-bonus}
  \SetKwProg{Fn}{Function}{:}{}
  \Fn{\FMain{}}{
        
      {$\nextb_h(s_h, a_h) \leftarrow \cb\sqrt{\frac{\iota}{n}}\Big(\sqrt{ \refsg_h(s_h, a_h) - \big( \refmu_h(s_h, a_h) \big)^2} + \sqrt{H}\sqrt{\advsg_h(s_h, a_h) - \big( \advmu_h(s_h, a_h) \big)^2} \,\Big)$;} \label{eq:line-number-19} \\

        $\overline{\delta}_h(s_h,a_h) \leftarrow \nextb_h(s_h, a_h) - \overline{B}_h(s_h, a_h) ;$ \label{eq:line-delta} \\ $\overline{B}_h(s_h, a_h)  \leftarrow \nextb_h(s_h, a_h) .$
        }

\caption{Auxiliary functions}
\label{algo:subroutine}
\end{algorithm}


\subsection{Analysis of \LCBQ} 
\label{sec:LCBQ_analysis}

\begin{algorithm}[t]
 \textbf{Parameters:} some constant $\cb>0$, target success probability $1-\delta \in(0,1)$, and $\iota = \log\big(\frac{SAT}{\delta}\big)$. \\
\textbf{Initialize}  $Q^1_h(s, a) \leftarrow 0$; $N^1_h(s, a)\leftarrow 0$ for all $(s, a, h) \in \cS\times \cA \times [H]$; $V^1_h(s) \leftarrow 0$ for all $(s,h)\in \cS\times[H+1]$; $\pi^1$ s.t. $ \pi^1_h(s)= 1$ for all $(s,h)\in \cS\times [H]$. \\
\For{Episode $ k = 1$ \KwTo $K$}{
Sample the $k$-th trajectory $\{s_h^k, a_h^k, r_h^k\}_{h=1}^H$ from $\mathcal{D}_{\mu}$. {\small\color{own_blue}\tcp{sampling from batch dataset}} 
  \For{Step $ h = 1$ \KwTo $H$ }{
    \For{$(s,a)\in\cS\times \cA$}{   
   {\small\color{own_blue}\tcp{carry over the estimates and policy}} \normalsize
    $N_h^{k+1}(s, a) \leftarrow N_h^k(s, a)$;  ~
    $Q_h^{k+1}(s, a) \leftarrow Q_h^k(s, a)$; ~$V_h^{k+1}(s) \leftarrow V_h^k(s)$; $\pi_h^{k+1}(s) \leftarrow \pi_h^{k}(s)$.
    }
         $N_h^{k+1}(s_h^k, a_h^k) \leftarrow N_h^k(s_h^k, a_h^k) + 1$. {\small\color{own_blue}\tcp{update the counter} }
   
        $n \leftarrow N_h^{k+1}(s_h^k, a_h^k)$; 
        $\eta_n \leftarrow \frac{H + 1}{H + n}$.  
        {\small\color{own_blue}\tcp{update the learning rate}}      
        $b_n \leftarrow \cb \sqrt{\frac{H^3\iota^2}{n}}$.  \label{line:2}  {\small\color{own_blue}\tcp{update the bonus term}}
        {\small\color{own_blue}\tcp{update the Q-estimates with LCB} }  \normalsize
  $Q_h^{k+1}(s_h^k, a_h^k) \leftarrow Q_h^k(s_h^k, a_h^k) + \eta_n\Big\{ r_h(s_h^k, a_h^k) + V_{h+1}^k(s_{h+1}^k)-Q_h^k(s_h^k, a_h^k) - b_n \Big\} $. \label{line:lcb-9} \\
    
     {\small\color{own_blue}\tcp{update the value estimates}} \normalsize
      
    $V_{h}^{k+1}(s_h^k) \leftarrow \max\Big\{ V_{h}^{k}(s_h^k ), \,  \max_a Q_{h}^{k+1}(s_h^k, a)  \Big\}$. \label{line:lcb_v_update}\\
    {\small\color{own_blue}\tcp{update the policy} }  \normalsize
  If $V_{h}^{k+1}(s_h^k) =\max_a Q_{h}^{k+1}(s_h^k, a)$: update $\pi_h^{k+1}(s_h^k) =  \arg\max_a Q_{h}^{k+1}(s_h^k, a)$. \label{line:lcb-policy-update}
    \\
    \label{line:lcb-10}

 }
 }

\caption{\LCBQ for offline RL (a rewrite of Algorithm~\ref{algo:lcb-index} to specify dependency on $k$)}
\label{algo:lcb-index-k}
\end{algorithm}

To begin with, we intend to derive a recursive formula concerning the update rule of $Q_h^k$ --- the estimate of the Q-function at step $h$ at the beginning of the $k$-th episode. Note that we have omitted the dependency of all quantities on the episode index $k$ in Algorithm~\ref{algo:lcb-index}. 
For notational convenience and clearness, we rewrite Algorithm~\ref{algo:lcb-index} as Algorithm~\ref{algo:lcb-index-k} by specifying the dependency on the episode index $k$ and shall often use the following set of short-hand notation when it is clear from context.
\begin{itemize}
    \item $N_h^k(s, a)$, or the shorthand $N_h^k$:  the number of episodes that has visited $(s,a)$ at step $h$ before the beginning of the $k$-th episode.
    \item $k_h^n(s,a)$, or the shorthand $k^n$:  the index of the episode in which the state-action pair $(s,a)$ is visited  at step $h$ for the $n$-th times. We also adopt the convention that $k^0 = 0$.
    \item 
    $P_h^k\in\{0,1\}^{1\times S}$:  a row vector corresponding to the empirical transition at step $h$ of the $k$-th episode, namely,
    \begin{align} \label{eq:Phk_def}
	    P_h^k(s) = \ind\big( s= s_{h+1}^k \big) \qquad \text{for all }s\in \cS.
    \end{align}
    \item $\pi^{k}  = \{\pi_h^{k}\}_{h=1}^H$ with $\pi_h^{k}(s) \defn \arg\max_a Q_h^{k}(s,a), \forall (h,s)\in [H] \times \cS$: the deterministic greedy policy at the beginning of the $k$-th episode.
    \item $\widehat{\pi}$: the final output $\widehat{\pi}$ of Algorithms~\ref{algo:lcb-index} corresponds to $\pi^{K+1}$ defined above; for notational simplicity, we shall treat $\widehat{\pi}$ as $\pi^{K}$ in our analysis, which does not affect our result at all. 
\end{itemize}

Consider any state-action pair $(s,a)$. 
According to the update rule in line~\ref{line:lcb-9} of Algorithm~\ref{algo:lcb-index-k}, we can express (with the assistance of the above notation)
\begin{equation}
 Q_h^k(s, a) = Q_h^{k^{N_h^k} + 1} (s, a) =
	\big(1-\eta_{N_h^k} \big)Q_{h}^{k^{N_h^k}}(s, a) + \eta_{N_h^k} \Big\{ r_h(s,a) + V_{h+1}^{k^{N_h^k}}\big( s_{h+1}^{k^{N_h^k}} \big) -b_{N_h^k} \Big\},
	\label{eq:Qhk-sa-expansion-135}
 \end{equation}
where the first identity holds since $k^{N_h^k}$ denotes the latest episode prior to $k$ that visits $(s,a)$ at step $h$, 
and the learning rate is defined in \eqref{eq:eta-n-definition}. 
Note that it always holds that $k>k^{N_h^k}$. Applying the above relation \eqref{eq:Qhk-sa-expansion-135} recursively and using the notation \eqref{equ:learning rate notation} lead to
\begin{equation}\label{equ:Q-update}
    Q_h^k(s,a) = \eta_0^{N_h^k} Q_h^1(s, a) + \sum_{n=1}^{N_h^k}\eta_n^{N_h^k} \left(r_h(s,a) + V_{h+1}^{k^n}\big(s_{h+1}^{k^n} \big) -b_n \right). 
\end{equation}

As another important fact, the value estimate $V_h^k$ is monotonically non-decreasing in $k$,  i.e.,  
\begin{equation}\label{equ:monotone-lcb}
		V_h^{k +1}(s) \ge V_h^{k}(s)\qquad\text{for all }( s,k,h)\in \cS \times [K] \times [H],
\end{equation}
which is an immediate consequence of the update rule in line~\ref{line:lcb_v_update} of Algorithm~\ref{algo:lcb-index-k}. 
Crucially,  we observe that the iterate $V^{k}_h$ forms a ``pessimistic view'' of $V^{\pi^k}_h$ --- and in turn $V^{\star}_h$ --- resulting from suitable design of the penalty term. This observation is formally stated in the following lemma, with the proof postponed to Section~\ref{sec:proof-lem:vk-lower}. 
\begin{lemma} \label{lem:Vk-lower}
Consider any $\delta \in (0, 1)$, and suppose that $\cb >0$ is some sufficiently large constant. Then with probability at least $1-\delta$, 
\begin{align}
\label{equ:lcb-concentration-main}
  \Bigg|\sum_{n = 1}^{N_h^k(s,a)} \eta^{N_h^k(s,a)}_n \Big(P_{h, s,a} - P_h^{k^n(s,a)} \Big) V^{k^n(s,a)}_{h+1} \Bigg| 
	\leq \sum_{n=1}^{N_h^k(s,a)} \eta_n^{N_h^k(s,a)} b_n  
\end{align}
holds simultaneously for all  $(k, h, s,a) \in [K] \times [H] \times \cS\times \cA$, 
and
\begin{equation} \label{eq:Vk-lower}
     V_{h}^{k}(s)  \leq V_{h}^{\pi^k}(s) \leq V_h^{\star}(s)
\end{equation}
holds simultaneously for all $(k, h, s) \in [K] \times [H] \times \cS$. 
\end{lemma}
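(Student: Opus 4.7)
The plan is to establish the two parts of the lemma separately. The concentration bound \eqref{equ:lcb-concentration-main} is the technical engine and I prove it first via a martingale argument; the pessimism inequality \eqref{eq:Vk-lower} then follows from a backward induction on $h$ that feeds this bound into the explicit expansion \eqref{equ:Q-update}.

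For \eqref{equ:lcb-concentration-main}, fix a triple $(s, a, h)$ and enumerate the episodes $k^1(s,a) < k^2(s,a) < \cdots$ in which $(s, a)$ is visited at step $h$. Let $\cF_n$ denote the $\sigma$-field generated by everything observed strictly before the draw of the next state $s_{h+1}^{k^n}$ in episode $k^n$. Since $V_{h+1}$ is only updated during step $h+1$ of some episode, the vector $V_{h+1}^{k^n}$ is $\cF_n$-measurable, while $s_{h+1}^{k^n} \sim P_{h, s, a}$ conditionally on $\cF_n$; consequently $\{ (P_h^{k^n} - P_{h, s, a}) V_{h+1}^{k^n} \}_n$ is a martingale difference sequence whose increments are bounded in magnitude by $H$. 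For each fixed $N \in \{1, \ldots, K\}$ the weights $\eta_n^N$ are deterministic, so Azuma--Hoeffding yields, with probability at least $1 - \delta'$,
\begin{equation*}
\Big| \sum_{n=1}^N \eta_n^N \big( P_{h, s, a} - P_h^{k^n} \big) V_{h+1}^{k^n} \Big| \lesssim H \sqrt{\textstyle\sum_{n=1}^N (\eta_n^N)^2 \cdot \log(1/\delta')} \lesssim \sqrt{ H^3 \log(1/\delta') / N },
\end{equation*}
where the last step uses the standard identity $\sum_{n=1}^N (\eta_n^N)^2 \leq 2H/N$ for the learning rate \eqref{eq:eta-n-definition}. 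Combined with the companion estimate $\sum_{n=1}^N \eta_n^N / \sqrt{n} \asymp 1/\sqrt{N}$, the cumulative penalty obeys $\sum_{n=1}^N \eta_n^N b_n \gtrsim \cb \sqrt{H^3 \iota^2 / N}$, which dominates the right-hand side above for $\cb$ sufficiently large. A union bound over $(s, a, h) \in \cS \times \cA \times [H]$ and over all $N \in \{0, 1, \ldots, K\}$ (noting that the random count $N_h^k(s, a)$ must take one of these values) completes the argument.

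For \eqref{eq:Vk-lower} I induct backward on $h$: the base $h = H+1$ is trivial by \eqref{eq:value-H-1-zero}, and $V_h^{\pi^k}(s) \leq V_h^\star(s)$ is just optimality of $\pi^\star$. For the inductive step, fix $(k, s)$, set $a = \pi_h^k(s)$, and read off from lines~\ref{line:lcb_v_update}--\ref{line:lcb-policy-update} of Algorithm~\ref{algo:lcb-index-k} that either $V_h^k(s) = 0$ (in which case the claim is immediate) or $V_h^k(s) = Q_h^{k^{\dagger}}(s, a)$ for some $k^{\dagger} \leq k$ at which $V_h(s)$ was last revised. Substituting $V_{h+1}^{k^n}(s_{h+1}^{k^n}) = P_h^{k^n} V_{h+1}^{k^n}$ into \eqref{equ:Q-update} and invoking \eqref{equ:lcb-concentration-main} to absorb the cumulative penalty against the concentration error yields
\begin{equation*}
Q_h^{k^{\dagger}}(s, a) \leq r_h(s, a) + \sum_{n=1}^{N} \eta_n^{N} \, P_{h, s, a} V_{h+1}^{k^n} \qquad \text{with } N \defn N_h^{k^{\dagger}}(s, a).
\end{equation*}
The monotonicity \eqref{equ:monotone-lcb} upgrades each $V_{h+1}^{k^n}$ to $V_{h+1}^k$, the weights sum to one via \eqref{eq:sum-eta-n-N}, and the inductive hypothesis $V_{h+1}^k \leq V_{h+1}^{\pi^k}$ gives the desired bound $Q_h^{k^{\dagger}}(s, a) \leq r_h(s, a) + P_{h, s, a} V_{h+1}^{\pi^k} = V_h^{\pi^k}(s)$.

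The principal obstacle is the concentration step: one must isolate a filtration for which $V_{h+1}^{k^n}$ is predictable while $s_{h+1}^{k^n}$ remains a fresh $P_{h, s, a}$-sample, and then render the bound uniform in the random visit count $N_h^k(s, a)$ via an appropriate union bound (this is where the $\iota^2$ inside $b_n$ comes in handy). Once \eqref{equ:lcb-concentration-main} is in hand, the pessimism induction is largely mechanical, riding on the weight identity $\sum_n \eta_n^N = 1$ and the monotonicity of $V_h^k$ in $k$.
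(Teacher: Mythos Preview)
Your proposal is correct and follows essentially the same route as the paper. For \eqref{equ:lcb-concentration-main} the paper invokes its Freedman-based Lemma~\ref{lemma:azuma-hoeffding}, whereas you apply Azuma--Hoeffding directly with a union bound over $(s,a,h,N)$; both yield a deviation of order $\sqrt{H^3\iota/N}$ (the paper picks up an extra $\iota$ from its more general lemma), and either is dominated by $\sum_n\eta_n^N b_n\asymp \cb\sqrt{H^3\iota^2/N}$ via Lemma~\ref{lemma:property of learning rate}. For \eqref{eq:Vk-lower} the paper runs an $(h,k)$ double induction and splits into the two cases $V_h^k(s)=\max_a Q_h^k(s,a)$ versus $V_h^k(s)=V_h^{k_o(h)}(s)$; your single backward induction on $h$ together with the observation that $V_h^k(s)=Q_h^{k^\dagger}(s,\pi_h^k(s))$ at the last revision episode $k^\dagger$ unifies these two cases and is a legitimate (and slightly cleaner) rephrasing of the same argument, since the only hypothesis actually used in either case is $V_{h+1}^{\pi^k}\ge V_{h+1}^k\ge V_{h+1}^{k^n}$.
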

In a nutshell, the result \eqref{eq:Vk-lower} in Lemma~\ref{lem:Vk-lower} reveals that $V_h^k$ is a pointwise lower bound on $V_h^{\pi^k}$ and $ V_h^{\star}$, thereby forming a pessimistic estimate of the optimal value function. 
In addition,  the property \eqref{equ:lcb-concentration-main} in Lemma~\ref{lem:Vk-lower} essentially tells us that the weighted sum of the penalty terms dominates the weighted sum of the uncertainty terms, which plays a crucial role in ensuring the aforementioned pessimism property. 
As we shall see momentarily, Lemma~\ref{lem:Vk-lower} forms the basis of the subsequent proof.

We are now ready to embark on the analysis for \LCBQ, which is divided into multiple steps as follows.

\paragraph{Step 1: decomposing estimation errors.}

With the aid of Lemma~\ref{lem:Vk-lower},
we can develop an upper bound on the performance difference of interest in \eqref{equ:lcb-result} as follows
\begin{align}
	V_1^\star(\rho) - V_1^{\widehat{\pi}}(\rho) &= \mathop{\mathbb{E}}\limits _{s_{1}\sim\rho}\big[ V_1^\star(s_1) \big] - \mathop{\mathbb{E}}\limits _{s_{1}\sim\rho}\big[ V_1^{\pi^{K}}(s_1)  \big] \nonumber\\
	& \overset{\mathrm{(i)}}{\leq} \mathop{\mathbb{E}}\limits _{s_{1}\sim\rho}\big[ V_1^\star(s_1) \big] - \mathop{\mathbb{E}}\limits _{s_{1}\sim\rho}\big[ V_1^{K}(s_1) \big]\nonumber \\
	& \overset{\mathrm{(ii)}}{\leq}  \frac{1}{K}\sum_{k=1}^K \left(\mathop{\mathbb{E}}\limits _{s_{1}\sim\rho}\big[ V_1^\star(s_1) \big] - \mathop{\mathbb{E}}\limits _{s_{1}\sim\rho}\big[ V_1^{k}(s_1) \big] \right) \nonumber\\
	& = \frac{1}{K}\sum_{k=1}^K\sum_{s\in \cS} d_1^{\pi^\star}(s) \left(V_1^\star(s) - V_1^{k}(s)\right),\label{equ:regret2pac}
\end{align}
where (i) results from Lemma~\ref{lem:Vk-lower} (i.e., $V_1^{\pi^{K}}(s) \geq V_1^{K}(s) $ for all $s\in \cS$), 
 (ii) follows from the monotonicity property in \eqref{equ:monotone-lcb}, and the
last equality holds since $d_1^{\pi^{\star}}(s) = \rho(s)$ (cf.~\eqref{eq:d1-pi-s-deterministic}).

We then attempt to bound the quantity on the right-hand side of \eqref{equ:regret2pac}. 
Given that $\pi^{\star}$ is assumed to be a deterministic policy, we have $d_h^{\pi^\star}(s) = d_h^{\pi^\star}(s, \pi^{\star}(s))$. Taking this together with the relations  $V_h^{k}(s) \geq \max_a Q_h^k(s,a) \geq Q_h^k(s, \pi_h^\star(s))$ (see line~\ref{line:lcb_v_update} of Algorithm~\ref{algo:lcb-index-k}) and $V_h^\star(s)=Q_h^\star(s, \pi_h^\star(s))$, we obtain
\begin{align}
\sum_{k=1}^K \sum_{s\in \cS} d_h^{\pi^\star}(s) \left(V_h^\star(s) - V_h^{k}(s)\right) 
   & = \sum_{k=1}^K \sum_{s\in \cS} d_h^{\pi^\star}(s, \pi_h^\star(s)) \left(V_h^\star(s) - V_h^{k}(s)\right) \nonumber\\
    &\leq \sum_{k=1}^K \sum_{s\in \cS} d_h^{\pi^\star}(s, \pi_h^\star(s)) \Big(Q_h^\star\big(s, \pi_h^\star(s)\big) - Q_h^k\big(s, \pi_h^\star(s)\big)\Big) \nonumber\\
    &= \sum_{k=1}^K \sum_{(s, a)\in \cS \times \cA} d_h^{\pi^\star}(s, a) \left(Q_h^\star(s, a) - Q_h^{k}(s, a)\right) \label{eq:llama}
\end{align}
for any $h\in [H]$,
where the last identity holds since $\pi^\star$ is deterministic and hence 
\begin{equation} \label{eq:d_h_pi_star}
	d_h^{\pi^\star}(s,a) = 0 \qquad \text{for any }a\neq \pi^\star_h(s).
\end{equation}

In view of \eqref{eq:llama}, we need to properly control $Q_h^\star(s, a) - Q_h^{k}(s, a)$. 
By virtue of \eqref{eq:sum-eta-n-N}, we can rewrite $Q_h^\star(s, a)$ as follows
\begin{align}
Q_h^\star(s, a) & =    \sum_{n=0}^{N_h^k}\eta_n^{N_h^k}   Q_h^\star(s, a)  =  \eta_0^{N_h^k}  Q_h^{\star}(s, a) + \sum_{n=1}^{N_h^k}\eta_n^{N_h^k} Q_h^{\star}(s, a) \nonumber \\
& =  \eta_0^{N_h^k}  Q_h^{\star}(s, a) + \sum_{n=1}^{N_h^k}\eta_n^{N_h^k} \left(r_h(s,a) + P_{h,s, a} V_{h+1}^\star \right), \label{eq:Q_opt_decom}
\end{align}
where the second line follows from Bellman's optimality equation \eqref{eq:bellman_optimality}. 
Combining \eqref{equ:Q-update} and \eqref{eq:Q_opt_decom} leads to
\begin{align}
    &Q_h^\star(s, a) - Q_h^k(s, a) \nonumber\\
    & = \eta_0^{N_h^k}\left(Q_h^\star(s,a) - Q_h^1(s, a)\right) + \sum_{n=1}^{N_h^k}\eta_n^{N_h^k}\left( P_{h,s, a} V_{h+1}^\star - V_{h+1}^{k^n}(s_{h+1}^{k^n}) + b_n \right) \nonumber\\
    & =  \eta_0^{N_h^k}\left(Q_h^\star(s,a) - Q_h^1(s, a)\right)  + \sum_{n=1}^{N_h^k} \eta_n^{N_h^k} b_n + \sum_{n=1}^{N_h^k} \eta_n^{N_h^k} P_{h,s, a}\big( V_{h+1}^\star - V_{h+1}^{k^n}\big)   + \sum_{n=1}^{N_h^k} \eta_n^{N_h^k} \big(P_{h,s, a} - P_h^{k^n}\big)V_{h+1}^{k^n} 
	\label{eq:Qstar-Qk-decomposition-1}\\
    &\leq \eta_0^{N_h^k}H + 2\sum_{n=1}^{N_h^k} \eta_n^{N_h^k} b_n + \sum_{n=1}^{N_h^k} \eta_n^{N_h^k} P_{h,s,a}\big( V_{h+1}^\star - V_{h+1}^{k^n}\big), \label{eq:Qstar-Qk-decomposition-2}
\end{align}
where we have made use of the definition in \eqref{eq:Phk_def} by recognizing $P_h^{k^n}V_{h+1}^{k^n} = V_{h+1}^{k^n}(s_{h+1}^{k^n}) $ in \eqref{eq:Qstar-Qk-decomposition-1}, and the last inequality follows from the fact $Q_h^\star(s,a) - Q_h^1(s, a) = Q_h^\star(s,a) - 0\leq H$ and the bound \eqref{equ:lcb-concentration-main} in Lemma~\ref{lem:Vk-lower}.   
Substituting the above bound into \eqref{eq:llama},
we arrive at
\begin{align}
    \sum_{k=1}^K \sum_{s\in \cS} d_h^{\pi^\star}(s) \left(V_h^\star(s) - V_h^{k}(s)\right) & \leq 
    \underbrace{\sum_{k=1}^K \sum_{(s, a) \in \cS \times \cA} d_h^{\pi^\star}(s, a)\eta_0^{N_h^k(s,a)}H + 2\sum_{k=1}^K \sum_{(s, a) \in \cS \times \cA} d_h^{\pi^\star}(s, a) \sum_{n=1}^{N_h^k(s,a)} \eta_n^{N_h^k(s,a)} b_n}_{\eqqcolon\, I_h} \nonumber\\
    & \qquad + \sum_{k=1}^K  \sum_{(s, a) \in \cS \times \cA} d_h^{\pi^\star}(s, a) P_{h, s, a}\sum_{n=1}^{N_h^k(s,a)} \eta_n^{N_h^k(s,a)}\big(V_{h+1}^\star - V_{h+1}^{k_h^n(s,a)}\big).
	\label{equ:lcb-decompose-terms}
\end{align}

\paragraph{Step 2: establishing a crucial recursion.} 
As it turns out, the last term on the right-hand side of \eqref{equ:lcb-decompose-terms} can be used to derive a recursive relation that connects step $h$ with step $h+1$, 
as summarized in the next lemma.
\begin{lemma} 
	\label{lemma:recursion}
	With probability at least $1-\delta$, the following recursion holds:  
\begin{align}
	&\sum_{k=1}^K  \sum_{(s, a) \in \cS \times \cA} d_h^{\pi^\star}(s, a) P_{h, s, a}\sum_{n=1}^{N_h^k(s,a)} \eta_n^{N_h^k(s,a)}\big(V_{h+1}^\star - V_{h+1}^{k_h^n(s,a)}\big) \nonumber\\
	&\leq \left(1+\frac{1}{H}\right)\sum_{k=1}^K \sum_{s\in \cS} d_{h+1}^{\pi^\star}(s) \left(V_{h+1}^\star(s) - V_{h+1}^{k}(s)\right) + 24\sqrt{H^2C^\star K \log\frac{2H}{\delta}} + 12HC^\star\log\frac{2H}{\delta}. \label{equ:lcb-recursion}
\end{align} 
\end{lemma}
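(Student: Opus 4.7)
The plan is to reduce the complicated delayed-visit structure on the left-hand side to a clean sum at step $h+1$, picking up the characteristic Q-learning factor $1+1/H$. The key algebraic tool is the learning-rate identity $\sum_{N \geq n} \eta_n^N \leq 1 + 1/H$, which is specific to the rescaled rates $\eta_n = (H+1)/(H+n)$ from \eqref{eq:eta-n-definition} and is the origin of the $(1+1/H)$ prefactor in the recursion. I would first swap the order of summation in $\sum_{k=1}^K \sum_{n=1}^{N_h^k(s,a)} \eta_n^{N_h^k(s,a)} (V_{h+1}^\star - V_{h+1}^{k_h^n(s,a)})$: recasting it in terms of the visit episodes $\tau_n(s,a) := k_h^n(s,a)$, each visit propagates forward to subsequent episodes with total weight controlled by the learning-rate identity, with the subtlety that the outer sum ranges over \emph{all} $K$ episodes rather than only visit episodes.

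To bridge the $d_h^{\pi^\star}$-weighted outer sum to a sum tied to actual sample visits I would invoke a Freedman-type martingale concentration, importance-weighted by the ratio $d_h^{\pi^\star}/d_h^{\mu}$. Concentrability $d_h^{\pi^\star}(s,a) \leq C^\star d_h^{\mu}(s,a)$ caps this ratio by $C^\star$, so each martingale difference is bounded by $\lesssim H C^\star$ with conditional variance $\lesssim H^2 C^\star$; summing $K$ such terms and applying Freedman yields precisely the error terms $\sqrt{H^2 C^\star K \log(2H/\delta)}$ and $H C^\star \log(2H/\delta)$ appearing on the right-hand side. The determinism of $\pi^\star$ restricts the effective sum to $S$ state-action pairs, keeping the bounded difference and variance free of any $A$ factor. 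The transition identity $\sum_{(s,a)} d_h^{\pi^\star}(s,a) P_h(s'\mymid s,a) = d_{h+1}^{\pi^\star}(s')$ then converts the conditional-mean part into the target main term $\sum_{k,s'} d_{h+1}^{\pi^\star}(s')(V_{h+1}^\star(s') - V_{h+1}^k(s'))$ at step $h+1$.

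The main obstacle will be arranging the decomposition so that $C^\star$ shows up \emph{only} in the error terms while the main term retains the clean coefficient $1+1/H$. This calls for a careful two-stage splitting: first absorb the delayed-visit mismatch --- which contributes the $1/H$ overhead via the learning-rate identity --- before introducing concentrability, so that $C^\star$ enters only through the fluctuations across episodes, not through the leading coefficient. The monotonicity $V_{h+1}^{k+1} \geq V_{h+1}^k$ from \eqref{equ:monotone-lcb} should be useful in coupling the delayed values $V_{h+1}^{k_h^n(s,a)}$ to the current $V_{h+1}^k$ along each visit chain, so that the delay-error terms telescope cleanly and do not inflate the final bound.
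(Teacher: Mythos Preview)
Your proposal assembles the right ingredients --- the learning-rate identity $\sum_{N\ge n}\eta_n^N\le 1+\tfrac{1}{H}$, importance weighting by $d_h^{\pi^\star}/d_h^{\mu}$, Freedman concentration, and the transition identity --- and this is essentially the paper's route. But the order you specify would fail. You plan to swap the $(k,n)$-sums \emph{first}, for fixed $(s,a)$, and only afterwards ``introduce concentrability'' via importance-weighted Freedman. The learning-rate identity, however, sums over visit counts $N$, not over episode indices $k$: since $N_h^k(s,a)$ is constant on each inter-visit block $[\tau_N,\tau_{N+1})$, swapping on the expectation side produces weights $\sum_{N\ge n}\big(\tau_{N+1}-\tau_N\big)\,\eta_n^N$, and the random gap sizes $\tau_{N+1}-\tau_N$ are \emph{not} absorbed by $1+\tfrac{1}{H}$. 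Monotonicity of $V_{h+1}^k$ does not rescue this --- it pushes the delayed values $V_{h+1}^{\tau_n}$ below $V_{h+1}^k$, which is the wrong direction for an upper bound.

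The paper resolves this by sandwiching the swap between two Freedman steps rather than placing it first. One first replaces $A_{h,k}:=\sum_{(s,a)}d_h^{\pi^\star}(s,a)P_{h,s,a}[\cdots]$ by its importance-weighted empirical surrogate $Y_{h,k}:=\tfrac{d_h^{\pi^\star}(s_h^k,a_h^k)}{d_h^{\mu}(s_h^k,a_h^k)}P_{h,s_h^k,a_h^k}[\cdots]$, bounding $\sum_k(A_{h,k}-Y_{h,k})$ by Freedman. In $\sum_k Y_{h,k}$, regrouping by state-action pair now makes the outer sum a sum over \emph{visits} (each episode contributes exactly once to the pair it actually visits), so the learning-rate identity applies cleanly and gives $\sum_k Y_{h,k}\le\sum_k Z_{h,k}$ with $Z_{h,k}:=(1+\tfrac{1}{H})\tfrac{d_h^{\pi^\star}(s_h^k,a_h^k)}{d_h^{\mu}(s_h^k,a_h^k)}P_{h,s_h^k,a_h^k}(V_{h+1}^\star-V_{h+1}^k)$; here the relevant sign condition is the pessimism bound $V_{h+1}^\star-V_{h+1}^l\ge 0$ from Lemma~\ref{lem:Vk-lower}, not monotonicity. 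A second Freedman step then converts $Z_{h,k}$ back to its mean $B_{h,k}=(1+\tfrac{1}{H})\sum_{s'}d_{h+1}^{\pi^\star}(s')(V_{h+1}^\star(s')-V_{h+1}^k(s'))$. Both Freedman applications supply the advertised error terms, and $C^\star$ enters only there --- so your intuition about where $C^\star$ lands is correct, but the mechanism is Freedman $\to$ swap $\to$ Freedman, not swap $\to$ Freedman.
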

Lemma~\ref{lemma:recursion} taken together with \eqref{equ:lcb-decompose-terms} implies that
\begin{align}
    \sum_{k=1}^K \sum_{s\in \cS} d_h^{\pi^\star}(s) \left(V_h^\star(s) - V_h^{k}(s)\right) & \leq 
	\left(1+\frac{1}{H}\right)\sum_{k=1}^K \sum_{s\in \cS} d_{h+1}^{\pi^\star}(s) \left(V_{h+1}^\star(s) - V_{h+1}^{k}(s)\right) \notag \\
	& \qquad + I_h + 24\sqrt{H^2C^\star K \log\frac{2H}{\delta}} + 12HC^\star\log\frac{2H}{\delta} .
	\label{equ:lcb-decompose-terms-123}
\end{align}
Invoking \eqref{equ:lcb-decompose-terms-123} recursively over the time steps $h=H, H-1,\cdots, 1$ with the terminal condition $V^{k}_{H+1} = V_{H+1}^{\star} = 0$, we reach
\begin{align}\label{equ:summary_of_terms}
	\sum_{k=1}^K \sum_{s\in \cS} d_1^{\pi^\star}(s) \left(V_1^\star(s) - V_1^{k}(s)\right) 
	&\leq \max_{h\in[H]} \sum_{k=1}^K \sum_{s\in \cS} d_h^{\pi^\star}(s) \left(V_h^\star(s) - V_h^{k}(s)\right) \nonumber \\
	&\leq \sum_{h=1}^H \left(1+\frac{1}{H}\right)^{h - 1}\left( I_h + 24\sqrt{H^2C^\star K \log\frac{2H}{\delta}} + 12HC^\star\log\frac{2H}{\delta}\right) ,
\end{align} 
which captures the estimation error resulting from the use of pessimism principle.

\paragraph{Step 3: controlling the right-hand side of \eqref{equ:summary_of_terms}.} The right-hand side  of  \eqref{equ:summary_of_terms} can be bounded through the following lemma, which will be proved in Appendix~\ref{proof:lemma:lemma:lcb-bound-terms}.

\begin{lemma}\label{lemma:lcb-bound-terms}
Consider any $\delta \in (0,1)$. With probability at least $1-\delta$, we have  
\begin{align}
\sum_{h=1}^H \left(1+\frac{1}{H}\right)^{h - 1}\left( I_h + 24\sqrt{H^2C^\star K \log\frac{2H}{\delta}} + 12HC^\star\log\frac{2H}{\delta}\right)   & \lesssim   H^2 SC^\star \iota + \sqrt{H^5SC^\star K \iota^3}, \label{equ:lcb-final-a}
\end{align} 
where we recall that $\iota \coloneqq \log\big(\frac{SAT}{\delta}\big)$. 
\end{lemma}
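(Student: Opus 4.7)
The plan is to bound each term on the left-hand side of \eqref{equ:lcb-final-a} separately, exploiting the structure of the learning rate $\eta_n$, the concentrability assumption, and concentration of the visit-count process. Observe first that $(1+1/H)^{h-1}\leq e$ for every $h\in[H]$, so the geometric prefactor contributes at most a universal constant per summand, and summing $H$ terms introduces an overall factor of at most $O(H)$. It therefore suffices to control $I_h$ together with the two residual terms uniformly in $h$.

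\paragraph{Bounding $I_h$.} Split $I_h=I_h^{(1)}+I_h^{(2)}$, where $I_h^{(1)}:=H\sum_{k,s,a}d_h^{\pi^\star}(s,a)\eta_0^{N_h^k(s,a)}$ is the cold-start piece and $I_h^{(2)}:=2\sum_{k,s,a}d_h^{\pi^\star}(s,a)\sum_n\eta_n^{N_h^k(s,a)}b_n$ is the bonus piece. For $I_h^{(1)}$, note $\eta_0^N=\ind(N=0)$, so the summand counts (weighted by $d_h^{\pi^\star}$) the number of episodes before $(s,a)$ is first visited at step $h$. Assumption~\ref{assumption} forces $d_h^\mu(s,a)\geq d_h^{\pi^\star}(s,a)/C^\star>0$ on the support of $d_h^{\pi^\star}$, so a Chernoff bound on the geometric first-visit time yields $\sum_k\ind(N_h^k(s,a)=0)\lesssim\iota/d_h^\mu(s,a)$ with probability $1-\delta/(SAH)$. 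Because $\pi^\star$ is deterministic, the support of $d_h^{\pi^\star}$ has cardinality at most $S$, hence $I_h^{(1)}\lesssim HSC^\star\iota$ after using $d_h^{\pi^\star}(s,a)/d_h^\mu(s,a)\leq C^\star$.

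\paragraph{The bonus piece.} The standard learning-rate identity $\sum_{n=1}^N\eta_n^N/\sqrt{n}\lesssim 1/\sqrt{N}$, combined with $b_n=\cb\sqrt{H^3\iota^2/n}$, gives $\sum_n\eta_n^{N_h^k(s,a)}b_n\lesssim\sqrt{H^3\iota^2/N_h^k(s,a)}$ whenever $N_h^k(s,a)\geq 1$. A Bernstein-type argument on the inter-visit gaps of the count process $\{N_h^k(s,a)\}_k$ then yields
\begin{align*}
\sum_{k=1}^K\frac{\ind\{N_h^k(s,a)\geq 1\}}{\sqrt{N_h^k(s,a)}}\lesssim\sqrt{\frac{K\iota}{d_h^\mu(s,a)}}
\end{align*}
with high probability. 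Applying Cauchy--Schwarz,
\begin{align*}
\sum_{s,a}d_h^{\pi^\star}(s,a)\Big/\sqrt{d_h^\mu(s,a)}\leq\sqrt{S\sum_{s,a}\frac{d_h^{\pi^\star}(s,a)^2}{d_h^\mu(s,a)}}\leq\sqrt{SC^\star},
\end{align*}
so that $I_h^{(2)}\lesssim\sqrt{H^3SC^\star K\iota^3}$. Summing over $h$ and combining with $I_h^{(1)}$ produces the bound $H^2SC^\star\iota+\sqrt{H^5SC^\star K\iota^3}$; the residual terms $24\sqrt{H^2C^\star K\iota}+12HC^\star\iota$ contribute at most $O(\sqrt{H^4C^\star K\iota})+O(H^2C^\star\iota)$ after summation, both dominated by the target right-hand side.

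\paragraph{Main obstacle.} The technical heart is the uniform high-probability control of $\sum_k\ind\{N_h^k(s,a)\geq 1\}/\sqrt{N_h^k(s,a)}$ in terms of $d_h^\mu(s,a)$: this requires a Bernstein-type concentration for the geometric-like inter-visit gaps of the visit-count process, followed by a union bound across all $SAH$ triples $(s,a,h)$. Once this estimate is in hand, the remaining pieces --- the cold-start tail via a Chernoff bound, the Cauchy--Schwarz step, and the geometric $(1+1/H)^{h-1}$ prefactor --- are relatively routine.
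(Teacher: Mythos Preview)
Your proposal is correct and follows essentially the same route as the paper: the same split $I_h=I_h^{(1)}+I_h^{(2)}$, the same cold-start bound via $\eta_0^N=\ind(N=0)$ plus concentrability, the same learning-rate identity $\sum_n\eta_n^N b_n\lesssim\sqrt{H^3\iota^2/N}$, and the same Cauchy--Schwarz step over the $\leq S$-point support of $d_h^{\pi^\star}$. The one place you diverge slightly is the ``main obstacle'': where you invoke a Bernstein-type argument on inter-visit gaps to get $\sum_k 1/\sqrt{N_h^k(s,a)}\lesssim\sqrt{K\iota/d_h^\mu(s,a)}$, the paper instead uses the pointwise binomial bound $\tfrac{1}{N_h^k(s,a)\vee 1}\leq\tfrac{8\iota}{k\,d_h^\mu(s,a)}$ (its Lemma~\ref{lem:binomial}) and then sums $\sum_k 1/\sqrt{k}\lesssim\sqrt{K}$ --- simpler and more direct, with no need to analyze gaps.
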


Combining Lemma~\ref{lemma:lcb-bound-terms} with \eqref{equ:summary_of_terms} and \eqref{equ:regret2pac} yields
\begin{align}
V_1^\star(\rho) - V_1^{\widehat{\pi}}(\rho) &\leq  \frac{1}{K}\sum_{k=1}^K\sum_{s\in \cS} d_1^{\pi^\star}(s) \left(V_1^\star(s) - V_1^{k}(s)\right) \nonumber \\
&\leq \frac{1}{K}\max_{h\in[H]} \sum_{k=1}^K \sum_{s\in \cS} d_h^{\pi^\star}(s) \left(V_h^\star(s) - V_h^{k}(s)\right) \notag \\
	& \leq \frac{c_\mathrm{a}}{2} \sqrt{\frac{H^5SC^\star \iota^3}{K}} +  \frac{c_\mathrm{a}}{2} \frac{H^2 SC^\star \iota}{K} 
	 = \frac{c_\mathrm{a}}{2} \sqrt{\frac{H^6SC^\star \iota^3}{T}} +  \frac{c_\mathrm{a}}{2}\frac{H^3 SC^\star \iota}{T} \notag\\
	& \leq c_\mathrm{a} \sqrt{\frac{H^6SC^\star \iota^3}{T}}
	\label{eq:lcb-final-result-H-layers}
\end{align}
for some sufficiently large constant $c_{\mathrm{a}}>0$, where the last inequality is valid as long as $T> SC^{\star} \iota$. 
This concludes the proof of Theorem~\ref{thm:lcb}.

\subsection{Analysis of \LCBQR}

\begin{algorithm}[!t]
\textbf{Parameters:} number of epochs $M$, universal constant $\cb>0$, target success probability $1-\delta \in(0,1)$, and $\iota = \log\big(\frac{SAT}{\delta}\big)$. \\
\textbf{Initialize:}  \\
$Q_h^{1}(s, a), Q_h^{\LCB,1}(s, a), \overline{Q}^{1}_h(s, a), \overline{\mu}^{1}_h(s,a), \overline{\mu}^{\nnext,1}_h(s,a), N_h^1(s, a) \leftarrow 0$ for all $(s,a,h) \in \cS\times\cA \times [H]$;\\
$V^{1}_h(s), \overline{V}^{1}_h(s), \overline{V}_h^{\nnext,1}(s) \leftarrow 0$ for all $(s,h) \in \cS \times [H+1]$;\\
$\mu^{\re, 1}_h(s, a), \sigma^{\re, 1}_h(s, a), \mu^{\adv, 1}_h(s, a)$, $\sigma^{\adv, 1}_h(s, a)$, $\overline{\delta}^{1}_h(s, a), \overline{B}^{1}_h(s, a) \leftarrow 0$ for all $(s, a, h) \in \cS\times \cA \times [H]$.\\[0.4em]

\For{Epoch $ m = 1$ \KwTo $M$}{

  $L_m= 2^m$. \small {\color{own_blue}\tcp{specify the number of episodes in the current epoch} }\normalsize

  $\widehat{N}^{(m,1)}_h(s,a) = 0$ for all $(h, s,a) \in [H] \times \cS\times \cA.$ \small {\color{own_blue}\tcp{reset the epoch-wise counter} }\normalsize
  
  \small {\color{own_blue}\tcc{Inner-loop: update value-estimates $V_h(s,a)$ and Q-estimates $Q_h(s,a)$ }} \normalsize
  \For{In-epoch Episode $ t = 1$ \KwTo $L_m$}{
  	Set  $k \leftarrow \sum_{i=1}^{m-1} L_i + t.$ {\small\color{own_blue}\tcp{set the episode index}} 
  
    Sample the $k$-th trajectory $\{s_h^k, a_h^k, r_h^k\}_{h=1}^H$. {\small\color{own_blue}\tcp{sampling from batch dataset}}

	Compute $\pi^k$ s.t.~$\pi_h^k(s) = \arg\max_a Q^{k}_h(s,a)$ for all $(s,h)\in \cS\times [H]$.  \small {\color{own_blue}\tcp{update the policy}}\normalsize

    \For{Step $ h = 1$ \KwTo $H$}{

	    \For{$(s,a)\in\cS\times \cA$}{   
   {\small\color{own_blue}\tcp{carry over the estimates}} \normalsize
    $N_h^{k+1}(s, a) \leftarrow N_h^k(s, a)$; $\widehat{N}_h^{k+1}(s, a) \leftarrow \widehat{N}_h^k(s, a)$;  ~$V_h^{k+1}(s) \leftarrow V_h^k(s)$;\\
    $Q_h^{\LCB, k+1}(s,a) \leftarrow Q_h^{\LCB, k}(s,a)$ ~$\overline{Q}^{k+1}_h(s,a) \leftarrow \overline{Q}^{k}_h (s,a)$; ~$Q_h^{k+1}(s, a) \leftarrow Q_h^k(s, a)$;
    $\overline{V}^{k+1}_h(s) \leftarrow \overline{V}^{k}_h(s)$ ~$\overline{V}^{\nnext, k+1}_h(s) \leftarrow \overline{V}^{\nnext, k}_h(s)$; ~$\overline{\mu}^{k+1}(s,a) \leftarrow \overline{\mu}^{k}(s,a)$.
    }
    
        $N_h^{k+1}(s_h^k, a_h^k) \leftarrow N^k_h(s_h^k, a_h^k) + 1$; $n \leftarrow N^{k+1}_h(s_h^k, a_h^k)$. 
        {\small\color{own_blue}\tcp{update the overall counter} }

        $\eta_n \leftarrow \frac{H + 1}{H + n}$.  \small {\color{own_blue}\tcp{update the learning rate}} \normalsize

        {\small\color{own_blue}\tcp{update the Q-estimate with LCB}} \normalsize 
        $Q_h^{\LCB, k+1}(s_h^k, a_h^k) \leftarrow \texttt{update-lcb-q()}$. \\ 

       {\small\color{own_blue}\tcp{update the Q-estimate with LCB and reference-advantage}} \normalsize 
        $\overline{Q}^{k+1}_h(s_h^k, a_h^k) \leftarrow \texttt{update-lcb-q-ra()}$.

    \small {\color{own_blue}\tcp{update the Q-estimate $Q_h$ and value estimate $V_h$}} \normalsize
    $Q^{k+1}_h(s_h^k, a_h^k) \leftarrow \max \big\{Q_h^{\LCB,k+1}(s_h^k, a_h^k), \overline{Q}^{k+1}_h(s_h^k, a_h^k), Q^k_h(s_h^k, a_h^k) \big\}.$ \label{eq:line-q-update-k}\\
    $V^{k+1}_{h}(s_h^k) \leftarrow \max_a Q^{k+1}_h(s_h^k, a)$. \label{eq:line-v-update-k}\\

    \small {\color{own_blue}\tcp{update epoch-wise counter and $\overline{\mu}^{\nnext}_h(s,a)$ for the next epoch}} \normalsize
    $\widehat{N}^{(m,t+1)}_h(s_h^k, a_h^k) \leftarrow \widehat{N}^{(m,t)}_h(s_h^k, a_h^k) + 1$.\\
    $\overline{\mu}_h^{\nnext, k+1}(s_h^k, a_h^k) \leftarrow \left(1-\frac{1}{\widehat{N}^{(m,t+1)}_h(s_h^k, a_h^k)}\right)\overline{\mu}_h^{\nnext,k}(s_h, a_h) + \frac{1}{\widehat{N}^{(m,t+1)}_h(s_h^k, a_h^k)} \overline{V}^{\nnext,k}_{h+1}(s_{h+1})$.  \label{line:ref-mean-update-k} 
    }
 }
 \small {\color{own_blue}\tcc{Update the reference ($\overline{V}_h$, $\overline{V}^\nnext_h$) and ($\overline{\mu}_h$, $\overline{\mu}^\nnext_h$)} } \normalsize

 \For{$(s,a,h) \in \cS\times\cA\times [H+1]$}{

$\overline{V}^{k+1}_h(s) \leftarrow \overline{V}_h^{\nnext, k+1}(s)$; $\overline{\mu}^{k+1}_h(s,a)\leftarrow \overline{\mu}_h^{\nnext, k+1}(s,a)$. \label{eq:update-mu-reference-v-k}   {\small\color{own_blue}\tcp{set $\overline{V}_h$ and $\overline{\mu}_h$ for the next epoch} }

$\overline{V}_h^{\nnext, k+1}(s) \leftarrow V^{k+1}_h(s)$; $\overline{\mu}_h^{\nnext, k+1}(s,a) \leftarrow 0$. {\small\color{own_blue}\tcp{set $\overline{\mu}_h^\nnext$ and $\overline{V}_h^\nnext$ for the next epoch} } \label{eq:update-mu-reference-v-next-k}
}
 
 }
\KwOut{the policy $\widehat{\pi} = \pi^{K}$ with $K=\sum_{m=1}^{M} L_m$.}

\caption{\LCBQR (a rewrite of Algorithm~\ref{alg:lcb-advantage-per-epoch} that specifies dependency on $k$ or $(m,t)$.)}
\label{alg:lcb-advantage-per-epoch-k}
\end{algorithm}
We now turn to the analysis of \LCBQR. Thus far, we have omitted the dependency of all quantities on the epoch number $m$ and the in-epoch episode number $t$ in Algorithms~\ref{alg:lcb-advantage-per-epoch} and \ref{algo:subroutine}. 
While it allows for a more concise description of our algorithm, it might hamper the clarity of our proofs.
In the following, we introduce the notation $k$ to denote the current episode as follows: 
\begin{align}
\label{eqn:k-to-m-t}
	k \defn \sum_{i=1}^{m-1} L_i + t,
\end{align}
which corresponds to the $t$-th in-epoch episode in the $m$-th epoch; here, 
 $L_m=2^m $ stands for the total number of in-epoch episodes in the $m$-th epoch.  
With this notation in place, we can rewrite Algorithm~\ref{alg:lcb-advantage-per-epoch} as Algorithm~\ref{alg:lcb-advantage-per-epoch-k} 
in order to make clear the dependency on the episode index $k$, epoch number $m$, and in-epoch episode index $t$.

Before embarking on our main proof, we make two crucial observations which play important roles in our subsequent analysis. 
First, similar to the property \eqref{equ:monotone-lcb} for \LCBQ, the update rule (cf.~lines~\ref{eq:line-q-update}-\ref{eq:line-v-update} of Algorithm~\ref{alg:lcb-advantage-per-epoch-k}) ensures the monotonic non-decreasing property of $V_h(s)$ such that for all $k\in[K]$,
		\begin{equation}\label{equ:monotone-lcb-adv}
		V_h^{k +1}(s) \ge V_h^{k}(s),\qquad\text{for all } (k,s,h)\in[K]  \times \cS \times [H].
		\end{equation}
Secondly, $V^{k}_h$ forms a ``pessimistic view'' of $V^{\star}_h$, which is formalized in the lemma below; the proof is deferred to Appendix~\ref{proof:lemma-mono-lcb-adv}. 
\begin{lemma} \label{lem:lcb-adv-lower}
Let $\delta \in (0, 1)$. Suppose that $\cb >0$ is some sufficiently large constant. Then with probability at least $1-\delta$, the value estimates produced by Algorithm~\ref{alg:lcb-advantage-per-epoch} satisfy
\begin{equation} \label{eq:lcb-adv-lower}
     V_{h}^{k}(s)  \leq V_{h}^{\pi^{k}}(s) \leq V^{\star}(s)
\end{equation}
for all $(k,h,s) \in [K] \times [H+1] \times \cS$.
\end{lemma}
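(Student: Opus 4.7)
The strategy parallels that of Lemma~\ref{lem:Vk-lower} but must also handle the reference--advantage update. I would proceed by backward induction on $h$, with the base case $h=H+1$ trivial since $V_{H+1}^k = V_{H+1}^{\pi^k} = V_{H+1}^\star = 0$. For the inductive step, assuming the statement holds at step $h+1$ for all $k$, the main effort is to establish $Q_h^k(s,a) \leq Q_h^\star(s,a)$ for every $(s,a,k)$, which yields $V_h^k \leq V_h^\star$; an analogous decomposition using Bellman's equation for $\pi^k$ in place of $\pi^\star$, combined with the induction hypothesis $V_{h+1}^k \leq V_{h+1}^{\pi^k}$, then delivers the companion bound $V_h^k \leq V_h^{\pi^k}$. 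Because $Q_h^k = \max\{Q_h^{\LCB,k}, \overline{Q}_h^k, Q_h^{k-1}\}$ by line~\ref{eq:line-q-update-k}, each argument of the maximum may be bounded separately: the $Q_h^{k-1}$ piece is immediate by induction on $k$, and $Q_h^{\LCB,k}(s,a) \leq Q_h^\star(s,a)$ follows by replaying the argument of Lemma~\ref{lem:Vk-lower} essentially verbatim with the current value estimates $V_{h+1}^{k^n}$.

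The heart of the proof is the bound $\overline{Q}_h^k(s,a) \leq Q_h^\star(s,a)$. Unrolling the update rule for $\overline{Q}_h$ via the learning-rate notation \eqref{equ:learning rate notation}, combining with Bellman's optimality equation for $Q_h^\star$, and writing $P_h^{k^n}(\cdot) = \ind\{\cdot = s_{h+1}^{k^n}\}$ for the empirical one-step transition, the gap admits a clean decomposition:
\begin{align*}
Q_h^\star(s,a) - \overline{Q}_h^k(s,a)
&= \eta_0^{N_h^k}\big(Q_h^\star(s,a) - \overline{Q}_h^1(s,a)\big)
+ \sum_{n=1}^{N_h^k} \eta_n^{N_h^k}\, \overline{b}_h^{k^n}(s,a) \\
&\quad + \sum_{n=1}^{N_h^k} \eta_n^{N_h^k}\, P_{h,s,a}\big(V_{h+1}^\star - V_{h+1}^{k^n}\big)
+ \sum_{n=1}^{N_h^k} \eta_n^{N_h^k}\Big[P_{h,s,a}\overline{V}_{h+1}^{k^n} - \overline{\mu}_h^{k^n}(s,a)\Big] \\
&\quad + \sum_{n=1}^{N_h^k} \eta_n^{N_h^k}\big(P_{h,s,a} - P_h^{k^n}\big)\big(V_{h+1}^{k^n} - \overline{V}_{h+1}^{k^n}\big).
\end{align*}
The first three terms on the right are non-negative (the third by the induction hypothesis at step $h+1$), so it suffices to verify that the accumulated bonus $\sum_n \eta_n^{N_h^k}\overline{b}_h^{k^n}$ dominates the absolute values of the remaining two error sums --- the \emph{reference-estimation} and \emph{advantage-estimation} errors --- with high probability.

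To control these two error sums, I would apply Freedman's inequality, exploiting two structural features of the algorithm. First, $\overline{V}_{h+1}^{k^n}$ is frozen throughout any given epoch: it was assigned at the end of the preceding epoch and hence is measurable with respect to the $\sigma$-field at the start of the current epoch. This renders the advantage-error term $(P_{h,s,a} - P_h^{k^n})(V_{h+1}^{k^n} - \overline{V}_{h+1}^{k^n})$ a genuine martingale difference whose conditional variance is tracked by $\advsg_h - (\advmu_h)^2$, and it makes $\overline{\mu}_h^{k^n}(s,a)$ a sample mean of $\overline{V}_{h+1}$ under i.i.d.\ transitions drawn within the preceding epoch, whose fluctuations are tracked by $\refsg_h - (\refmu_h)^2$. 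A companion concentration step is needed to show that these empirical variances faithfully approximate their population counterparts, so that the data-driven bonus $\overline{b}_h^{k^n}$ (cf.\ lines~\ref{eq:line-number-19}--\ref{line:bonus_2} of Algorithm~\ref{algo:subroutine}) indeed dominates the Freedman envelopes. A union bound over all $(s,a,h,k)$ completes the probabilistic argument at confidence level $\delta$.

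The main obstacle will be the careful bookkeeping of temporal dependencies among $\overline{V}$, $\overline{\mu}$, and the within-epoch transitions, together with ensuring that the lower-order slack in $\overline{b}_h$ (the $H^{7/4}\iota/n^{3/4}$ and $H^2\iota/n$ terms) absorbs the concentration residuals coming from the empirical-variance substitution and from approximation errors between consecutive epochs. In particular, the reference-error sum should further decompose into a within-epoch averaging piece (handled by a standard Bernstein bound using $\refsg - \refmu^2$) plus a boundary piece straddling epoch transitions whose aggregate magnitude is kept on the same order as the bonus by the doubling schedule $L_m = 2^m$. Once these dependencies are properly isolated, the remainder amounts to routine applications of Freedman--Bernstein concentration and arithmetic manipulations of the learning-rate weights $\eta_n^{N_h^k}$, closely mirroring the end of the proof of Lemma~\ref{lem:Vk-lower}.
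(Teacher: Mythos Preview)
Your proposal is correct and mirrors the paper's proof closely: the same decomposition into reference- and advantage-estimation errors (packaged in the paper as a single quantity $\xi_h^{k^n}$), the same Freedman-type concentration showing the aggregate bonus $\sum_n\eta_n^{N_h^k}\overline{b}_h^{k^n+1}$ dominates $\big|\sum_n\eta_n^{N_h^k}\xi_h^{k^n}\big|$ (stated there as Lemma~\ref{lem:lcb-adv-bonus-upper}), and the same role for the lower-order $H^{7/4}\iota/n^{3/4}$ and $H^2\iota/n$ slack in $\overline{b}_h$. One caveat: for the companion bound $V_h^k\le V_h^{\pi^k}$, ``induction on $k$'' for the $Q_h^{k-1}$ piece only gives $Q_h^{k-1}\le Q_h^{\pi^{k-1}}$, not $\le Q_h^{\pi^k}$; the paper handles this by tracing back to the last episode $k_o(h)$ at which $V_h(s)$ was set from $Q_h^{\LCB}$ or $\overline{Q}_h$ and re-applying the decomposition there --- equivalently, you can note that the decomposition for $\overline{Q}_h^{k'}$ (or $Q_h^{\LCB,k'}$) against $Q_h^{\pi^k}$ only needs $V_{h+1}^{\pi^k}\ge V_{h+1}^{k^n}$, which holds for every $k'\le k$ by monotonicity and the step-$(h{+}1)$ hypothesis.
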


With these two observations in place, we can proceed to present the analysis for \LCBQR.
To begin with,  
the performance difference of interest can be controlled similar to \eqref{equ:regret2pac} as follows:
\begin{align}
	V_1^\star(\rho) - V_1^{\widehat{\pi}}(\rho) &= \mathop{\mathbb{E}}\limits _{s_{1}\sim\rho}\big[ V_1^\star(s_1) \big] - \mathop{\mathbb{E}}\limits _{s_{1}\sim\rho}\big[ V_1^{\pi^{K}}(s_1)  \big] \notag\\
	& \overset{\mathrm{(i)}}{\leq} \mathop{\mathbb{E}}\limits _{s_{1}\sim\rho}\big[ V_1^\star(s_1) \big] - \mathop{\mathbb{E}}\limits _{s_{1}\sim\rho}\big[ V_1^{K}(s_1) \big]\nonumber \\
	& \overset{\mathrm{(ii)}}{\leq}  \frac{1}{K}\sum_{k=1}^K \left(\mathop{\mathbb{E}}\limits _{s_{1}\sim\rho}\big[ V_1^\star(s_1) \big] - \mathop{\mathbb{E}}\limits _{s_{1}\sim\rho}\big[ V_1^{k}(s_1) \big] \right) \nonumber\\
	& = \frac{1}{K}\sum_{k=1}^K\sum_{s\in \cS} d_1^{\pi^\star}(s) \left(V_1^\star(s) - V_1^{k}(s)\right),
	\label{equ:regret2pac-2}
\end{align}
where (i) follows from Lemma~\ref{lem:lcb-adv-lower} (i.e., $V_1^{\pi^{K}}(s) \geq V_1^{K}(s)$ for all $s\in\cS$), (ii) holds due to the monotonicity in \eqref{equ:monotone-lcb-adv} and the
last equality holds since $d_1^{\pi^{\star}}(s) = \rho(s)$ (cf.~\eqref{eq:d1-pi-s-deterministic}). 
It then boils down to controlling the right-hand side of \eqref{equ:regret2pac-2}.
Towards this end, it turns out that one can control a more general counterpart, i.e.,  
\begin{align}
    \sum_{k=1}^K \sum_{s\in \cS} d_h^{\pi^\star}(s) \left(V_h^\star(s) - V_h^{k}(s)\right)
\end{align}
for any $h\in [H]$. 
This is accomplished via the following
lemma, whose proof is postponed to Appendix~\ref{proof:lem:lcb-adv-decompose}. 

\begin{lemma}\label{lem:lcb-adv-each-h}
Let $\delta \in (0, 1)$, and recall that $\iota \coloneqq \log\big(\frac{SAT}{\delta}\big)$. Suppose that $c_\mathrm{a}, \cb  >0$ are some sufficiently large constants. Then with probability at least $1-\delta$, one has
	\begin{align}
	\sum_{k=1}^K \sum_{s\in \cS} d_h^{\pi^\star}(s) \left(V_h^\star(s) - V_h^{k}(s)\right) \leq J_h^1 + J_h^2 + J_h^3,
	\end{align}
where
	\begin{align}
		J_h^1 &:=\sum_{k=1}^K \sum_{s, a \in \cS \times \cA} d_h^{\pi^\star}(s, a)\left[\eta_0^{N_h^k(s,a)}H + \frac{4\cb H^{7/4} \iota}{\left(N_h^k(s,a) \vee 1 \right)^{3/4} } + \frac{4\cb H^{2} \iota}{N_h^k(s,a) \vee 1  }\right], \nonumber\\
		J_h^2 &:= 2\sum_{k=1}^K \sum_{s, a \in \cS \times \cA} d_h^{\pi^\star}(s, a)\overline{\sumb}_h^{k}(s,a), \nonumber \\
		J_h^3 &\defn  \left(1+\frac{1}{H}\right)\sum_{k=1}^K \sum_{s\in \cS} d_{h+1}^{\pi^\star}(s) \left(V_{h+1}^\star(s) - V_{h+1}^{k}(s)\right) + 48\sqrt{HC^\star K \log\frac{2H}{\delta} } + 28 c_\mathrm{a} H^{3}  C^\star \sqrt{S} \iota^2. \label{eq:Jh123}
		\end{align}
\end{lemma}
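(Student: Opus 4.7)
\textbf{Proof plan for Lemma~\ref{lem:lcb-adv-each-h}.} The plan is to mirror the three-step argument used for \LCBQ{} in Section~\ref{sec:LCBQ_analysis}, adapted to the reference-advantage update rule. Since $\widehat{\pi}_h^k(s)=\arg\max_a Q_h^k(s,a)$ implies $V_h^k(s)\ge Q_h^k(s,\pi_h^{\star}(s))$, and $\pi^{\star}$ being deterministic forces $d_h^{\pi^{\star}}(s,a)=0$ whenever $a\neq \pi_h^{\star}(s)$, the same calculation leading to \eqref{eq:llama} gives
\begin{equation*}
\sum_{k=1}^K\sum_{s} d_h^{\pi^{\star}}(s)\bigl(V_h^{\star}(s)-V_h^k(s)\bigr)\;\le\;\sum_{k=1}^K\sum_{s,a} d_h^{\pi^{\star}}(s,a)\bigl(Q_h^{\star}(s,a)-Q_h^k(s,a)\bigr).
\end{equation*}
Then the max in line~\ref{eq:line-q-update-k} of Algorithm~\ref{alg:lcb-advantage-per-epoch-k} enforces $Q_h^k\ge\overline{Q}_h^k$, so it suffices to upper bound $Q_h^{\star}(s,a)-\overline{Q}_h^k(s,a)$.

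Next I unroll the reference-advantage update rule (line~\ref{line:ref-q-update} of Algorithm~\ref{algo:subroutine}) recursively, exactly as in \eqref{equ:Q-update}, to obtain
\begin{equation*}
\overline{Q}_h^k(s,a)\;=\;\eta_0^{N_h^k}\overline{Q}_h^1(s,a)+\sum_{n=1}^{N_h^k}\eta_n^{N_h^k}\Bigl\{r_h(s,a)+V_{h+1}^{k^n}(s_{h+1}^{k^n})-\overline{V}_{h+1}^{k^n}(s_{h+1}^{k^n})+\overline{\mu}_h^{k^n}(s,a)-\overline{b}_h^{k^n}(s,a)\Bigr\}.
\end{equation*}
Subtracting from the Bellman expansion of $Q_h^{\star}(s,a)$ in \eqref{eq:Q_opt_decom}, and inserting $\pm P_{h,s,a}V_{h+1}^{k^n}$ and $\pm P_{h,s,a}\overline{V}_{h+1}^{k^n}$ inside the braces, splits $Q_h^{\star}-\overline{Q}_h^k$ into five pieces: (a) the initial term bounded by $\eta_0^{N_h^k}H$, (b) a weighted bonus sum $\sum_n\eta_n^{N_h^k}\overline{b}_h^{k^n}$, (c) the recursive piece $\sum_n\eta_n^{N_h^k}P_{h,s,a}(V_{h+1}^{\star}-V_{h+1}^{k^n})$, (d) an ``advantage'' concentration term $\sum_n\eta_n^{N_h^k}(P_{h,s,a}-P_h^{k^n})(V_{h+1}^{k^n}-\overline{V}_{h+1}^{k^n})$, and (e) a ``reference'' concentration term $\sum_n\eta_n^{N_h^k}(P_{h,s,a}\overline{V}_{h+1}^{k^n}-\overline{\mu}_h^{k^n}(s,a))$.

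Third, I invoke the concentration arguments underlying Lemma~\ref{lem:lcb-adv-lower} to show that the magnitudes of (d) and (e) are each dominated by $\sum_n\eta_n^{N_h^k}\overline{b}_h^{k^n}$, up to lower-order residues. Expanding this bonus via $\overline{b}_h^{k^n}=\overline{B}_h^{k^n}+(1-\eta_n)\overline{\delta}_h^{k^n}/\eta_n+\cb H^{7/4}\iota/n^{3/4}+\cb H^2\iota/n$ from line~\ref{line:bonus_2} of Algorithm~\ref{algo:subroutine}, and using the Abel-summation identity for the $\overline{\delta}$ contribution (which telescopes so that $\sum_n\eta_n^{N_h^k}[\overline{B}_h^{k^n}+(1-\eta_n)\overline{\delta}_h^{k^n}/\eta_n]$ collapses to $\overline{B}_h^k(s,a)$ plus absorbable error), pieces (a), (b), (d), (e) aggregate precisely into the $J_h^1+J_h^2$ contribution after summation over $k$ and $(s,a)$ weighted by $d_h^{\pi^{\star}}(s,a)$.

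Finally, for the recursive piece (c) I replicate the bookkeeping of Lemma~\ref{lemma:recursion}: swapping the order of summation over $(n,k)$ reindexes the sum by episode $k$ at step $h+1$, with a leading factor $\le 1+1/H$; the difference between $k^n$ and $k$ contributes a martingale residual controlled via Freedman's inequality together with the single-policy concentrability $d_h^{\pi^{\star}}(s,a)/d_h^{\mu}(s,a)\le C^{\star}$, yielding the $\sqrt{HC^{\star}K\log(H/\delta)}$ term, while the $H^3C^{\star}\sqrt{S}\iota^2$ term comes from a union bound over states needed to guarantee the reference concentration (e) uniformly. The main obstacle will be exactly this third step: unlike the single Hoeffding argument used in \LCBQ, here I need two separate Bernstein bounds, one driven by the advantage variance $\advsg-(\advmu)^2$ and one by the reference variance $\refsg-(\refmu)^2$, and must carefully handle the dependence of $\overline{V}_{h+1}$ on data from the previous epoch and of $V_{h+1}^{k^n}$ on data from the current epoch. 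I plan to inherit these concentration estimates directly from the proof of Lemma~\ref{lem:lcb-adv-lower} rather than reprove them here.
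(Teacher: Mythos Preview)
Your decomposition into pieces (a)--(e) and the treatment of $J_h^1+J_h^2$ via the Abel-summation identity for the bonus are exactly what the paper does (the paper packages your (d) and (e) into a single term $\xi_h^{k^n}$ and quotes the concentration bound you call ``inherited from Lemma~\ref{lem:lcb-adv-lower}'' as a separate Lemma~\ref{lem:lcb-adv-bonus-upper}). That part of your plan is sound.

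The gap is in the recursive piece (c), where you propose to ``replicate the bookkeeping of Lemma~\ref{lemma:recursion}.'' Doing so verbatim yields $\sqrt{H^2C^\star K\log(2H/\delta)}+HC^\star\log(2H/\delta)$, not the $48\sqrt{HC^\star K\log(2H/\delta)}+28c_{\mathrm a}H^3C^\star\sqrt{S}\iota^2$ required by the statement of $J_h^3$. The factor-$\sqrt{H}$ improvement is not free: in the Freedman step one must replace the crude bound $\|V_{h+1}^\star-V_{h+1}^k\|_\infty^2\le H^2$ by a bound on the \emph{aggregate} second moment,
\[
\sum_{k=1}^K\sum_{s}d_{h+1}^{\pi^\star}(s)\bigl(V_{h+1}^\star(s)-V_{h+1}^k(s)\bigr)^2\;\lesssim\; HK+c_{\mathrm a}^2H^6SC^\star\iota,
\]
and this is obtained by \emph{bootstrapping from the \LCBQ{} guarantee}~\eqref{eq:lcb-final-result-H-layers} (applied at a carefully chosen burn-in episode $k_{\mathsf{stop}}\asymp H^5SC^\star\iota$, together with monotonicity of $V^k$). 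The second term above is precisely what produces the $c_{\mathrm a}H^3C^\star\sqrt{S}\iota^2$ contribution after taking the square root inside Freedman's inequality; it has nothing to do with a union bound over states for the reference concentration (e), which is already uniform and feeds only into $J_h^1+J_h^2$. The appearance of the \LCBQ{} constant $c_{\mathrm a}$ in the statement of $J_h^3$ is the tell: you must invoke the \LCBQ{} analysis here, and your current plan does not.
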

As a  direct consequence of Lemma~\ref{lem:lcb-adv-each-h}, one arrives at a recursive relationship between time steps $h$ and $h+1$ as follows:
\begin{align}
	&\sum_{k=1}^K \sum_{s\in \cS} d_h^{\pi^\star}(s) \left(V_h^\star(s) - V_h^{k}(s)\right) \nonumber \\
	&\leq \left(1+\frac{1}{H}\right)\sum_{k=1}^K \sum_{s\in \cS} d_{h+1}^{\pi^\star}(s) \left(V_{h+1}^\star(s) - V_{h+1}^{k}(s)\right) + 48\sqrt{HC^\star K \log\frac{2H}{\delta} } + 28 c_\mathrm{a} H^{3}  C^\star \sqrt{S} \iota^2 + J_h^1 + J_h^2. \label{eq:lcb-adv-recursive-equ}
\end{align}
Recursing over time steps $h=H, H-1,\cdots,1$ with the terminal condition $V^{k}_{H+1} = V^{\star}_{H+1} = 0$,
we can upper bound the performance difference at $h=1$ as follows
\begin{align}
	\sum_{k=1}^K \sum_{s\in \cS} d_1^{\pi^\star}(s) \left(V_1^\star(s) - V_1^{k}(s)\right) 	&\leq \max_{h\in[H]} \sum_{k=1}^K \sum_{s\in \cS} d_h^{\pi^\star}(s) \left(V_h^\star(s) - V_h^{k}(s)\right) \nonumber \\
	& \leq \sum_{h=1}^H \left(1+\frac{1}{H}\right)^{h-1} \left(48\sqrt{HC^\star K \log\frac{2H}{\delta} } + 28 c_\mathrm{a} H^{3}  C^\star \sqrt{S} \iota^2 + J_h^1 + J_h^2 \right). \label{lem:lcb-adv-decompose}
\end{align}

To finish up, it suffices to upper bound each term in \eqref{lem:lcb-adv-decompose} separately. 
We summarize their respective upper bounds as follows; the proof is provided in Appendix~\ref{proof:lemma:lcb-adv-bound-terms}.

\begin{lemma}\label{lemma:lcb-adv-bound-terms}
Fix $\delta \in (0,1)$, and recall that $\iota \coloneqq \log\big(\frac{SAT}{\delta}\big)$. With probability at least $1-\delta$, we have 
\begin{subequations} \label{eq:lemma6}
\begin{align}
& \sum_{h=1}^H \left(1+\frac{1}{H}\right)^{h-1} J_h^1 \lesssim H^{2.75}(SC^\star)^{\frac{3}{4}}  K^{\frac{1}{4}}\iota^2 +  H^3 SC^\star \iota^3 , \label{eq:lemma6-a}\\
&\sum_{h=1}^H \left(1+\frac{1}{H}\right)^{h-1} J_h^2  \lesssim  \sqrt{H^4SC^\star \iota^3  \max_{h\in [H] } \sum_{k=1}^K \sum_{s\in \cS} d_{h}^{\pi^\star}(s) \left(V^{\star}_{h}(s) - V^{k}_{h}(s)\right) } + \sqrt{H^3S C^\star K \iota^5} + H^{4} S C^\star \iota^4, \label{eq:lemma6-b}\\
&\sum_{h=1}^H \left(1+\frac{1}{H}\right)^{h-1} \left(48\sqrt{HC^\star K \log\frac{2H}{\delta} } + 28 c_\mathrm{a} H^{3}  C^\star \sqrt{S} \iota^2 \right) \lesssim \sqrt{H^3C^\star K \log\frac{2H}{\delta}} + H^{4}C^\star \sqrt{S} \iota^2 .\label{eq:lemma6-c}
\end{align}
\end{subequations}
\end{lemma}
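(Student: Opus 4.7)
The plan is to treat the three displays (\ref{eq:lemma6-a})--(\ref{eq:lemma6-c}) in order of increasing difficulty, using throughout the elementary observation that $\bigl(1+\tfrac{1}{H}\bigr)^{h-1}\le e$ for all $h\in[H]$, so the exponential weights can be discarded at the cost of an absolute constant and an extra factor of $H$ from the outer sum. The two common workhorses are (i) a high-probability concentration of the visit counts, namely $N_h^k(s,a) \gtrsim k\, d_h^{\mu}(s,a)$ whenever the right-hand side dominates $\iota$, which follows from a Bernstein/Chernoff bound and a union bound over $(h,s,a,k)$ contributing one $\iota$; and (ii) the single-policy concentrability bound $d_h^{\pi^\star}(s,a) \le C^\star d_h^{\mu}(s,a)$ combined with the fact that $d_h^{\pi^\star}$ is supported on at most $S$ state-action pairs since $\pi^\star$ is deterministic.

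\textbf{Parts (c) and (a).} Inequality (\ref{eq:lemma6-c}) is pure arithmetic: multiplying the two summands by the trivial bound $\sum_{h=1}^{H}(1+1/H)^{h-1}\lesssim H$ produces exactly the stated right-hand side. For (\ref{eq:lemma6-a}) I would bound each of the three contributions to $J_h^1$ separately. For the indicator term $\eta_0^{N_h^k(s,a)}H = H\,\mathds{1}\{N_h^k(s,a)=0\}$, concentration gives $\sum_{k=1}^{K}\mathds{1}\{N_h^k(s,a)=0\}\lesssim \iota/d_h^{\mu}(s,a)$, and single-policy concentrability with $S$-sparsity yields $H^{2}SC^\star \iota^{2}$, absorbed in $H^{3}SC^\star\iota^{3}$. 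For the $n^{-3/4}$ term, I substitute $N_h^k(s,a)\asymp k\,d_h^{\mu}(s,a)$ so that $\sum_k N_h^k(s,a)^{-3/4}\lesssim K^{1/4}/d_h^{\mu}(s,a)^{3/4}$; multiplying by $d_h^{\pi^\star}(s,a)$, using $d_h^{\pi^\star}/d_h^{\mu}\le C^\star$, and finally applying H\"older's inequality $\sum_{s}\bigl(d_h^{\pi^\star}(s,\pi_h^\star(s))\bigr)^{1/4}\le S^{3/4}$, I obtain $H^{11/4}(SC^\star)^{3/4}K^{1/4}\iota^{2}$, where one $\iota$ comes from the penalty and one from concentration. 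The $n^{-1}$ term is handled similarly, with a harmonic-type $\sum_k N^{-1}\lesssim \iota/d_h^{\mu}$, leading to the $H^{3}SC^\star\iota^{3}$ contribution.

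\textbf{Part (b).} This is the crux. Unpacking the definition of $\overline{B}_h^{k}=\overline{\sumb}_h^{k}$ (cf.\ the subroutine), its leading term is $\cb\sqrt{\iota/N_h^k(s,a)}\bigl(\sqrt{\refsg-(\refmu)^2}+\sqrt{H}\sqrt{\advsg-(\advmu)^2}\bigr)$, plus lower-order $n^{-3/4}$ and $n^{-1}$ pieces that are already controlled as in (a), and a telescoping $(1-\eta_n)\overline{\delta}_h/\eta_n$ piece that collapses to a bounded boundary term. For the leading square-root term I apply Cauchy--Schwarz in the form $\sum_{k,s,a} d_h^{\pi^\star}(s,a)\sqrt{\iota/N}\sqrt{\sigma^2}\le \sqrt{\sum d_h^{\pi^\star}(s,a)\iota/N}\cdot\sqrt{\sum d_h^{\pi^\star}(s,a)\,\sigma^{2}}$. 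The first factor is controlled by the same concentration plus H\"older argument as in (a), producing $\sqrt{H^{2}SC^\star\iota^{2}}$. For the second factor I treat the reference and advantage variances differently: using pessimism (Lemma~\ref{lem:lcb-adv-lower}) the reference $\overline{V}_{h+1}$ stays close to $V_{h+1}^\star$, so $\sigma^{\re}-(\mu^{\re})^2$ is essentially $\mathsf{Var}_{P_{h,s,a}}(V_{h+1}^\star)$, and the law of total variance $\sum_{h}\mathbb{E}_{\pi^\star}[\mathsf{Var}_{P_h}(V_{h+1}^\star)]\lesssim H^2$ saves the crucial factor of $H$, giving the $\sqrt{H^{3}SC^\star K\iota^{5}}$ piece. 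For the advantage I use $|V_{h+1}-\overline{V}_{h+1}|\le (V_{h+1}^\star - V_{h+1}^k)+(V_{h+1}^\star - \overline{V}_{h+1})$ so that its second moment is bounded by $\sum_{k,s}d_h^{\pi^\star}(s)(V^\star_{h+1}-V^{k}_{h+1})$, which reintroduces the very quantity appearing on the left-hand side of (\ref{lem:lcb-adv-decompose}); this recursion accounts for the $\sqrt{H^{4}SC^\star\iota^{3}\max_h\sum_k\sum_s d_h^{\pi^\star}(V^\star_h-V_h^k)}$ factor.

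\textbf{Main obstacle.} The delicate step is the advantage-variance bound in part (b): controlling $\sigma^{\adv}-(\mu^{\adv})^2$ requires coupling the three sequences $V_h^k$, $\overline{V}_h^k$, and $V_h^\star$ through the pessimism property, so that the variance is ultimately re-expressed in terms of the same weighted performance gap we are bounding. This self-bounding structure is what will later allow us to solve the quadratic inequality in Theorem~\ref{thm:lcb-adv} and obtain the optimal $H^4SC^\star/\varepsilon^2$ rate. A secondary subtlety is the telescoping $(1-\eta_n)\overline{\delta}_h/\eta_n$ contribution in $\overline{\sumb}_h^{k}$: handling it requires an Abel-summation argument per $(h,s,a)$ to reduce it to a boundary term bounded by the final $\overline{B}$, keeping it within the stated lower-order budget.
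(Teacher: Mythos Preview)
Your treatment of parts (c) and (a) is correct and matches the paper essentially line by line. The high-level strategy for (b) --- Cauchy--Schwarz to separate a $\sqrt{\iota/N}$ factor from the empirical variance, then the law of total variance for the reference piece and a self-bounding argument for the advantage piece --- is also the paper's route. One minor correction: $J_h^2$ contains only $\overline{\sumb}_h^k = \overline{B}_h^k$, which is purely the variance bonus $\cb\sqrt{\iota/n}\bigl(\sqrt{\refsg-(\refmu)^2}+\sqrt{H}\sqrt{\advsg-(\advmu)^2}\bigr)$. The $n^{-3/4}$, $n^{-1}$, and telescoping $(1-\eta_n)\overline{\delta}_h/\eta_n$ pieces belong to $\overline{b}_h$, not $\overline{B}_h$, and were already split off into $J_h^1$ via the identity~(\ref{lemma1:equ10}); your ``secondary subtlety'' is therefore moot here.

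The substantive gap is in how you close the advantage-variance factor. After Cauchy--Schwarz, that factor is an \emph{empirical} object $\sum_{k,h,s,a} d_h^{\pi^\star}(s,a)\sum_n \eta_n^{N_h^k} P_h^{k^n}(V^{k^n}_{h+1}-\overline{V}^{k^n}_{h+1})^2$, involving the sample transitions $P_h^{k^n}$ and the $\eta_n^N$ weights. Reducing it to $\sum_{k,s} d_{h+1}^{\pi^\star}(s)(V^k_{h+1}-\overline{V}^k_{h+1})^2$ requires re-running the full recursion/martingale machinery of Lemma~\ref{lemma:recursion} (with $(V-\overline{V})^2$ in the role of $V^\star-V$), which the paper carries out in Appendix~\ref{sec:proof:equ:adv-J-21-E}. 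Converting $(V^k_{h+1}-\overline{V}^k_{h+1})$ into $(V^\star_{h+1}-V^{k'}_{h+1})$ is also not just pessimism: it uses the epoch doubling structure $\overline{V}^{(m,t)} = V^{((m-1)\vee 1,1)}$, $L_m=2^m$, together with monotonicity of $V^k$, to show $\sum_k (V^k-\overline{V}^k)^2 \lesssim H^2 + \sum_k (V^\star-V^k)^2$ (cf.~(\ref{eq:lcb-adv-important2-inter})). Most importantly --- and this is the missing ingredient --- closing the loop requires an \emph{a priori} bound $\sum_k \sum_s d_h^{\pi^\star}(s)(V^\star_h(s)-V^k_h(s)) \lesssim K + H^5SC^\star\iota$; the paper obtains this in (\ref{equ:146-3}) by bootstrapping from the \LCBQ{} analysis (\ref{eq:lcb-final-result-H-layers}), exploiting that $Q_h^k \ge Q_h^{\LCB,k}$ always holds. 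This is exactly why the algorithm maintains $Q^{\LCB}$ alongside $\overline{Q}$ and takes their max, and without it the advantage-variance factor cannot be brought down to the stated $\max_h\sum_k\sum_s d_h^{\pi^\star}(V^\star_h-V^k_h)$ plus lower-order terms. The same empirical-to-true conversion and the same \LCBQ{} bootstrap are needed for the reference piece: the gap between $\sigma^{\re}-(\mu^{\re})^2$ and $\mathsf{Var}_{h,s,a}(V^\star_{h+1})$ produces an additional $L_{h,k}$ term (see~(\ref{equ:adv-J22-terms})) whose control again routes through (\ref{equ:146-3}), and which your sketch omits.
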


Substituting the above upper bounds into \eqref{equ:regret2pac-2} and \eqref{lem:lcb-adv-decompose} and recalling that $T=HK$, we arrive at
\begin{align}
&V_{1}^{\star}(\rho)-V_{1}^{\widehat{\pi}}(\rho) 
 \lesssim \frac{1}{K}\max_{h\in[H]} \sum_{k=1}^K \sum_{s\in \cS} d_h^{\pi^\star}(s) \left(V_h^\star(s) - V_h^{k}(s)\right) \notag \\
& \lesssim \frac{1}{K} \left(\sqrt{H^4SC^\star \iota^3  \max_{h\in [H] } \sum_{k=1}^K \sum_{s\in \cS} d_{h}^{\pi^\star}(s) \left(V^{\star}_{h}(s) - V^{k}_{h}(s)\right) } + \left(\sqrt{H^{3}SC^{\star}K\iota^{5}}+H^{4}SC^{\star}\iota^{4}+H^{2.75}(SC^{\star})^{\frac{3}{4}}K^{\frac{1}{4}}\iota^{2}\right) \right) \notag\\
 & \overset{\mathrm{(i)}}{\asymp}\frac{1}{K}\left(\sqrt{H^4SC^\star \iota^3  \max_{h\in [H] } \sum_{k=1}^K \sum_{s\in \cS} d_{h}^{\pi^\star}(s) \left(V^{\star}_{h}(s) - V^{k}_{h}(s)\right) } + \sqrt{H^{3}SC^{\star}K\iota^{5}}+H^{4}SC^{\star}\iota^{4}\right) \nonumber\\
 & \overset{\mathrm{(ii)}}{\lesssim} \frac{1}{K}\left(\sqrt{H^{3}SC^{\star}K\iota^{5}}+H^{4}SC^{\star}\iota^{4}\right)  \nonumber\\
 & \asymp\sqrt{\frac{H^{4}SC^{\star}\iota^{5}}{T}}+\frac{H^{5}SC^{\star}\iota^{4}}{T}, \nonumber
\end{align}
where (i) has made use of the AM-GM inequality:
\[
2H^{2.75}(SC^{\star})^{\frac{3}{4}}K^{\frac{1}{4}}\leq\left(H^{0.75}(SC^{\star})^{\frac{1}{4}}K^{\frac{1}{4}}\right)^{2}+\left(H^{2}(SC^{\star})^{\frac{1}{2}}\right)^{2}=\sqrt{H^{3}SC^{\star}K}+H^{4}SC^{\star},
\]
and (ii) holds by letting $x \defn \max_{h\in [H] } \sum_{k=1}^K \sum_{s\in \cS} d_{h}^{\pi^\star}(s) \left(V^{\star}_{h}(s) - V^{k}_{h}(s)\right)$ and solving the inequality $x \lesssim \sqrt{H^4SC^\star \iota^3 x} + \sqrt{H^{3}SC^{\star}K\iota^{5}}+H^{4}SC^{\star}\iota^{4}$. 
This concludes the proof.

\section{Discussions}
\label{sec:discussions}
Focusing on model-free paradigms, this paper has developed near-optimal sample complexities for some variants of pessimistic Q-learning algorithms --- armed with lower confidence bounds and variance reduction --- for offline RL. These sample complexity results, taken together with the analysis framework developed herein, open up a few exciting directions for future research.
 For example, the pessimistic Q-learning algorithms can be deployed in conjunction with their optimistic counterparts (e.g., \citet{jin2018q,li2021breaking,zhang2020almost}), when additional online data can be acquired to fine-tune the policy \citep{xie2021policy}. In addition, the $\varepsilon$-range for \LCBQR to attain sample optimality remains somewhat limited (i.e., $\varepsilon \in (0,1/H])$). Our recent work \citet{li2022settling} suggests that a new variant of pessimistic model-based algorithm is sample-optimal for a broader range of $\varepsilon$, which in turn motivates further investigation into whether model-free algorithms can accommodate a broader $\varepsilon$-range too without compromising sample efficiency.   
Moving beyond the tabular setting, it would be of great importance to extend the algorithmic and theoretical framework to accommodate low-complexity function approximation \citep{nguyen2021sample}.

\section*{Acknowledgements}
 
L.~Shi and Y.~Chi are supported in part by the grants ONR N00014-19-1-2404, NSF CCF-2106778, CCF-2007911, and the CAREER award ECCS-1818571. Y.~Wei is supported in part by the NSF grants CCF-2106778, DMS-2147546/2015447, and the CAREER award DMS-2143215. 
Y.~Chen is supported in part by the Alfred P.~Sloan Research Fellowship, the Google Research Scholar Award,
the AFOSR grants FA9550-19-1-0030 and FA9550-22-1-0198, the ONR grant N00014-22-1-2354,  
and the NSF grants CCF-2221009, CCF-1907661, IIS-2218713, DMS-2014279 and IIS-2218773. 
L.~Shi is also gratefully supported by the Leo Finzi Memorial Fellowship, Wei Shen and Xuehong Zhang Presidential Fellowship, and
Liang Ji-Dian Graduate Fellowship at Carnegie Mellon University.
 Part of this work was done while G.~Li, Y.~Wei and Y.~Chen were visiting the Simons Institute for the Theory of Computing.


\appendix

\section{Technical lemmas}

\subsection{Preliminary facts}
Our results rely heavily on proper choices of the learning rates. In what follows, 
we make note of several useful properties concerning the learning rates, which have been established in \cite{jin2018q,li2021breaking}.
\begin{lemma}[Lemma 1 in \citep{li2021breaking}]
	\label{lemma:property of learning rate} 
	For any integer $N>0$, the following properties hold:
\begin{subequations}
\label{eq:properties-learning-rates}
\begin{align}
	& \frac{1}{N^a}  \le\sum_{n=1}^{N}\frac{\eta_{n}^{N}}{n^a}\le\frac{2}{N^a} \qquad \mbox{for all}\quad  \frac{1}{2} \leq a \leq 1,
	\label{eq:properties-learning-rates-12}\\
\max_{1\le n\le N} & \eta_{n}^{N}\le\frac{2H}{N},\qquad\sum_{n=1}^{N}(\eta_{n}^{N})^{2}\le\frac{2H}{N}, \qquad
	 \sum_{N=n}^{\infty}\eta_{n}^{N}\le1+\frac{1}{H}.
	 \label{eq:properties-learning-rates-345}
\end{align}
\end{subequations}
 
\end{lemma}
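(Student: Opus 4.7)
The plan is to reduce all four bounds to short inductions driven by the recursion $\eta_n^N = \eta_n^{N-1}(1-\eta_N)$ for $N > n$, which follows immediately from the definition, together with the ratio $1-\eta_i = (i-1)/(H+i)$ implied by $\eta_i = (H+1)/(H+i)$. Unrolling the product gives the closed form
\[
\eta_n^N \;=\; \frac{H+1}{H+n}\prod_{i=n+1}^N\frac{i-1}{H+i} \;=\; \frac{(H+1)\,(N-1)!\,(H+n-1)!}{(n-1)!\,(H+N)!},
\]
and a one-line induction on $N$ gives the preliminary identity $\sum_{n=1}^N \eta_n^N = 1$ (also noted as \eqref{eq:sum-eta-n-N}): $\sum_{n=1}^N \eta_n^N = \eta_N + (1-\eta_N)\sum_{n=1}^{N-1}\eta_n^{N-1} = \eta_N + (1-\eta_N)\cdot 1 = 1$. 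These two ingredients drive everything else.

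For (b), I would induct on $N$. The base case $\eta_n^n = (H+1)/(H+n) \le 2H/n$ holds for $H\ge 1$, and the inductive step reads $\eta_n^N = \eta_n^{N-1}\cdot\frac{N-1}{H+N} \le \frac{2H}{N-1}\cdot\frac{N-1}{H+N} \le \frac{2H}{N}$. Part (c) then follows at once from $\sum_n(\eta_n^N)^2 \le (\max_n \eta_n^N)\cdot\sum_n \eta_n^N \le (2H/N)\cdot 1$.

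For (a), the lower bound is immediate: $1/n^a \ge 1/N^a$ for $n\le N$ combined with $\sum_n\eta_n^N = 1$ gives $\sum_n \eta_n^N/n^a \ge 1/N^a$. For the upper bound, let $S_N := \sum_{n=1}^N \eta_n^N/n^a$ and observe the recursion $S_N = \eta_N/N^a + (1-\eta_N)S_{N-1}$. Assuming the inductive hypothesis $S_{N-1}\le 2/(N-1)^a$ and multiplying through by $H+N$, the required bound reduces to $2(N-1)^{1-a}N^a \le H + 2N - 1$, which holds because $(N-1)^{1-a}\le N^{1-a}$ when $a\le 1$ (so the left-hand side is at most $2N$) and because $H\ge 1$ (so the right-hand side is at least $2N$); the hypothesis $a \ge 1/2$ is only needed for the downstream use of this bound.

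For (d), using the closed form reduces $\sum_{N\ge n}\eta_n^N$, up to the prefactor $\frac{(H+1)(H+n-1)!}{(n-1)!}$, to $\sum_{N\ge n}\frac{1}{N(N+1)\cdots(N+H)}$. The key trick is the telescoping identity
\[
\frac{1}{N(N+1)\cdots(N+H)} \;=\; \frac{1}{H}\left[\frac{1}{N(N+1)\cdots(N+H-1)} - \frac{1}{(N+1)(N+2)\cdots(N+H)}\right],
\]
obtained by noting that the bracketed difference equals $\frac{(N+H) - N}{N(N+1)\cdots(N+H)}$. Telescoping collapses the sum to $\frac{1}{H\cdot n(n+1)\cdots(n+H-1)}$, and multiplying back the prefactor cancels all Pochhammer-like factors, producing exactly $(H+1)/H = 1 + 1/H$. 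I expect (d) to be the main obstacle: it is the only part requiring recognition of a nontrivial telescoping and careful tracking of factorial cancellations, whereas (a)--(c) are routine inductions once the recursion and the identity $\sum_n\eta_n^N = 1$ are in hand.
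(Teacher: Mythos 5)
The paper does not actually prove this lemma: it is imported as Lemma~1 of \citet{li2021breaking}, so there is no in-paper argument to compare yours against. Your self-contained derivation is correct. The closed form $\eta_n^N = \frac{(H+1)\,(N-1)!\,(H+n-1)!}{(n-1)!\,(H+N)!}$ does follow from $1-\eta_i=(i-1)/(H+i)$; the induction for $\max_{1\le n\le N}\eta_n^N\le 2H/N$ closes because $\frac{2H}{N-1}\cdot\frac{N-1}{H+N}=\frac{2H}{H+N}\le\frac{2H}{N}$, and the second-moment bound is then immediate from $\sum_{n}\eta_n^N=1$. In the upper bound of \eqref{eq:properties-learning-rates-12}, your reduction of the inductive step to $2(N-1)^{1-a}N^a\le H+2N-1$ is right, and the chain $2(N-1)^{1-a}N^a\le 2N\le H+2N-1$ indeed works for every $0\le a\le 1$, so you are correct that the restriction $a\ge 1/2$ plays no role in this part. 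For the last inequality in \eqref{eq:properties-learning-rates-345}, the telescoping identity collapses $\sum_{N\ge n}\frac{1}{N(N+1)\cdots(N+H)}$ to $\frac{1}{H}\cdot\frac{(n-1)!}{(n+H-1)!}$, and the Pochhammer-type factors cancel exactly against the prefactor $\frac{(H+1)(H+n-1)!}{(n-1)!}$, so the sum equals $(H+1)/H$ on the nose; the stated bound is therefore an equality, which is a slightly sharper conclusion than the lemma asserts. No gaps.
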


In addition, we gather a few elementary properties about the Binomial distribution, which will be useful throughout the proof.
The lemma below is adapted from \citet[Lemma A.1]{xie2021policy}. 
\begin{lemma}\label{lem:binomial} 
  Suppose $N\sim \mathsf{Binomial}(n,p)$, where $n\geq 1$ and $p\in [0,1]$. For any $\delta\in (0,1)$, we have
  \begin{align}
    \frac{p}{N\vee 1} &\leq \frac{8\log\left(\frac{1}{\delta}\right)}{n},  \label{equ:binomial}
  \end{align}
and 
\begin{subequations}
  \label{equ:binomial-all}
  \begin{align}
    N&\geq \frac{np}{8\log\left(\frac{1}{\delta}\right)} \qquad \text{ if } np \geq 8\log\left(\frac{1}{\delta}\right), \label{equ:binomial2}\\
    N &\leq \begin{cases}
 e^2np & \text{ if } np \geq \log\left(\frac{1}{\delta}\right), \\ 
2e^2 \log\left(\frac{1}{\delta}\right) & \text{ if } np \leq 2\log\left(\frac{1}{\delta}\right).
\end{cases} \label{equ:binomial3}
  \end{align}
\end{subequations}
  with probability at least $1 - 4\delta$. 
\end{lemma}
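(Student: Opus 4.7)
The plan is to derive each of \eqref{equ:binomial}--\eqref{equ:binomial3} as a direct consequence of standard Cram\'er--Chernoff concentration bounds for the Binomial distribution, and then union-bound the three (or four) failure events to conclude with probability at least $1-4\delta$. The argument is essentially a compilation of three classical tail estimates, so the work lies in choosing the right multiplicative form and tracking constants so that the numerology (the $8$, the $e^2$, and the $2e^2$) matches the statement exactly.

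First I would handle the lower-tail bound \eqref{equ:binomial2}. The multiplicative Chernoff bound in the form $\mathbb{P}(N \leq (1-\alpha) np) \leq \exp(-\alpha^2 np / 2)$ with $\alpha = 1/2$ gives $\mathbb{P}(N \leq np/2) \leq \exp(-np/8)$, which is at most $\delta$ as soon as $np \geq 8 \log(1/\delta)$. On the complementary event, $N \geq np/2$, and since for the claim ``with probability at least $1-4\delta$'' to be nonvacuous we must have $\delta < 1/4$ (hence $8\log(1/\delta) > 2$), the weaker inequality $N \geq np/(8\log(1/\delta))$ follows. Next, the deterministic fact $p \leq 8\log(1/\delta)/n$ when $np \leq 8\log(1/\delta)$ combined with \eqref{equ:binomial2} on the complementary regime will yield \eqref{equ:binomial}: in the small-$np$ regime, $p/(N \vee 1) \leq p \leq 8\log(1/\delta)/n$ with no concentration required, while in the large-$np$ regime \eqref{equ:binomial2} gives $p/N \leq 8\log(1/\delta)/n$. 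So \eqref{equ:binomial} incurs no extra failure probability beyond that already budgeted for \eqref{equ:binomial2}.

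For the upper-tail bound \eqref{equ:binomial3}, I would split into the two declared regimes. In the regime $np \geq \log(1/\delta)$, applying the Chernoff form $\mathbb{P}(N \geq (1+\beta)\mu) \leq \exp(-\beta^2 \mu/(2+\beta))$ with $\mu = np$ and $\beta = e^2 - 1$ gives $\mathbb{P}(N \geq e^2 np) \leq \exp\bigl(-(e^2-1)^2 np / (e^2+1)\bigr) \leq \exp(-np) \leq \delta$, proving the first case. For the regime $np \leq 2\log(1/\delta)$, I would instead use the cleaner binomial tail bound $\mathbb{P}(N \geq t) \leq (enp/t)^t$ (obtained from $\binom{n}{t} p^t \leq (enp/t)^t$ or from the exponential form), with $t = 2e^2 \log(1/\delta)$. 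Under the hypothesis $np \leq 2 \log(1/\delta)$ this gives $(enp/t)^t \leq (1/e)^{2e^2\log(1/\delta)} = \delta^{2e^2} \leq \delta$, proving the second case.

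Finally I would union-bound the three failure events (one for \eqref{equ:binomial2}, two for the two sub-cases of \eqref{equ:binomial3}) to get a joint probability of at most $3\delta \leq 4\delta$, yielding the $1-4\delta$ guarantee. The main obstacle I anticipate is not conceptual but bookkeeping: specifically, making sure the multiplicative Chernoff constant chosen for the lower tail produces exactly the factor $8$ in $8\log(1/\delta)$, and that the upper-tail estimate in the ``small $np$'' regime lands on $2e^2$ rather than a slightly different constant. Both reduce to picking the right $\alpha$ and $t$ and verifying $\delta^{2e^2} \leq \delta$, which is trivial since $2e^2 > 1$ and $\delta \leq 1$.
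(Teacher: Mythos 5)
Your proposal is correct, and for \eqref{equ:binomial3} it is essentially the paper's argument: both of you apply a Chernoff-type upper-tail bound in each of the two regimes and finish with a union bound (the paper uses the Poisson-form bound $\mathbb{P}(N\geq t)\leq e^{-np}\left(enp/t\right)^{t}$ from Vershynin, which lands on the same constants $e^2$ and $2e^2$). The only genuine difference concerns \eqref{equ:binomial} and \eqref{equ:binomial2}: the paper simply cites Lemma A.1 of Xie et al.\ (2021) for these, whereas you derive them from scratch via the multiplicative lower-tail bound $\mathbb{P}(N\leq np/2)\leq e^{-np/8}$, together with the observation that \eqref{equ:binomial} is deterministic when $np\leq 8\log(1/\delta)$ and follows from \eqref{equ:binomial2} otherwise. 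That derivation is sound, including the small point that passing from $N\geq np/2$ to $N\geq np/(8\log(1/\delta))$ needs $8\log(1/\delta)\geq 2$, which you correctly dispose of by noting the claim is vacuous for $\delta\geq 1/4$; so your version buys a self-contained proof at the cost of a few extra lines, while the paper's buys brevity by outsourcing the lower tail.
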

\begin{proof}
To begin with, we directly invoke \citet[Lemma A.1]{xie2021policy} which yields the results in \eqref{equ:binomial} and \eqref{equ:binomial2}. Regarding \eqref{equ:binomial3}, invoking the Chernoff bound \citep[Theorem 2.3.1]{vershynin2018high} with $\mathbb{E}[N] = np$, when $np \geq \log\left(\frac{1}{\delta}\right)$, it satisfies
\begin{align*}
\mathbb{P}(N \geq e^2np) &\leq e^{-np}\left(\frac{enp}{e^2np}\right)^{ e^2np} \leq e^{-np} \leq \delta.
\end{align*}
Similarly, when $ np \leq 2\log\left(\frac{1}{\delta}\right)$, we have
\begin{align*}
\mathbb{P}\left(N \geq 2e^2\log\left(\frac{1}{\delta}\right)\right)& \overset{\mathrm{(i)}}{\leq} 
e^{-np}\left(\frac{enp}{2e^2\log\left(\frac{1}{\delta}\right)}\right)^{ 2e^2\log(\frac{1}{\delta})} \\
&\overset{\mathrm{(ii)}}{\leq} e^{-np}\left(\frac{enp}{e^2np}\right)^{ 2e^2\log(\frac{1}{\delta})} \leq e^{-2e^2 \log\left(\frac{1}{\delta}\right)} \leq \delta,
  \end{align*}
where (i) results from \citet[Theorem 2.3.1]{vershynin2018high}, and (ii) follows from the basic fact $e^2\log\left(\frac{1}{\delta}\right) \geq 2\log\left(\frac{1}{\delta}\right) \geq np$. 
Taking the union bound thus completes the proof. 
\end{proof}

\subsection{Freedman's inequality and its consequences}

Both the samples collected within each episode and the algorithms analyzed herein exhibit certain Markovian structure. 
As a result, concentration inequalities tailored to martingales become particularly effective for our analysis. 
In this subsection, we collect a few useful concentration results that will be applied multiple times in the current paper. These results might be of independent interest.

To begin with, the following theorem provides a user-friendly version of Freedman's
inequality \citep{freedman1975tail}; 
see \citet[Section C]{li2021tightening} for more details.
\begin{theorem}[Freedman's inequality]\label{thm:Freedman}
Consider a filtration $\mathcal{F}_0\subset \mathcal{F}_1 \subset \mathcal{F}_2 \subset \cdots$,
and let $\mathbb{E}_{k}$ stand for the expectation conditioned
on $\mathcal{F}_k$. 
Suppose that $Y_{n}=\sum_{k=1}^{n}X_{k}\in\mathbb{R}$,
where $\{X_{k}\}$ is a real-valued scalar sequence obeying

\[
  \left|X_{k}\right|\leq R\qquad\text{and}\qquad\mathbb{E}_{k-1} \big[X_{k}\big]=0\quad\quad\quad\text{for all }k\geq1
\]
for some quantity $R<\infty$. 
We also define
\[
W_{n}\coloneqq\sum_{k=1}^{n}\mathbb{E}_{k-1}\left[X_{k}^{2}\right].
\]
In addition, suppose that $W_{n}\leq\sigma^{2}$ holds deterministically for some given quantity $\sigma^2<\infty$.
Then for any positive integer $m \geq1$, with probability at least $1-\delta$
one has
\begin{equation}
\left|Y_{n}\right|\leq\sqrt{8\max\Big\{ W_{n},\frac{\sigma^{2}}{2^{m}}\Big\}\log\frac{2m}{\delta}}+\frac{4}{3}R\log\frac{2m}{\delta}.\label{eq:Freedman-random}
\end{equation}
\end{theorem}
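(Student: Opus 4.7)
The plan is to derive this user-friendly form from the classical Freedman bound by a geometric peeling argument on the random predictable quadratic variation $W_n$. The starting point is the standard scalar Freedman inequality, which asserts that for any fixed $v > 0$ and $t > 0$,
\[
\mathbb{P}\bigl(|Y_n| \geq t,\ W_n \leq v\bigr) \leq 2 \exp\!\left(-\frac{t^2/2}{v + R t/3}\right).
\]
This is proved in the usual way by verifying that $\exp(\lambda Y_n - \psi(\lambda) W_n)$ is a supermartingale for a suitable convex function $\psi$ (via the bound $\mathbb{E}_{k-1}[e^{\lambda X_k}] \leq \exp(\psi(\lambda) \mathbb{E}_{k-1}[X_k^2])$ that uses $|X_k|\leq R$ and zero mean), applying optional stopping at the first time $W_n$ crosses $v$, and optimizing over $\lambda$. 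I would take this classical form as the black box.

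The obstacle is that $W_n$ is random, so a single application of the classical bound forces one to pre-commit to a fixed envelope $v$ and loses tightness. My strategy is the standard peeling device: set the geometric grid $v_j := \sigma^2/2^{\,j}$ for $j=0,1,\ldots,m$, and decompose the sample space into the layers
\[
\mathcal{E}_j := \{v_{j+1} < W_n \leq v_j\}\quad (0 \leq j \leq m-1), \qquad \mathcal{E}_m := \{W_n \leq v_m\}.
\]
Since $W_n \leq \sigma^2 = v_0$ deterministically, these $m+1$ layers cover the whole probability space. On each $\mathcal{E}_j$ with $j<m$, I have $W_n \leq v_j$ \emph{and} $W_n > v_j/2$, so $v_j$ is within a factor $2$ of $\max\{W_n,\sigma^2/2^m\}$; on $\mathcal{E}_m$, $v_m = \sigma^2/2^m \leq \max\{W_n,\sigma^2/2^m\}$ exactly.

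Next, on each layer $j$ I apply the classical Freedman bound with envelope $v_j$ and threshold
\[
t_j := \sqrt{8\, v_j \log(2m/\delta)} + \tfrac{4}{3}R\log(2m/\delta).
\]
A short quadratic-in-$t$ calculation shows that $t_j^2/2 \geq (v_j + R t_j/3)\log(2m/\delta)$, hence the layer-wise failure probability $\mathbb{P}(|Y_n|\geq t_j,\ W_n\leq v_j)$ is at most $2 e^{-\log(2m/\delta)} \leq \delta/m$ (with room to spare from the generous choice of constants). A union bound over $j=0,\ldots,m-1$ together with the $j=m$ layer (which is handled identically with $v_m$) yields total failure probability at most $\delta$. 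On the good event, in the layer $j$ that actually occurs one has $t_j \leq \sqrt{8\,\max\{W_n,\sigma^2/2^m\}\log(2m/\delta)} + \tfrac{4}{3}R\log(2m/\delta)$, giving the stated inequality.

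The only delicate point — and the one I expect to have to double-check — is the bookkeeping of constants: the factor $8$ inside the square root must simultaneously absorb the factor $2$ from peeling ($v_j \leq 2\max\{W_n,\sigma^2/2^m\}$) and the factor arising from inverting the Bernstein-type quadratic in $t$, while leaving the linear $R$ term with coefficient $4/3$. This is routine but fussy; once the grid is chosen with ratio $2$ and the threshold $t_j$ is taken with coefficient $\sqrt{8}$, the arithmetic closes exactly, completing the proof.
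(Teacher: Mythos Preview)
The paper does not actually prove Theorem~\ref{thm:Freedman}; it simply cites \citet[Section~C]{li2021tightening} for the details. Your peeling argument---start from the classical fixed-variance Freedman bound and stratify on a dyadic grid $v_j=\sigma^2/2^j$---is exactly the standard route, and is the approach taken in that reference as well. So in substance you are aligned with the intended proof.

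One small bookkeeping wrinkle: you partition into $m+1$ layers ($j=0,\dots,m$) but your per-layer failure bound is $2e^{-\log(2m/\delta)}=\delta/m$, so the union bound yields $(m+1)\delta/m>\delta$, not $\delta$. This is not a real obstacle---either run the grid only to $j=m-1$ and absorb $\{W_n\le v_m\}$ into the last layer (since on $\mathcal{E}_{m-1}\cup\mathcal{E}_m$ one still has $W_n\le v_{m-1}=2\sigma^2/2^m\le 2\max\{W_n,\sigma^2/2^m\}$, which the factor $8$ already accommodates), or note that the slack in your quadratic inequality $t_j^2/2\ge(v_j+Rt_j/3)\log(2m/\delta)$ is strict enough to tighten each layer's probability below $\delta/(m+1)$. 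Either fix is immediate, so the approach is sound.
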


We shall also record some immediate consequence of Freedman's inequality tailored to our problem.  
Recall that $N_h^i(s,a)$ denotes the number of times that $(s,a)$ has been visited at step $h$ before the beginning of the $i$-th episode, 
and $k^n(s,a)$ stands for the index of the episode in which $(s,a)$ is visited for the $n$-th time. The following concentration bound has been established in \citet[Lemma 7]{li2021breaking}.

\begin{lemma}
  \label{lemma:martingale-union-all}
  Let $\big\{ W_{h}^{i} \in \mathbb{R}^S \mid 1\leq i\leq K, 1\leq h \leq H+1 \big\}$ and $\big\{u_h^i(s,a,N)\in \mathbb{R} \mid 1\leq i\leq K, 1\leq h \leq H+1 \big\}$  be a collections of vectors and scalars, respectively, and suppose that they obey the following properties:
  \begin{itemize}
    \item $W_{h}^{i}$ is fully determined by the samples collected up to the end of the $(h-1)$-th step of the $i$-th episode;  
    \item $\|W_h^i\|_{\infty}\leq C_\mathrm{w}$; 
    \item $u_h^i(s,a, N)$  is fully determined by the samples collected up to the end of the $(h-1)$-th step of the $i$-th episode, and a given positive integer $N\in[K]$;
    \item $0\leq u_h^i(s,a, N) \leq C_{\mathrm{u}}$;
    \item $0\leq \sum_{n=1}^{N_h^k(s,a)} u_h^{k_h^n(s,a)}(s,a, N) \leq 2$.  
    
  \end{itemize}
  In addition, consider the following sequence  
  \begin{align}
    X_i (s,a,h,N) &\defn u_h^i(s,a, N) \big(P_h^{i} - P_{h,s,a}\big) W_{h+1}^{i} 
    \ind\big\{ (s_h^i, a_h^i) = (s,a)\big\},
    \qquad 1\leq i\leq K,  
  \end{align}
  with $P_h^{i}$ defined in \eqref{eq:Phk_def}. 
  Consider any $\delta \in (0,1)$. Then with probability at least $1-\delta$, 
  \begin{align}
    & \left|\sum_{i=1}^k X_i(s,a,h,N) \right| \notag\\
    & \quad \lesssim \sqrt{C_{\mathrm{u}} \log^2\frac{SAT}{\delta}}\sqrt{\sum_{n = 1}^{N_h^k(s,a)} u_h^{k_h^n(s,a)}(s,a,N) \Var_{h, s,a} \big(W_{h+1}^{k_h^n(s,a)}  \big)} + \left(C_{\mathrm{u}} C_{\mathrm{w}} + \sqrt{\frac{C_{\mathrm{u}}}{N}} C_{\mathrm{w}}\right) \log^2\frac{SAT}{\delta}
  \end{align}
  holds simultaneously for all $(k, h, s, a, N) \in [K] \times [H] \times \cS \times \cA \times [K]$.  
\end{lemma}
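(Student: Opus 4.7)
\textbf{Proof proposal for Lemma~\ref{lemma:martingale-union-all}.} The plan is to recognize $\{X_i(s,a,h,N)\}_{i=1}^k$ as a martingale difference sequence for each fixed tuple $(s,a,h,N)$ and apply Freedman's inequality (Theorem~\ref{thm:Freedman}), followed by a union bound over the relevant tuples.

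First, I would set up the filtration. Let $\mathcal{F}_{i,h}$ denote the $\sigma$-algebra generated by all samples collected through the end of the $h$-th step of episode $i$ (thus including $s_h^i, a_h^i, r_h^i$ but \emph{not} $s_{h+1}^i$). By the stated measurability hypotheses, $u_h^i(s,a,N)$, $W_{h+1}^i$, and $\ind\{(s_h^i,a_h^i) = (s,a)\}$ are all $\mathcal{F}_{i,h}$-measurable, while $P_h^i = e_{s_{h+1}^i}$ is random. When the indicator is $1$, we have $s_{h+1}^i \sim P_{h,s,a}$, so $\mathbb{E}[(P_h^i - P_{h,s,a}) W_{h+1}^i \mid \mathcal{F}_{i,h}] = 0$, and therefore $\mathbb{E}[X_i \mid \mathcal{F}_{i,h}] = 0$. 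By the tower property, $X_i$ is a martingale difference with respect to the coarser episode-level filtration $\mathcal{G}_{i-1}$ consisting of all samples through the end of episode $i-1$.

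Next, I would extract the two ingredients needed by Freedman's inequality. For the magnitude bound: since $\|P_h^i - P_{h,s,a}\|_1 \leq 2$ and $\|W_{h+1}^i\|_\infty \leq C_{\mathrm{w}}$, we have $|X_i| \leq 2 C_{\mathrm{u}} C_{\mathrm{w}} \eqqcolon R$. For the conditional variance sum, a direct calculation gives
\[
\mathbb{E}[X_i^2 \mid \mathcal{F}_{i,h}] = (u_h^i)^2 \Var_{h,s,a}(W_{h+1}^i)\,\ind\{(s_h^i,a_h^i)=(s,a)\},
\]
so summing and using $u_h^i \leq C_{\mathrm{u}}$ together with the reindexing over visits yields
\[
W_k \coloneqq \sum_{i=1}^k \mathbb{E}[X_i^2 \mid \mathcal{F}_{i,h}] \leq C_{\mathrm{u}} \sum_{n=1}^{N_h^k(s,a)} u_h^{k_h^n(s,a)}(s,a,N)\, \Var_{h,s,a}\big(W_{h+1}^{k_h^n(s,a)}\big),
\]
which is deterministically bounded by $2 C_{\mathrm{u}} C_{\mathrm{w}}^2$ using the hypotheses $\sum_n u \leq 2$ and $\Var_{h,s,a}(W) \leq C_{\mathrm{w}}^2$.

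With these in place, I would apply Freedman's inequality with $\sigma^2 = 2 C_{\mathrm{u}} C_{\mathrm{w}}^2$ and the peeling parameter $m \asymp \log(KT)$. The resulting bound reads
\[
\Big|\sum_{i=1}^k X_i\Big| \lesssim \sqrt{\max\{W_k,\,\sigma^2/2^m\} \log(m/\delta)} + R \log(m/\delta),
\]
and substituting $W_k \leq C_{\mathrm{u}} \sum u\,\Var$, together with $\sqrt{\sigma^2/2^m} \lesssim \sqrt{C_{\mathrm{u}}/N}\,C_{\mathrm{w}}$ for the appropriate choice of $m$ tied to $N$, reproduces the three terms in the claimed bound. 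Finally, a union bound over the $O(SAH K^2)$ tuples $(k,h,s,a,N)$ inflates $\log(1/\delta)$ to $\log(SAT/\delta)$, yielding the stated $\log^2(SAT/\delta)$ factor (with one log coming from the peeling inside Freedman's and one from the union bound).

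\textbf{The main obstacle} is a subtle filtration issue: standard Freedman uses the \emph{predictable} quadratic variation $\sum \mathbb{E}_{\mathcal{G}_{i-1}}[X_i^2]$, whereas the bound we want to invoke is on the \emph{realized} quantity $\sum u\,\Var$, which is only $\mathcal{F}_{i,h}$-measurable, not $\mathcal{G}_{i-1}$-measurable. The cleanest workaround is to apply Freedman's inequality at the finer filtration $\{\mathcal{F}_{i,h}\}_i$ (the proof of Theorem~\ref{thm:Freedman} adapts verbatim since $X_i$ is the unpredictable increment relative to $\mathcal{F}_{i,h}$), so that the quadratic variation term coincides with $\sum u\,\Var$. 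This handles the gap and is the key technical step; all other computations amount to routine bookkeeping.
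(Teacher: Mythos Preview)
Your proposal is correct. Note that the paper does not actually prove this lemma in-line; it states that the result ``has been established in \citet[Lemma 7]{li2021breaking}'' and simply invokes it. Your outline---applying Freedman's inequality (Theorem~\ref{thm:Freedman}) to the martingale differences $X_i$ with respect to the finer filtration $\mathcal{F}_{i,h}$ so that $u_h^i$, $W_{h+1}^i$, and the indicator are all predictable, bounding $|X_i|\le 2C_{\mathrm{u}}C_{\mathrm{w}}$ and $W_k\le C_{\mathrm{u}}\sum_n u_h^{k^n}\Var_{h,s,a}(W_{h+1}^{k^n})\le 2C_{\mathrm{u}}C_{\mathrm{w}}^2$, choosing $m=\lceil\log_2 K\rceil$ so that $\sigma^2/2^m\le 2C_{\mathrm{u}}C_{\mathrm{w}}^2/K\le 2C_{\mathrm{u}}C_{\mathrm{w}}^2/N$, and then union-bounding over $(k,h,s,a,N)$---is exactly the standard route and matches how such lemmas are proved in the cited reference. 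The filtration subtlety you flag is real and you handle it correctly: using $\mathcal{F}_{i,h}$ rather than the episode-end filtration is precisely what makes the conditional-variance term equal the realized sum $\sum_n u\,\Var$ rather than an intractable average over the visitation indicator.

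One minor bookkeeping point: your explanation of the $\log^2(SAT/\delta)$ factor as ``one log from peeling, one from the union bound'' is slightly imprecise---Freedman with $m\asymp\log K$ gives a $\log(m/\delta')$ factor, and the union bound over at most $(SAT)^{O(1)}$ tuples replaces $\delta'$ by $\delta/(SAT)^{O(1)}$, yielding $\log(SAT/\delta)+\log\log T\lesssim \log(SAT/\delta)$. The stated $\log^2$ in the lemma is therefore a conservative bound rather than a tight accounting, but this does not affect your argument.
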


Next, we make note of an immediate consequence of Lemma~\ref{lemma:martingale-union-all} as follows.
\begin{lemma}\label{lemma:azuma-hoeffding}
Let $\big\{ W_{h}^{i} \in \mathbb{R}^S \mid 1\leq i\leq K, 1\leq h \leq H+1 \big\}$ be a collection of vectors satisfying the following properties: 
  \begin{itemize}
    \item $W_{h}^{i}$ is fully determined by the samples collected up to the end of the $(h-1)$-th step of the $i$-th episode;  
    \item $\|W_h^i\|_{\infty}\leq C_\mathrm{w}$.
  \end{itemize}

For any positive $N \geq H$, we consider the following sequence 
\begin{align}
  X_i(s,a,h,N) &\defn \eta_{N_h^i(s,a)}^N \big(P_h^{i} - P_{h,s,a}\big) W_{h+1}^{i}
    \ind\big\{ (s_h^i, a_h^i) = (s,a)\big\},
    \qquad 1\leq i\leq K,
\end{align}
with $P_h^{i}$ defined in \eqref{eq:Phk_def}. 
Consider any $\delta\in (0,1)$. With probability at least $1-\delta$, 
  \begin{align}
    & \left|\sum_{i=1}^k X_i(s,a,h,N) \right| \lesssim \sqrt{\frac{H}{N} C_{\mathrm{w}}^2 }\log^2\frac{SAT}{\delta} 
  \end{align}
holds simultaneously for all $(k, h, s, a, N) \in [K] \times [H] \times \cS \times \cA \times [K]$.  
\end{lemma}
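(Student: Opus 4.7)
\textbf{Proof proposal for Lemma~\ref{lemma:azuma-hoeffding}.}
The plan is to derive this as a direct corollary of Lemma~\ref{lemma:martingale-union-all} by choosing the scalar weights to be the (rescaled) Q-learning stepsizes. Concretely, I would set
\begin{align*}
u_h^i(s,a,N) \;\defn\; \eta_{N_h^i(s,a)+1}^{N},
\end{align*}
so that whenever $(s_h^i,a_h^i)=(s,a)$ with $i = k_h^n(s,a)$, we recover $u_h^{k_h^n(s,a)}(s,a,N)=\eta_n^N$, matching the weights in the definition of $X_i(s,a,h,N)$. The $W_{h+1}^i$'s are taken directly as given in the statement, with $\|W_{h+1}^i\|_\infty \le C_{\mathrm{w}}$.

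Next I would verify the hypotheses of Lemma~\ref{lemma:martingale-union-all}. The measurability conditions on $W_h^i$ and $u_h^i$ are inherited from the hypothesis and from the fact that $N_h^i(s,a)$ is determined by samples before the $h$-th step of episode $i$. For the magnitude conditions I invoke Lemma~\ref{lemma:property of learning rate}: the bound $\max_{n\ge 1}\eta_n^N \le 2H/N$ gives
\begin{align*}
0 \;\le\; u_h^i(s,a,N) \;\le\; \frac{2H}{N} \;=:\; C_{\mathrm{u}},
\end{align*}
and the identity $\sum_{n=1}^N \eta_n^N = 1$ together with $\sum_{N'\ge n}\eta_n^{N'} \le 1+1/H$ yields
\begin{align*}
0 \;\le\; \sum_{n=1}^{N_h^k(s,a)} u_h^{k_h^n(s,a)}(s,a,N)
\;=\; \sum_{n=1}^{N_h^k(s,a)} \eta_n^{N}
\;\le\; 1+\tfrac{1}{H} \;\le\; 2,
\end{align*}
which is precisely what Lemma~\ref{lemma:martingale-union-all} requires.

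The last step is to plug these constants into the conclusion of Lemma~\ref{lemma:martingale-union-all} and use the crude variance bound $\Var_{h,s,a}(W_{h+1}^{k_h^n})\le \|W_{h+1}^{k_h^n}\|_\infty^2 \le C_{\mathrm{w}}^2$. The ``variance'' term becomes
\begin{align*}
\sqrt{C_{\mathrm{u}}\log^2\tfrac{SAT}{\delta}}\;\sqrt{\sum_{n=1}^{N_h^k(s,a)} \eta_n^N\,C_{\mathrm{w}}^2}
\;\lesssim\;\sqrt{\frac{H}{N}}\cdot C_{\mathrm{w}}\cdot\log\tfrac{SAT}{\delta},
\end{align*}
while the residual term reads
\begin{align*}
\Big(C_{\mathrm{u}}C_{\mathrm{w}} + \sqrt{C_{\mathrm{u}}/N}\,C_{\mathrm{w}}\Big)\log^2\tfrac{SAT}{\delta}
\;\lesssim\;\frac{H\,C_{\mathrm{w}}}{N}\log^2\tfrac{SAT}{\delta}.
\end{align*}
Under the hypothesis $N\ge H$, the variance term dominates since $C_{\mathrm{w}}\sqrt{H/N}\ge H C_{\mathrm{w}}/N$, yielding the claimed $\sqrt{HC_{\mathrm{w}}^2/N}\,\log^2(SAT/\delta)$ bound. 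The uniformity over $(k,h,s,a,N)\in [K]\times[H]\times\cS\times\cA\times[K]$ is inherited from the uniform statement of Lemma~\ref{lemma:martingale-union-all}.

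I do not anticipate any serious obstacles: this is an application lemma whose only substantive content is bookkeeping the learning-rate identities. The only point that requires a little care is the off-by-one in how $N_h^i(s,a)$ is incremented relative to the visit index $n$, which is why I would fix $u_h^i(s,a,N)=\eta_{N_h^i(s,a)+1}^N$ at the outset so that the weights $\eta_n^N$ appear exactly as in the definition of $X_i$. Everything else follows mechanically from Lemma~\ref{lemma:property of learning rate} and Lemma~\ref{lemma:martingale-union-all}.
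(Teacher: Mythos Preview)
The proposal is correct and follows essentially the same approach as the paper: invoke Lemma~\ref{lemma:martingale-union-all} with the learning-rate weights $u_h^i(s,a,N)=\eta_{N_h^i(s,a)}^N$ (the paper omits your ``$+1$'' shift, but this is immaterial since either choice satisfies $|u_h^i|\le 2H/N$ and $\sum_n u_h^{k^n}\le 2$), plug in the crude variance bound $\Var_{h,s,a}(W_{h+1}^{k^n})\le C_{\mathrm{w}}^2$, and absorb the residual term using $N\ge H$. Your explicit attention to the off-by-one is a nice touch but not strictly needed for the $\lesssim$ conclusion.
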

\begin{proof}
Taking $u_{h}^{i}(s,a,N)=\eta_{N_{h}^{i}(s,a)}^{N}$, one can see from \eqref{eq:properties-learning-rates-345} in Lemma~\ref{lemma:property of learning rate} that
\[
	\big|u_{h}^{i}(s,a,N)\big|\leq\frac{2H}{N}\eqqcolon C_{\mathrm{u}}.
\]
Recognizing the trivial bound $\Var_{h,s,a}\big(W_{h+1}^{k_{h}^{n}(s,a)}\big)\leq C_{\mathrm{w}}^{2}$,
we can invoke Lemma~\ref{lemma:martingale-union-all} to obtain that, with probability at least $1-\delta$, 
\begin{align*}
  \Bigg|\sum_{i=1}^{k}X_{i}(s,a,h,N)\Bigg|
 & \lesssim\sqrt{C_{\mathrm{u}}\log^{2}\frac{SAT}{\delta}}\sqrt{\sum_{n=1}^{N_{h}^{k}(s,a)}\eta_{n}^{N}C_{\mathrm{w}}^{2}}+\left(C_{\mathrm{u}}C_{\mathrm{w}}+\sqrt{\frac{C_{\mathrm{u}}}{N}}C_{\mathrm{w}}\right)\log^{2}\frac{SAT}{\delta}\\
 & \lesssim\sqrt{\frac{H}{N}\log^{2}\frac{SAT}{\delta}}\cdot C_{\mathrm{w}}+\frac{HC_{\mathrm{w}}}{N}\log^{2}\frac{SAT}{\delta}\lesssim\sqrt{\frac{HC_{\mathrm{w}}^{2}}{N}}\log^{2}\frac{SAT}{\delta}
\end{align*}
holds simultaneously for all $(k, h, s, a, N) \in [K] \times [H] \times \cS \times \cA \times [K]$, where the last line applies \eqref{eq:properties-learning-rates-345} in Lemma~\ref{lemma:property of learning rate} once again.
\end{proof}

Finally, we introduce another lemma by invoking Freedman's inequality in Theorem~\ref{thm:Freedman}.
\begin{lemma}
  \label{lemma:martingale-union-recursion}
  Let $\big\{ W_{h}^{k}(s,a) \in \mathbb{R}^S \mid (s,a)\in\cS\times \cA, 1\leq k\leq K, 1\leq h \leq H+1 \big\}$ be a collection of vectors satisfying the following properties: 
  \begin{itemize}
    \item $W_{h}^{k}(s,a)$ is fully determined by the given state-action pair $(s,a)$ and the samples collected up to the end of the $(k-1)$-th episode;  
    \item $\|W_h^k(s,a)\|_{\infty}\leq C_\mathrm{w}$.
  \end{itemize}

For any positive $ C_{\mathrm{d}} \geq 0$, we consider the following sequences 
\begin{align}
  X_{h,k} &\defn C_{\mathrm{d}} \left[\frac{d_{h}^{\pi_{\star}}(s_h^k,a_h^k)}{d_{h}^{\mu}(s_h^k,a_h^k)} P_{h,s_h^k,a_h^k}W_{h+1}^{k}(s_h^k,a_h^k) - \sum_{(s,a)\in\cS\times\cA} d_{h}^{\pi_{\star}}(s,a) P_{h,s,a}W_{h+1}^{k}(s,a)\right],
    \qquad 1\leq k\leq K,\\
  \overline{X}_{h,k} &\defn C_{\mathrm{d}} \left[\frac{d_{h}^{\pi_{\star}}(s_h^k,a_h^k)}{d_{h}^{\mu}(s_h^k,a_h^k)} P_{h}^k W_{h+1}^{k}(s_h^k,a_h^k) - \sum_{(s,a)\in\cS\times\cA} d_{h}^{\pi_{\star}}(s,a) P_{h,s,a}W_{h+1}^{k}(s,a)\right],
    \qquad 1\leq k\leq K.
\end{align}
  Consider any $\delta \in (0,1)$. Then with probability at least $1-\delta$, 
  \begin{align}
    & \left|\sum_{k=1}^K X_{h,k}\right| \leq \sqrt{\sum_{k=1}^K 8 C_{\mathrm{d}}^2 C^\star \sum_{(s,a)\in\cS\times\cA} d_{h}^{\pi_{\star}}(s,a) \left[P_{h,s,a}W_{h+1}^{k}(s,a)\right]^2 \log\frac{2H}{\delta}} + 2C_{\mathrm{d}} C^\star C_\mathrm{w}\log\frac{2H}{\delta} \\
  &\left|\sum_{k=1}^K \overline{X}_{h,k}\right| \leq \sqrt{\sum_{k=1}^K 8 C_{\mathrm{d}}^2 C^\star \sum_{(s,a)\in\cS\times\cA} d_{h}^{\pi_{\star}}(s,a) P_{h,s,a}\left[W_{h+1}^{k}(s,a)\right]^2 \log\frac{2H}{\delta}} + 2C_{\mathrm{d}} C^\star C_\mathrm{w}\log\frac{2H}{\delta}
  \end{align}
  hold simultaneously for all $h\in[H]$.  
\end{lemma}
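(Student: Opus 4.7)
\textbf{Proof plan for Lemma~\ref{lemma:martingale-union-recursion}.} My plan is to recognize both $\{X_{h,k}\}_{k=1}^{K}$ and $\{\overline{X}_{h,k}\}_{k=1}^{K}$ as martingale difference sequences with respect to the natural filtration generated by the trajectories, and then to invoke Freedman's inequality (Theorem~\ref{thm:Freedman}) for each fixed $h$, finishing with a union bound over $h\in[H]$.

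The first step is to set up the filtration. Let $\mathcal{F}_{k-1}$ denote the $\sigma$-algebra generated by all samples collected up to (and including) the end of the $(k-1)$-th episode. By hypothesis $W_{h+1}^{k}(s,a)$ is $\mathcal{F}_{k-1}$-measurable for every $(s,a)$, and under the behavior policy the pair $(s_{h}^{k},a_{h}^{k})$ is independently drawn from $d_{h}^{\mu}$ conditional on $\mathcal{F}_{k-1}$, while $s_{h+1}^{k}\mid(s_{h}^{k},a_{h}^{k})\sim P_{h,s_{h}^{k},a_{h}^{k}}$. The importance-sampling identity then yields
\begin{align*}
	\mathbb{E}_{k-1}\!\left[\frac{d_{h}^{\pi^{\star}}(s_{h}^{k},a_{h}^{k})}{d_{h}^{\mu}(s_{h}^{k},a_{h}^{k})}\,P_{h,s_{h}^{k},a_{h}^{k}}W_{h+1}^{k}(s_{h}^{k},a_{h}^{k})\right]
	 = \sum_{(s,a)} d_{h}^{\pi^{\star}}(s,a)\,P_{h,s,a}W_{h+1}^{k}(s,a),
\end{align*}
and an analogous identity holds with $P_{h}^{k}$ in place of $P_{h,s_{h}^{k},a_{h}^{k}}$ after further conditioning on $(s_{h}^{k},a_{h}^{k})$. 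Hence $\mathbb{E}_{k-1}[X_{h,k}]=\mathbb{E}_{k-1}[\overline{X}_{h,k}]=0$, confirming the martingale-difference property.

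Next I would produce the two ingredients that Freedman's inequality demands: a uniform magnitude bound and a bound on the sum of conditional variances. Using $\|W_{h+1}^{k}(s,a)\|_{\infty}\leq C_{\mathrm{w}}$ together with Assumption~\ref{assumption} and the fact that the mean term is a convex combination of quantities of magnitude at most $C^{\star}C_{\mathrm{w}}$, I get $|X_{h,k}|,|\overline{X}_{h,k}|\leq 2C_{\mathrm{d}}C^{\star}C_{\mathrm{w}}\eqqcolon R$. For the conditional second moment I drop the centering (since $\mathbb{E}[(Y-\mathbb{E}Y)^{2}]\leq\mathbb{E}[Y^{2}]$) and again use importance sampling: for $X_{h,k}$,
\begin{align*}
	\mathbb{E}_{k-1}[X_{h,k}^{2}]
	 \leq C_{\mathrm{d}}^{2}\sum_{(s,a)}\frac{\big(d_{h}^{\pi^{\star}}(s,a)\big)^{2}}{d_{h}^{\mu}(s,a)}\big(P_{h,s,a}W_{h+1}^{k}(s,a)\big)^{2}
	 \leq C_{\mathrm{d}}^{2}C^{\star}\sum_{(s,a)} d_{h}^{\pi^{\star}}(s,a)\big(P_{h,s,a}W_{h+1}^{k}(s,a)\big)^{2},
\end{align*}
and for $\overline{X}_{h,k}$ I additionally take the inner conditional expectation over $s_{h+1}^{k}$, which converts $(P_{h}^{k}W_{h+1}^{k}(s,a))^{2}$ into $P_{h,s,a}[W_{h+1}^{k}(s,a)]^{2}$, producing the corresponding bound stated in the lemma.

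Finally I would invoke Theorem~\ref{thm:Freedman} for each fixed $h$ with the magnitude bound $R$ above and with $\sigma^{2}$ chosen to be the just-derived (random) upper envelope on $W_{K}=\sum_{k=1}^{K}\mathbb{E}_{k-1}[X_{h,k}^{2}]$, and then take a union bound over $h\in[H]$ to replace $\delta$ by $\delta/H$, which is what produces the $\log(2H/\delta)$ factor. A brief technical point is that $\sigma^{2}$ in Freedman's theorem must be a deterministic upper bound; I would resolve this in the standard way by stratifying over a geometric grid of candidate values for the variance proxy (the parameter $m$ in \eqref{eq:Freedman-random} is precisely this mechanism), absorbing the resulting $\log\log$ factor into the $\log(2H/\delta)$. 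The only mildly subtle step is the importance-sampling variance computation at the second stage — in particular keeping track of the distinction between $(P_{h,s,a}W)^{2}$ and $P_{h,s,a}[W]^{2}$ in the two separate bounds — but once this is handled correctly, the stated inequalities follow immediately from Freedman's inequality with the absolute constants consolidated.
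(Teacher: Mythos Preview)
Your proposal is correct and follows essentially the same route as the paper: verify the martingale-difference property via importance sampling, bound $|X_{h,k}|,|\overline{X}_{h,k}|\le 2C_{\mathrm{d}}C^{\star}C_{\mathrm{w}}$, control the conditional second moments exactly as you describe (with the $(P_{h,s,a}W)^{2}$ versus $P_{h,s,a}[W]^{2}$ distinction), then apply Freedman's inequality with $m=\lceil\log_{2}K\rceil$ and a crude deterministic envelope $\sigma^{2}=C_{\mathrm{d}}^{2}C^{\star}C_{\mathrm{w}}^{2}K$, and finish with a union bound over $h\in[H]$. The paper's proof is identical in structure and in every substantive step.
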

\begin{proof}
We intend to apply Freedman's inequality (cf.~Theorem~\ref{thm:Freedman}) to control $\sum_{k=1}^K X_{h,k}$. Considering any given time step $h$, it is easily verified that
  \begin{align*}
    \mathbb{E}_{k-1}[X_{h,k}] =0, \qquad \mathbb{E}_{k-1}[\overline{X}_{h,k}] =0,
  \end{align*}
where $\mathbb{E}_{k-1}$ denotes the expectation conditioned on everything happening up to the end of the $(k-1)$-th episode. To continue, we observe that
\begin{align}
  |X_{h,k}| \leq C_{\mathrm{d}}\left(\frac{d_{h}^{\pi_{\star}}(s_h^k,a_h^k)}{d_{h}^{\mu}(s_h^k,a_h^k)} + 1\right) \left\|W_{h+1}^k(s,a)\right\|_\infty  \leq 2C_{\mathrm{d}} C^\star C_\mathrm{w}, \label{eq:X-R}\\
  |\overline{X}_{h,k}| \leq C_{\mathrm{d}}\left(\frac{d_{h}^{\pi_{\star}}(s_h^k,a_h^k)}{d_{h}^{\mu}(s_h^k,a_h^k)} + 1\right) \left\|W_{h+1}^k(s,a)\right\|_\infty  \leq 2C_{\mathrm{d}} C^\star C_\mathrm{w}, \label{eq:X-R-2}
\end{align}
	where we use the assumptions $\frac{d_{h}^{\pi_{\star}}(s,a)}{d_{h}^{\mu}(s,a)} \leq C^\star$ for all $(h,s,a)\in[H]\times \cS\times \cA$ (cf.~Assumption~\ref{assumption}) and $\left\|W_{h+1}^k(s_h^k,a_h^k)\right\|_\infty  \leq C_{\mathrm{w}}$. 

Recall that $\Delta(\cS \times \cA)$ is the probability simplex over the set $\cS\times \cA$ of all state-action pairs, and we denote by $d_h^{\mu} \in \Delta(\cS \times \cA)$  the state-action visitation distribution induced by the behavior policy $\mu$ at time step $h\in[H]$. 
With this in hand, we obtain
\begin{align}
  \sum_{k=1}^K \mathbb{E}_{k-1}[|X_{h,k}|^2] &\leq \sum_{k=1}^K C_{\mathrm{d}}^2 \mathbb{E}_{k-1}\left[\frac{d_{h}^{\pi_{\star}}(s_h^k,a_h^k)}{d_{h}^{\mu}(s_h^k,a_h^k)} P_{h,s_h^k,a_h^k}W_{h+1}^{k}(s_h^k,a_h^k) - \sum_{(s,a)\in\cS\times\cA} d_{h}^{\pi_{\star}}(s,a) P_{h,s,a}W_{h+1}^{k}(s,a)\right]^2 \nonumber\\
  & \leq \sum_{k=1}^K C_{\mathrm{d}}^2 \mathbb{E}_{(s_h^k,a_h^k) \sim d_h^\mu}\left[\frac{d_{h}^{\pi_{\star}}(s_h^k,a_h^k)}{d_{h}^{\mu}(s_h^k,a_h^k)} P_{h,s_h^k,a_h^k}W_{h+1}^{k}(s_h^k,a_h^k)\right]^2 \nonumber\\
  & = \sum_{k=1}^K C_{\mathrm{d}}^2 \sum_{(s,a)\in\cS\times\cA} \frac{d_{h}^{\pi_{\star}}(s,a)}{d_{h}^{\mu}(s,a)}d_{h}^{\pi_{\star}}(s,a) \left[P_{h,s,a}W_{h+1}^{k}(s,a)\right]^2 \nonumber\\
  & \overset{\mathrm{(i)}}{\leq} \sum_{k=1}^K C_{\mathrm{d}}^2 C^\star\sum_{(s,a)\in\cS\times\cA} d_{h}^{\pi_{\star}}(s,a) \left[P_{h,s,a}W_{h+1}^{k}(s,a)\right]^2 \label{eq:var-of-w}  \\
  &\leq \sum_{k=1}^K C_{\mathrm{d}}^2 \sum_{(s,a)\in \cS\times\cA}C^\star d_{h}^{\pi_{\star}}(s,a) \left\|W_{h+1}^k(s_h^k,a_h^k)\right\|_\infty^2 \leq C_{\mathrm{d}}^2 C^\star C_{\mathrm{w}}^2 K, \label{eq:var-of-w-crude}
\end{align}
where (i) follows from $\frac{d_{h}^{\pi_{\star}}(s,a)}{d_{h}^{\mu}(s,a)} \leq C^\star$ (see Assumption~\ref{assumption}) and the assumption $\left\|W_{h+1}^k(s_h^k,a_h^k)\right\|_\infty  \leq C_{\mathrm{w}}$.

Similarly, we can derive
\begin{align}
  \sum_{k=1}^K \mathbb{E}_{k-1}[|\overline{X}_{h,k}|^2] &\leq \sum_{k=1}^K C_{\mathrm{d}}^2 \mathbb{E}_{k-1}\left[\frac{d_{h}^{\pi_{\star}}(s_h^k,a_h^k)}{d_{h}^{\mu}(s_h^k,a_h^k)} P_h^k W_{h+1}^{k}(s_h^k,a_h^k) - \sum_{(s,a)\in\cS\times\cA} d_{h}^{\pi_{\star}}(s,a) P_{h,s,a}W_{h+1}^{k}(s,a)\right]^2 \nonumber\\
  & \leq \sum_{k=1}^K C_{\mathrm{d}}^2 \mathbb{E}_{(s_h^k,a_h^k) \sim d_h^\mu} \left[ \mathbb{E}_{P_h^k \sim P_{h,s_h^k,a_h^k}}  \left[\frac{d_{h}^{\pi_{\star}}(s_h^k,a_h^k)}{d_{h}^{\mu}(s_h^k,a_h^k)} P_h^k W_{h+1}^{k}(s_h^k,a_h^k)\right]^2 \right] \nonumber\\
  & = \sum_{k=1}^K C_{\mathrm{d}}^2 \sum_{(s,a)\in\cS\times\cA} \frac{d_{h}^{\pi_{\star}}(s,a)}{d_{h}^{\mu}(s,a)}d_{h}^{\pi_{\star}}(s,a) \mathbb{E}_{P_h^k \sim P_{h,s,a}}\left[P_h^k W_{h+1}^{k}(s,a)\right]^2 \nonumber\\
  & \overset{\mathrm{(i)}}{\leq} \sum_{k=1}^K C_{\mathrm{d}}^2 C^\star\sum_{(s,a)\in\cS\times\cA} d_{h}^{\pi_{\star}}(s,a) \mathbb{E}_{P_h^k \sim P_{h,s,a}}\left[P_h^k W_{h+1}^{k}(s,a)\right]^2  \label{eq:var-of-w-2}\\
  & = \sum_{k=1}^K C_{\mathrm{d}}^2 C^\star\sum_{(s,a)\in\cS\times\cA} d_{h}^{\pi_{\star}}(s,a) P_{h,s,a}\left[W_{h+1}^{k}(s,a)\right]^2  \label{eq:var-of-w-2}\\
  &\leq \sum_{k=1}^K C_{\mathrm{d}}^2 \sum_{(s,a)\in \cS\times\cA}C^\star d_{h}^{\pi_{\star}}(s,a) \left\|W_{h+1}^k(s,a)\right\|_\infty^2 \leq C_{\mathrm{d}}^2 C^\star C_{\mathrm{w}}^2 K, \label{eq:var-of-w-crude-2}
\end{align}
where (i) follows from $\frac{d_{h}^{\pi_{\star}}(s,a)}{d_{h}^{\mu}(s,a)} \leq C^\star$ (see Assumption~\ref{assumption}) and the assumption $\left\|W_{h+1}^k(s_h^k,a_h^k)\right\|_\infty  \leq C_{\mathrm{w}}$.

Plugging in the results in \eqref{eq:X-R} and \eqref{eq:var-of-w} (resps.~\eqref{eq:X-R-2} and \eqref{eq:var-of-w-2}) to control $\sum_{k=1}^K \left|X_{h,k}\right|$ (resps.~$\sum_{k=1}^K \left|\overline{X}_{h,k}\right|$), we invoke Theorem~\ref{thm:Freedman} with $m=\lceil \log_2 K \rceil$ and take the union bound over $h\in[H]$ to show that with probability at least $1-\delta$,
\begin{align}
  \left|\sum_{k=1}^K X_{h,k}\right|& \leq\sqrt{8\max\left\{\sum_{k=1}^K C_{\mathrm{d}}^2C^\star \sum_{(s,a)\in\cS\times\cA}d_{h}^{\pi_{\star}}(s,a) \left[P_{h,s,a}W_{h+1}^{k}(s,a)\right]^2, \frac{C_{\mathrm{d}}^2 C^\star C_{\mathrm{w}}^2 K}{2^m}\right\} \log\frac{2H}{\delta}} \nonumber \\
  &\qquad + \frac{8}{3}C_{\mathrm{d}} C^\star C_\mathrm{w}\log\frac{2H}{\delta} \nonumber \\
  &\leq \sqrt{\sum_{k=1}^K 8 C_{\mathrm{d}}^2 C^\star \sum_{(s,a)\in\cS\times\cA} d_{h}^{\pi_{\star}}(s,a) \left[P_{h,s,a}W_{h+1}^{k}(s,a)\right]^2 \log\frac{2H}{\delta}} + 6C_{\mathrm{d}} C^\star C_\mathrm{w}\log\frac{2H}{\delta} \nonumber
\end{align}
and
\begin{align}
  \left|\sum_{k=1}^K \overline{X}_{h,k}\right|& \leq\sqrt{8\max\left\{\sum_{k=1}^K C_{\mathrm{d}}^2C^\star \sum_{(s,a)\in\cS\times\cA}d_{h}^{\pi_{\star}}(s,a) P_{h,s,a}\left[W_{h+1}^{k}(s,a)\right]^2, \frac{C_{\mathrm{d}}^2 C^\star C_{\mathrm{w}}^2 K}{2^m}\right\} \log\frac{2H}{\delta}} \nonumber \\
  &\qquad + \frac{8}{3}C_{\mathrm{d}} C^\star C_\mathrm{w}\log\frac{2H}{\delta} \nonumber \\
  &\leq \sqrt{\sum_{k=1}^K 8 C_{\mathrm{d}}^2 C^\star \sum_{(s,a)\in\cS\times\cA} d_{h}^{\pi_{\star}}(s,a) P_{h,s,a}\left[W_{h+1}^{k}(s,a)\right]^2 \log\frac{2H}{\delta}} + 6C_{\mathrm{d}} C^\star C_\mathrm{w}\log\frac{2H}{\delta} \nonumber
\end{align}
holds simultaneously for all $h\in[H]$.
\end{proof}

\section{Proof of main lemmas for \LCBQ (Theorem~\ref{thm:lcb})}
\label{proof:lcb-lemmas}

\subsection{Proof of Lemma~\ref{lem:Vk-lower}}
\label{sec:proof-lem:vk-lower}

\subsubsection{Proof of inequality~\eqref{equ:lcb-concentration-main}}

To begin with, we shall control $\sum_{n=1}^{N_h^k(s,a)}\eta_n^{N_h^k(s,a)} \big(P_{h,s, a} - P_h^{k^n(s,a)}\big)V_{h+1}^{k^n(s,a)}$ by invoking Lemma~\ref{lemma:azuma-hoeffding}. Let
\begin{align*}
  W_{h+1}^i \coloneqq V_{h+1}^{i},
\end{align*}
which satisfies
\begin{align*}
  \|W_{h+1}^i\|_{\infty} \leq H \eqqcolon C_{\mathrm{w}}.
\end{align*}
Applying Lemma~\ref{lemma:azuma-hoeffding} with $N = N_h^k(s,a)$ reveals that, with probability at least $1- \delta$, 
\begin{subequations}
\label{equ:lcb-concentration-1}
\begin{align}
 \Bigg|\sum_{n = 1}^{N_h^k(s,a)} \eta^{N_h^k(s,a)}_n \Big(P_{h, s,a} - P_h^{k^n(s,a)} \Big)V^{k^n(s,a)}_{h+1} \Bigg| = \left|\sum_{i=1}^k X_i\big(s,a,h,N_h^k(s,a)\big)\right| \leq 
\cb\sqrt{\frac{H^3 \iota^2}{N_h^k(s,a)}} 
\end{align}
holds simultaneously for all $(s, a,k, h) \in  \cS\times \cA \times [K] \times [H]$, provided that the constant $\cb>0$ is large enough and that $N_h^k(s,a) > 0$. If $N_h^k(s,a) = 0$, then we have the trivial bound
\begin{align}
	\Bigg| \sum_{n = 1}^{N_h^k(s,a)} \eta^{N_h^k(s,a)}_n \Big(P_{h, s,a} - P_h^{k^n(s,a)} \Big)V^{k^n(s,a)}_{h+1} \Bigg| = 0 .
\end{align}
\end{subequations}
Additionally, from the definition $b_n = \cb \sqrt{\frac{H^3 \iota^2}{n}}$, we observe that
\begin{equation}\label{equ:lcb-bn-bound}
  \begin{cases}
 \sum_{n=1}^{N_h^k(s,a)} \eta_n^{N_h^k(s,a)} b_n \in \Big[\cb \sqrt{\frac{H^3\iota^2}{N_h^k(s,a) }}, 2\cb \sqrt{\frac{H^3\iota^2}{N_h^k(s,a) }} \,\Big], & \text{ if } N_h^k(s,a) > 0 \\ 
 \sum_{n=1}^{N_h^k(s,a)} \eta_n^{N_h^k(s,a)} b_n  = 0, & \text{ if } N_h^k(s,a) = 0
\end{cases}
\end{equation}
holds simultaneously for all $s, a, h, k \in \cS\times \cA\times [H]\times [K]$, which follows directly from the property \eqref{eq:properties-learning-rates-12} in Lemma~\ref{lemma:property of learning rate}.

Combining the above bounds \eqref{equ:lcb-concentration-1} and \eqref{equ:lcb-bn-bound}, we arrive at the advertised result
\begin{align*}
  	\Bigg|\sum_{n = 1}^{N_h^k(s,a)} \eta^{N_h^k(s,a)}_n \Big(P_{h, s,a} - P_h^{k^n(s,a)} \Big)V^{k^n(s,a)}_{h+1} \Bigg| 
	\leq \sum_{n=1}^{N_h^k(s,a)} \eta_n^{N_h^k(s,a)} b_n.
\end{align*}

\subsubsection{Proof of inequality~\eqref{eq:Vk-lower}}\label{sec:proof-lcb-induction}

Note that the second inequality of \eqref{eq:Vk-lower} holds straightforwardly as 
\begin{equation*}
  V_h^{\pi}(s) \leq V^\star(s) 
  \end{equation*}
 holds for any policy $\pi$. As a consequence, it suffices to establish the first inequality of \eqref{eq:Vk-lower}, namely, 
 \begin{equation} \label{eq:Vk-lower-first}
	 V_{h}^{k}(s)  \leq V_{h}^{\pi^k}(s)  \qquad \text{for all } (s,h,k)\in \cS\times[H]\times [K].
\end{equation}

Before proceeding,  let us introduce the following auxiliary index
\begin{align}
	k_o(h,k,s) \coloneqq \max\Big\{l: l< k \text{ and } V_h^{l}(s) = \max_a Q_h^{l}(s,a) \Big\}
	\label{eq:defn-ko-h-k-s}
\end{align}
for any $(h,k,s)\in[H]\times[K]\times\cS$, 
which denotes the index of the latest episode --- before the end of the $(k-1)$-th episode --- in which $V_h(s)$ has been updated. 
In what follows, we shall often abbreviate $k_o(h,k,s)$ as $k_o(h)$ whenever it is clear from the context.

Towards establishing the relation~\eqref{eq:Vk-lower-first}, we proceed by means of an inductive argument. 
In what follows, we shall first justify the desired inequality for the base case when $h +1 = H+1$ for all episodes $k\in[K]$, 
and then use induction to complete the argument for other cases. 
More specifically, 
consider any step $h\in[H]$ in any episode $k\in[K]$, and suppose that the first inequality of \eqref{eq:Vk-lower} is satisfied for all previous episodes as well as all steps $h'\geq h+1$ in the current episode, namely,

\begin{subequations}\label{eq:lcb-lower-indunction-assumption-12}
\begin{align}
	V_{h'}^{k'}(s) &\leq V_{h'}^{\pi^{k'}}(s) \qquad \text{for all } (k',h',s) \in [k-1]\times [H+1]\times \cS, \label{eq:lcb-lower-indunction-assumption2}\\
	V_{h'}^k(s) &\leq V_{h'}^{\pi^k}(s) \qquad \text{for all } h'\geq h+1 \text{ and } s\in\cS. \label{eq:lcb-lower-indunction-assumption}
\end{align}
\end{subequations}

We intend to justify that the following is valid
  \begin{equation}\label{eq:lcb-lower-induction}
	  V_{h}^k(s) \leq V_{h}^{\pi^k}(s) \qquad \text{for all }s\in \cS,
  \end{equation}
assuming that the induction hypothesis \eqref{eq:lcb-lower-indunction-assumption-12} holds.

\paragraph{Step 1:  base case.} 

Let us begin with the base case when $h + 1 = H+1$ for all episodes $k\in[K]$.
Recognizing the fact that $V_{H+1}^{\pi}=V_{H+1}^{k}=0$ for any $\pi$ and any $k\in[K]$, we directly arrive at
\begin{align}
  V_{H+1}^k(s) &\leq V_{H+1}^{\pi^k}(s) \qquad \text{for all } (k,s)\in [K] \times \cS.
\end{align}

\paragraph{Step 2: induction.} To justify \eqref{eq:lcb-lower-induction} under the induction hypothesis 
\eqref{eq:lcb-lower-indunction-assumption-12}, we decompose the difference term 
to obtain 
\begin{align}
V_{h}^{\pi^{k}}(s)-V_{h}^{k}(s) & =V_{h}^{\pi^{k}}(s) - \max\big\{\max_{a}Q_{h}^{k}(s,a), V_{h}^{k-1}(s) \big\} \nonumber\\
&= Q_{h}^{\pi^{k}}\big(s,\pi^{k}_h(s)\big) - \max\big\{\max_{a}Q_{h}^{k}(s,a), V_{h}^{k_{o}(h)}(s) \big\},
	\label{equ:lcb-lower-decompose-terms}
\end{align}
where the last line holds since $V_h(s)$ has not been updated during episodes $k_o(h), k_o(h) +1,\cdots, k-1$ (in view of the definition of $k_o(h)$ in \eqref{eq:defn-ko-h-k-s}).
We shall prove that the right-hand side of \eqref{equ:lcb-lower-decompose-terms} is non-negative by discussing the following two cases separately.
\begin{itemize}
 \item Consider the case where $V_{h}^{k}(s) = \max_{a}Q_{h}^{k}(s,a)$. Before continuing, it is easily observed from the update rule in line~\ref{line:lcb-policy-update} and line~\ref{line:lcb_v_update} of Algorithm~\ref{algo:lcb-index} that: $V_h(s)$ and $\pi_h(s)$ are updated hand-in-hand for every $h$. Thus, it implies that 
    \begin{align}
      \pi^k_h(s) = \arg \max_a Q_h^k(s,a), \qquad \text{ when } V_h^k(s) = \max_{a}Q_{h}^{k}(s,a)  \label{eq:pi-k-base}
    \end{align}
holds for all $(k,h) \in[K] \times [H]$.
As a result, we express the term of interest as follows: 
\begin{align}
 V_{h}^{\pi^{k}}(s)-V_{h}^{k}(s) =  Q_{h}^{\pi^{k}}\big(s,\pi^{k}_h(s)\big) - \max_{a}Q_{h}^{k}(s,a) = Q_{h}^{\pi^{k}}\big(s,\pi^{k}_h(s)\big) - Q_{h}^{k}\big(s,\pi^{k}_h(s)\big). \label{eq:lcb-induction-h-case1}
 \end{align}
To continue, we turn to controlling a more general term $Q_{h}^{\pi^k}(s, a) - Q_{h}^{  k}(s, a)$ for all $(s,a)\in\cS\times\cA$. Invoking the fact $\eta_0^{N_h^k} + \sum_{n = 1}^{N_h^k} \eta_n^{N_h^k} = 1$ (see \eqref{equ:learning rate notation} and \eqref{eq:sum-eta-n-N}) leads to 
\begin{align*}
  Q_{h}^{\pi^k}(s, a)  = \eta^{N_h^k}_0 Q_{h}^{\pi^k} (s,a) + \sum_{n = 1}^{N_h^k} \eta^{N_h^k}_n Q_{h}^{\pi^k} (s,a).
\end{align*}
This relation combined with \eqref{equ:Q-update} allows us to express the difference between $Q_{h}^{\pi^k}$ and $Q_{h}^{  k}$ as follows
  \begin{align}
Q_{h}^{\pi^k}(s, a) - Q_{h}^{  k}(s, a)      & =  \eta_0^{N_h^k} \left(Q_{h}^{\pi^k}(s, a) - Q_h^1 (s, a) \right) + \sum_{n=1}^{N_h^k}\eta_n^{N_h^k} \left[ Q_{h}^{\pi^k}(s, a)- r_h(s,a)- V_{h+1}^{k^n}(s_{h+1}^{k^n})  + b_n\right] \nonumber \\
    & \overset{\mathrm{(i)}}{=}  \eta_0^{N_h^k} \left(Q_{h}^{\pi^k}(s, a) - Q_h^1 (s, a) \right) + \sum_{n=1}^{N_h^k}\eta_n^{N_h^k} \left[ P_{h,s, a }V_{h+1}^{\pi^k} - V_{h+1}^{k^n}(s_{h+1}^{k^n})  + b_n\right] \nonumber \\
    & \overset{\mathrm{(ii)}}{{\geq}}  \sum_{n=1}^{N_h^k}\eta_n^{N_h^k} \left[ P_{h,s, a }V_{h+1}^{\pi^k} - V_{h+1}^{k^n}(s_{h+1}^{k^n})  + b_n\right]  \nonumber\\
 & \overset{\mathrm{(iii)}}{{=}}  \sum_{n=1}^{N_h^k} \eta_n^{N_h^k} P_{h,s, a} \left( V_{h+1}^{\pi^k} - V_{h+1}^{k^n}\right) + \sum_{n=1}^{N_h^k}\eta_n^{N_h^k} \left[\left(P_{h,s, a} - P_h^{k^n}\right)V_{h+1}^{k^n} + b_n\right] \nonumber\\	  
    & \overset{\mathrm{(iv)}}{\geq} \sum_{n=1}^{N_h^k}\eta_n^{N_h^k} \left[\left(P_{h,s, a} - P_h^{k^n}\right)V_{h+1}^{k^n} + b_n \right] .
	  \label{equ:lcb-lower-1}
  \end{align}
Here, (i) invokes the Bellman equation $Q_{h}^{\pi^k}(s, a)  = r_h(s, a) + P_{h,s, a }V_{h+1}^{\pi^k}$; 
(ii) holds since $Q_{h}^{\pi^k}(s, a) \geq 0 = Q_h^1 (s, a)$;  
(iii) relies on the notaion \eqref{eq:Phk_def}; and (iv) comes from the fact 
 \begin{equation*}
  V_{h+1}^{\pi^k} \geq V_{h+1}^{k} \geq V_{h+1}^{k^n},
\end{equation*}
owing to the induction hypothesis in \eqref{eq:lcb-lower-indunction-assumption-12} as well as the monotonicity of $V_{h+1}$ in \eqref{equ:monotone-lcb}.
Consequently, it follows from \eqref{equ:lcb-lower-1} that  
\begin{align}\label{equ:lcb-lower-induction-result}
 &Q_{h}^{\pi^k}(s, a) - Q_{h}^{  k}(s, a) \geq \sum_{n = 1}^{N_h^k(s,a)} \eta^{N_h^k(s,a)}_n \left(P_{h, s,a} - P_h^{k^n(s,a)} \right)V^{k^n(s,a)}_{h+1}  + \sum_{n = 1}^{N_h^k(s,a)} \eta^{N_h^k(s,a)}_n b_n \nonumber \\
 & \qquad \geq  \sum_{n = 1}^{N_h^k(s,a)} \eta^{N_h^k(s,a)}_n b_n - \Bigg|\sum_{n = 1}^{N_h^k(s,a)} \eta^{N_h^k(s,a)}_n \left(P_{h, s,a} - P_h^{k^n(s,a)} \right)V^{k^n(s,a)}_{h+1} \Bigg| \geq 0
\end{align}
for all state-action pair $(s,a)$, where the last inequality holds due to the bound \eqref{equ:lcb-concentration-main} in Lemma~\ref{lem:Vk-lower}.  Plugging the above result into \eqref{eq:lcb-induction-h-case1} directly establishes that
\begin{align}
  V_{h}^{\pi^{k}}(s)-V_{h}^{k}(s) = Q_{h}^{\pi^k}\big(s, \pi^k(s) \big) - Q_{h}^{k} \big(s,\pi^k(s) \big) \geq 0. \label{equ:lcb-lower-new-1}
\end{align}

\item When $V_h^k(s) = V_h^{k_o(h)}(s)$, it indicates that
  \begin{align}
    V^{k_o(h)}_h(s) = \max_a Q^{k_o(h)}_h(s, a) , \qquad \pi^{k_o(h)}_h(s) = \arg\max_a Q_h^{k_o(h)}(s,a),
  \end{align}
which follows from the definition of $k_o(h)$ in \eqref{eq:defn-ko-h-k-s} and the corresponding fact in \eqref{eq:pi-k-base}. 
We also make note of the fact that
\begin{align}
  \pi^k_h(s) = \pi^{k_o(h)}_h(s),
\end{align}
which holds since $V_h(s)$ (and hence $\pi_h(s)$) has not been updated during episodes $k_o(h), k_o(h) +1,\cdots, k-1$ (in view of the definition \eqref{eq:defn-ko-h-k-s}).
Combining the above two results, we can show that
\begin{align}
	V_{h}^{\pi^{k}}(s)-V_{h}^{k}(s) &=Q_{h}^{\pi^k}\big( s, \pi^k_h(s) \big) - V_{h}^{k_o(h)}(s) = Q_{h}^{\pi^k}\big( s, \pi^k_h(s) \big) - \max_a Q^{k_o(h)}_h(s, a)  
	\nonumber \\
  & = Q_{h}^{\pi^k}\big(s, \pi^{{k_o(h)}}_h(s)\big) - Q_{h}^{k_o(h)}\big(s, \pi^{{k_o(h)}}_h(s)\big) \nonumber \\
	&\geq 0, \label{equ:lcb-lower-2}
  \end{align}
where the final line can be verified using exactly the same argument as in the previous case to show \eqref{equ:lcb-lower-1} and then \eqref{equ:lcb-lower-new-1}. Here, we omit the proof of this step for brevity.

\end{itemize}

To conclude, substituting the relations \eqref{equ:lcb-lower-new-1}  and \eqref{equ:lcb-lower-2} in the above two cases back into \eqref{equ:lcb-lower-decompose-terms}, we arrive at 
\[
	V_{h}^{\pi^{k}}(s)-V_{h}^{k}(s) \geq 0
\]
as desired in~\eqref{eq:lcb-lower-induction}. This immediately completes the induction argument.

\subsection{Proof of Lemma~\ref{lemma:recursion}}\label{proof:lemma-lcb-revursion}

We make the observation that Lemma~\ref{lemma:recursion} would follow immediately if we could establish the following relation: 
\begin{align}
	A_h & \defn \sum_{k=1}^K  \underbrace{\sum_{(s, a) \in \cS \times \cA} d_h^{\pi^\star}(s, a)  P_{h, s, a}\sum_{n=1}^{N_h^k(s,a)} \eta_n^{N_h^k(s,a)} \left(V_{h+1}^\star - V_{h+1}^{k^n(s,a)}\right)}_{\eqqcolon A_{h,k}} \nonumber \\
& \leq \sum_{k=1}^K \underbrace{\left(1+\frac{1}{H}\right) \sum_{s\in \cS} d_{h+1}^{\pi^\star}(s) \left(V_{h+1}^\star(s) - V_{h+1}^{k}(s)\right)}_{\eqqcolon B_{h,k}} + 24\sqrt{H^2C^\star K \log\frac{2H}{\delta}} + 12HC^\star\log\frac{2H}{\delta}.
\label{equ:lcb-final-b}
\end{align}
The remainder of the proof is thus dedicated to proving \eqref{equ:lcb-final-b}. 

To continue, let us first consider two auxiliary sequences $\{Y_{h,k}\}_{k=1}^K$ and $\{Z_{h,k}\}_{k=1}^K$ which are the empirical estimates of $A_{h,k}$ and $B_{h,k}$, respectively. For any time step $h$ in episode $k$, $Y_{h,k}$ and $Z_{h,k}$ are defined as follows  

\begin{align*}
  Y_{h,k} &:= \frac{d_{h}^{\pi^{\star}}(s_{h}^{k},a_{h}^{k})}{d_{h}^{\mu}(s_{h}^{k},a_{h}^{k})} P_{h,s_{h}^{k},a_{h}^{k}} \sum_{n=1}^{N_{h}^{k}(s_{h}^{k},a_{h}^{k})}\eta_{n}^{N_{h}^{k}(s_{h}^{k},a_{h}^{k})}\left(V_{h+1}^{\star}-V_{h+1}^{k^{n}(s_{h}^{k},a_{h}^{k})}\right),\nonumber \\
  Z_{h,k} &:= \left(1+\frac{1}{H}\right) \frac{d_{h}^{\pi^{\star}}(s_{h}^{k},a_{h}^{k})}{d_{h}^{\mu}(s_{h}^{k},a_{h}^{k})} P_{h,s_{h}^{k},a_{h}^{k}} \left(V_{h+1}^{\star}-V_{h+1}^{k}\right).
\end{align*}
To begin with,  let us establish the relationship between $\{Y_{h,k}\}_{k=1}^K$ and $\{Z_{h,k}\}_{k=1}^K$:
\begin{align}
\sum_{k=1}^K Y_{h,k} & =\sum_{k=1}^{K}\frac{d_{h}^{\pi^{\star}}(s_{h}^{k},a_{h}^{k})}{d_{h}^{\mu}(s_{h}^{k},a_{h}^{k})} P_{h,s_{h}^{k},a_{h}^{k}} \sum_{n=1}^{N_{h}^{k}(s_{h}^{k},a_{h}^{k})}\eta_{n}^{N_{h}^{k}(s_{h}^{k},a_{h}^{k})}\left(V_{h+1}^{\star}-V_{h+1}^{k^{n}(s_{h}^{k},a_{h}^{k})}\right)\nonumber\\
 & \stackrel{(\mathrm{i})}{=}\sum_{l=1}^{K}\frac{d_{h}^{\pi^{\star}}(s_{h}^{l},a_{h}^{l})}{d_{h}^{\mu}(s_{h}^{l},a_{h}^{l})} P_{h,s_{h}^{l},a_{h}^{l}} \left\{ \sum_{N=N_h^{l}(s_{h}^{l},a_{h}^{l})}^{N_h^{K}(s_{h}^{l},a_{h}^{l})}\eta_{N_h^{l}(s_{h}^{l},a_{h}^{l})}^{N}\right\} \left(V_{h+1}^{\star}-V_{h+1}^{l}\right)\\
 & \leq\left(1+\frac{1}{H}\right)\sum_{k=1}^{K}\frac{d_{h}^{\pi^{\star}}(s_{h}^{k},a_{h}^{k})}{d_{h}^{\mu}(s_{h}^{k},a_{h}^{k})} P_{h,s_{h}^{k},a_{h}^{k}} \left(V_{h+1}^{\star}-V_{h+1}^{k}\right)= \sum_{k=1}^K Z_{h,k}. 
	\label{lcb-C-1}
\end{align}
Here, (i) holds by replacing $k^{n}(s_{h}^{k},a_{h}^{k})$ with $l$ and gathering all terms that involve $V_{h+1}^{\star}-V_{h+1}^{l}$; 
in the last line, we have invoked the property $\sum_{N=n}^{N_h^K(s,a)} \eta_n^N \leq \sum_{N=n}^{\infty} \eta_n^N = 1 + 1/H$ (see \eqref{eq:properties-learning-rates-345}) together with the fact $V_{h+1}^\star - V_{h+1}^{l}\geq 0$ (see Lemma~\ref{lem:Vk-lower}), and have further replaced $l$ with $k$.

With the above relation in hand, in order to verify \eqref{equ:lcb-final-b}, we further decompose $A_h$ into several terms 
\begin{align}
  A_h &= \sum_{k=1}^K A_{h,k} = \sum_{k=1}^K Y_{h,k} + \sum_{k=1}^K \left(A_{h,k} - Y_{h,k}\right) \overset{\mathrm{(i)}}{\leq} \sum_{k=1}^K Z_{h,k} + \sum_{k=1}^K \left(A_{h,k} - Y_{h,k}\right) \nonumber \\
  &= \sum_{k=1}^K B_{h,k} + \sum_{k=1}^K \left(Z_{h,k} - B_{h,k}\right) + \sum_{k=1}^K \left(A_{h,k} - Y_{h,k}\right) \label{eq:recursion-extra-error}
\end{align}
where (i) follows from \eqref{lcb-C-1}.

As a result, it remains to control $\sum_{k=1}^K \left(Z_{h,k} - B_{h,k}\right)$ and $\sum_{k=1}^K \left(A_{h,k} - Y_{h,k}\right)$ separately in the following.

\paragraph{Step 1: controlling $\sum_{k=1}^K \left(A_{h,k} - Y_{h,k}\right)$.}
We shall first control this term by means of Lemma~\ref{lemma:martingale-union-recursion}. 
Specifically, consider 
\begin{align}
  W_{h+1}^{k}(s,a) \coloneqq \sum_{n=1}^{N_h^k(s,a)} \eta_n^{N_h^k(s,a)} \left(V_{h+1}^\star - V_{h+1}^{k^n(s,a)}\right), \qquad C_{\mathrm{d}} \coloneqq 1 \label{eq:Y-defn-w}
\end{align}
which satisfies 
\begin{align}
  \left\|W_{h+1}^{k}(s,a)\right\|_\infty \leq  \sum_{n=1}^{N_h^k(s,a)} \eta_n^{N_h^k(s,a)} \left(\left\|V_{h+1}^\star\right\|_\infty + \left\|V_{h+1}^{k^n(s,a)}\right\|_\infty \right) \leq 2H \eqqcolon C_{\mathrm{w}}. \label{eq:Y-cw}
\end{align}
Here we use the fact that $\eta_0^{N_h^k} + \sum_{n = 1}^{N_h^k} \eta_n^{N_h^k} = 1$ (see \eqref{equ:learning rate notation} and \eqref{eq:sum-eta-n-N}).
Then, applying Lemma~\ref{lemma:martingale-union-recursion} with \eqref{eq:Y-defn-w}, we have with probability at least $1-\delta$, the following inequality holds true 
\begin{align}
  \left|\sum_{k=1}^K \left(A_{h,k} - Y_{h,k}\right)\right| &\eqqcolon \left|\sum_{k=1}^K X_{h,k}\right| \nonumber \\
  & \leq  \sqrt{\sum_{k=1}^K 8C_{\mathrm{d}}^2 C^\star \sum_{(s,a)\in\cS\times\cA} d_{h}^{\pi^\star}(s,a) \left[P_{h,s,a}W_{h+1}^{k}(s,a)\right]^2 \log\frac{2H}{\delta}} + 2C_{\mathrm{d}} C^\star C_\mathrm{w}\log\frac{2H}{\delta} \nonumber \\
  &\overset{\mathrm{(i)}}{\leq} \sqrt{8C^\star\sum_{k=1}^K \left\|W_{h+1}^{k}(s,a)\right\|_\infty^2 \log\frac{2H}{\delta}} + 4HC^\star\log\frac{2H}{\delta} \nonumber \\
  &\leq 8\sqrt{H^2C^\star K \log\frac{2H}{\delta}} + 4HC^\star\log\frac{2H}{\delta}, \label{eq:bound-of-Y}
\end{align}
where (i) holds since $\big| P_{h,s,a}W_{h+1}^{k}(s,a)\big| \leq  \big\| P_{h, s,a}\big\|_1 \|W_{h+1}^{k}(s,a) \|_{\infty} =\|W_{h+1}^{k}(s,a) \|_{\infty}$.

\paragraph{Step 2: controlling $\sum_{k=1}^K \left(Z_{h,k} - B_{h,k}\right)$.} Similarly, we shall control $\sum_{k=1}^K \left(Z_{h,k} - B_{h,k}\right)$ by invoking Lemma~\ref{lemma:martingale-union-recursion}.

Recall that
\begin{align}
  Z_{h,k} - B_{h,k} = \left(1+\frac{1}{H}\right) \frac{d_{h}^{\pi^\star}(s_{h}^{k},a_{h}^{k})}{d_{h}^{\mu}(s_{h}^{k},a_{h}^{k})} P_{h,s_{h}^{k},a_{h}^{k}} \left(V_{h+1}^{\star}-V_{h+1}^{k}\right) - \left(1+\frac{1}{H}\right) \sum_{s\in \cS} d_{h+1}^{\pi^\star}(s) \left(V_{h+1}^\star(s) - V_{h+1}^{k}(s)\right),
\end{align}
and let us consider 
\begin{align}
  W_{h+1}^{k}(s,a) \coloneqq V_{h+1}^\star - V_{h+1}^{k}, \qquad C_{\mathrm{d}} \coloneqq \left(1+\frac{1}{H}\right)\leq 2\label{eq:Z-defn-w}
\end{align}
which satisfies 
\begin{align}
  \left\|W_{h+1}^{k}(s,a)\right\|_\infty \leq \left\|V_{h+1}^\star\right\|_\infty + \left\|V_{h+1}^{k}\right\|_\infty \leq 2H \eqqcolon C_{\mathrm{w}} .
\end{align}
Again, in view of Lemma~\ref{lemma:martingale-union-recursion}, we can show that with probability at least $1-\delta$,
\begin{align}
  \left|\sum_{k=1}^K \left(B_{h,k} - Z_{h,k}\right)\right| &= \left|\sum_{k=1}^K X_{h,k}\right| \nonumber \\
  & \leq  \sqrt{\sum_{k=1}^K 8C_{\mathrm{d}}^2 C^\star \sum_{(s,a)\in\cS\times\cA} d_{h}^{\pi^\star}(s,a) \left[P_{h,s,a}W_{h+1}^{k}(s,a)\right]^2 \log\frac{2H}{\delta}} + 2C_{\mathrm{d}} C^\star C_\mathrm{w}\log\frac{2H}{\delta} \nonumber \\
  &\stackrel{(\mathrm{i})}{\leq} \sqrt{32C^\star\sum_{k=1}^K \left\|W_{h+1}^{k}(s,a)\right\|_\infty^2 \log\frac{2H}{\delta}} + 8HC^\star\log\frac{2H}{\delta}  \nonumber\\
  &\leq 16\sqrt{H^2C^\star K \log\frac{2H}{\delta}} + 8HC^\star\log\frac{2H}{\delta}, \label{eq:bound-of-Z}
\end{align}
where (i) holds due to the fact $\big\| P_{h, s,a}\big\|_1=1$.

\paragraph{Step 3: putting all this together.}
Substitution results in \eqref{eq:bound-of-Y} and \eqref{eq:bound-of-Z} back into \eqref{eq:recursion-extra-error} completes the proof of \eqref{equ:lcb-final-b} as follows
\begin{align*}
  A_h &\leq \sum_{k=1}^K B_{h,k} + \bigg|\sum_{k=1}^K \left(Z_{h,k} - B_{h,k}\right)\bigg| + \bigg|\sum_{k=1}^K \left(A_{h,k} - Y_{h,k}\right)\bigg| \\
	&\leq \sum_{k=1}^K B_{h,k} + 24\sqrt{H^2C^\star K \log\frac{2H}{\delta}} + 12HC^\star\log\frac{2H}{\delta}.
\end{align*}

This in turn concludes the proof of Lemma \ref{lemma:recursion}. 

\subsection{Proof of Lemma~\ref{lemma:lcb-bound-terms}}
\label{proof:lemma:lemma:lcb-bound-terms}

Recall that the term of interest in \eqref{equ:summary_of_terms} is given by
 \begin{align}\label{eq:lcb-lemma3-all}
  \sum_{h=1}^H \left(1+\frac{1}{H}\right)^{h - 1}\left( 24\sqrt{H^2C^\star K \log\frac{2H}{\delta}} + 12HC^\star\log\frac{2H}{\delta}\right) + \sum_{h=1}^H \left(1+\frac{1}{H}\right)^{h - 1}I_h.
 \end{align}
First, it is easily seen that
\begin{equation}\label{equ:algebra property}
  \left(1+\frac{1}{H}\right)^{h-1} \leq \left(1+\frac{1}{H}\right)^{H} \leq e \qquad \text{for every } h = 1, \cdots, H,
\end{equation}
which taken collectively with the expression of the first term in \eqref{eq:lcb-lemma3-all} yields
\begin{align}
  \sum_{h=1}^H \left(1+\frac{1}{H}\right)^{h - 1}\left( 24\sqrt{H^2C^\star K \log\frac{2H}{\delta}} + 12HC^\star\log\frac{2H}{\delta}\right) &\leq 24e \sum_{h=1}^H \left(\sqrt{H^2C^\star K \log\frac{2H}{\delta}} + HC^\star\log\frac{2H}{\delta}\right) \nonumber \\
  & \lesssim \sqrt{H^4C^\star K \log\frac{H}{\delta}} + H^2C^\star\log\frac{H}{\delta}. \label{eq:lcb-lemma3-first-term}
\end{align}

As a result, it remains to control the second term in \eqref{eq:lcb-lemma3-all}. 
Plugging the expression of $I_h$ (cf.~\eqref{equ:lcb-decompose-terms}) and invoking the fact \eqref{equ:algebra property} give
\begin{align}
    \sum_{h=1}^H \left(1+\frac{1}{H}\right)^{h - 1} I_h &=  \sum_{h=1}^H \left(1+\frac{1}{H}\right)^{h-1} \sum_{k=1}^K \sum_{ (s, a) \in \cS \times \cA} d_h^{\pi^\star}(s, a)\eta_0^{N_h^k(s,a)}H \nonumber\\
   & \qquad + 2 \sum_{h=1}^H \left(1+\frac{1}{H}\right)^{h-1}  \sum_{k=1}^K \sum_{ (s, a) \in \cS \times \cA} d_h^{\pi^\star}(s, a) \sum_{n=1}^{N_h^k(s,a)} \eta_n^{N_h^k(s,a)} b_n \nonumber \\
   &\leq \underbrace{e\sum_{h=1}^H \sum_{k=1}^K \sum_{(s, a) \in \cS \times \cA} d_h^{\pi^\star}(s, a)\eta_0^{N_h^k(s,a)}H}_{\eqqcolon A} + \underbrace{2e \sum_{h=1}^H \sum_{k=1}^K \sum_{(s, a) \in \cS \times \cA} d_h^{\pi^\star}(s, a) \sum_{n=1}^{N_h^k(s,a)} \eta_n^{N_h^k(s,a)} b_n}_{ \eqqcolon B}. \label{equ:lcb-I1}
\end{align}

\paragraph{Step 1: controlling the quantities $A$ and $B$ in \eqref{equ:lcb-I1}.} 

We first develop an upper bound on the quantity $A$ in \eqref{equ:lcb-I1}.
Recognizing the fact that $\eta_0^N =0$ for any $N> 0$ (see \eqref{equ:learning rate notation}), we have 
\begin{align}
A & =e\sum_{h=1}^{H}\sum_{k=1}^{K}\sum_{(s,a)\in\cS\times\cA}d_{h}^{\pi^{\star}}(s,a)\eta_{0}^{N_{h}^{k}(s,a)}H\nonumber\\
 & \leq eH \sum_{h=1}^H\sum_{(s,a)\in\cS\times\cA}d_{h}^{\pi^{\star}}(s,a)\sum_{k=1}^{K}\mathds{1}\big(N_{h}^{k}(s,a)<1\big)\nonumber\\
 & \leq eH \sum_{h=1}^H\sum_{(s,a)\in\cS\times\cA}d_{h}^{\pi^{\star}}(s,a)\frac{8\iota}{d_{h}^{\mu}(s,a)}+eH \sum_{h=1}^H \sum_{(s,a)\in\cS\times\cA}d_{h}^{\pi^{\star}}(s,a)\sum_{k=\lceil\frac{8\iota}{d_{h}^{\mu}(s,a)}\rceil}^{K}\mathds{1}\big(N_{h}^{k}(s,a)<1\big)\nonumber\\
 & =eH \sum_{h=1}^H \sum_{s\in\cS}d_{h}^{\pi^{\star}}\big(s,\pi^{\star}(s)\big)\frac{8\iota}{d_{h}^{\mu}\big(s,\pi^{\star}(s)\big)}+eH \sum_{h=1}^H\sum_{s\in\cS}d_{h}^{\pi^{\star}}\big(s,\pi^{\star}(s)\big)\sum_{k=\lceil \frac{8\iota}{d_{h}^{\mu}(s,\pi^{\star}(s))}\rceil}^{K}\mathds{1}\big(N_{h}^{k}\big(s,\pi^{\star}(s)\big)<1\big),\nonumber
\end{align}
where the last equality holds since $\pi^{\star}$ is a deterministic policy (so that $d_h^{\pi^\star}(s, a)\neq 0$ only when $a=\pi^\star(s)$).
Recalling $\frac{d_h^{\pi^\star}(s,a)}{d_h^{\mu}(s,a)} \leq C^\star$ under Assumption~\ref{assumption}, we can further bound $A$ by 
\begin{align}
A & \leq 8eH^{2}SC^{\star}\iota+eH \sum_{h=1}^H \sum_{s\in\cS}d_{h}^{\pi^{\star}}\big(s,\pi^{\star}(s)\big)\sum_{k=\lceil \frac{8\iota}{d_{h}^{\mu}(s,\pi^{\star}(s))}\rceil}^{K}\mathds{1}\big(N_{h}^{k}\big(s,\pi^{\star}(s)\big)<1\big)\nonumber\\
 & = 8eH^{2}SC^{\star}\iota,
  \label{equ:visit-cover-bound}
\end{align}
where the last inequality follows since when $k\geq \frac{8\iota}{d_h^{\mu}(s,a)}$, one has --- with probability at least $1-\delta$ --- that
\begin{align*}
    N_h^k(s,a) \geq \frac{k d_h^{\mu}(s,a)}{8\iota} \geq 1,
\end{align*}
holds simultaneously for all $(s,a, h, k)\in \cS\times \cA\times [K]\times [H]$ (as implied by \eqref{equ:binomial2}).

Turning to the quantity $B$ in \eqref{equ:lcb-I1}, one can deduce that
\begin{align}
B & =2e\sum_{h=1}^{H}\sum_{k=1}^{K}\sum_{(s,a)\in\cS\times\cA}d_{h}^{\pi^{\star}}(s,a)\sum_{n=1}^{N_{h}^{k}(s,a)}\eta_{n}^{N_{h}^{k}(s,a)}b_{n}\nonumber\\
 & \lesssim\sum_{h=1}^{H}\sum_{k=1}^{K}\sum_{(s,a)\in\cS\times\cA}d_{h}^{\pi^{\star}}(s,a)\sqrt{\frac{H^{3}\iota^{2}}{N_{h}^{k}(s,a)\vee1}}  =\sum_{h=1}^{H}\sum_{k=1}^{K}\sum_{s\in\cS}d_{h}^{\pi^{\star}}\big(s,\pi^{\star}(s)\big)
 \sqrt{\frac{H^{3}\iota^{2}}{N_{h}^{k}\big(s,\pi^{\star}(s)\big)\vee1}},  
\end{align}
where the inequality follows from inequality~\eqref{equ:lcb-bn-bound}, and the last equality is valid since $\pi^{\star}$ is a deterministic policy.

To further control the right hand side above, Lemma~\ref{lem:binomial}  provides an upper bound for 
$\sqrt{1/ \big( N_{h}^{k}\big(s,\pi^{\star}(s)\big)\vee1 \big)}$ which in turn leads to 
\begin{align}
B & \lesssim\sqrt{H^{3}\iota^{3}}\sum_{h=1}^{H}\sum_{k=1}^{K}\sum_{s\in\cS}d_{h}^{\pi^{\star}}\big(s,\pi^{\star}(s)\big)\sqrt{\frac{1}{kd_{h}^{\mu}\big(s,\pi^{\star}(s)\big)}}\nonumber\\
 & \lesssim\sqrt{H^{3}C^{\star}\iota^{3}}\sum_{h=1}^{H}\sum_{k=1}^{K}\sum_{s\in\cS}\sqrt{d_{h}^{\pi^{\star}}\big(s,\pi^{\star}(s)\big)}\sqrt{\frac{1}{k}}\nonumber\\
 & \lesssim\sqrt{H^{5}C^{\star}K\iota^{3}}\max_h\sum_{s\in\cS}\sqrt{d_{h}^{\pi^{\star}}\big(s,\pi^{\star}(s)\big)} \notag\\
 & \lesssim\sqrt{H^{5}C^{\star}K\iota^{3}}\cdot\Bigg(\sqrt{S}\cdot\sqrt{\sum_{s\in\cS}d_{h}^{\pi^{\star}}\big(s,\pi^{\star}(s)\big)}\Bigg)\asymp\sqrt{H^{5}SC^{\star}K\iota^{3}},
\end{align}
where the second inequality follows from the fact $\frac{d_h^{\pi^\star}(s,a)}{d_h^{\mu}(s,a)} \leq C^\star$ under Assumption~\ref{assumption}, and the last line invokes the Cauchy-Schwarz inequality.

Taking the upper bounds on both $A$ and $B$ collectively establishes
\begin{align}
\sum_{h=1}^{H}\left(1+\frac{1}{H}\right)^{h-1}I_{h}\leq A+B\lesssim H^{2}SC^{\star}\iota+\sqrt{H^{5}SC^{\star}K\iota^{3}}.  \label{eq:lcb-Ih-bound}
\end{align}

\paragraph{Step 2: putting everything together.} 
Combining \eqref{eq:lcb-lemma3-first-term} and \eqref{eq:lcb-Ih-bound} allows us to establish that 
\begin{align*}
  \sum_{h=1}^H \left(1+\frac{1}{H}\right)^{h - 1}\left( I_h + 16\sqrt{H^2C^\star K \log\frac{2H}{\delta}} + 8HC^\star\log\frac{2H}{\delta}\right) \lesssim H^{2}SC^{\star}\iota+\sqrt{H^{5}SC^{\star}K\iota^{3}}, 
\end{align*}
as advertised.

\section{Proof of lemmas for \LCBQR (Theorem~\ref{thm:lcb-adv})}\label{proof:lcb-adv-lemmas}

\paragraph{Additional notation for \LCBQR.}\label{para:lcb-adv-add-notation}
Let us also introduce, and remind the reader of, several notation of interest in Algorithm~\ref{alg:lcb-advantage-per-epoch-k} as follows.
\begin{itemize}
	\item $N_h^{k}(s,a)$ (resp.~$N_h^{(m,t)}(s,a)$) denotes the value of $N_h(s,a)$ --- the number of episodes that has visited $(s,a)$ at step $h$ at the {\em beginning} of the $k$-th episode (resp.~the {\em beginning} of $t$-th episode of the $m$-th epoch);
	for the sake of conciseness, we shall often abbreviate $N_h^k = N_h^k(s,a)$ (resp.~$N_h^{(m,t)} = N_h^{(m,t)}(s,a)$) when it is clear from context.

	\item $L_m=2^m $: the total number of in-epoch episodes in the $m$-th epoch.

	\item $k^n_h(s,a)$: the index of the episode in which $(s, a)$ is visited for the $n$-th time at time step $h$; $\left(m^n_h(s,a), t^n_h(s, a)\right)$ denote respectively the index of the epoch and that of the in-epoch episode in which $(s, a)$ is visited for the $n$-th time at step $h$; for the sake of conciseness, we shall often use the shorthand $k^n = k^n_h(s,a)$, $(m^n,k^n) = \left(m^n_h(s,a), k^n_h(s, a)\right)$ whenever it is clear from context.

	\item $Q_h^{k}(s, a)$, $Q_h^{\LCB, k}(s, a)$, $\overline{Q}_h^{k}(s, a)$ and $V_h^{k}(s)$ are used to denote $Q_h(s, a)$, $Q_h^{\LCB}(s, a)$, $\overline{Q}_h(s, a)$, and $V_h(s)$ at the {\em beginning} of the $k$-th episode, respectively.

	\item $\overline{V}_h^{k}(s), \overline{V}_h^{\nnext, k}(s), \overline{\mu}^{k}_h(s,a), \overline{\mu}^{\nnext,k}_h(s,a)$ denote the values of $\overline{V}_h(s), \overline{V}^{\nnext}_h(s), \overline{\mu}_h(s,a)$ and $\overline{\mu}^{\nnext}_h(s,a)$ at the {\em beginning} of the $k$-th episode, respectively.

	\item $\widehat{N}_h^{(m,t)}(s,a)$ represents $\widehat{N}_h(s,a)$ at the {\em beginning} of the $t$-th in-epoch episode in the $m$-th epoch.

	\item $\widehat{N}_h^{\epo, m}(s,a)$ denotes  $\widehat{N}_h^{(m, L_m+1)}(s,a)$, representing the number of visits to $(s,a)$ in the entire duration of the $m$-th epoch.
	\item $[\mu^{\re, k}_h, \sigma^{\re, k}_h, \mu^{\adv,k}_h, \sigma^{\adv, k}_h, \overline{\delta}^{k}_h, \overline{\sumb}^{k}_h, \overline{b}_h^{k}]$: the values of  $[\mu^{\re}_h, \sigma^{\re}_h, \mu^{\adv}_h, \sigma^{\adv}_h, \overline{\delta}_h, \overline{\sumb}_h, \overline{b}_h]$ at the {\em beginning} of the $k$-th episode, respectively.

\end{itemize}

In addition, for a fixed vector $V \in \mathbb{R}^{|\mathcal{S}|}$, let us define a variance parameter with respect to $P_{h,s,a}$ as follows
\begin{equation} 
\label{lemma1:equ2}
  \Var_{h, s, a}(V) \defn \mathop{\mathbb{E}}\limits_{s' \sim P_{h,s,a}} \Big [\big(V(s') -  P_{h,s, a}V \big)^2\Big] = P_{h,s,a} (V^{ 2}) - (P_{h,s,a}V)^2.
\end{equation} 
This notation will be useful in the subsequent proof. We remind the reader that there exists a one-to-one mapping between the index of the episode $k$ and the index pair  $(m,t)$ (i.e., the epoch $m$ and in-epoch episode $t$), as specified in \eqref{eqn:k-to-m-t}. 
 
In the following, for any episode $k$, we recall the expressions of $\overline{V}_{h+1}$ and $\overline{\mu}_h$ (which is the running mean of $\overline{V}_{h+1}$).

\begin{itemize}
  \item Recalling the update rule of $\overline{V}_h$ and $\overline{V}^{\nnext}_h$ in line~\ref{eq:update-mu-reference-v-k} and line~\ref{eq:update-mu-reference-v-next-k} of Algorithm~\ref{alg:lcb-advantage-per-epoch-k}, we observe that
    both the reference values for the current epoch $\overline{V}_h$ and for the next epoch $\overline{V}^{\nnext}_h$ remain unchanged within each epoch. Additionally, for any epoch $m$, $\overline{V}_h$ takes the value of $\overline{V}^{\nnext}_h$ in the previous $(m-1)$-th epoch; namely, for any episode $k$ happening in the $m$-th epoch, we have
    \begin{align}\label{eq:v-to-v-next}
      \overline{V}_h^{k} &= \overline{V}^{\nnext, k'}_h
    \end{align}
    for all episode $k'$  within the $(m-1)$-th epoch.

    \item $\overline{\mu}^k_h$ serves as the estimate of $ P_{h,s,a}\overline{V}^k_{h+1}$ constructed by the samples in the previous $(m-1)$-th epoch (collected by updating $\overline{\mu}^{\nnext}_h$). Recall the update rule of $\overline{\mu}_h$ in line~\ref{eq:update-mu-reference-v-k} and line~\ref{line:ref-mean-update-k} of Algorithm~\ref{alg:lcb-advantage-per-epoch-k}: for any $(s,a,h) \in \cS\times \cA\times [H]$, we can write $\overline{\mu}^k_h$ as 
\begin{align}
  \overline{\mu}_h^k(s,a) &= \overline{\mu}_h^{(m,1)} (s,a) = \overline{\mu}_h^{\nnext, (m,1)}(s,a) = \overline{\mu}_h^{\nnext, (m-1,L_{m-1})}(s,a) \nonumber \\
  &= \frac{\sum_{i= N_h^{(m-1,1)} + 1 }^{ N_h^{(m,1)}} \overline{V}^{\nnext, k^i}_{h+1}(s^{k^i}_{h+1})}{\widehat{N}_h^{\epo, m-1}(s,a) \vee 1} = \frac{\sum_{i= N_h^{(m-1,1)} + 1 }^{ N_h^{(m,1)}} \overline{V}^k_{h+1}(s^{k^i}_{h+1})}{\widehat{N}_h^{\epo, m-1}(s,a) \vee 1} ,\label{equ:definition-ref-refmean}
\end{align}
where the last equality follows from \eqref{eq:v-to-v-next} using the fact that the indices of episodes in which $(s,a)$ is visited within the $(m-1)$-th epoch are $\{i: i = N_h^{(m-1,1)}+1, N_h^{(m-1,1)}+2,\cdots,N_h^{(m,1)}\}$.

\end{itemize}

Finally, according to the update rules of $\mu^{\adv, k^{n+1}}_h(s^k_h, a^k_h)$ and $\sigma^{\adv, k^{n+1}}_h(s^k_h, a^k_h) $  in lines~\ref{line:advmu_h}-\ref{line:advsigma_h} of Algorithm~\ref{algo:subroutine}, we have
\begin{align*}
\mu^{\adv, k^{n+1}}_h(s^k_h, a^k_h) &= \mu^{\adv, k^n+1}_h(s^k_h, a^k_h) = (1-\eta_n)\mu^{\adv, k^n}_h(s^k_h, a^k_h) + \eta_n \big(V^{k^n}_{h+1}(s^{k^n}_{h+1}) - \overline{V}^{k^n}_{h+1}(s^{k^n}_{h+1}) \big) ,\\ 
\sigma^{\adv, k^{n+1}}_h(s^k_h, a^k_h) &= \sigma^{\adv, k^n+1}_h(s^k_h, a^k_h) = (1-\eta_n)\sigma^{\adv, k^n}_h(s^k_h, a^k_h) + \eta_n \big( V^{k^n}_{h+1}(s^{k^n}_{h+1}) - \overline{V}^{k^n}_{h+1}(s^{k^n}_{h+1}) \big)^2.
\end{align*} 
Applying this relation recursively and invoking the definitions of  $\eta_n^{N_h^k}$ in \eqref{equ:learning rate notation} give
\begin{align}\label{eq:recursion_mu_sigma_adv}
  \mu_h^{\adv, k^{N_h^k}+1}(s,a) &= \sum_{n=1}^{N_h^k} \eta_n^{N_h^k} P_h^{k^n} \big(V_{h+1}^{k^n} - \overline{V}_{h+1}^{k^n} \big), \;\;
  {\sigma}_h^{\adv, k^{N_h^k}+1}(s,a)  = \sum_{n = 1}^{{N_h^k}} \eta_n^{N_h^k} P_h^{k^n} \big(V_{h+1}^{k^n} - \overline{V}_{h+1}^{k^n} \big)^2.
\end{align}
Similarly, according to the update rules of ${\mu}^{\re, k^{n+1}}_h(s,a)$ and ${\sigma}^{\re, k^{n+1}}_h(s, a) $  in lines~\ref{line:refmu_h}-\ref{line:refsigma_h} of Algorithm~\ref{algo:subroutine}, we obtain
\begin{align*}
{\mu}^{\re, k^{n+1}}_h(s,a) &= {\mu}^{\re, k^n+1}_h(s,a) = \left(1-\frac{1}{n}\right) {\mu}^{\re, k^n}_h(s,a) + \frac{1}{n} \overline{V}^{\nnext, k^n}_{h+1}(s^{k^n}_{h+1}), \\
{\sigma}^{\re, k^{n+1}}_h(s,a) &= {\sigma}^{\re, k^n+1}_h(s,a) = \left(1-\frac{1}{n}\right){\sigma}^{\re, k^n}_h(s,a) + \frac{1}{n}\left(\overline{V}^{\nnext, k^n}_{h+1}(s^{k^n}_{h+1})\right)^2.
\end{align*}
Simple recursion leads to
\begin{align}\label{eq:recursion_mu_sigma_ref}
  {\mu}_h^{\re, k^{N_h^k}+1}(s,a) &=\frac{1}{N_h^k}  \sum_{n=1}^{N_h^k} P_h^{k^n} \overline{V}_{h+1}^{\nnext,k^n},\quad 
  {\sigma}_h^{\re, k^{N_h^k}+1}(s,a) = \frac{1}{N_h^k} \sum_{n = 1}^{{N_h^k}} P_h^{k^n}\big(\overline{V}_{h+1}^{\nnext, k^n}\big)^2.
\end{align}

\subsection{Proof of Lemma~\ref{lem:lcb-adv-lower}}\label{proof:lemma-mono-lcb-adv}

Akin to the proof of Lemma~\ref{lem:Vk-lower}, the second inequality  of \eqref{eq:lcb-adv-lower} holds trivially since
\begin{equation*}
  V_h^{\pi}(s) \leq V_h^\star(s) 
  \end{equation*}
 holds for any policy $\pi$. Thus, it suffices  to focus on justifying the first inequality of \eqref{eq:lcb-adv-lower}, namely, 
  \begin{equation}\label{eq:lcb-adv-lower-induction}
    V_{h}^{k}(s) \leq V_{h}^{\pi^{k}}(s) \qquad \forall (k,h,s) \in [K] \times [H] \times\cS,
  \end{equation}
which we shall prove by induction.

\paragraph{Step 1: introducing the induction hypothesis.}
 
For notational simplicity, let us define 
\begin{align}
  k_o(h,k,s) \coloneqq \max\left\{l: l< k \text{ and } V_h^{l}(s) = \max_a \max\left\{ Q_h^{\LCB, l}(s,a), \overline{Q}_h^{l}(s,a)\right\}  \right\} \label{eq:lcb-adv-k0-def}
\end{align}
for any $(h,k,s)\in[H]\times[K]\times\cS$. Here, $k_o(h,k,s)$ denotes the index of the latest episode --- right at the end  of the $(k-1)$-th episode 
--- in which $V_h(s)$ has been updated, which shall be abbreviated as $k_o(h)$ whenever it is clear from context.

In what follows, we shall first justify the advertised inequality for the base case where $h=H+1$ for all episodes $k\in [K]$, followed by an induction argument. Regarding the induction part, let us consider any $k\in[K]$ and any $h\in [H]$, and suppose that 
\begin{subequations}\label{eq:lcb-adv-lower-indunction-assumption-12}
\begin{align}
V_{h'}^{k'}(s) &\leq V_{h'}^{\pi^{k'}}(s) \qquad \text{for all } (k',h',s) \in [k-1]\times [H+1]\times \cS, \label{eq:lcb-adv-lower-indunction-assumption2}\\
	V_{h'}^k(s) &\leq V_{h'}^{\pi^k}(s) \qquad \text{for all } h'\geq h+1\text{ and }s\in\cS. \label{eq:lcb-adv-lower-indunction-assumption}
\end{align}
\end{subequations}
We intend to justify 
  \begin{equation}\label{eq:lcb-adv-lower-induction2}
    V_{h}^{k}(s) \leq V_{h}^{\pi^{k}}(s) \qquad \forall s \in \cS,
\end{equation}
assuming that the induction hypotheses \eqref{eq:lcb-adv-lower-indunction-assumption-12} hold.

\paragraph{Step 2: controlling the confident bound $\sum_{n = 1}^{N_h^k} \eta^{N_h^k}_n \overline{b}^{k^n+1}_h$.}
Before proceeding, we first introduce an auxiliary result on bounding  
$\sum_{n = 1}^{N_h^k} \eta^{N_h^k}_n \overline{b}^{k^n+1}_h$, which plays a crucial role.   
For any $(s,a)$, it is easily seen that
\begin{align}
 N_h^k(s,a) = 0 \qquad \Longrightarrow \qquad \sum_{n=1}^{N_h^k(s,a)} \eta_n^{N_h^k(s,a)} \overline{b}_h^{k^n(s,a)+1}  = 0. \label{eq:N-zero-case}
\end{align}
When $N_h^k(s,a) >0$, expanding the definitions of $\overline{b}^{k^n+1}_h$ (cf.~line~\ref{line:bonus_2} of Algorithm~\ref{algo:subroutine}) and $\overline{\delta}_h^{k+1}$ (cf.~line~\ref{eq:line-delta} of Algorithm~\ref{algo:subroutine}) leads to 
\begin{align}
& \sum_{n = 1}^{N_h^k} \eta^{N_h^k}_n \overline{b}^{k^n+1}_h \nonumber \\
&=\sum_{n = 1}^{N_h^k} \eta_n \prod_{i = n+1}^{N_h^k}(1-\eta_i) \cdot \left( \Big(1-\frac{1}{\eta_n}\Big) \overline{\sumb}^{k^n}_h(s,a) + \frac{1}{\eta_n} \overline{\sumb}^{k^n+1}_h(s,a) \right) + \cb \sum_{n=1}^{N_h^k}\frac{ \eta_n^{N_h^k} }{n^{3/4}}  H^{7/4}\iota + \cb \sum_{n=1}^{N_h^k}\frac{ \eta_n^{N_h^k} }{n}  H^2\iota\nonumber \\
&= \sum_{n = 1}^{N_h^k} \left(\prod_{i = n+1}^{N_h^k}(1-\eta_i) \overline{\sumb}^{k^n+1}_h(s,a) - \prod_{i = n}^{N_h^k}(1-\eta_i) \overline{\sumb}^{k^n}_h(s,a)\right) + \cb \sum_{n=1}^{N_h^k}\frac{ \eta_n^{N_h^k} }{n^{3/4}}  H^{7/4}\iota + \cb \sum_{n=1}^{N_h^k}\frac{ \eta_n^{N_h^k} }{n}  H^2\iota\nonumber \\
&\overset{\mathrm{(i)}}{=} \sum_{n=1}^{N_h^k}  \prod_{i = n+1}^{N_h^k}(1-\eta_i) \overline{\sumb}^{k^n+1}_h(s,a)  - \sum_{n=2}^{N_h^k} \prod_{i = n}^{N_h^k}(1-\eta_i) \overline{\sumb}^{k^n}_h(s,a) + \cb \sum_{n=1}^{N_h^k}\frac{ \eta_n^{N_h^k} }{n^{3/4}}  H^{7/4}\iota + \cb \sum_{n=1}^{N_h^k}\frac{ \eta_n^{N_h^k} }{n}  H^2\iota\nonumber \\
&\overset{\mathrm{(ii)}}{=}\sum_{n=1}^{N_h^k}  \prod_{i = n+1}^{N_h^k}(1-\eta_i) \overline{\sumb}^{k^n+1}_h(s,a)  - \sum_{n=1}^{N_h^k-1} \prod_{i = n+1}^{N_h^k}(1-\eta_i) \overline{\sumb}^{k^n+1}_h(s,a) + \cb \sum_{n=1}^{N_h^k}\frac{ \eta_n^{N_h^k} }{n^{3/4}}  H^{7/4}\iota + \cb \sum_{n=1}^{N_h^k}\frac{ \eta_n^{N_h^k} }{n}  H^2\iota\nonumber \\
&=\overline{\sumb}^{k^{N_h^k}+1}_h(s,a) + \cb \sum_{n=1}^{N_h^k}\frac{ \eta_n^{N_h^k} }{n^{3/4}}  H^{7/4}\iota + \cb \sum_{n=1}^{N_h^k}\frac{ \eta_n^{N_h^k} }{n}  H^2\iota, \label{eq:lcb-adv-b-boundby-B}
\end{align}
where we abuse the notation to let $\prod_{i=j+1}^j (1-\eta_i)=1$. 
Here, (i) holds since $\overline{\sumb}^{k^1}(s,a) = 0$, (ii) follows from the fact that  $\overline{\sumb}^{k^n+1}(s,a) =  \overline{\sumb}^{k^{n+1}}(s,a)$, since  $(s,a)$ has not been visited at step $h$ during the episodes between the indices $k^n +1$ and $k^{n+1} - 1$.
Combining the above result in \eqref{eq:lcb-adv-b-boundby-B} with the properties $\frac{1}{(N_h^k)^{3/4}} \le \sum_{n = 1}^{N_h^k} \frac{\eta^{N_h^k}_n}{n^{3/4}} \le \frac{2}{(N_h^k)^{3/4}}$ and $\frac{1}{N_h^k} \le \sum_{n = 1}^{N_h^k} \frac{\eta^{N_h^k}_n}{n} \le \frac{2}{N_h^k}$ (see Lemma~\ref{lemma:property of learning rate}), we arrive at
\begin{equation}  \label{lemma1:equ10}
 \overline{\sumb}^{k^{N_h^k}+1}_h(s,a) + \cb\frac{H^{7/4}\iota}{(N_h^k)^{3/4}} + \cb \frac{H^2\iota}{N_h^k} \le \sum_{n = 1}^{N_h^k} \eta^{N_h^k}_n \overline{b}^{k^n+1}_h \le \overline{\sumb}^{k^{N_h^k}+1}_h(s,a) + 2\cb\frac{H^{7/4}\iota}{(N_h^k)^{3/4}} + 2\cb \frac{H^2\iota}{N_h^k} 
\end{equation} 
as long as $N_h^k(s,a) > 0$.

\paragraph{Step 3: base case.}
Let us look at the base case with $h=H+1$ for any $k\in[K]$. Recalling the facts that $V_{H+1}^{\pi}=V_{H+1}^{k}=0$ for any $\pi$ and any $k\in[K]$, we reach
\begin{align}
  V_{H+1}^k(s) &\leq V_{H+1}^{\pi^k}(s) \qquad \text{for all } (k,s)\in [K] \times \cS.
\end{align}

\paragraph{Step 4: induction arguments.}
We now turn to the induction arguments. 
Suppose that \eqref{eq:lcb-adv-lower-indunction-assumption-12} holds for a pair $(k,h)\in [K]\times [H]$.
Everything comes down to justifying \eqref{eq:lcb-adv-lower-induction2} for time step $h$ in the episode $k$.

First, we recall the update rule of $V_h(s)$ in lines~\ref{eq:line-q-update-k}-\ref{eq:line-v-update-k} of Algorithm~\ref{alg:lcb-advantage-per-epoch-k}:
\begin{align*}
 &V^k_h(s) = \max_a Q^k_h(s, a) = Q^k_h\big(s, \pi_h^k(s) \big)
   = \max \left\{Q^{\LCB,k}_h
\left(s, \pi_h^k(s)\right), \overline{Q}^k_h\left(s, \pi_H^k(s)\right), Q^{k-1}_h\left(s, \pi_h^k(s)\right) \right\}.
\end{align*}
Then we shall verify \eqref{eq:lcb-adv-lower-induction2} in three different cases.

\begin{itemize}
\item When $V^k_h(s) = Q^{\LCB,k}_h \left(s, \pi_h^k(s)\right)$, the term of interest can be controlled by
\begin{align*}
  V_{h}^{\pi^k}(s) - V_{h}^k(s) \overset{\mathrm{(i)}}{=} Q_{h}^{\pi^k}\left(s,\pi_h^k(s)\right) - Q^{\LCB,k}_h\left(s, \pi_h^k(s)\right) \geq 0,
\end{align*}
where (i) holds since $\pi^k$ is set to be the greedy policy such that $V_h^{\pi^k}(s)  = Q_h^{\pi^k}(s, \pi_h^k(s)),$ and the last inequality follows directly from the analysis for \LCBQ (see \eqref{equ:lcb-lower-induction-result}).

\item When $V^k_h(s) = \overline{Q}^k_h \left(s, \pi_h^k(s)\right)$, we obtain 
\begin{align}
  V_{h}^{\pi^k}(s) - V_{h}^k(s) = Q_{h}^{\pi^k}\left(s,\pi_h^k(s)\right) - \overline{Q}^k_h\left(s, \pi_h^k(s)\right). \label{eq:V2Q-lcb-adv}
\end{align}
To prove the term on the right-hand side of \eqref{eq:V2Q-lcb-adv} is non-negative, we proceed by developing a more general lower bound on $Q_{h}^{\pi^k}(s, a) - \overline{Q}_{h}^k(s, a)$ for every $(s,a) \in \cS \times \cA$. 
Towards this, recalling the definition of $N_h^k$ and $k^n$, we can express
\begin{align*}
  \overline{Q}_{h}^k(s,a) = \overline{Q}_{h}^{k^{N_h^k} + 1}(s,a).
\end{align*}
Thus, according to the update rule (cf. line~\ref{line:ref-q-update} in Algorithm~\ref{algo:subroutine}), we arrive at
\begin{align*}
	\overline{Q}_{h}^k(s,a)  & = \overline{Q}_{h}^{k^{N_h^k}+1}(s,a) \\
	& =(1-\eta_{N_h^k}) \overline{Q}_{h}^{k^{N_h^k}}(s,a) +\eta_{N_h^k}\bigg\{ r_{h}(s,a)+V_{h+1}^{k^{N_h^k}}(s_{h+1}^{k^{N_h^k}})- \overline{V}_{h+1}^{k^{N_h^k}}(s_{h+1}^{k^{N_h^k}})+ \overline{\mu}_{h}^{k^{N_h^k}}(s,a) - \overline{b}_h^{k^{N_h^k} + 1}\bigg\}.
\end{align*}
Applying this relation recursively and invoking the definitions of $\eta_0^{N_h^k}$ and $\eta_n^{N_h^k}$ in \eqref{equ:learning rate notation} give
\begin{align}
&\overline{Q}^k_h(s,a)
  = \eta_0^{N_h^k} \overline{Q}^{1}_h(s,a) + \sum_{n = 1}^{N_h^k} \eta_n^{N_h^k} \bigg\{ r_h(s,a) + V^{k^n}_{h+1}(s^{k^n}_{h+1}) - \overline{V}^{ k^n}_{h+1}(s^{k^n}_{h+1}) + \overline{\mu}_{h}^{k^n}(s,a) - \overline{b}^{k^n+1}_h \bigg\}.
  \label{eq:Q-ref-decompose-12345}
\end{align}

Additionally, for any policy $\pi^k$, the basic relation $\eta_0^{N_h^k} + \sum_{n = 1}^{N_h^k} \eta_n^{N_h^k} = 1$ (see \eqref{eq:sum-eta-n-N} and \eqref{equ:learning rate notation}) gives
\begin{equation}\label{eq:decomp_Qhpi_eta}
  Q_{h}^{\pi^k} (s,a) = \eta^{N_h^k}_0 Q_{h}^{\pi^k} (s,a) + \sum_{n = 1}^{N_h^k} \eta^{N_h^k}_n Q_{h}^{\pi^k} (s,a).
\end{equation}
Combing \eqref{eq:Q-ref-decompose-12345} and \eqref{eq:decomp_Qhpi_eta} leads to
\begin{align}
 & Q_{h}^{\pi^k}(s,a) - \overline{Q}_{h}^k(s,a) =\eta_{0}^{N_h^k}\big(Q_{h}^{\pi^k}(s,a)- \overline{Q}_{h}^{1}(s,a)\big)\nonumber\\
 & \qquad+\sum_{n=1}^{N_h^k}\eta_{n}^{N_h^k}\bigg\{Q_{h}^{\pi^k}(s,a) - r_{h}(s,a) - V_{h+1}^{k^n}(s_{h+1}^{k^n}) + \overline{V}_{h+1}^{k^n}(s_{h+1}^{k^n}) - \overline{\mu}_{h}^{k^n}(s,a)  + \overline{b}_h^{k^n+1} \bigg\}.
  \label{equ:lemma4 vr 1}
\end{align}

Plugging in the construction of $\overline{\mu}_h$ in \eqref{equ:definition-ref-refmean} and invoking the Bellman equation 
\begin{equation}\label{equ:bellman-equ}
  Q_{h}^{\pi^k}(s, a)  = r_h(s, a) + P_{h,s, a }V_{h+1}^{\pi^k},
  \end{equation}
we arrive at
\begin{align}
&Q_{h}^{\pi^k}(s,a) - r_{h}(s,a) - V_{h+1}^{k^n}(s_{h+1}^{k^n}) + \overline{V}_{h+1}^{k^n}(s_{h+1}^{k^n}) - \overline{\mu}_{h}^{k^n}(s,a)  + \overline{b}_h^{k^n+1}  \nonumber \\
&\quad = P_{h,s,a} V_{h+1}^{\pi^k} +  \overline{V}^{ k^n}_{h+1}(s^{k^n}_{h+1}) - V^{k^n}_{h+1}(s^{k^n}_{h+1})  - \frac{\sum_{i= N_h^{(m^n-1,1)} + 1 }^{ N_h^{(m^n,1)}} \overline{V}^{k^n}_{h+1}(s^{k^i}_{h+1})}{\widehat{N}_h^{\epo, m^n-1}(s,a) \vee 1} + \overline{b}^{k^n+1}_h \nonumber \\
&\quad = P_{h,s,a} V_{h+1}^{\pi^k} - V^{k^n}_{h+1}(s^{k^n}_{h+1}) + \left(P_h^{k^n} - P_{h,s,a}\right)  \overline{V}^{ k^n}_{h+1}\label{eq:dejavu}+ \left(P_{h,s,a} - \frac{\sum_{i= N_h^{(m^n-1,1)} + 1 }^{ N_h^{(m^n,1)}} P_h^{k^i}}{\widehat{N}_h^{\epo, m^n-1}(s,a) \vee 1} \right)\overline{V}^{ k^n}_{h+1} + \overline{b}^{k^n+1}_h \nonumber\\
  &\quad = P_{h, s,a}\left( V_{h+1}^{\pi^k} - V^{k^n}_{h+1}\right) + \overline{b}^{k^n+1}_h + \xi^{k^n}_h, \nonumber
\end{align}
where 
\begin{equation}
\xi^{k^n}_h \coloneqq  \big(P^{k^n}_{h} - P_{h, s,a} \big)\big(\overline{V}^{ k^n}_{h+1} - V^{k^n}_{h+1}  \big) +  \left(P_{h,s,a} - \frac{\sum_{i= N_h^{(m^n-1,1)} + 1 }^{ N_h^{(m^n,1)}} P_h^{k^i}}{\widehat{N}_h^{\epo, m^n-1}(s,a) \vee 1} \right)\overline{V}^{ k^n}_{h+1}.
  \label{eq:defn-xi-kh-123}
\end{equation}
Inserting the above result into \eqref{equ:lemma4 vr 1} leads to the following decomposition
\begin{align}
Q_{h}^{\pi^k}(s,a) - \overline{Q}_{h}^k(s,a)  & = \eta_{0}^{N_h^k}\big(Q_{h}^{\pi^k}(s,a)- \overline{Q}_{h}^{1}(s,a)\big)+ \sum_{n = 1}^{N_h^k} \eta^{N_h^k}_n \bigg\{ P_{h, s,a} \left( V_{h+1}^{\pi^k} - V^{k^n}_{h+1} \right) + \overline{b}^{k^n+1}_h + \xi^{k^n}_h \bigg\} \label{equ:concise-update-no-bound}\\
  &\geq \sum_{n = 1}^{N_h^k} \eta^{N_h^k}_n (\overline{b}^{k^n+1}_h + \xi^{k^n}_h ),
  \label{equ:concise update}
\end{align}
which holds by virtue of the following facts:
\begin{itemize}
\item [(i)] The initialization $\overline{Q}_{h}^{1}(s,a) = 0$ and the non-negativity of $Q_h^{\pi}(s,a)$ for any policy $\pi$ and $(s,a)\in \cS\times \cA$ lead to  $Q_{h}^{\pi^k}(s,a)- \overline{Q}_{h}^{1}(s,a) = Q_{h}^{\pi^k}(s,a) \geq 0$.
\item [(ii)] For any episode $k^n$ appearing before $k$, making use of the 
 induction hypothesis $V_{h+1}^{\pi^k}(s) \geq V_{h+1}^k(s) $ in \eqref{eq:lcb-adv-lower-indunction-assumption} and the monotonicity of $V_h(s)$ in \eqref{equ:monotone-lcb-adv}, we obtain
 \begin{equation}
  V_{h+1}^{\pi^k}(s) -  V^{k^n}_{h+1}(s) \geq V_{h+1}^k(s) - V^{k^n}_{h+1}(s) \geq 0.
  \end{equation}

\end{itemize}

The following lemma ensures that the right-hand side of \eqref{equ:concise update} is non-negative. 
We postpone the proof of Lemma~\ref{lem:lcb-adv-bonus-upper} to Appendix~\ref{proof:lem:lcb-adv-bonus-upper} to streamline our discussion. 

\begin{lemma} \label{lem:lcb-adv-bonus-upper}
For any $\delta \in (0, 1)$, there exists some sufficiently large constant $\cb >0$, such that with probability at least $1-\delta$, 
\begin{equation}
  \bigg| \sum_{n = 1}^{N_h^k} \eta^{N_h^k}_n \xi_h^{k^n} \bigg| \le 
  \sum_{n = 1}^{N_h^k} \eta^{N_h^k}_n \overline{b}^{k^n+1}_h , \qquad \forall k\in[K].
  \label{eq:claim-sum-eta-sum-b}
\end{equation}
\end{lemma}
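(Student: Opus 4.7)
}
The plan is to split $\xi_h^{k^n}$ into an advantage fluctuation and a reference fluctuation, bound each via Freedman-type concentration (Theorem~\ref{thm:Freedman} and Lemma~\ref{lemma:martingale-union-all}), and then match the resulting bounds term-by-term against the penalty $\overline{b}_h^{k^n+1}$ defined in line~\ref{eq:line-number-19} of Algorithm~\ref{algo:subroutine}. Concretely, write
\[
\xi_h^{k^n}=\underbrace{\bigl(P_h^{k^n}-P_{h,s,a}\bigr)\bigl(\overline{V}_{h+1}^{k^n}-V_{h+1}^{k^n}\bigr)}_{\eqqcolon\,\xi_h^{\adv,k^n}}\;+\;\underbrace{\Bigl(P_{h,s,a}-\tfrac{1}{\widehat{N}_h^{\epo,m^n-1}(s,a)\vee 1}\sum_{i=N_h^{(m^n-1,1)}+1}^{N_h^{(m^n,1)}}P_h^{k^i}\Bigr)\overline{V}_{h+1}^{k^n}}_{\eqqcolon\,\xi_h^{\re,k^n}},
\]
and analyze the two pieces separately.

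\paragraph{Bounding the advantage term.}
First I would apply Lemma~\ref{lemma:martingale-union-all} with $W_{h+1}^{i}=V_{h+1}^{i}-\overline{V}_{h+1}^{i}$ (bounded by $H$) and weights $u_h^{i}(s,a,N)=\eta_{N_h^i(s,a)}^{N}$. This yields, with high probability and uniformly in $(s,a,h,k,N)$,
\[
\Bigl|\sum_{n=1}^{N_h^k}\eta_n^{N_h^k}\xi_h^{\adv,k^n}\Bigr|\;\lesssim\;\sqrt{\tfrac{H\iota^{2}}{N_h^k}\sum_{n=1}^{N_h^k}\eta_n^{N_h^k}\mathsf{Var}_{h,s,a}\bigl(V_{h+1}^{k^n}-\overline{V}_{h+1}^{k^n}\bigr)}\;+\;\tfrac{H^{2}\iota^{2}}{N_h^k}.
\]
Then I would argue, using a standard empirical-variance concentration coupled with the recursive representation of $\sigma_h^{\adv}$ and $\mu_h^{\adv}$ in \eqref{eq:recursion_mu_sigma_adv}, that the population variance $\tfrac{1}{N_h^k}\sum_n\eta_n^{N_h^k}\mathsf{Var}_{h,s,a}(V_{h+1}^{k^n}-\overline{V}_{h+1}^{k^n})$ is dominated (up to logarithmic slack and an $O(H^{7/4}\iota/n^{3/4})$ lower-order term) by the empirical moment statistic $\sigma_h^{\adv,k^{N_h^k}+1}-(\mu_h^{\adv,k^{N_h^k}+1})^{2}$ that enters $\nextb_h$.

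\paragraph{Bounding the reference term.}
For $\xi_h^{\re,k^n}$ I would exploit the fact that, conditional on all quantities up through the $(m^n{-}1)$-th epoch, the samples $\{P_h^{k^i}\}$ used in the empirical average are i.i.d. copies of $P_{h,s,a}$, while $\overline{V}_{h+1}^{k^n}$ is deterministic relative to that conditioning (it was frozen at the end of epoch $m^n{-}2$). A Bernstein inequality therefore gives
\[
\bigl|\xi_h^{\re,k^n}\bigr|\;\lesssim\;\sqrt{\tfrac{\iota\,\mathsf{Var}_{h,s,a}(\overline{V}_{h+1}^{k^n})}{\widehat{N}_h^{\epo,m^n-1}(s,a)\vee 1}}\;+\;\tfrac{H\iota}{\widehat{N}_h^{\epo,m^n-1}(s,a)\vee 1}.
\]
Because epoch sizes double ($L_m=2^m$), $\widehat{N}_h^{\epo,m^n-1}(s,a)$ is comparable to $n$ up to constants (with high probability via Lemma~\ref{lem:binomial}), so $\sqrt{\iota/\widehat{N}_h^{\epo,m^n-1}}\lesssim\sqrt{\iota/n}$. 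Replacing the population variance by the empirical statistic $\sigma_h^{\re,k^{N_h^k}+1}-(\mu_h^{\re,k^{N_h^k}+1})^{2}$ from \eqref{eq:recursion_mu_sigma_ref} (again via a standard empirical-variance concentration) bounds $|\sum_n\eta_n^{N_h^k}\xi_h^{\re,k^n}|$ by $\cb\sqrt{\iota/N_h^k}\sqrt{\sigma_h^{\re}-(\mu_h^{\re})^{2}}$ up to the $H^2\iota/N_h^k$ and $H^{7/4}\iota/N_h^k^{3/4}$ correction terms.

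\paragraph{Matching against the penalty.}
Adding the two bounds and comparing with the definition of $\nextb_h(s,a)$ in line~\ref{eq:line-number-19}, one sees that a sufficiently large constant $\cb>0$ ensures
\[
\Bigl|\sum_{n=1}^{N_h^k}\eta_n^{N_h^k}\xi_h^{k^n}\Bigr|\;\le\;\nextb_h(s,a)\;+\;\cb\tfrac{H^{7/4}\iota}{(N_h^k)^{3/4}}\;+\;\cb\tfrac{H^{2}\iota}{N_h^k}\;\le\;\overline{\sumb}_h^{k^{N_h^k}+1}(s,a)+\cb\tfrac{H^{7/4}\iota}{(N_h^k)^{3/4}}+\cb\tfrac{H^{2}\iota}{N_h^k},
\]
which by the lower bound in \eqref{lemma1:equ10} is at most $\sum_{n=1}^{N_h^k}\eta_n^{N_h^k}\overline{b}_h^{k^n+1}$, establishing \eqref{eq:claim-sum-eta-sum-b}. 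The union bound over $(k,h,s,a)$ absorbs into $\iota$ as usual. The main obstacle I anticipate is the reference term: one must carefully handle the conditioning structure so that $\overline{V}_{h+1}^{k^n}$ is measurable with respect to the sigma-algebra used in Bernstein (which follows from freezing references at epoch boundaries via \eqref{eq:v-to-v-next}), and one must align the empirical variance proxy $\sigma_h^{\re}-(\mu_h^{\re})^{2}$ to the true variance of $\overline{V}_{h+1}^{k^n}$ despite the epoch-wise discrepancies; the doubling epoch schedule is precisely what makes this alignment possible without losing an $H$ factor.
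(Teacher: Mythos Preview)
Your decomposition into advantage and reference pieces, and your treatment of the advantage term via Lemma~\ref{lemma:martingale-union-all} followed by an empirical-to-population variance swap, are exactly what the paper does (see \eqref{lemma1:equ4-sub}--\eqref{lemma1:equ4} and Appendix~\ref{sec:proof:lemma1:equ4}). The final matching step via \eqref{lemma1:equ10} is also the same.

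Where you diverge is the reference piece. You propose to apply Bernstein epoch-by-epoch to $|\xi_h^{\re,k^n}|$ and then sum with weights $\eta_n^{N_h^k}$. The paper instead \emph{rearranges} the double sum: using $\overline{V}_{h+1}^{k^n}=\overline{V}_{h+1}^{\nnext,k^i}$ whenever $k^i$ lies in epoch $m^n-1$, it rewrites $\sum_n \eta_n^{N_h^k}\xi_h^{\re,k^n}$ as a single martingale over the \emph{sample} index $i$, of the form $\sum_i u_i\,(P_{h,s,a}-P_h^{k^i})\overline{V}_{h+1}^{\nnext,k^i}$ with explicit weights obeying $u_i\lesssim \iota/N_h^k$ (cf.~\eqref{equ:stepsize-bound}). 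One application of Lemma~\ref{lemma:martingale-union-all} then yields a bound with the \emph{uniform} variance average $\tfrac{1}{N_h^k}\sum_i \Var_{h,s,a}(\overline{V}_{h+1}^{\nnext,k^i})$, which is exactly the population counterpart of $\sigma_h^{\re}-(\mu_h^{\re})^2$ (both use uniform $1/N_h^k$ weights over $\overline{V}^{\nnext}$), so the empirical-variance replacement is a clean one-step argument (Appendix~\ref{sec:proof:eq:var-Vref}).

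Your per-epoch route, after summation, produces a bound involving $\sum_n \eta_n^{N_h^k}\sqrt{\Var_{h,s,a}(\overline{V}_{h+1}^{k^n})/n}$. Even after Cauchy--Schwarz this carries $\eta_n^{N_h^k}$-weights on $\Var(\overline{V}^{k^n})$, not uniform $1/N_h^k$-weights on $\Var(\overline{V}^{\nnext,k^i})$; the two differ both by an epoch shift ($\overline{V}^{k^n}$ versus $\overline{V}^{\nnext,k^i}$) and by a reweighting, so ``replacing the population variance by $\sigma_h^{\re}-(\mu_h^{\re})^2$'' is not a standard concentration step here---it is a genuine alignment problem you have flagged but not solved. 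The doubling schedule helps, but does not by itself force these two weighted averages to coincide. (A minor slip as well: to keep the epoch-$(m^n{-}1)$ samples random you must condition only through epoch $m^n{-}2$; $\overline{V}_{h+1}^{k^n}$ is already measurable there.) The paper's rearrangement trick is precisely what sidesteps this alignment issue.
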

Taking this lemma together with the inequalities \eqref{eq:V2Q-lcb-adv} and \eqref{equ:concise update} yields 
\begin{align*}
  V_{h}^{\pi^k}(s) - V_{h}^k(s) = Q_{h}^{\pi^k}(s,a) - \overline{Q}_{h}^k(s,a) \geq  \sum_{n = 1}^{N_h^k} \eta^{N_h^k}_n  \overline{b}^{k^n+1}_h  - \bigg| \sum_{n = 1}^{N_h^k} \eta^{N_h^k}_n  \xi^{k^n}_h \bigg|  \geq  0.
\end{align*}

\item Next, consider the case where $V^k_h(s) = Q^{k-1}_h\left(s, \pi_h^k(s)\right)$. 
	In view of the definition of $k_o(h)$ in \eqref{eq:lcb-adv-k0-def}, one has 
  \begin{align*}
    V^k_h(s) &= Q^{k-1}_h\left(s, \pi_h^k(s)\right) = Q^{k_o(h)}_h\left(s, \pi_h^k(s)\right)= \max \left\{Q^{\LCB,k_o(h)}_h
\left(s, \pi_h^k(s)\right), \overline{Q}^{k_o(h)}_h\left(s, \pi_h^k(s)\right)\right\},
  \end{align*}
  since $Q_h\left(s, \pi_h^k(s)\right)$ has not been updated during the episode $k_o(h)$ and remains unchanged in the episodes $k_o(h)+1,k_o(h)+2,\cdots, k-1$.
With this equality in hand, the term of interest in \eqref{eq:lcb-adv-lower-induction2} can be controlled by
\begin{align*}
&V_{h}^{\pi^k}(s) - V_{h}^k(s)= Q_h^{\pi^k}(s, \pi_h^k(s)) - \max \left\{Q^{\LCB,k_o(h)}_h
\left(s, \pi_h^k(s)\right), \overline{Q}^{k_o(h)}_h\left(s, \pi_h^k(s)\right)\right\} \geq 0,
\end{align*}
where the last inequality follows from the facts 
\begin{align*}
Q_{h}^{\pi^k}(s, \pi_h^k(s)) - Q_{h}^{\LCB, k_o(h)}(s,  \pi_h^k(s))& \overset{\mathrm{(i)}}{\geq} 0,\\
Q_{h}^{\pi^k}(s,  \pi_h^k(s)) - \overline{Q}_{h}^{k_o(h)}(s,  \pi_h^k(s)) &\overset{\mathrm{(ii)}}{\geq} 0.
\end{align*} 
Here, (i) follows from the same analysis framework for showing \eqref{equ:lcb-lower-1} and \eqref{equ:lcb-lower-new-1}; (ii) holds due to the following fact
\begin{align*}
  Q_{h}^{\pi^k}(s,a) - \overline{Q}_{h}^{k_o(h)}(s,a)  \geq \sum_{n = 1}^{N_h^{k_o(h)}} \eta^{N_h^{k_o(h)}}_n (\overline{b}^{k^n+1}_h + \xi^{k^n}_h ) \geq 0,
\end{align*}
which is obtained directly by adapting \eqref{equ:concise update} and then invoking \eqref{eq:claim-sum-eta-sum-b} for $k = k_o(h)$; 
since the analysis follows verbatim, we omit their proofs here. 

\end{itemize}

Combining the above three cases verifies the induction hypothesis in \eqref{eq:lcb-adv-lower-induction2}, provided that \eqref{eq:lcb-adv-lower-indunction-assumption-12} is satisfied.

\paragraph{Step 5: putting everything together.} Combining the base case in Step 3 and induction arguments in Step 4, we can readily verify the induction hypothesis in Step 1, which in turn establishes Lemma~\ref{lem:lcb-adv-lower}.

\subsection{Proof of Lemma~\ref{lem:lcb-adv-each-h}}\label{proof:lem:lcb-adv-decompose}

For every $h\in [H]$, we can decompose 
\begin{align}
    \sum_{k=1}^K \sum_{s\in \cS} d_h^{\pi^\star}(s) \left(V_h^\star(s) - V_h^{k}(s)\right) \nonumber
    &\overset{\mathrm{(i)}}{\leq} \sum_{k=1}^K \sum_{s\in \cS} d_h^{\pi^\star}\big(s, \pi_h^\star(s)\big) \left(Q_h^\star\big(s, \pi_h^\star(s)\big) 
	- \overline{Q}_h^{k}\big(s, \pi_h^\star(s)\big)\right) \nonumber\\
    & = \sum_{k=1}^K \sum_{s, a\in \cS \times \cA} d_h^{\pi^\star}(s, a) \left(Q_h^\star(s, a) - \overline{Q}_h^{k}(s, a)\right),\label{equ:lcb-adv-rewrite}
\end{align}
where (i) follows from the fact $V_h^{k}(s) =\max_a Q_h^k(s,a) \geq \max_a \overline{Q}_h^k(s,a) \geq \overline{Q}_h^{k}(s, \pi_h^\star(s))$ (see lines~\ref{eq:line-q-update-k}-\ref{eq:line-v-update-k} in Algorithm~\ref{alg:lcb-advantage-per-epoch-k}).
Here, the last equality is due to \eqref{eq:d_h_pi_star}.

\paragraph{Step 1: bounding $Q_{h}^{\star}(s,a) - \overline{Q}_{h}^{k}(s,a)$.}
The basic relation $\eta_0^{N_h^k} + \sum_{n = 1}^{N_h^k} \eta_n^{N_h^k} = 1$ (see \eqref{eq:sum-eta-n-N} and \eqref{equ:learning rate notation}) gives
\begin{equation}\label{eq:decomp_Qhstar_eta}
  Q_{h}^{\star} (s,a) = \eta^{N_h^k}_0 Q_{h}^{\star} (s,a) + \sum_{n = 1}^{N_h^k} \eta^{N_h^k}_n Q_{h}^{\star} (s,a),
\end{equation}
which combined with \eqref{eq:Q-ref-decompose-12345} leads to

\begin{align}
 & Q_{h}^{\star}(s,a) - \overline{Q}_{h}^{k}(s,a) =\eta_{0}^{N_h^k}\big(Q_{h}^{\star}(s,a)- \overline{Q}_{h}^{1}(s,a)\big)\nonumber\\
 & \qquad+\sum_{n=1}^{N_h^k}\eta_{n}^{N_h^k}\bigg\{Q_{h}^{\star}(s,a) - r_{h}(s,a) - V_{h+1}^{k^n}(s_{h+1}^{k^n}) +  \overline{V}_{h+1}^{k^n}(s_{h+1}^{k^n}) - \overline{\mu}_{h}^{k^n}(s,a)  +  \overline{b}_{h}^{k^n+1} \bigg\}.
  \label{equ:lemma4 vr 1 adv}
\end{align}
Invoking the Bellman optimality equation
\begin{equation}
    Q_{h}^{\star}(s, a)  = r_h(s, a) + P_{h,s, a }V_{h+1}^{\star}, 
\end{equation}
we can decompose $Q_{h}^{\star}(s,a) - \overline{Q}_{h}^{k}(s,a)$ similar to \eqref{equ:concise-update-no-bound} by inserting \eqref{eq:defn-xi-kh-123} as follows:
\begin{align} 
  &Q_{h}^{\star}(s,a) - \overline{Q}_{h}^{k}(s,a)  = \eta_{0}^{N_h^k}\big(Q_{h}^{\star}(s,a)-\overline{Q}_{h}^{1}(s,a)\big) + \sum_{n = 1}^{N_h^k} \eta^{N_h^k}_n \bigg\{ P_{h, s,a}\left( V_{h+1}^{\star} - V^{k^n}_{h+1}\right) + \overline{b}^{k^n+1}_h + \xi^{k^n}_h \bigg\} \nonumber\\
  &\overset{\mathrm{(i)}}{\leq} \eta_{0}^{N_h^k} H + \sum_{n = 1}^{N_h^k} \eta^{N_h^k}_n \left(\overline{b}^{k^n+1}_h + \xi^{k^n}_h\right) + \sum_{n = 1}^{N_h^k} \eta^{N_h^k}_n  P_{h, s,a}\left( V_{h+1}^{\star} - V^{k^n}_{h+1}\right) \nonumber\\
  &\overset{\mathrm{(ii)}}{\leq}  \eta_{0}^{N_h^k} H + \sum_{n = 1}^{N_h^k} \eta^{N_h^k}_n  P_{h, s,a}\left( V_{h+1}^{\star} - V^{k^n}_{h+1}\right) + 2\sum_{n = 1}^{N_h^k} \eta^{N_h^k}_n \overline{b}^{k^n+1}_h \nonumber \\
  & \overset{\mathrm{(iii )}}{\leq} \eta_{0}^{N_h^k} H + \sum_{n = 1}^{N_h^k} \eta^{N_h^k}_n  P_{h, s,a}\left( V_{h+1}^{\star} - V^{k^n}_{h+1}\right) + 2\left(\overline{\sumb}^{k}_h(s,a) + 2\cb\frac{H^{7/4}\iota}{\left(N_h^k \vee 1\right)^{3/4}} + 2\cb \frac{H^2\iota}{N_h^k \vee 1}  \right),   \label{eq:lcb-adv-Q-upper}
  \end{align}
  where (i) follows from the initialization $\overline{Q}_{h}^{1}(s,a) = 0$ and the trivial upper bound $Q_{h}^{\pi}(s,a)\leq H$ for any policy $\pi$, (ii) holds owing to the fact (see \eqref{eq:claim-sum-eta-sum-b})
  \begin{align}
  \sum_{n = 1}^{N_h^k} \eta^{N_h^k}_n \left(\overline{b}^{k^n+1}_h + \xi^{k^n}_h\right) \leq \sum_{n = 1}^{N_h^k} \eta^{N_h^k}_n\overline{b}^{k^n+1}_h + \bigg| \sum_{n = 1}^{N_h^k} \eta^{N_h^k}_n  \xi^{k^n}_h \bigg| \leq 2\sum_{n = 1}^{N_h^k} \eta^{N_h^k}_n \overline{b}^{k^n+1}_h,
  \end{align}
  and (iii) comes from \eqref{lemma1:equ10} with the fact $\overline{\sumb}^{k^{N_h^k}+1}_h(s,a) = \overline{\sumb}^{k}_h(s,a)$.

\paragraph{Step 2: decomposing the error in \eqref{equ:lcb-adv-rewrite}.}
Plugging \eqref{eq:lcb-adv-Q-upper} into \eqref{equ:lcb-adv-rewrite} and rearranging terms yield
\begin{align}
    &\sum_{k=1}^K \sum_{s\in \cS} d_h^{\pi^\star}(s) \left(V_h^\star(s) - V_h^{k}(s)\right)\\
    & \leq \sum_{k=1}^K \sum_{(s,a)\in \cS \times \cA} d_h^{\pi^\star}(s, a)\left[\eta_0^{N_h^k(s,a)}H + 2\overline{\sumb}_h^{k}(s,a) + \frac{4 \cb H^{7/4}\iota}{\left(N_h^k(s,a) \vee 1\right)^{3/4}} +  \frac{4\cb H^2\iota}{N_h^k(s,a) \vee 1}  \right] \nonumber\\
    & \quad + \sum_{k=1}^K  \sum_{(s,a)\in \cS \times \cA} d_h^{\pi^\star}(s, a) P_{h, s, a}\sum_{n=1}^{N_h^k(s,a)} \eta_n^{N_h^k(s,a)}\left(V_{h+1}^\star - V_{h+1}^{k^n(s,a)}\right) \nonumber \\
    &\leq \underbrace{\sum_{k=1}^K \sum_{(s,a)\in \cS \times \cA} d_h^{\pi^\star}(s, a)\left[\eta_0^{N_h^k(s,a)}H + \frac{4 \cb H^{7/4}\iota}{\left(N_h^k(s,a) \vee 1\right)^{3/4}} +  \frac{4\cb H^2\iota}{N_h^k(s,a) \vee 1}\right]}_{=: J_h^1} + \underbrace{2\sum_{k=1}^K \sum_{(s,a)\in \cS \times \cA} d_h^{\pi^\star}(s, a)\overline{\sumb}_h^{k}(s,a)}_{=: J_h^2} \nonumber \\
  &\quad + \sum_{k=1}^K  \sum_{(s,a)\in \cS \times \cA} d_h^{\pi^\star}(s, a) P_{h, s, a}\sum_{n=1}^{N_h^k(s,a)} \eta_n^{N_h^k(s,a)}\left(V_{h+1}^\star - V_{h+1}^{k^n(s,a)}\right). \label{equ:lcb-adv-recursion}
\end{align}

\paragraph{Step 3: controlling the last term in \eqref{equ:lcb-adv-recursion}.}

If we could verify the following result
\begin{align}
  &\sum_{k=1}^K  \sum_{(s,a)\in \cS \times \cA} d_h^{\pi^\star}(s, a) P_{h, s, a}\sum_{n=1}^{N_h^k(s,a)} \eta_n^{N_h^k(s,a)}\left(V_{h+1}^\star - V_{h+1}^{k^n(s,a)}\right) \nonumber \\
  &\leq \underbrace{\left(1+\frac{1}{H}\right) \sum_{s\in \cS} d_{h+1}^{\pi^\star}(s) \left(V_{h+1}^\star(s) - V_{h+1}^{k}(s)\right) + 48\sqrt{HC^\star K \log\frac{2H}{\delta} } + 28 c_\mathrm{a} H^{3}  C^\star \sqrt{S} \iota^2}_{\eqqcolon J_h^3}, \label{eq:lcb-adv-recursion-key}
\end{align}
then combining this result with inequality~\eqref{equ:lcb-adv-recursion} would immediately establish Lemma~\ref{lem:lcb-adv-each-h}. 
As a result, it suffices to verify the inequality~\eqref{eq:lcb-adv-recursion-key}, which shall be accomplished as follows.

\paragraph{Proof of inequality~\eqref{eq:lcb-adv-recursion-key}.} 
We first make the observation that the left-hand side of inequality~\eqref{eq:lcb-adv-recursion-key} is the same as what Lemma~\ref{lemma:recursion} shows. Therefore, we shall establish this inequality following the same framework as in Appendix~\ref{proof:lemma-lcb-revursion}. 
To begin with, let us recall several definitions in Appendix~\ref{proof:lemma-lcb-revursion}:  
\begin{align}
  A_h & \defn \sum_{k=1}^K  \underbrace{\sum_{(s, a) \in \cS \times \cA} d_h^{\pi^\star}(s, a)  P_{h, s, a}\sum_{n=1}^{N_h^k(s,a)} \eta_n^{N_h^k(s,a)} \left(V_{h+1}^\star - V_{h+1}^{k^n(s,a)}\right)}_{\eqqcolon A_{h,k}},\nonumber \\
  B_{h,k} &\defn \left(1+\frac{1}{H}\right) \sum_{s\in \cS} d_{h+1}^{\pi^\star}(s) \left(V_{h+1}^\star(s) - V_{h+1}^{k}(s)\right),\nonumber \\
  Y_{h,k} &= \frac{d_{h}^{\pi_{\star}}(s_{h}^{k},a_{h}^{k})}{d_{h}^{\mu}(s_{h}^{k},a_{h}^{k})} P_{h,s_{h}^{k},a_{h}^{k}} \sum_{n=1}^{N_{h}^{k}(s_{h}^{k},a_{h}^{k})}\eta_{n}^{N_{h}^{k}(s_{h}^{k},a_{h}^{k})}\left(V_{h+1}^{\star}-V_{h+1}^{k^{n}(s_{h}^{k},a_{h}^{k})}\right),\nonumber \\
  Z_{h,k} &= \left(1+\frac{1}{H}\right) \frac{d_{h}^{\pi_{\star}}(s_{h}^{k},a_{h}^{k})}{d_{h}^{\mu}(s_{h}^{k},a_{h}^{k})} P_{h,s_{h}^{k},a_{h}^{k}} \left(V_{h+1}^{\star}-V_{h+1}^{k}\right), \label{eq:lcb-adv-recall-A-B-X-Y}
\end{align}
and we also remind the reader of the relation in \eqref{eq:recursion-extra-error} as follows
\begin{align}
  A_h \leq \sum_{k=1}^K B_{h,k} + \sum_{k=1}^K \left(Z_{h,k} - B_{h,k}\right) + \sum_{k=1}^K \left(A_{h,k} - Y_{h,k}\right). \label{eq:lcb-adv-decompose-A}
\end{align}

Equipped with these relations, we aim to control $\sum_{k=1}^K \left(Z_{h,k} - B_{h,k}\right)$ and $\sum_{k=1}^K \left(A_{h,k} - Y_{h,k}\right)$ respectively as follows.
\begin{itemize}
	\item We first bound $\sum_{k=1}^K \left(A_{h,k} - Y_{h,k}\right)$, which is similar to \eqref{eq:bound-of-Y} (as controlled by Lemma~\ref{lemma:martingale-union-recursion}). Repeating the argument and tightening the bound from the second line of \eqref{eq:bound-of-Y}, we have for all $(h,s,a) \in [H] \times \cS\times \cA$, with probability at least $1-\delta$,
\begin{align}
   &\left|\sum_{k=1}^K \left(A_{h,k} - Y_{h,k}\right)\right|  
    \leq  \sqrt{\sum_{k=1}^K 8C_{\mathrm{d}}^2 C^\star \sum_{(s,a)\in\cS\times\cA} d_{h}^{\pi_{\star}}(s,a) \left[P_{h,s,a}W_{h+1}^{k}(s,a)\right]^2 \log\frac{2H}{\delta}} + 2C_{\mathrm{d}} C^\star C_\mathrm{w}\log\frac{2H}{\delta} \nonumber \\
   & \leq  \sqrt{8 C^\star \log\frac{2H}{\delta}\sum_{k=1}^K  \sum_{(s,a)\in\cS\times\cA} d_{h}^{\pi_{\star}}(s,a) \left[\sum_{n=1}^{N_h^k(s,a)} \eta_n^{N_h^k(s,a)} P_{h,s,a}\left(V_{h+1}^\star - V_{h+1}^{k^n(s,a)}\right) \right]^2 } + 4H C^\star \log\frac{2H}{\delta} \nonumber \\
   & \overset{\mathrm{(i)}}{\leq} \sqrt{8 C^\star \log\frac{2H}{\delta} \left(36HK +3c_\mathrm{a}^2 H^6SC^\star \iota\right)} + 4H C^\star \log\frac{2H}{\delta} \nonumber \\
   &\leq  32\sqrt{HC^\star K \log\frac{2H}{\delta}} + 12c_\mathrm{a} H^{3} C^\star \sqrt{S} \iota^2. \label{lcb-adv-bound-A-Z}
\end{align}
Here, (i) holds by virtue of the following fact
\begin{align}
   \sum_{k=1}^K  \sum_{(s,a)\in\cS\times\cA} d_{h}^{\pi_{\star}}(s,a) \left[\sum_{n=1}^{N_h^k(s,a)} \eta_n^{N_h^k(s,a)} P_{h,s,a}\left(V_{h+1}^\star - V_{h+1}^{k^n(s,a)}\right) \right]^2  \leq 36HK +3c_\mathrm{a}^2 H^6SC^\star \iota  \label{eq:A-Y-upper-bound},
\end{align}
whose proof is postponed to Appendix~\ref{proof:eq:A-Y-upper-bound}.
    \item Next, we turn to $\sum_{k=1}^K \left(Z_{h,k} - B_{h,k}\right)$, which can be bounded similar to \eqref{eq:bound-of-Z} (as controlled via Lemma~\ref{lemma:martingale-union-recursion}). Repeating the argument and tightening the bound from the second line of \eqref{eq:bound-of-Z} yield
    \begin{align}
   &   \left|\sum_{k=1}^K \left(B_{h,k} - Z_{h,k}\right)\right| \leq  \sqrt{\sum_{k=1}^K 8C_{\mathrm{d}}^2 C^\star \sum_{(s,a)\in\cS\times\cA} d_{h}^{\pi_{\star}}(s,a) \left[P_{h,s,a}W_{h+1}^{k}(s,a)\right]^2 \log\frac{2H}{\delta}} + 2C_{\mathrm{d}} C^\star C_\mathrm{w}\log\frac{2H}{\delta} \nonumber \\
  &    \leq  8\sqrt{C^\star \log\frac{2H}{\delta} \sum_{k=1}^K \sum_{(s,a)\in\cS\times\cA} d_{h}^{\pi_{\star}}(s,a) \left[P_{h,s,a} \left(V_{h+1}^\star - V_{h+1}^{k}\right)\right]^2 } + 8HC^\star\log\frac{2H}{\delta}. \label{eq:lcb-adv-bound-of-Z}
    \end{align}
    To further control \eqref{eq:lcb-adv-bound-of-Z}, we have
    \begin{align}
      \sum_{k=1}^K \sum_{(s,a)\in\cS\times\cA} d_{h}^{\pi_{\star}}(s,a) \left[P_{h,s,a} \left(V_{h+1}^\star - V_{h+1}^{k}\right)\right]^2 &\overset{\mathrm{(i)}}{\leq} \sum_{k=1}^K \sum_{(s,a)\in\cS\times\cA} d_{h}^{\pi_{\star}}(s,a)P_{h,s,a} \left(V_{h+1}^\star - V_{h+1}^{k}\right)^2 \nonumber\\
      &\overset{\mathrm{(ii)}}{\leq} H \sum_{k=1}^K \sum_{(s,a)\in\cS\times\cA} d_{h}^{\pi_{\star}}(s,a)P_{h,s,a} \left(V_{h+1}^\star - V_{h+1}^{k}\right) \nonumber \\
      &\overset{\mathrm{(iii)}}{\leq} 2HK + c_\mathrm{a}^2 H^6SC^\star \iota. \label{eq:lcb-adv-bound-of-Z-key}
    \end{align}
    Here, (i) holds due to the non-negativity of the variance
    \begin{align}
      \Var_{h, s,a}(V_{h+1}^{\star} - \overline{V}_{h+1}^{k}) = P_{h, s,a}(V_{h+1}^{\star} - V_{h+1}^{k})^{2} - \left(P_{h, s,a}(V_{h+1}^{\star} - V_{h+1}^{k}) \right)^2 \geq 0;
    \end{align}
    (ii) follows from the basic property $\left\|V_{h+1}^{\star} - V_{h+1}^{k}\right\|_\infty \leq H$; to see why (iii) holds, we refer the reader to \eqref{equ:146-3}, which will be proven in Appendix~\ref{proof:eq:A-Y-upper-bound} as well. 
  Inserting \eqref{eq:lcb-adv-bound-of-Z-key} back into \eqref{eq:lcb-adv-bound-of-Z} yields
  \begin{align}
    \left|\sum_{k=1}^K \left(B_{h,k} - Z_{h,k}\right)\right| &\leq 8\sqrt{C^\star \log\frac{2H}{\delta} \left(2KH + c_\mathrm{a}^2 H^6SC^\star \iota\right) } + 8HC^\star\log\frac{2H}{\delta}\nonumber\\
    &\leq 16\sqrt{HC^\star K \log\frac{2H}{\delta} } + 16 c_\mathrm{a} H^{3}  C^\star \sqrt{S} \iota. \label{eq:lcb-adv-bound-of-Z-final-result}
  \end{align}
\end{itemize}

Substituting the inequalities~\eqref{lcb-adv-bound-A-Z} and \eqref{eq:lcb-adv-bound-of-Z-final-result} into \eqref{eq:lcb-adv-decompose-A}, and using the definitions in \eqref{eq:lcb-adv-recall-A-B-X-Y}, we arrive at 
\begin{align}
 &  A_h = \sum_{k=1}^K  \sum_{(s, a) \in \cS \times \cA} d_h^{\pi^\star}(s, a)  P_{h, s, a}\sum_{n=1}^{N_h^k(s,a)} \eta_n^{N_h^k(s,a)} \left(V_{h+1}^\star - V_{h+1}^{k^n(s,a)}\right) \nonumber\\
   &\leq \left(1+\frac{1}{H}\right) \sum_{s\in \cS} d_{h+1}^{\pi^\star}(s) \left(V_{h+1}^\star(s) - V_{h+1}^{k}(s)\right) + \sum_{k=1}^K \left(Z_{h,k} - B_{h,k}\right) + \sum_{k=1}^K \left(A_{h,k} - Y_{h,k}\right) \nonumber\\
   &\leq \left(1+\frac{1}{H}\right) \sum_{s\in \cS} d_{h+1}^{\pi^\star}(s) \left(V_{h+1}^\star(s) - V_{h+1}^{k}(s)\right) + 32\sqrt{HC^\star K \log\frac{2H}{\delta}} + 12c_\mathrm{a} H^{3} C^\star \sqrt{S} \iota^2 \nonumber \\
   &\qquad + 16\sqrt{HC^\star K \log\frac{2H}{\delta} } + 16 c_\mathrm{a} H^{3}  C^\star \sqrt{S} \iota \nonumber\\
   &\leq \left(1+\frac{1}{H}\right) \sum_{s\in \cS} d_{h+1}^{\pi^\star}(s) \left(V_{h+1}^\star(s) - V_{h+1}^{k}(s)\right) + 48\sqrt{HC^\star K \log\frac{2H}{\delta} } + 28 c_\mathrm{a} H^{3}  C^\star \sqrt{S} \iota^2,
\end{align}
which directly verifies \eqref{eq:lcb-adv-recursion-key} and completes the proof.

\subsubsection{Proof of inequality~\eqref{eq:A-Y-upper-bound}}\label{proof:eq:A-Y-upper-bound}
\paragraph{Step 1: rewriting the term of interest.}
We first invoke Jensen's inequality to obtain
\begin{align*}
  \Big(\sum_{n = 1}^{{N_h^k}} \eta_n^{N_h^k} P_{h, s,a}\left(V^{\star}_{h+1} - V^{ k^n}_{h+1}\right)\Big)^2  &\leq \sum_{n = 1}^{{N_h^k}} \eta_n^{N_h^k} \Big(P_{h, s,a}\left(V^{\star}_{h+1} - V^{ k^n}_{h+1}\right)\Big)^2 \leq \sum_{n = 1}^{{N_h^k}} \eta_n^{N_h^k} P_{h, s,a}\left(V^{\star}_{h+1} - V^{ k^n}_{h+1}\right)^2,
\end{align*}
where the first inequality follows from $\sum_{n = 1}^{N_h^k} \eta_n^{N_h^k} = 1$ (see \eqref{eq:sum-eta-n-N} and \eqref{equ:learning rate notation}), and the last inequality holds by the non-negativity of the variance $\mathsf{Var}_{h,s,a}[V^{\star}_{h+1} - V^{ k^n}_{h+1}]$. This allows one to derive
\begin{align}
  &\sum_{k=1}^K  \sum_{(s,a)\in\cS\times\cA} d_{h}^{\pi_{\star}}(s,a) \left[\sum_{n=1}^{N_h^k(s,a)} \eta_n^{N_h^k(s,a)} P_{h,s,a}\left(V^{\star}_{h+1} - V^{ k^n}_{h+1}\right)\right]^2 \nonumber\\
  & \leq \sum_{k=1}^K  \sum_{(s,a)\in\cS\times\cA} d_{h}^{\pi_{\star}}(s,a) P_{h, s,a} \sum_{n = 1}^{{N_h^k}} \eta_n^{N_h^k} \left(V^{\star}_{h+1} - V^{ k^n}_{h+1}\right)^2\nonumber \\
  &  \overset{\mathrm{(i)}}{\leq}
 \left(1+\frac{1}{H}\right) \sum_{k=1}^K  \sum_{s\in \cS} d_{h+1}^{\pi^\star}(s) \left(V^{\star}_{h+1}(s) - V^{ k}_{h+1}(s)\right)^2 + 32\sqrt{H^4C^\star K \log\frac{2H}{\delta}} + 32H^2C^\star\log\frac{2H}{\delta}, \label{eq:recursion-bound-2}
\end{align}
where (i) can be verified in a way similar to the proof of Lemma~\ref{lemma:recursion} in Appendix~\ref{proof:lemma-lcb-revursion}. We omit the details for conciseness.

\paragraph{Step 2: controlling the first term in \eqref{eq:recursion-bound-2}.}\label{sec:proof:equ:bound-for-reference-sum}
Let us introduce the following short-hand notation 
\begin{align*}
k_{\mathsf{stop}} \coloneqq c_\mathrm{a}^2 H^5SC^\star \iota,
\end{align*}
and decompose the term  in \eqref{eq:recursion-bound-2} as follows
    \begin{align}
   & \sum_{s\in\cS} d_{h+1}^{\pi^\star}(s) \sum_{k=1}^{K}  \left(V_{h+1}^{\star}(s) - V_{h+1}^{k}(s)\right)^2 \overset{\mathrm{(i)}}{\leq} H \sum_{k=1}^{K} \sum_{s\in\cS} d_{h+1}^{\pi^\star}(s) \left(V_{h+1}^{\star}(s) - V_{h+1}^{k}(s)\right) \nonumber\\
   &= H\sum_{k = 1}^{k_\mathsf{stop}} \sum_{s \in \cS} d_{h+1}^{\pi^\star}(s) \big(V^{\star}_{h+1}(s) - V_{h+1}^{k}(s)\big) + H\sum_{k = k_\mathsf{stop}+1}^{K} \sum_{s \in \cS}  d_{h+1}^{\pi^\star}(s) \big(V^{\star}_{h+1}(s) -V_{h+1}^{k}(s)\big).  \label{equ:adv-L-1} 
    \end{align}
Here, (i) holds since $0\leq V_{h+1}^{\star}(s) - V_{h+1}^{k}(s) \leq H$.
The first term in \eqref{equ:adv-L-1} satisfies
\begin{align}
  H\sum_{k = 1}^{k_\mathsf{stop}} \sum_{s \in \cS} d_{h+1}^{\pi^\star}(s) \big( V^{\star}_{h+1}(s) - V_{h+1}^{k}(s) \big) \leq H \left( c_\mathrm{a}\sqrt{H^5SC^\star \iota k_\mathsf{stop}} +  c_\mathrm{a} H^2 SC^\star \iota \right) \leq c_\mathrm{a}^2 H^6SC^\star \iota, \label{equ:adv-L-1-term1}
\end{align}
where the first inequality holds by applying the results of \LCBQ in \eqref{eq:lcb-final-result-H-layers} with $K = k_\mathsf{stop}$.
The second term in \eqref{equ:adv-L-1} can be controlled as follows:
    \begin{align}
  H\sum_{k = k_\mathsf{stop}+1}^{K} \sum_{s \in \cS}  d_{h+1}^{\pi^\star}(s) \big(V^{\star}_{h+1}(s) -V_{h+1}^{k}(s) \big) 
  &\leq HK \sum_{s\in\cS} d_{h+1}^{\pi^\star}(s) \left(V^{\star}_{h+1}(s) - V^{k_{\mathsf{stop}}}_{h+1}(s)\right) \nonumber \\
  &\leq HK\frac{1}{ k_{\mathsf{stop}}}\sum_{k=1}^{ k_{\mathsf{stop}}} \sum_{s\in\cS} d_{h+1}^{\pi^\star}(s) \left(V^{\star}_{h+1}(s) - V^{k}_{h+1}(s)\right) \nonumber\\
  &\leq HK \left(c_\mathrm{a}  \sqrt{\frac{H^5SC^\star \iota}{k_{\mathsf{stop}}}} + \frac{c_\mathrm{a} H^2 SC^\star \iota }{k_{\mathsf{stop}}}\right) \leq 2HK, \label{equ:adv-L-1-term2}
  \end{align}
where the first and the second inequalities hold by the monotonicity property $V_{h+1}^{k+1} \geq V_{h+1}^k$ introduced in \eqref{equ:monotone-lcb-adv}, and the final inequality follows from applying \eqref{eq:lcb-final-result-H-layers}.

Inserting the results in \eqref{equ:adv-L-1-term1} and \eqref{equ:adv-L-1-term2} into \eqref{equ:adv-L-1} yields
\begin{align}
&\sum_{s\in\cS} d_{h+1}^{\pi^\star}(s) \sum_{k=1}^{K}  \left(V_{h+1}^{\star}(s) - V_{h+1}^{k}(s)\right)^2  
 \leq H \sum_{k=1}^{K} \sum_{s\in\cS} d_{h+1}^{\pi^\star}(s) \left(V_{h+1}^{\star}(s) - V_{h+1}^{k}(s)\right)\leq  2HK + c_\mathrm{a}^2 H^6SC^\star \iota. \label{equ:146-3}
\end{align}

\paragraph{Step 3: combining the above results.}
Inserting the above result \eqref{equ:146-3} back into \eqref{eq:recursion-bound-2}, we reach:
\begin{align}
 &\sum_{k=1}^K  \sum_{(s,a)\in\cS\times\cA} d_{h}^{\pi_{\star}}(s,a) \left[\sum_{n=1}^{N_h^k(s,a)} \eta_n^{N_h^k(s,a)} P_{h,s,a}\left(V^{\star}_{h+1} - V^{ k^n}_{h+1}\right)\right]^2 \nonumber \\
 &\leq 
 \left(1+\frac{1}{H}\right) \sum_{k=1}^K  \sum_{s\in \cS} d_{h+1}^{\pi^\star}(s) \left(V^{\star}_{h+1} - V^{ k}_{h+1}\right)^2 + 32\sqrt{H^4C^\star K \log\frac{2H}{\delta}} + 32H^2C^\star\log\frac{2H}{\delta} \nonumber\\
 &\overset{\mathrm{(i)}}{\leq}  4HK + 2c_\mathrm{a}^2 H^6SC^\star \iota + 32\sqrt{H^4C^\star K \log\frac{2H}{\delta}} + 32H^2C^\star\log\frac{2H}{\delta} \nonumber\\
 &\overset{\mathrm{(ii)}}{\leq}  36HK +3c_\mathrm{a}^2 H^6SC^\star \iota,  \label{eq:lcb-adv-important1}
\end{align}
where (i) holds due to \eqref{equ:146-3} and $1+\frac{1}{H} \leq 2$, and (ii) results from  the Cauchy-Schwarz inequality.
\subsection{Proof of Lemma~\ref{lemma:lcb-adv-bound-terms}}\label{proof:lemma:lcb-adv-bound-terms}

We shall verify the three inequalities in \eqref{eq:lemma6} separately.
\subsubsection{Proof of inequality~\eqref{eq:lemma6-a}}
We start by rewriting the term of interest using the expression of $J_h^1$ in \eqref{eq:Jh123} as
\begin{align}
  &\sum_{h=1}^H \left(1+\frac{1}{H}\right)^{h-1} J_h^1 \nonumber \\
  &= \sum_{h=1}^H \left(1+\frac{1}{H}\right)^{h-1} \sum_{k=1}^K \sum_{(s, a) \in \cS \times \cA} d_h^{\pi^\star}(s, a)\left[\eta_0^{N_h^k(s,a)}H + \frac{4\cb H^{7/4} \iota}{\left(N_h^k(s,a) \vee 1 \right)^{3/4} } + \frac{4\cb H^{2} \iota}{N_h^k(s,a) \vee 1  }\right] \nonumber\\
  & = \underbrace{\sum_{h=1}^H \left(1+\frac{1}{H}\right)^{h-1} \sum_{k=1}^K \sum_{(s, a) \in \cS \times \cA} d_h^{\pi^\star}(s, a) \eta_0^{N_h^k(s,a)}H}_{=:\mathcal{J}_1^1} + \underbrace{\sum_{h=1}^H \left(1+\frac{1}{H}\right)^{h-1} \sum_{k=1}^K \sum_{(s, a) \in \cS \times \cA} d_h^{\pi^\star}(s, a)\frac{4\cb H^{7/4} \iota}{(N_h^k(s,a) \vee 1)^{3/4}}}_{=:\mathcal{J}_1^2} \nonumber\\
  & \quad + \underbrace{\sum_{h=1}^H \left(1+\frac{1}{H}\right)^{h-1} \sum_{k=1}^K \sum_{(s, a) \in \cS \times \cA} d_h^{\pi^\star}(s, a)\frac{4\cb H^2 \iota}{N_h^k(s,a) \vee 1}}_{=:\mathcal{J}_1^3}.
  \end{align}
Invoking \eqref{equ:visit-cover-bound} and \eqref{equ:algebra property} yields
\begin{align}
    \mathcal{J}_1^1 \lesssim H^2 SC^\star \iota. \label{equ:adv-J-11-result}
\end{align}
In terms of $\mathcal{J}_1^2$, one has 
\begin{align}
  \mathcal{J}_1^2 &= \sum_{h=1}^H \left(1+\frac{1}{H}\right)^{h-1} \sum_{k=1}^K \sum_{(s, a) \in \cS \times \cA} d_h^{\pi^\star}(s, a)\frac{4\cb H^{7/4} \iota}{(N_h^k(s,a) \vee 1)^{\frac{3}{4}}} \nonumber\\
  &\overset{\mathrm{(i)}}{\lesssim} H^{7/4}\iota^2 \sum_{h=1}^H \sum_{k=1}^K \sum_{(s, a) \in \cS \times \cA} d_h^{\pi^\star}(s, a)\frac{1}{(k d_h^{\mu}(s,a))^{\frac{3}{4}}} \nonumber\\
  &\overset{\mathrm{(ii)}}{\lesssim} H^{7/4}\iota^2 (C^\star)^{\frac{3}{4}}\sum_{h=1}^H   \sum_{k=1}^K \frac{1}{k^{\frac{3}{4}}} \sum_{(s,a)\in \cS\times \cA} \left(d_h^{\pi^\star}(s, a)\right)^{\frac{1}{4}} \nonumber \\
  & = H^{7/4}\iota^2 (C^\star)^{\frac{3}{4}}\sum_{h=1}^H \sum_{k=1}^K \frac{1}{k^{\frac{3}{4}}} \sum_{(s,a)\in \cS\times \cA} \ind\big(a = \pi^\star_h(s)\big) \left( d_h^{\pi^\star}(s, a)\right)^{\frac{1}{4}}, \nonumber 
\end{align}
where (i) holds due to \eqref{equ:algebra property} and $\frac{1}{N_h^k(s,a) \vee 1} \leq \frac{8\iota}{k d_h^{\mu}(s,a)}$ from Lemma~\ref{lem:binomial},
and (ii) follows from the definition of $C^{\star}$ in Assumption~\ref{assumption}.
A direct application of H\"older's inequality leads to 
\begin{align}
 \mathcal{J}_1^2 &\leq H^{7/4}\iota^2 (C^\star)^{\frac{3}{4}}\sum_{h=1}^H\sum_{k=1}^K \frac{1}{k^{\frac{3}{4}}}  \left(\sum_{(s,a)\in \cS\times \cA} \ind(a = \pi^\star_h(s)) \right)^{3/4} \left(\sum_{(s,a)\in \cS\times \cA}  d_h^{\pi^\star}(s, a)\right)^{1/4} \nonumber \\
  & \overset{\mathrm{(iii)}}{\leq}  H^{7/4}\iota^2 (SC^\star)^{\frac{3}{4}}\sum_{h=1}^H \sum_{k=1}^K \frac{1}{k^{\frac{3}{4}}}  \lesssim H^{2.75}(SC^\star)^{\frac{3}{4}}  K^{\frac{1}{4}}\iota^2, \label{equ:adv-J-12-result}
\end{align}
where (iii) follows since $\pi^\star$ is assumed to be a deterministic policy. 

Similarly, we can derive an upper bound on $\cJ_1^3$ as follows:
\begin{align}
  \cJ_1^3 &= \sum_{h=1}^H \left(1+\frac{1}{H}\right)^{h-1} \sum_{k=1}^K \sum_{(s, a) \in \cS \times \cA} d_h^{\pi^\star}(s, a)\frac{4\cb H^2 \iota}{N_h^k(s,a) \vee 1} \nonumber \\
  &\overset{\mathrm{(i)}}{\lesssim}  H^2\iota^2 \sum_{h=1}^H \sum_{k=1}^K \sum_{(s, a) \in \cS \times \cA} \frac{d_h^{\pi^\star}(s, a)}{kd_h^{\mu}(s,a)} \lesssim H^3 SC^\star \iota^3, \label{equ:adv-J-13-result}
\end{align}
where (i) follows from the result in \eqref{equ:algebra property} and the fact $\frac{1}{N_h^k(s,a) \vee 1} \leq \frac{8\iota}{k d_h^{\mu}(s,a)}$ (cf.~Lemma~\ref{lem:binomial}), and the last relation results from the definition of $C^{\star}$ (cf.~Assumption~\ref{assumption}) and the assumption that $\pi^\star$ is a deterministic policy. 

Putting the preceding results \eqref{equ:adv-J-11-result}, \eqref{equ:adv-J-12-result} and \eqref{equ:adv-J-13-result} together, we conclude that
\begin{align}
  &\sum_{h=1}^H \left(1+\frac{1}{H}\right)^{h-1} J_h^1 \lesssim H^{2.75}(SC^\star)^{\frac{3}{4}}  K^{\frac{1}{4}}\iota^2 +  H^3 SC^\star \iota^3.
\end{align}

\subsubsection{Proof of inequality~\eqref{eq:lemma6-b}}
Making use of the definition of $\overline{B}_h^k(s,a)$ (cf.~\eqref{eq:line-number-19}) in the expression of $J_h^2$ (cf.~\eqref{eq:Jh123}), we obtain
\begin{align}
\sum_{h=1}^H \left(1+\frac{1}{H}\right)^{h-1} J_h^2 & =2\sum_{h=1}^H \left(1+\frac{1}{H}\right)^{h-1} \sum_{k=1}^K \sum_{(s,a)\in \cS \times \cA} d_h^{\pi^\star}(s, a)\overline{\sumb}_h^{k}(s,a) \nonumber\\
& = 2 \sum_{h=1}^H\left(1+\frac{1}{H}\right)^{h-1}\cb\sqrt{H\iota } \sum_{(s,a)\in \cS \times \cA} d_h^{\pi^\star}(s, a)\sum_{k = 1}^K\sqrt{\frac{\sigma^{\adv, k}_h(s, a) - \big(\mu^{\adv, k}_h(s, a)\big)^2}{N_h^k(s, a) \vee 1}} \nonumber \\
   &\qquad + 2\sum_{h = 1}^H \left(1+\frac{1}{H}\right)^{h-1} \cb\sqrt{\iota}  \sum_{(s,a)\in \cS \times \cA} d_h^{\pi^\star}(s, a)\sum_{k = 1}^K \sqrt{\frac{\sigma^{\re, k}_h(s, a) - \big(\mu^{\re, k}_h(s, a)\big)^2}{N_h^k(s, a) \vee 1}} \nonumber \\
&\lesssim \underbrace{\sqrt{H\iota } \sum_{h=1}^H \sum_{(s,a)\in \cS \times \cA} d_h^{\pi^\star}(s, a) \sum_{k = 1}^K\sqrt{\frac{\sigma^{\adv, k}_h(s, a) - \big(\mu^{\adv, k}_h(s, a)\big)^2}{N_h^k(s,a) \vee 1}}}_{=:\mathcal{J}_2^1} \nonumber \\
   &\qquad + \underbrace{\sqrt{\iota} \sum_{h = 1}^H \sum_{(s,a)\in \cS \times \cA} d_h^{\pi^\star}(s, a)\sum_{k = 1}^K \sqrt{\frac{\sigma^{\re, k}_h(s, a) - \big(\mu^{\re, k}_h(s, a)\big)^2}{N_h^k(s,a) \vee 1}}}_{=:\mathcal{J}_2^2},\label{equ:bound of b_hat}
\end{align}
where the last inequality follows from \eqref{equ:algebra property}.
In the following, we shall look at the two terms in \eqref{equ:bound of b_hat} separately. 

\paragraph{Step 1: controlling $\mathcal{J}_2^1$.}
Recalling the expressions of ${\sigma}_h^{\adv, k}(s,a)= {\sigma}_h^{\adv, k^{N_h^k}+1}(s,a) $ in \eqref{eq:recursion_mu_sigma_adv}, we observe that the main part of $\mathcal{J}_2^1$ in \eqref{equ:bound of b_hat} satisfies
\begin{align}
&\sum_{h=1}^H \sum_{(s,a)\in \cS \times \cA} d_h^{\pi^\star}(s, a) \sum_{k = 1}^K\sqrt{\frac{\sigma^{\adv, k}_h(s, a)- \left(\mu^{\adv, k}_h(s, a)\right)^2}{N_h^k(s,a) \vee 1}} \le \sqrt{\iota} \sum_{h=1}^H \sum_{(s,a)\in \cS \times \cA}  \sum_{k = 1}^K\sqrt{d_h^{\pi^\star}(s, a)\frac{d_h^{\pi^\star}(s, a) \cdot\sigma^{\adv, k}_h(s, a)}{k d_h^{\mu}(s,a)}} \nonumber \\
&= \sqrt{\iota} \sum_{h=1}^H \sum_{(s,a)\in \cS \times \cA}  \sum_{k = 1}^K\sqrt{d_h^{\pi^\star}(s, a)\frac{d_h^{\pi^\star}(s, a)  \sum_{n = 1}^{N^{k}_h(s, a)} \eta^{N^{k}_h(s, a)}_n P_h^{k^n}\left(V^{k^n}_{h+1} - \overline{V}^{k^n}_{h+1}\right)^2}{k d_h^{\mu}(s,a)}} \nonumber \\
& \overset{\mathrm{(i)}}{\leq}  \sqrt{ C^\star \iota}\sum_{h=1}^H \sum_{(s,a)\in \cS \times \cA} \sum_{k = 1}^K\sqrt{\frac{1}{k} \ind\big(a = \pi^\star_h(s) \big) \cdot  d_h^{\pi^\star}(s, a) \sum_{n = 1}^{N^{k}_h(s, a)} \eta^{N^{k}_h(s, a)}_n P_h^{k^n}\left(V^{k^n}_{h+1} - \overline{V}^{k^n}_{h+1}\right)^2 } \nonumber \\
& \overset{\mathrm{(ii)}}{\leq} \sqrt{ C^\star \iota} \sqrt{\sum_{k = 1}^K \sum_{h=1}^H \sum_{(s,a)\in \cS \times \cA} d_h^{\pi^\star}(s, a) \sum_{n = 1}^{N^{k}_h(s, a)} \eta^{N^{k}_h(s, a)}_n P_h^{k^n}\left(V^{k^n}_{h+1} - \overline{V}^{k^n}_{h+1}\right)^2 }  \sqrt{\sum_{k = 1}^K \sum_{h=1}^H \sum_{(s,a)\in \cS \times \cA}  \ind\big(a = \pi^\star_h(s)\big) \frac{1}{k}} \nonumber \\
&\lesssim \sqrt{HSC^\star \iota^2 }  \sqrt{\sum_{k = 1}^K \sum_{h=1}^H \sum_{(s,a)\in \cS \times \cA} d_h^{\pi^\star}(s, a) \sum_{n = 1}^{N^{k}_h(s, a)} \eta^{N^{k}_h(s, a)}_n P_h^{k^n}\left(V^{k^n}_{h+1} - \overline{V}^{k^n}_{h+1}\right)^2 }   \label{eq:proof of var of var prelim},
\end{align}
where the first inequality is due to the fact $\frac{1}{N_h^k(s,a) \vee 1} \leq \frac{8\iota}{k d_h^{\mu}(s,a)}$ from Lemma~\ref{lem:binomial}, (i) follows from the definition of $C^{\star}$ in Assumption~\ref{assumption} and \eqref{eq:d_h_pi_star}, and (ii) follows from the Cauchy-Schwarz inequality.
To continue, we claim the following bound holds, which will be proven in Appendix~\ref{sec:proof:equ:adv-J-21-E}: 
\begin{align}
  & \sum_{k = 1}^K \sum_{h=1}^H \sum_{(s,a)\in \cS \times \cA} d_h^{\pi^\star}(s, a) \sum_{n = 1}^{N^{k}_h(s, a)} \eta^{N^{k}_h(s, a)}_n P_h^{k^n}\left(V^{k^n}_{h+1} - \overline{V}^{k^n}_{h+1}\right)^2  \nonumber \\
   & \lesssim   H^2 \max_{h\in [H] } \sum_{k=1}^K \sum_{s\in \cS} d_{h}^{\pi^\star}(s) \left(V^{\star}_{h}(s) - V^{k}_{h}(s)\right) + K + H^5\sqrt{S}C^\star\iota^2. \label{equ:adv-J-21-1}
\end{align}
Combining the above inequality with \eqref{eq:proof of var of var prelim}, we arrive at 
\begin{align}
  \mathcal{J}_2^1  &   \lesssim \sqrt{H^2SC^\star \iota^3 } \sqrt{H^2 \max_{h\in [H] } \sum_{k=1}^K \sum_{s\in \cS} d_{h}^{\pi^\star}(s) \left(V^{\star}_{h}(s) - V^{k}_{h}(s)\right) + K + H^5\sqrt{S}C^\star\iota^2} \nonumber \\
  &\lesssim \sqrt{H^4SC^\star \iota^3  \max_{h\in [H] } \sum_{k=1}^K \sum_{s\in \cS} d_{h}^{\pi^\star}(s) \left(V^{\star}_{h}(s) - V^{k}_{h}(s)\right) } + \sqrt{H^2SC^\star K \iota^3}  + H^{3.5}SC^\star \iota^{2.5}.\label{equ:adv-J-21-result}
\end{align}

\paragraph{Step 2: controlling $\mathcal{J}_2^2$.}
Recalling the expressions of $ {\mu}_h^{\re, k+1}(s,a) = {\mu}_h^{\re, k^{N_h^k}+1}(s,a)$ and $ {\sigma}_h^{\re, k +1}(s,a) = {\sigma}_h^{\re, k^{N_h^k}+1}(s,a) $ in \eqref{eq:recursion_mu_sigma_ref} to $\mathcal{J}_2^2$ in \eqref{equ:bound of b_hat}, we can deduce that
\begin{align}
& \mathcal{J}_2^2 =\sqrt{\iota}  \sum_{h = 1}^H \sum_{(s,a)\in \cS \times \cA} d_h^{\pi^\star}(s, a)\sum_{k = 1}^K \sqrt{\frac{\sigma^{\re, k}_h(s, a) - \left(\mu^{\re, k}_h(s, a)\right)^2}{N_h^k(s,a) \vee 1}} \nonumber \\
  & \leq \sqrt{\iota}    \sum_{h = 1}^H \sum_{(s,a)\in \cS \times \cA} d_h^{\pi^\star}(s, a)\sum_{k = 1}^K\sqrt{\frac{1}{N_h^k(s,a) \vee 1}} \underbrace{ \sqrt{ \frac{\sum_{n = 1}^{N^{k}_h(s, a)}\left(\overline{V}^{\nnext, k^n}_{h+1}(s^{k^n}_{h+1})\right)^2}{N^{k}_h(s, a) \vee 1} - \Big(\frac{\sum_{n = 1}^{N^{k}_h(s, a)} \overline{V}^{\nnext, k^n}_{h+1}(s^{k^n}_{h+1})}{N^{k}_h(s, a) \vee 1}\Big)^2 }  }_{=: F_{h,k}} \label{equ:adv-var1}.
  \end{align}
We further decompose and bound $F_{h,k}$ as follows:  
  \begin{align}
    & F_{h,k} 
     \overset{\mathrm{(i)}}{\leq}   \sqrt{\frac{\sum_{n = 1}^{N^{k}_h(s, a)}\left(V^{\star}_{h+1}(s^{k^n}_{h+1})\right)^2}{N^{k}_h(s, a) \vee 1} - \Big(\frac{\sum_{n = 1}^{N^{k}_h(s, a)} \overline{V}^{\nnext, k^n}_{h+1}(s^{k^n}_{h+1})}{N^{k}_h(s, a) \vee 1}\Big)^2} \nonumber\\
  &   = \sqrt{\frac{\sum_{n = 1}^{N^{k}_h(s,a)}\left(V^{\star}_{h+1}(s^{k^n}_{h+1})\right)^2}{N^{k}_h(s,a) \vee 1} - \Big(\frac{\sum_{n = 1}^{N^{k}_h(s,a)}V^{\star}_{h+1}(s^{k^n}_{h+1})}{N^{k}_h(s,a) \vee 1}\Big)^2 + \Big(\frac{\sum_{n = 1}^{N^{k}_h(s, a)}V^{\star}_{h+1}(s^{k^n}_{h+1})}{N^{k}_h(s, a) \vee 1}\Big)^2 - \Big(\frac{\sum_{n = 1}^{N^{k}_h(s, a)} \overline{V}^{\nnext,k^n}_{h+1}(s^{k^n}_{h+1})}{N^{k}_h(s, a) \vee 1}\Big)^2} \nonumber\\
&\overset{\mathrm{(ii)}}{\leq}  \underbrace{ \sqrt{\frac{\sum_{n = 1}^{N^{k}_h(s,a) }\left(V^{\star}_{h+1}(s^{k^n}_{h+1})\right)^2}{N^{k}_h(s,a) \vee 1} - \Big(\frac{\sum_{n = 1}^{N^{k}_h(s,a)}V^{\star}_{h+1}(s^{k^n}_{h+1})}{N^{k}_h(s,a) \vee 1}\Big)^2}}_{G_{h,k}} + \underbrace{ \sqrt{\frac{\sum_{n = 1}^{N^{k}_h(s,a)}2H \left(V^{\star}_{h+1}(s^{k^n}_{h+1}) - \overline{V}^{\nnext,k^n}_{h+1}(s^{k^n}_{h+1})\right)}{N^{k}_h(s,a) \vee 1} }}_{=: L_{h,k}} ,\label{equ:adv-J22-terms}
    \end{align}
where (i) follows from the fact that for some $k'\in[K]$, $\overline{V}^{\nnext, k^n}_{h+1}  = V^{k'}_{h+1} \leq V^{\star}_{h+1}$ (see the update rule of $\overline{V}^{\nnext}$ in line~\ref{eq:update-mu-reference-v-next-k} and the fact in \eqref{eq:lcb-adv-lower}), and (ii) holds due to the fact that 
$$\Big(\frac{\sum_{n = 1}^{N^{k}_h(s, a)}V^{\star}_{h+1}(s^{k^n}_{h+1})}{N^{k}_h(s, a) \vee 1}\Big)^2 - \Big(\frac{\sum_{n = 1}^{N^{k}_h(s, a)} \overline{V}^{\nnext, k^n}_{h+1}(s^{k^n}_{h+1})}{N^{k}_h(s, a) \vee 1}\Big)^2\leq 2H \frac{\sum_{n = 1}^{N^{k}_h(s, a)}\left(V^{\star}_{h+1}(s^{k^n}_{h+1}) - \overline{V}^{\nnext, k^n}_{h+1}(s^{ k^n}_{h+1})\right)}{N^{k}_h(s, a) \vee 1} .$$
Inserting \eqref{equ:adv-J22-terms} back into \eqref{equ:adv-var1}, we arrive at
\begin{align}
  \notag \mathcal{J}_2^2 & \leq \sqrt{\iota}    \sum_{h = 1}^H \sum_{(s,a)\in \cS \times \cA} d_h^{\pi^\star}(s, a)\sum_{k = 1}^K\sqrt{\frac{1}{N_h^k(s,a) \vee 1}}  \left(G_{h,k} + L_{h,k}\right)\\
    &\overset{\mathrm{(i)}}{\lesssim} \sqrt{\iota} \left(\sqrt{H^3S C^\star K \iota^4} + H^{4} S C^\star \iota^3 + \sqrt{H^3 S C^\star K\iota^2} + H^{2.5}SC^\star\iota^3 \right)\lesssim \sqrt{H^3S C^\star K \iota^5} + H^{4} S C^\star \iota^4, \label{equ:adv-J-22-result}
\end{align}
where (i) follows from the following facts 
\begin{align}
  &\sum_{h = 1}^H \sum_{(s,a)\in \cS \times \cA} d_h^{\pi^\star}(s, a)\sum_{k = 1}^K\sqrt{\frac{1}{N^{k}_h(s, a) \vee 1}} L_{h,k}
 \lesssim \sqrt{H^3S C^\star K \iota^4} + H^{4} S C^\star \iota^3,\label{adv-J-22-L}\\
&\sum_{h = 1}^H \sum_{(s,a)\in \cS \times \cA} d_h^{\pi^\star}(s, a)\sum_{k = 1}^K\sqrt{\frac{1}{N^{k}_h(s, a) \vee 1}} G_{h,k}\lesssim  \sqrt{H^3 S C^\star K\iota^2} + H^{2.5}SC^\star\iota^3. \label{adv-J-22-G}
\end{align}
We postpone the proofs of \eqref{adv-J-22-L} and \eqref{adv-J-22-G} to Appendix~\ref{proof:adv-J-22-L} and Appendix~\ref{proof:adv-J-22-G}, respectively.

\paragraph{Putting the bounds together.}
Substitute \eqref{equ:adv-J-21-result} and  \eqref{equ:adv-J-22-result} back into \eqref{equ:bound of b_hat} to yield
\begin{align*}
 \sum_{h=1}^H \left(1+\frac{1}{H}\right)^{h-1} J_h^2 & \lesssim \sqrt{H^4SC^\star \iota^3  \max_{h\in [H] } \sum_{k=1}^K \sum_{s\in \cS} d_{h}^{\pi^\star}(s) \left(V^{\star}_{h}(s) - V^{k}_{h}(s)\right) }  + \sqrt{H^2SC^\star K \iota^3}  + H^{3.5}SC^\star \iota^{2.5} \\
 &\qquad + \sqrt{H^3S C^\star K \iota^5} + H^{4} S C^\star \iota^4\\
 &\lesssim \sqrt{H^4SC^\star \iota^3  \max_{h\in [H] } \sum_{k=1}^K \sum_{s\in \cS} d_{h}^{\pi^\star}(s) \left(V^{\star}_{h}(s) - V^{k}_{h}(s)\right) } + \sqrt{H^3S C^\star K \iota^5} + H^{4} S C^\star \iota^4.
\end{align*}

\subsubsection{Proof of inequality~\eqref{eq:lemma6-c}}

Invoking inequality~\eqref{equ:algebra property} directly leads to 
\begin{align*}
  \sum_{h=1}^H \left(1+\frac{1}{H}\right)^{h-1} \left(48\sqrt{HC^\star K \log\frac{2H}{\delta} } + 28 c_\mathrm{a} H^{3}  C^\star \sqrt{S} \iota^2 \right) &\lesssim \sqrt{H^3C^\star K \log\frac{2H}{\delta}} + H^{4}C^\star \sqrt{S} \iota^2 
\end{align*}
as claimed.

\subsubsection{Proof of inequality \eqref{equ:adv-J-21-1}}
\label{sec:proof:equ:adv-J-21-E}
We shall control the term in \eqref{equ:adv-J-21-1} in a way similar to the proof of Lemma~\ref{lemma:recursion} in Appendix~\ref{proof:lemma-lcb-revursion}.

\paragraph{Step 1: decomposing the terms of interest.}
Akin to Appendix~\ref{proof:lemma-lcb-revursion}, let us introduce the terms of interest and definitions as follows:
\begin{align}
  A_h & \defn \sum_{k=1}^K  \underbrace{\sum_{(s, a) \in \cS \times \cA} d_h^{\pi^\star}(s, a)  \sum_{n=1}^{N_h^k(s,a)} \eta_n^{N_h^k(s,a)} P_h^{k^n}\left(V^{k^n}_{h+1} - \overline{V}^{k^n}_{h+1}\right)^2}_{\eqqcolon A_{h,k}},\nonumber \\
  B_{h,k} &\defn \left(1+\frac{1}{H}\right) \sum_{s\in \cS} d_{h+1}^{\pi^\star}(s) \left(V^{k}_{h+1}(s) - \overline{V}^{k}_{h+1}(s)\right)^2,\nonumber \\
  Y_{h,k} &= \frac{d_{h}^{\pi_{\star}}(s_{h}^{k},a_{h}^{k})}{d_{h}^{\mu}(s_{h}^{k},a_{h}^{k})}  \sum_{n=1}^{N_{h}^{k}(s_{h}^{k},a_{h}^{k})}\eta_{n}^{N_{h}^{k}(s_{h}^{k},a_{h}^{k})}P_h^{k^n}\left(V^{k^n}_{h+1} - \overline{V}^{k^n}_{h+1}\right)^2,\nonumber \\
  Z_{h,k} &= \left(1+\frac{1}{H}\right) \frac{d_{h}^{\pi_{\star}}(s_{h}^{k},a_{h}^{k})}{d_{h}^{\mu}(s_{h}^{k},a_{h}^{k})} P_h^{k} \left(V^{k}_{h+1} - \overline{V}^{k}_{h+1}\right)^2. \label{eq:lcb-adv-recall-A-B-X-Y-2}
\end{align}
With these definitions in place, we directly adapt the argument in \eqref{eq:recursion-extra-error} to arrive at
\begin{align}
  A_h \leq \sum_{k=1}^K B_{h,k} + \sum_{k=1}^K \left(Z_{h,k} - B_{h,k}\right) + \sum_{k=1}^K \left(A_{h,k} - Y_{h,k}\right). \label{eq:lcb-adv-decompose-A-2}
\end{align}
As a consequence, it remains to control $\sum_{k=1}^K \left(Z_{h,k} - B_{h,k}\right)$ and $\sum_{k=1}^K \left(A_{h,k} - Y_{h,k}\right)$ separately.

\paragraph{Step 2: controlling $\sum_{k=1}^K \left(A_{h,k} - Y_{h,k}\right)$.}
To control $\sum_{k=1}^K \left(A_{h,k} - Y_{h,k}\right)$, we resort to Lemma~\ref{lemma:martingale-union-recursion} by setting
\begin{align}
  W_{h+1}^{k}(s,a) \coloneqq \sum_{n=1}^{N_h^k(s,a)} \eta_n^{N_h^k(s,a)} \left(V^{k^n}_{h+1} - \overline{V}^{k^n}_{h+1}\right)^2, \qquad C_{\mathrm{d}} \coloneqq 1, \label{eq:lcb-adv-A-Y-w}
\end{align}
which satisfies
\begin{align*}
    \left\|W_{h+1}^{k}(s,a)\right\|_\infty \leq 4H^2 \eqqcolon C_{\mathrm{w}}.
\end{align*}
Applying Lemma~\ref{lemma:martingale-union-recursion} with \eqref{eq:lcb-adv-A-Y-w} yields that: with probability at least $1-\delta$,
\begin{align}
   &\left|\sum_{k=1}^K \left(A_{h,k} - Y_{h,k}\right)\right| = \left|\sum_{k=1}^K \overline{X}_{h,k}\right|\nonumber \\
   &\leq  \sqrt{\sum_{k=1}^K 8C_{\mathrm{d}}^2 C^\star \sum_{(s,a)\in\cS\times\cA} d_{h}^{\pi_{\star}}(s,a) P_{h,s,a}\left[W_{h+1}^{k}(s,a)\right]^2 \log\frac{2H}{\delta}} + 2C_{\mathrm{d}} C^\star C_\mathrm{w}\log\frac{2H}{\delta} \nonumber \\
   & \lesssim  \sqrt{C^\star \log\frac{2H}{\delta}\sum_{k=1}^K  \sum_{(s,a)\in\cS\times\cA} d_{h}^{\pi_{\star}}(s,a) P_{h,s,a}\left[\sum_{n=1}^{N_h^k(s,a)} \eta_n^{N_h^k(s,a)} \left(V^{k^n}_{h+1} - \overline{V}^{k^n}_{h+1}\right)^2 \right]^2 } + C^\star H^2\log\frac{2H}{\delta} \label{lcb-adv-bound-A-Y-2}.
\end{align}
To further 
 control the first term in \eqref{lcb-adv-bound-A-Y-2}, it follows from Jensen's inequality that 
\begin{align}
  P_{h, s,a}\left[\sum_{n = 1}^{{N_h^k}} \eta_n^{N_h^k} \left(V^{k^n}_{h+1} - \overline{V}^{ k^n}_{h+1}\right)^2\right]^2  &\leq P_{h, s,a}\sum_{n = 1}^{{N_h^k}} \eta_n^{N_h^k} \Big(V^{k^n}_{h+1} - \overline{V}^{ k^n}_{h+1}\Big)^4,
\end{align}
which yields
\begin{align}
 & \sum_{k=1}^K  \sum_{(s,a)\in\cS\times\cA} d_{h}^{\pi_{\star}}(s,a) P_{h,s,a}\left[\sum_{n=1}^{N_h^k(s,a)} \eta_n^{N_h^k(s,a)} \left(V^{k^n}_{h+1} - \overline{V}^{k^n}_{h+1}\right)^2 \right]^2 \nonumber \\
 &\leq \sum_{k=1}^K  \sum_{(s,a)\in\cS\times\cA} d_{h}^{\pi_{\star}}(s,a) P_{h, s,a} \sum_{n = 1}^{{N_h^k}} \eta_n^{N_h^k} \Big(V^{k^n}_{h+1} - \overline{V}^{ k^n}_{h+1}\Big)^4 \nonumber \\
  & \leq 
 \left(1+\frac{1}{H}\right) \sum_{k=1}^K  \sum_{s\in \cS} d_{h+1}^{\pi^\star}(s) \left(V^{k}_{h+1}(s) - \overline{V}_{h+1}^{k}(s)\right)^4 + 32\sqrt{H^8C^\star K \log\frac{2H}{\delta}} + 32H^4C^\star\log\frac{2H}{\delta}. \label{eq:recursion-bound-3}
\end{align}
This can be verified similar to the proof for Lemma~\ref{lemma:recursion} in Appendix~\ref{proof:lemma-lcb-revursion}. We omit the details for conciseness.
To continue, it follows that 
\begin{align}
&  \sum_{k=1}^K \sum_{s\in\cS} d_{h+1}^{\pi^\star}(s)\left(V_{h+1}^k(s) - \overline{V}_{h+1}^k(s)\right)^4 \nonumber \\
  &\overset{\mathrm{(i)}}{\leq}   \sum_{m=1}^M \sum_{t=1}^{L_m} \sum_{s\in\cS} d_{h+1}^{\pi^\star}(s)\left(V_{h+1}^{\star}(s) - \overline{V}_{h+1}^{(m,t)}(s)\right)^4 \nonumber \\
  & \overset{\mathrm{(ii)}}{=}  \sum_{m=1}^M \sum_{t=1}^{L_m} \sum_{s\in\cS} d_{h+1}^{\pi^\star}(s)\left(V_{h+1}^{\star}(s) - V^{\left((m-1) \vee 1 ,1\right)}_{h+1}(s)\right)^4 \nonumber\\
  &\overset{\mathrm{(iii)}}{=}  \sum_{s\in\cS} d_{h+1}^{\pi^\star}(s)  \sum_{m=1}^M 2^{m} \left(V_{h+1}^{\star}(s) - V^{\left((m-1) \vee 1 ,1\right)}_{h+1}(s)\right)^4 \nonumber \\
  &= 4 \sum_{s\in\cS} d_{h+1}^{\pi^\star}(s)  \sum_{m-2=-1}^{M-2} 2^{m-2} \left(V_{h+1}^{\star}(s) - V^{\left((m-1) \vee 1 ,1\right)}_{h+1}(s)\right)^4 \nonumber \\
  & = 4 \sum_{m-2 = -1}^{0} 2^{m-2} \left(V_{h+1}^{\star}(s) - V^{\left(1 ,1\right)}_{h+1}(s)\right)^4  + 4  \sum_{s\in\cS} d_{h+1}^{\pi^\star}(s)  \sum_{m-2=1}^{M-2} 2^{m-2} \left(V_{h+1}^{\star}(s) - V^{\left(m-1 ,1\right)}_{h+1}(s)\right)^4. \nonumber 
\end{align}
Here, (i) holds by using the pessimistic property $V^\star \geq V^k \geq \overline{V}^k$ for all $k\in[K]$ (see \eqref{eq:lcb-adv-lower}) and by regrouping the summands;  
 (ii) follows from the fact (see updating rules in line~\ref{eq:update-mu-reference-v-k} and line~\ref{eq:update-mu-reference-v-next-k}) that for any $(m,s,h)\in [M] \times \cS\times [H + 1]$, 
\begin{align}
  \overline{V}^{(m,t)}_{h}(s) = V^{\left((m-1) \vee 1 ,1\right)}_{h}(s), \qquad t =1,2,\cdots, L_m;
\end{align}
and (iii) results from the choice of the parameter $L_m = 2^m$. 
In addition, we can further control 
\begin{align}
  \sum_{k=1}^K \sum_{s\in\cS} d_{h+1}^{\pi^\star}(s)\left(V_{h+1}^k(s) - \overline{V}_{h+1}^k(s)\right)^4  & \overset{\mathrm{(iv)}}{\leq} 8H^4 + 4\sum_{s\in\cS} d_{h+1}^{\pi^\star}(s) \sum_{m=1}^{M-2}  \sum_{t=1}^{L_m}   \left(V_{h+1}^{\star}(s) - V_{h+1}^{(m+1,1)}(s)\right)^4 \nonumber \\
  & \overset{\mathrm{(v)}}{\leq} 8H^4 + 4\sum_{s\in\cS} d_{h+1}^{\pi^\star}(s) \sum_{m=1}^{M-2}  \sum_{t=1}^{L_m}   \left(V_{h+1}^{\star}(s) - V_{h+1}^{(m,t)}(s)\right)^4 \nonumber \\
  & \leq 8H^4 +  4 \sum_{s\in\cS} d_{h+1}^{\pi^\star}(s) \sum_{k=1}^{K}  \left(V_{h+1}^{\star}(s) - V_{h+1}^{k}(s)\right)^4 \label{eq:lcb-adv-important2-inter}\\
  & \leq 8H^4 +  4H^3 \sum_{s\in\cS} d_{h+1}^{\pi^\star}(s) \sum_{k=1}^{K}  \left(V_{h+1}^{\star}(s) - V_{h+1}^{k}(s)\right) \nonumber\\
  &\overset{\mathrm{(vi)}}{\lesssim} H^3K +  H^8SC^\star \iota. \label{eq:lcb-adv-important2} 
\end{align}
Here, (iv) follows from the fact $0 \leq V_{h+1}^{\star}(s) - V^{\left(1 ,1\right)}_{h+1}(s) \leq H - 0 = H$; (v) holds since $V_{h+1}^\star \geq V_{h+1}^{(m+1,1)} = V_{h+1}^{(m,L_m)} \geq V_{h+1}^{(m,t)}$ for all $t\in [L_m]$ (using the monotonic increasing property of $V_{h+1}$ introduced in \eqref{equ:monotone-lcb-adv});
and (vi) follows from \eqref{equ:146-3}.
Putting \eqref{eq:lcb-adv-important2} and \eqref{eq:recursion-bound-3} together with \eqref{lcb-adv-bound-A-Y-2}, we arrive at
\begin{align}
 \left|\sum_{k=1}^K \left(A_{h,k} - Y_{h,k}\right)\right|  
   &\lesssim \sqrt{C^\star \log\frac{2H}{\delta}\left( H^3K +  H^8SC^\star \iota + \sqrt{H^8C^\star K \log\frac{2H}{\delta}} + H^4C^\star\log\frac{2H}{\delta} \right) } + C^\star H^2\log\frac{2H}{\delta} \nonumber \\
   &\lesssim \sqrt{H^3C^\star K\iota} + H^{4}\sqrt{S}C^\star \iota^2. \label{eq:lcb-adv-A-Y-bound2}
\end{align}

\paragraph{Step 3: controlling $\sum_{k=1}^K \left(Z_{h,k} - B_{h,k}\right)$.}
Similarly, we also invoke Lemma~\ref{lemma:martingale-union-recursion} to control $\sum_{k=1}^K \left(Z_{h,k} - B_{h,k}\right)$. 
Let's set
\begin{align}
  W_{h+1}^{k}(s,a) \coloneqq \left(V^{k}_{h+1} - \overline{V}^{k}_{h+1}\right)^2, \qquad C_{\mathrm{d}} \coloneqq \left(1+\frac{1}{H}\right)\leq 2,  \label{eq:lcb-adv-B-Z-w-2}
\end{align}
which satisfies
\begin{align*}
    \left\|W_{h+1}^{k}(s,a)\right\|_\infty \leq 4H^2 \eqqcolon C_{\mathrm{w}}.
\end{align*}
Applying Lemma~\ref{lemma:martingale-union-recursion} with \eqref{eq:lcb-adv-B-Z-w-2} yields that: with probability at least $1-\delta$, 
\begin{align}
   &\left|\sum_{k=1}^K \left(B_{h,k} - Z_{h,k}\right)\right| = \left|\sum_{k=1}^K \overline{X}_{h,k}\right| \nonumber\\
   &\leq  \sqrt{\sum_{k=1}^K 8C_{\mathrm{d}}^2 C^\star \sum_{(s,a)\in\cS\times\cA} d_{h}^{\pi_{\star}}(s,a) P_{h,s,a}\left[W_{h+1}^{k}(s,a)\right]^2 \log\frac{2H}{\delta}} + 2C_{\mathrm{d}} C^\star C_\mathrm{w}\log\frac{2H}{\delta} \nonumber \\
   & \lesssim  \sqrt{C^\star \log\frac{2H}{\delta}\sum_{k=1}^K  \sum_{(s,a)\in\cS\times\cA} d_{h}^{\pi_{\star}}(s,a) P_{h,s,a}\left[V^{k}_{h+1} - \overline{V}^{k}_{h+1}\right]^4 } + C^\star H^2\log\frac{2H}{\delta} \nonumber\\
   &\overset{\mathrm{(i)}}{\lesssim}  \sqrt{C^\star\log\frac{2H}{\delta} \left(H^3K +  H^8SC^\star \iota \right)} + C^\star H^2\log\frac{2H}{\delta} \lesssim   \sqrt{H^3C^\star K\iota} + H^{4}\sqrt{S}C^\star \iota^2, \label{lcb-adv-bound-A-Z-2}
\end{align}
where (i) follows from \eqref{eq:lcb-adv-important2-inter} and \eqref{eq:lcb-adv-important2}.

\paragraph{Step 4: combining the results.}
Inserting \eqref{lcb-adv-bound-A-Z-2} and \eqref{eq:lcb-adv-A-Y-bound2} back into \eqref{eq:lcb-adv-decompose-A-2}, we can conclude that
\begin{align}
&\sum_{k = 1}^K \sum_{h=1}^H \sum_{(s,a)\in \cS \times \cA} d_h^{\pi^\star}(s, a) \sum_{n = 1}^{N^{k}_h(s, a)} \eta^{N^{k}_h(s, a)}_n P_h^{k^n}\left(V^{k^n}_{h+1} - \overline{V}^{k^n}_{h+1}\right)^2 = \sum_{h=1}^H A_h\nonumber\\
& \leq \sum_{h=1}^H \sum_{k=1}^K B_{h,k} + \sum_{h=1}^H\sum_{k=1}^K \left(Z_{h,k} - B_{h,k}\right) + \sum_{h=1}^H\sum_{k=1}^K \left(A_{h,k} - Y_{h,k}\right) \nonumber\\
   &\leq \sum_{h=1}^H \sum_{k=1}^K \left(1+\frac{1}{H}\right) \sum_{s\in \cS} d_{h+1}^{\pi^\star}(s) \left(V^{k}_{h+1}(s) - \overline{V}^{k}_{h+1}(s)\right)^2 + \sum_{h=1}^H\left|\sum_{k=1}^K \left(Z_{h,k} - B_{h,k}\right)\right| + \sum_{h=1}^H\left|\sum_{k=1}^K \left(A_{h,k} - Y_{h,k}\right)\right| \nonumber\\
   & \leq H \sum_{h=1}^H \sum_{k=1}^K \left(1+\frac{1}{H}\right) \sum_{s\in \cS} d_{h+1}^{\pi^\star}(s) \left(V^{k}_{h+1}(s) - \overline{V}^{k}_{h+1}(s)\right) + \sqrt{H^5C^\star K\iota} + H^{5}\sqrt{S}C^\star \iota^2 \nonumber \\
   & \overset{\mathrm{(i)}}{\lesssim} H \sum_{h=1}^H \sum_{k=1}^K \sum_{s\in \cS} d_{h+1}^{\pi^\star}(s) \left(V^{\star}_{h+1}(s) - V^{k}_{h+1}(s)\right) + K + H^5\sqrt{S}C^\star\iota^2 \nonumber \\
   & \lesssim H^2 \max_{h\in [H] } \sum_{k=1}^K \sum_{s\in \cS} d_{h}^{\pi^\star}(s) \left(V^{\star}_{h}(s) - V^{k}_{h}(s)\right) + K + H^5\sqrt{S}C^\star\iota^2,  
\end{align}
where (i) follows from the same routine to obtain \eqref{eq:lcb-adv-important2-inter} and the Cauchy-Schwarz inequality.

\subsubsection{Proof of inequality~\eqref{adv-J-22-L}}\label{proof:adv-J-22-L}

\paragraph{Step 1: decomposing the error in \eqref{adv-J-22-L}.}
 The term in \eqref{adv-J-22-L} obeys
     \begin{align}
      &\sum_{h = 1}^H \sum_{(s,a)\in \cS \times \cA} d_h^{\pi^\star}(s, a)\sum_{k = 1}^K\sqrt{\frac{1}{N^{k}_h(s, a) \vee 1}} L_{h,k} \nonumber\\
      &= \sum_{h = 1}^H \sum_{(s,a)\in \cS \times \cA} d_h^{\pi^\star}(s, a)\sum_{k = 1}^K \sqrt{\frac{1}{N^{k}_h(s, a) \vee 1}} \sqrt{\frac{\sum_{n = 1}^{N^{k}_h(s,a)}2H\left(V^{\star}_{h+1}(s^{k^n}_{h+1}) - \overline{V}^{\nnext,k^n}_{h+1}(s^{k^n}_{h+1})\right)}{N^{k}_h(s,a) \vee 1} } \nonumber\\
      & \overset{\mathrm{(i)}}{\lesssim} \sqrt{H\iota} \sum_{h = 1}^H \sum_{(s,a)\in \cS \times \cA} \sum_{k = 1}^K \sqrt{\frac{d_h^{\pi^\star }(s, a)}{k d_h^{\mu}(s,a)}} \sqrt{\frac{d_h^{\pi^\star}(s, a)\iota \sum_{n = 1}^{N^{k}_h(s,a)}\left(V^{\star}_{h+1}(s^{k^n}_{h+1}) - \overline{V}^{\nnext,k^n}_{h+1}(s^{k^n}_{h+1})\right)}{k d_h^{\mu}(s,a)} } \nonumber\\
      & \overset{\mathrm{(ii)}}{\lesssim} \sqrt{HC^\star \iota^2} \sum_{h = 1}^H \sum_{(s,a)\in \cS \times \cA} \sum_{k = 1}^K \sqrt{\frac{\ind(a=\pi^\star(s))}{k }} \sqrt{\frac{d_h^{\pi^\star}(s, a)\sum_{n = 1}^{N^{k}_h(s,a)}\left(V^{\star}_{h+1}(s^{k^n}_{h+1}) - \overline{V}^{\nnext,k^n}_{h+1}(s^{k^n}_{h+1})\right)}{k d_h^{\mu}(s,a)} } \nonumber\\
      & \overset{\mathrm{(iii)}}{\lesssim} \sqrt{HC^\star \iota^2} \sqrt{\sum_{h = 1}^H \sum_{(s,a)\in \cS \times \cA} \sum_{k = 1}^K \frac{ d_h^{\pi^\star}(s, a) \sum_{n = 1}^{N^{k}_h(s,a)}\left(V^{\star}_{h+1}(s^{k^n}_{h+1}) - \overline{V}^{\nnext,k^n}_{h+1}(s^{k^n}_{h+1})\right)}{k d_h^{\mu}(s,a)} } \sqrt{\sum_{h = 1}^H \sum_{(s,a)\in \cS\times \cA} \sum_{k = 1}^K \frac{\ind(a=\pi^\star(s))}{k}} \nonumber  \\
      &\lesssim \sqrt{H^2SC^\star \iota^3} \sqrt{\sum_{h = 1}^H \sum_{(s,a)\in \cS \times \cA}\frac{d_h^{\pi^\star}(s, a)}{d_h^{\mu}(s,a)} \sum_{k = 1}^K \frac{1}{k}\sum_{n = 1}^{N^{k}_h(s,a)}\left(V^{\star}_{h+1}(s^{k^n(s,a)}_{h+1}) - \overline{V}^{\nnext,k^n}_{h+1}(s^{k^n(s,a)}_{h+1})\right) } \nonumber\\
      &\overset{\mathrm{(iv)}}{=} \sqrt{H^2SC^\star \iota^3} \sqrt{\sum_{h = 1}^H \sum_{k = 1}^K \frac{d_h^{\pi^\star}(s_h^k,a_h^k) } {d_h^{\mu}(s_h^k,a_h^k)} P_h^k \sum_{k'=k}^K \frac{1}{k'}(V^{\star}_{h+1} - \overline{V}^{\nnext,k}_{h+1}) } \nonumber\\
      &\lesssim \sqrt{H^2SC^\star \iota^4} \sqrt{\sum_{h = 1}^H \sum_{k = 1}^K \frac{d_h^{\pi^\star}(s_h^k,a_h^k) P_h^k} {d_h^{\mu}(s_h^k,a_h^k)} (V^{\star}_{h+1} - \overline{V}^{\nnext,k}_{h+1}) }. 
    \end{align}
   Here, (i) follows from the fact $\frac{1}{N_h^k(s,a) \vee 1} \leq \frac{8\iota}{k d_h^{\mu}(s,a)}$ (cf.~Lemma~\ref{lem:binomial}); (ii) follows from the definition of $C^\star$ in Assumption~\ref{assumption}; (iii) invokes the Cauchy-Schwarz inequality; (iv) can be obtained by regrouping the terms 
   (the terms involving $(V^{\star}_{h+1} - \overline{V}^{\nnext,k}_{h+1})$ associated with index $k$ will only been added during episodes $k' = k, k+1,\cdots, K$).

With this upper bound in hand, we further decompose 
\begin{align}
&\sum_{h = 1}^H \sum_{(s,a)\in \cS \times \cA} d_h^{\pi^\star}(s, a)\sum_{k = 1}^K\sqrt{\frac{1}{N^{k}_h(s, a) \vee 1}} L_{h,k} \lesssim \sqrt{H^2SC^\star \iota^4} \sqrt{\sum_{h = 1}^H \sum_{k = 1}^K \frac{d_h^{\pi^\star}(s_h^k,a_h^k) P_h^k} {d_h^{\mu}(s_h^k,a_h^k)} (V^{\star}_{h+1} - \overline{V}^{\nnext,k}_{h+1}) } \nonumber\\
&\overset{\mathrm{(i)}}{\lesssim} \sqrt{H^2SC^\star \iota^4} \sqrt{\sum_{h = 1}^H \sum_{k = 1}^K \frac{d_h^{\pi^\star}(s_h^k,a_h^k) } {d_h^{\mu}(s_h^k,a_h^k)} P_h^k \left(V_{h+1}^\star - \overline{V}_{h+1}^k\right)}\nonumber\\
  &\overset{\mathrm{(ii)}}{\lesssim} \sqrt{H^2SC^\star \iota^4} \sqrt{\sum_{h = 1}^H \sum_{k = 1}^K  \sum_{(s,a)\in \cS\times \cA}d_h^{\pi^\star}(s,a) P_{h, s,a} \left(V^{\star}_{h+1} - \overline{V}^{k}_{h+1}\right) } \nonumber \\
      & \qquad + \sqrt{H^2SC^\star \iota^4}\sqrt{\left|\sum_{h = 1}^H \sum_{k = 1}^K \left( \sum_{(s,a)\in \cS\times \cA}d_h^{\pi^\star}(s,a) P_{h, s,a} - \frac{d_h^{\pi^\star}(s_h^k,a_h^k) } {d_h^{\mu}(s_h^k,a_h^k)}P_h^k\right)\left(V^{\star}_{h+1} - \overline{V}^{k}_{h+1}\right)\right|}. 
	\label{equ:adv-L-decompose}
\end{align}
Here (i) holds due to the following observation:  denoting by $m$  the index of the epoch in which episode $k$ occurs, we have
\begin{align}
  &\overline{V}_{h+1}^{\nnext,k}=V_{h+1}^{(m,1)} \geq V_{h+1}^{((m-1 \vee 1),1)} = \overline{V}_{h+1}^{k},
\end{align}
which invokes the monotonicity of $V_{h+1}^k$ in \eqref{equ:monotone-lcb-adv}. In addition,  (ii) arises from the Cauchy-Schwarz inequality.

\paragraph{Step 2: controlling the first term in \eqref{equ:adv-L-decompose}.}
    The first term in \eqref{equ:adv-L-decompose} satisfies
    \begin{align}
   \sum_{h = 1}^H \sum_{k = 1}^K  \sum_{(s,a)\in \cS\times \cA}d_h^{\pi^\star}(s,a) P_{h, s,a} \left(V^{\star}_{h+1} - \overline{V}^{k}_{h+1}\right) 
   &= \sum_{h = 1}^H \sum_{k = 1}^K  \sum_{(s,a)\in \cS\times \cA}d_h^{\pi^\star}(s,a)  \big\langle P_h(\cdot \mymid s,a), V^{\star}_{h+1} - \overline{V}^{k}_{h+1}  \big\rangle  \nonumber \\
 &   \overset{\mathrm{(i)}}{=} \sum_{h = 1}^H \sum_{k = 1}^K \sum_{s' \in \cS} d_{h+1}^{\pi^\star}(s') \left(V^{\star}_{h+1}(s') - \overline{V}^{k}_{h+1}(s') \right)  \nonumber\\
   &\overset{\mathrm{(ii)}}{\lesssim} H^2 + \sum_{h = 1}^H \sum_{k = 1}^K \sum_{s \in \cS} d_{h+1}^{\pi^\star}(s) \left(V^{\star}_{h+1}(s) - V^{k}_{h+1}(s)\right) \nonumber\\
   &\overset{\mathrm{(iii)}}{\lesssim} HK + H^6 S C^{\star}\iota,\label{equ:adv-L-1-true}
    \end{align}
  where (i) holds due to the fact $\sum_{(s,a)\in \cS\times \cA}d_h^{\pi^\star}(s,a) P_h(\cdot \mymid s,a) = d_{h+1}^{\pi^\star}(\cdot)$, 
  (ii) comes from the same argument employed to establish \eqref{eq:lcb-adv-important2-inter}, and (iii) follows from \eqref{equ:146-3}.

\paragraph{Step 3: controlling the second  term in \eqref{equ:adv-L-decompose}.}
We shall invoke Lemma~\ref{lemma:martingale-union-recursion} for this purpose.
To proceed, let
\begin{align}
  W_{h+1}^{k}(s,a) \coloneqq V^{\star}_{h+1} - \overline{V}^{k}_{h+1}, \qquad C_{\mathrm{d}} \eqqcolon 1, \label{eq:lcb-adv-216-w}
\end{align}
which satisfies
\begin{align*}
    \left\|W_{h+1}^{k}(s,a)\right\|_\infty \leq H \eqqcolon C_{\mathrm{w}}.
\end{align*}
Applying Lemma~\ref{lemma:martingale-union-recursion} with \eqref{eq:lcb-adv-216-w} yields, for all $h\in[H]$, with probability at least $1-\delta$
\begin{align}
   \Bigg|\sum_{k = 1}^K &\left( \sum_{(s,a)\in \cS\times \cA}d_h^{\pi^\star}(s,a) P_{h, s,a} - \frac{d_h^{\pi^\star}(s_h^k,a_h^k) } {d_h^{\mu}(s_h^k,a_h^k)}P_h^k\right)\left(V^{\star}_{h+1} - \overline{V}^{k}_{h+1}\right) \Bigg|= \left|\sum_{k=1}^K \overline{X}_{h,k}\right| \nonumber\\
   &\leq  \sqrt{\sum_{k=1}^K 8C_{\mathrm{d}}^2 C^\star \sum_{(s,a)\in\cS\times\cA} d_{h}^{\pi_{\star}}(s,a) P_{h,s,a}\left[W_{h+1}^{k}(s,a)\right]^2 \log\frac{2H}{\delta}} + 2C_{\mathrm{d}} C^\star C_\mathrm{w}\log\frac{2H}{\delta} \nonumber \\
    &\lesssim \sqrt{C^\star\log\frac{2H}{\delta}\sum_{k=1}^K  \sum_{(s,a)\in\cS\times\cA} d_{h}^{\pi_{\star}}(s,a) P_{h,s,a}\left(V^{\star}_{h+1} - \overline{V}^{k}_{h+1}\right)^2 } + HC^\star\log\frac{2H}{\delta}\nonumber \\
    &\overset{\mathrm{(i)}}{\lesssim} \sqrt{C^\star\log\frac{2H}{\delta} \left(H^2 + \sum_{k=1}^K  \sum_{(s,a)\in\cS\times\cA} d_{h}^{\pi_{\star}}(s,a) P_{h,s,a}\left(V^{\star}_{h+1} - V^k_{h+1}\right)^2 \right)} + HC^\star\log\frac{2H}{\delta}\nonumber \\
   &\overset{\mathrm{(ii)}}{\lesssim} \sqrt{C^\star\log\frac{2H}{\delta}\left(HK +H^6SC^\star \iota\right)} + HC^\star\log\frac{2H}{\delta} \nonumber\\
   &
   \lesssim \sqrt{HC^\star K \iota} + H^3\sqrt{S}C^\star \iota.
\end{align}
Here (i) follows from the same routine to arrive at \eqref{eq:lcb-adv-important2-inter}, and (ii) comes from \eqref{equ:146-3}. As a result, the second term in \eqref{equ:adv-L-decompose} satisfies, with probability at least $1-\delta$,
\begin{align}
&\left| \sum_{h = 1}^H \sum_{k = 1}^K \left( \sum_{(s,a)\in \cS\times \cA}d_h^{\pi^\star}(s,a) P_{h, s,a} - \frac{d_h^{\pi^\star}(s_h^k,a_h^k) P_h^k} {d_h^{\mu}(s_h^k,a_h^k)}\right)\left(V^{\star}_{h+1} - \overline{V}^{k}_{h+1}\right)\right| \nonumber\\
&\leq \sum_{h=1}^H \left|\sum_{k = 1}^K \left( \sum_{(s,a)\in \cS\times \cA}d_h^{\pi^\star}(s,a) P_{h, s,a} - \frac{d_h^{\pi^\star}(s_h^k,a_h^k) P_h^k} {d_h^{\mu}(s_h^k,a_h^k)}\right)\left(V^{\star}_{h+1} - \overline{V}^{k}_{h+1}\right)\right|   \lesssim \sqrt{H^3C^\star K \iota} + H^4\sqrt{S}C^\star \iota.
  \label{equ:adv-L-2}
\end{align}

\paragraph{Step 4: combining the results.}
Finally, inserting \eqref{equ:adv-L-1-true} and \eqref{equ:adv-L-2} into \eqref{equ:adv-L-decompose}, we arrive at
\begin{align}
  &\sum_{h = 1}^H \sum_{(s,a)\in \cS \times \cA} d_h^{\pi^\star}(s, a)\sum_{k = 1}^K\sqrt{\frac{1}{N^{k}_h(s, a)}} L_{h,k} \nonumber\\
  &\lesssim \sqrt{H^2SC^\star \iota^4}
  \sqrt{HK + H^6 S C^{\star}\iota}  + \sqrt{H^2SC^\star \iota^4} \sqrt{\sqrt{H^3C^\star K \iota} + H^4\sqrt{S}C^\star \iota} \nonumber\\
  &\lesssim \sqrt{H^3S C^\star K \iota^4} + H^{4} S C^\star \iota^3 + \sqrt{H^2SC^\star \iota^4} \sqrt{HK + H^4\sqrt{S}C^\star \iota} \lesssim \sqrt{H^3S C^\star K \iota^4} + H^{4} S C^\star \iota^3,
\end{align}
where the last two inequalities follow from the Cauchy-Schwarz inequality.

\subsubsection{Proof of inequality~\eqref{adv-J-22-G}}\label{proof:adv-J-22-G}

Recall the expression of $G_{h,k}$ in \eqref{equ:adv-J22-terms} as
\begin{align}
G_{h,k}^2 &= \frac{\sum_{n = 1}^{N^{k}_h(s,a)}\left(V^{\star}_{h+1}(s^{k^n}_{h+1})\right)^2}{N^{k}_h(s,a) \vee 1} - \Big(\frac{\sum_{n = 1}^{N^{k}_h(s,a) }V^{\star}_{h+1}(s^{k^n}_{h+1})}{N^{k}_h(s,a) \vee 1}\Big)^2 \nonumber \\
& = \frac{\sum_{n = 1}^{N^{k}_h(s,a)} P_h^{k^n}\left(V^{\star}_{h+1}\right)^2}{N^{k}_h(s,a) \vee 1} - \Big(\frac{\sum_{n = 1}^{N^{k}_h(s,a) } P_h^{k^n} V^{\star}_{h+1}}{N^{k}_h(s,a) \vee 1}\Big)^2.
\end{align}
To continue, we make the following observation
\begin{align}  
G_{h,k} 
	&\leq \left\{ \left|G_{h,k}^2 - \Var_{h, s,a}(V^{\star}_{h+1}) \right| + \Var_{h, s,a}(V^{\star}_{h+1}) \right\}^{1/2} \notag\\
&\leq \left|G_{h,k}^2 - \Var_{h, s,a}(V^{\star}_{h+1}) \right|^{1/2} + \sqrt{\Var_{h, s,a}(V^{\star}_{h+1}) } 
\end{align}
due to the elementary inequality $\sqrt{a^2 + b^2} \leq a + b$ for any $a, b\geq0$.
Here, we remind the reader that $\Var_{h, s,a}(V^{\star}_{h+1}) = P_{h, s,a}(V^{\star}_{h+1})^2 -  (P_{h, s,a} V^{\star}_{h+1})^2$ (cf.~\eqref{lemma1:equ2}). This allows us to rewrite
\begin{align}\label{eq:control_Ghk}
& \sum_{h = 1}^H \sum_{(s,a)\in \cS \times \cA} d_h^{\pi^\star}(s, a)\sum_{k = 1}^K\sqrt{\frac{1}{N^{k}_h(s, a) \vee 1}} G_{h,k}  \nonumber \\
& \leq \sum_{h = 1}^H \sum_{(s,a)\in \cS \times \cA} d_h^{\pi^\star}(s, a)\sum_{k = 1}^K\sqrt{\frac{ \left|G_{h,k}^2 - \Var_{h, s,a}(V^{\star}_{h+1}) \right|}{N^{k}_h(s, a) \vee 1}}    + \sum_{h = 1}^H \sum_{(s,a)\in \cS \times \cA} d_h^{\pi^\star}(s, a)\sum_{k = 1}^K\sqrt{\frac{ \Var_{h, s,a}(V^{\star}_{h+1})}{N^{k}_h(s, a) \vee 1}}, 
 \end{align}
leaving us with two terms to cope with.

\paragraph{Step 1: controlling the first term of \eqref{eq:control_Ghk}.}
By definition, we have
\begin{align} 
&  \left|G_{h,k}^2 - \Var_{h, s,a}(V^{\star}_{h+1}) \right|  
 = \left|\frac{\sum_{n = 1}^{N^{k}_h(s,a)} P_h^{k^n}\left(V^{\star}_{h+1}\right)^2}{N^{k}_h(s,a) \vee 1} - \Big(\frac{\sum_{n = 1}^{N^{k}_h(s,a) } P_h^{k^n} V^{\star}_{h+1}}{N^{k}_h(s,a) \vee 1}\Big)^2 -P_{h, s,a}(V^{\star}_{h+1})^2 + \left(P_{h, s,a} V^{\star}_{h+1}\right)^2 \right| \nonumber \\
&\leq \left|\frac{\sum_{n = 1}^{N^{k}_h(s,a)} P_h^{k^n}\left(V^{\star}_{h+1}\right)^2}{N^{k}_h(s,a) \vee 1}  - P_{h, s,a}(V^{\star}_{h+1})^2  \right| + \left| \Big(\frac{\sum_{n = 1}^{N^{k}_h(s,a) } P_h^{k^n} V^{\star}_{h+1}}{N^{k}_h(s,a) \vee 1}\Big)^2 - \left(P_{h, s,a}V^{\star}_{h+1}\right)^2 \right| \nonumber \\
& \leq \left|\frac{\sum_{n = 1}^{N^{k}_h(s,a)} P_h^{k^n}\left(V^{\star}_{h+1}\right)^2}{N^{k}_h(s,a) \vee 1}  - P_{h, s,a}(V^{\star}_{h+1})^2  \right|  + 2H \left| \frac{\sum_{n = 1}^{N^{k}_h(s,a) } P_h^{k^n} V^{\star}_{h+1}}{N^{k}_h(s,a) \vee 1} - P_{h, s,a}V^{\star}_{h+1}\right|, \label{equ:Ghk-var-diff}
\end{align}
where the last inequality holds due to
\begin{align*}
\left| \Big(\frac{\sum_{n = 1}^{N^{k}_h(s,a) } P_h^{k^n} V^{\star}_{h+1}}{N^{k}_h(s,a) \vee 1}\Big)^2 - \left(P_{h, s,a}V^{\star}_{h+1}\right)^2 \right|  &=   \left| \frac{\sum_{n = 1}^{N^{k}_h(s,a) } P_h^{k^n} V^{\star}_{h+1}}{N^{k}_h(s,a) \vee 1} - P_{h, s,a}V^{\star}_{h+1}\right| \cdot\left| \frac{\sum_{n = 1}^{N^{k}_h(s,a) } P_h^{k^n} V^{\star}_{h+1}}{N^{k}_h(s,a) \vee 1} + P_{h, s,a}V^{\star}_{h+1}\right|   \\
& \leq  2H \left| \frac{\sum_{n = 1}^{N^{k}_h(s,a) } P_h^{k^n} V^{\star}_{h+1}}{N^{k}_h(s,a) \vee 1} - P_{h, s,a}V^{\star}_{h+1}\right| .
\end{align*}

We now control the two terms in \eqref{equ:Ghk-var-diff} separately by invoking Lemma~\ref{lemma:martingale-union-all}. 
For the first term in \eqref{equ:Ghk-var-diff}, let us set
    \begin{align}\label{eq:261-w}
      W_{h+1}^{i} \coloneqq \left(V^\star_{h+1}\right)^2, \qquad \text{and} \qquad u_h^i(s,a,N) \coloneqq \frac{1}{N \vee 1} \coloneqq C_{\mathrm{u}},
    \end{align}
  which indicates that
  \begin{align}
    \|W_{h+1}^{i}\|_\infty \leq  H^2 \eqqcolon C_{\mathrm{w}}, \label{eq:261-cw}
  \end{align}
Applying Lemma~\ref{lemma:martingale-union-all} with \eqref{eq:261-w} and $N = N_h^k = N_h^{k}(s,a)$, with probability at least $1-\frac{\delta}{2}$, we arrive at 
\begin{align}
  &\left|\frac{1}{N_h^k(s,a) \vee 1}\sum_{n = 1}^{{N_h^k}} (P^{k^n}_h - P_{h, s,a})(V^{\star}_{h+1})^2\right|  = \left|\sum_{i=1}^k X_i\left(s,a, h, N_h^k\right)\right| \nonumber\\
  &\lesssim \sqrt{C_{\mathrm{u}} \log^2\frac{SAT}{\delta}}\sqrt{\sum_{n = 1}^{N_h^k} u_h^{k^n}(s,a, N_h^k) \Var_{h,s,a} \big(W_{h+1}^{k^n}  \big)} + \left(C_{\mathrm{u}} C_{\mathrm{w}} + \sqrt{\frac{C_{\mathrm{u}}}{N_h^k \vee 1}} C_{\mathrm{w}}\right) \log^2\frac{SAT}{\delta} \nonumber \\
  & \asymp \sqrt{\frac{\iota^2}{N_h^k \vee 1}}\sqrt{\sum_{n = 1}^{N_h^k} \frac{1}{N_h^k \vee 1}\|W_{h+1}^{k^n}\|_\infty^2 } + \frac{H^2\iota^2}{N_h^k \vee 1}  \lesssim H^2\iota^2\sqrt{\frac{1}{N_h^k \vee 1}}. \label{eq:261-result1}
\end{align}
Similarly, for the second term in \eqref{equ:Ghk-var-diff}, with $W_{h+1}^{i} \coloneqq V^\star_{h+1}$, we have with probability at least $1-\frac{\delta}{2}$,
\begin{align}
 \frac{1}{N_h^k(s,a) \vee 1}\sum_{n = 1}^{{N_h^k}} \left(P^{k^n}_h - P_{h, s,a}\right)V^{\star}_{h+1} \lesssim H\iota^2\sqrt{\frac{1}{N_h^k(s,a) \vee 1}}. \label{eq:261-result2}
\end{align}
Inserting \eqref{eq:261-result1} and \eqref{eq:261-result2} back into  \eqref{equ:Ghk-var-diff} yields
\begin{align}
  \left|G_{h,k}^2 - \Var_{h, s,a}(V^{\star}_{h+1}) \right|  \lesssim H^2\iota^2\sqrt{\frac{1}{N_h^k(s,a) \vee 1}}. \label{eq:var-Vstar}
\end{align}
Consequently, the first term in \eqref{eq:control_Ghk} can be controlled as
\begin{align}
\sum_{h = 1}^H \sum_{(s,a)\in \cS \times \cA} d_h^{\pi^\star}(s, a)\sum_{k = 1}^K\sqrt{\frac{ \left|G_{h,k}^2 - \Var_{h, s,a}(V^{\star}_{h+1}) \right|}{N^{k}_h(s, a) \vee 1}}  &  \lesssim      H\iota \sum_{h = 1}^H \sum_{(s,a)\in \cS \times \cA} d_h^{\pi^\star}(s, a)\sum_{k = 1}^K \frac{1}{\left(N_h^k(s,a)\right)^{\frac{3}{4}} \vee 1} \nonumber \\
& \lesssim H^2(SC^\star)^{\frac{3}{4}}  K^{\frac{1}{4}}\iota^2 , \label{equ:adv-J-22-G-1}
 \end{align}
 where the last inequality holds due to \eqref{equ:adv-J-12-result}.

\paragraph{Step 2: controlling the second term of \eqref{eq:control_Ghk}.}
The second term can be decomposed as
\begin{align}
  &\sum_{h = 1}^H \sum_{(s,a)\in \cS \times \cA} d_h^{\pi^\star}(s, a)\sum_{k = 1}^K\sqrt{\frac{\Var_{h, s,a}(V^{\star}_{h+1})}{N_h^k(s,a) \vee 1}} \nonumber\\
  & \overset{\mathrm{(i)}}{\lesssim} \sum_{h = 1}^H \sum_{(s,a)\in \cS \times \cA} \sum_{k = 1}^K \sqrt{\frac{C^\star \iota d_h^{\pi^\star}(s, a) \Var_{h, s,a}(V^{\star}_{h+1})}{k} \ind\left(a = \pi^\star_h(s)\right)} \nonumber \\
  & \overset{\mathrm{(ii)}}{\lesssim} \sqrt{C^\star \iota} \sqrt{\sum_{h = 1}^H \sum_{(s,a)\in \cS \times \cA} d_h^{\pi^\star}(s, a)\sum_{k = 1}^K  \Var_{h, s,a}(V^{\star}_{h+1}) } \sqrt{\sum_{h = 1}^H \sum_{(s,a) \in \cS\times \cA} \sum_{k = 1}^K  \frac{1}{k}\ind\left(a = \pi^\star_h(s)\right) } \nonumber\\
  &\lesssim \sqrt{H S C^\star K\iota^2} \sqrt{\sum_{h = 1}^H  \sum_{(s,a)\in \cS \times \cA} d_h^{\pi^\star}(s, a)\Var_{h, s,a}(V^{\star}_{h+1}) }, \label{equ:adv-J-22-G-2}
\end{align}
where (i) follows from the facts $\frac{1}{N_h^k(s,a) \vee 1} \leq \frac{8\iota}{k d_h^{\mu}(s,a)}$ by Lemma~\ref{lem:binomial} and the definition of $C^\star$ in Assumption~\ref{assumption}, (ii) holds by the Cauchy-Schwarz inequality, and the final inequality comes from the fact that $\pi^\star$ is deterministic.

We are then left with bounding $\sum_{h = 1}^H  \sum_{(s,a)\in \cS \times \cA} d_h^{\pi^\star}(s, a)\Var_{h, s,a}(V^{\star}_{h+1})$. Note that
\begin{align}
& \sum_{h = 1}^H  \sum_{(s,a)\in \cS \times \cA} d_h^{\pi^\star}(s, a)\Var_{h, s,a}(V^{\star}_{h+1}) =  \mathbb{E}_{s_1 \sim \rho, s_{h+1}\sim P_{h,s_{h},\pi_h^{\star}(s_h)}}\left[\sum_{h = 1}^H  \Var_{h, s_h,\pi_h^{\star}(s_h)}(V^{\star}_{h+1})\right] \nonumber \\
 &  \overset{\mathrm{(i)}}{=}  \mathbb{E}_{s_1 \sim \rho, s_{h+1}\sim P_{h,s_{h},\pi_h^{\star}(s_h)}} \left[\sum_{h = 1}^H   \left( r_h\left(s_h, \pi^\star_h(s_h)\right) + V^{\star}_{h+1}(s_{h+1}) - V_h^\star(s_h) \right)^2\right] \nonumber \\
 & \overset{\mathrm{(ii)}}{=} \mathbb{E}_{s_1 \sim \rho, s_{h+1}\sim P_{h,s_{h},\pi_h^{\star}(s_h)}} \left[\sum_{h = 1}^H   \left( r_h(s_h, \pi^\star_h(s_h)) + V^{\star}_{h+1}(s_{h+1}) - V_h^\star(s_h) \right)\right]^2 \nonumber\\
 & \overset{\mathrm{(iii)}}{=} \mathbb{E}_{s_1 \sim \rho, s_{h+1}\sim P_{h,s_{h},\pi_h^{\star}(s_h)}} \left[ \left(\sum_{h = 1}^H r_h(s_h, \pi^\star_h(s_h)) \right)- V_1^\star(s_1) \right]^2 \overset{\mathrm{(iv)}}{\leq} H^2 , \label{eq:var-Vstar-sum}
 \end{align}
 where (i) follows from Bellman's optimality equation, (ii) follows from the Markov property, 
 (iii) holds due to the fact that $V_{H+1}^\star(s) = 0$ for all $s\in \cS$,
 and (iv) arises from the fact $r_h(s,a) \leq 1$ for all $(s,a,h)\in\cS\times\cA\times[H]$.
Substituting \eqref{eq:var-Vstar-sum} back into \eqref{equ:adv-J-22-G-2}, we get
 \begin{align} \label{eq:second_term_Ghk}
\sum_{h = 1}^H \sum_{(s,a)\in \cS \times \cA} d_h^{\pi^\star}(s, a)\sum_{k = 1}^K\sqrt{\frac{\Var_{h, s,a}(V^{\star}_{h+1})}{N_h^k(s,a) \vee 1}}   &\lesssim \sqrt{H^3 S C^\star K\iota^2}.
 \end{align}
 
\paragraph{Step 4: combing the results.}
Combining \eqref{equ:adv-J-22-G-1} and \eqref{eq:second_term_Ghk} with \eqref{eq:control_Ghk} yields 
\begin{align}
 \sum_{h = 1}^H \sum_{(s,a)\in \cS \times \cA} d_h^{\pi^\star}(s, a)\sum_{k = 1}^K\sqrt{\frac{1}{N^{k}_h(s, a) \vee 1}} G_{h,k}  
  & \lesssim H^2(SC^\star)^{\frac{3}{4}}  K^{\frac{1}{4}}\iota^2 +  \sqrt{H^3 S C^\star K\iota^2}  \notag \\&\lesssim \sqrt{H^3 S C^\star K\iota^2} + H^{2.5}SC^\star\iota^3.
\end{align}

\subsection{Proof of Lemma~\ref{lem:lcb-adv-bonus-upper}} \label{proof:lem:lcb-adv-bonus-upper}

In view of \eqref{eq:defn-xi-kh-123}, we can decompose the term of interest into 
\begin{align*}
  \bigg| \sum_{n = 1}^{N_h^k(s,a)} \eta^{N_h^k(s,a)}_n \xi_h^{k^n}\bigg|  &\leq |U_1| +|U_2|,
\end{align*}
where
\begin{subequations}
\begin{align}
U_1 &\defn  \sum_{n=1}^{N_h^k} \eta_n^{N_h^k} \big(P^{k^n}_{h} - P_{h, s,a} \big)\big(\overline{V}^{ k^n}_{h+1} - V^{k^n}_{h+1}  \big), \label{eq:defn-U1-1}\\  
U_2 &\defn \sum_{n=1}^{N_h^k} \eta_n^{N_h^k}  \left(P_{h,s,a} - \frac{\sum_{i= N_h^{(m^n-1,1)} + 1 }^{ N_h^{(m^n,1)}} P_h^{k^i}}{\widehat{N}_h^{\epo, m^n-1}(s,a) \vee 1} \right)\overline{V}^{ k^n}_{h+1}. \label{eq:defn-U1-2}
\end{align}
\end{subequations}
Next, we turn to controlling these two terms separately with the assistance of Lemma~\ref{lemma:martingale-union-all}.

\paragraph{Step 1: controlling $U_1$.} In the following, we invoke Lemma~\ref{lemma:martingale-union-all} to control $U_1$ in \eqref{eq:defn-U1-1}. 
    Let us set
    \begin{align*} 
      W_{h+1}^{i} \coloneqq \overline{V}_{h+1}^i - V_{h+1}^i, \qquad \text{and} \qquad u_h^i(s,a,N) \coloneqq \eta_{N_h^i(s,a)}^N \geq 0,
    \end{align*}
  which indicates that
  \begin{align*}
    \|W_{h+1}^{i}\|_\infty \leq \|\overline{V}_{h+1}^i \|_{\infty} + \|V_{h+1}^i\|_{\infty} \leq 2H \eqqcolon C_{\mathrm{w}}, 
  \end{align*}
  and 
  \begin{align}
    \max_{N, h, s, a\in \big(\{0\} \cup [K]\big) \times[H] \times \cS\times \cA} \eta_{N_{h}^{i}(s,a)}^{N} \leq \frac{2H}{N \vee 1} 
    \eqqcolon C_{\mathrm{u}}. 
    \label{U1-cu}
  \end{align}
Here, the last inequality follows since (according to Lemma~\ref{lemma:property of learning rate} and the definition in \eqref{equ:learning rate notation}) 
\begin{align*}
  \eta_{N_{h}^{i}(s,a)}^{N} & \leq\frac{2H}{N \vee 1},\qquad\text{if } 0\le N_{h}^{i}(s,a)\le N;\\
  \eta_{N_{h}^{i}(s,a)}^{N} & =0,\qquad\quad ~\text{if }N_{h}^{i}(s,a)>N.
\end{align*}
To continue, it can be seen from \eqref{eq:sum-eta-n-N} that
\begin{align}
  0 \leq \sum_{n=1}^N u_h^{k_h^n(s,a)}(s,a, N) = \sum_{n=1}^N \eta_{n}^N \leq 1 \label{eq:sum-uh-k-s-a-123}
\end{align}
holds for all $(N,s,a) \in [K] \times \cS\times \cA$.
Therefore, choosing $N = N_h^k(s,a) = N_h^k$ for any $(s,a)$ and applying Lemma~\ref{lemma:martingale-union-all} with the above quantities, we arrive at  
\begin{align}
  |U_1| &= \left|\sum_{n=1}^{N_h^k} \eta_n^{N_h^k} \big(P^{k^n}_{h} - P_{h, s,a} \big)\big(\overline{V}^{ k^n}_{h+1} - V^{k^n}_{h+1}  \big)\right| = \left|\sum_{i=1}^k X_i\left(s,a, h, N_h^k\right)\right| \nonumber\\
  &\lesssim \sqrt{C_{\mathrm{u}} \log^2\frac{SAT}{\delta}}\sqrt{\sum_{n = 1}^{N_h^k} u_h^{k^n}(s,a, N_h^k) \Var_{h,s,a} \big(W_{h+1}^{k^n}  \big)} + \left(C_{\mathrm{u}} C_{\mathrm{w}} + \sqrt{\frac{C_{\mathrm{u}}}{N \vee 1}} C_{\mathrm{w}}\right) \log^2\frac{SAT}{\delta} \nonumber \\
  & \asymp \sqrt{\frac{H\iota^2}{N_h^k \vee 1}}\sqrt{\sum_{n = 1}^{N_h^k} \eta^{N_h^k}_n \Var_{h, s,a} \big(\overline{V}^{k^n}_{h+1}-V^{k^n}_{h+1}  \big) } + \frac{H^2\iota^2}{N_h^k \vee 1} \label{lemma1:equ4-sub} \\
  &  \lesssim \sqrt{\frac{H\iota^2}{N_h^k \vee 1}}\sqrt{\sigma^{\adv, k^{N_h^k}+1}_h(s,a) - \big(\mu^{\adv, k^{N_h^k}+1}_h(s,a)\big)^2} + \frac{H^{7/4}\iota^2}{(N_h^k \vee 1)^{3/4}} +  \frac{H^2\iota^2}{N_h^k \vee 1}.
  \label{lemma1:equ4}
\end{align}
with probability at least $1-\delta$. 
Here,
the proof of the inequality~\eqref{lemma1:equ4} is postponed to Appendix~\ref{sec:proof:lemma1:equ4} in order
to streamline the presentation of the analysis.

\paragraph{Step 2: bounding $U_2$.} Making use of the result in \eqref{equ:definition-ref-refmean}, we arrive at
    \begin{align*}
    \frac{\sum_{i= N_h^{(m^n-1,1)} + 1 }^{ N_h^{(m^n,1)}} P_h^{k^i}}{\widehat{N}_h^{\epo, m^n-1}(s,a) \vee 1} \overline{V}^{ k^n}_{h+1} =  \frac{\sum_{i= N_h^{(m^n-1,1)} + 1 }^{ N_h^{(m^n,1)}} P_h^{k^i} \overline{V}^{ \nnext,k^i}_{h+1}}{\widehat{N}_h^{\epo, m^n-1}(s,a) \vee 1} .
    \end{align*} 
To continue, for any $(s,a)\in \cS\times \cA$, we rewrite and rearrange $U_2$ (cf.~\eqref{eq:defn-U1-2}) as follows:
\begin{align*}
U_2 
& = \sum_{n=1}^{N_h^k} \eta_n^{N_h^k}  \left(P_{h,s,a} - \frac{\sum_{i= N_h^{(m^n-1,1)} + 1 }^{ N_h^{(m^n,1)}} P_h^{k^i}}{\widehat{N}_h^{\epo,m^n-1}(s,a) \vee 1} \right)\overline{V}^{ k^n}_{h+1} \\
	&= \sum_{n=1}^{N_h^k} \eta_n^{N_h^k}  \left(P_{h,s,a}\overline{V}^{k^n}_{h+1} - \frac{\sum_{i= N_h^{(m^n-1,1)} + 1 }^{ N_h^{(m^n,1)}} P_h^{k^i}\overline{V}^{ \nnext,k^i}_{h+1}}{\widehat{N}_h^{\epo, m^n-1}(s,a) \vee 1} \right)    \\
& \overset{\mathrm{(i)}}{=} \sum_{n=1}^{N_h^k} \eta_n^{N_h^k}  \left(\frac{\sum_{i= N_h^{(m^n-1,1)} + 1 }^{ N_h^{(m^n,1)}} P_{h,s,a}}{\widehat{N}_h^{\epo,m^n-1}(s,a) \vee 1} \overline{V}^{k^n}_{h+1} - \frac{\sum_{i= N_h^{(m^n-1,1)} + 1 }^{ N_h^{(m^n,1)}} P_h^{k^i}\overline{V}^{ \nnext,k^i}_{h+1}}{\widehat{N}_h^{\epo, m^n-1}(s,a) \vee 1} \right)   \\
&= \sum_{n=1}^{N_h^k} \frac{\eta_n^{N_h^k}}{\widehat{N}_h^{\epo, m^n-1}(s,a) \vee 1}  \sum_{i= N_h^{(m^n-1,1)} + 1 }^{ N_h^{(m^n,1)}}\left(P_{h,s,a} - P_h^{k^i}\right) \overline{V}^{ \nnext,k^i}_{h+1}\\
& \overset{\mathrm{(ii)}}{=} \sum_{i=1}^{N_h^k} \left(\sum_{n= N_h^{(m^i+1,1)}+1}^{N_h^{(m^i+2,1)} \wedge N_h^{k}} \frac{\eta_n^{N_h^k}}{\widehat{N}_h^{\epo,  m^n-1}(s,a) \vee 1}\right)\left(P_{h,s,a} - P_h^{k^i}\right) \overline{V}^{ \nnext,k^i}_{h+1}\\
& = \sum_{i=1}^{N_h^k} \left(\sum_{n= N_h^{(m^i+1,1)}+1}^{N_h^{(m^i+2,1)} \wedge N_h^{k}} \frac{\eta_n^{N_h^k}}{\widehat{N}_h^{\epo, m^i} \vee 1}\right)\left(P_{h,s,a} - P_h^{k^i}\right) \overline{V}^{ \nnext,k^i}_{h+1},
\end{align*}
where (i) follows from the fact that $  N_h^{(m^n,1)} - N_h^{(m^n-1,1)} = \widehat{N}_h^{\epo, m^n-1}(s,a)$, and (ii) is obtained by rearranging terms with respect to  $i$ (the terms with respect to $\overline{V}^{ \nnext,k^i}_{h+1}$ will only be added during the epoch $m^i+1$), and the last equality holds since $m^n -1 = m^i$ for all $n= N_h^{(m^i+1,1)}+1, N_h^{(m^i+1,1)}+2, N_h^{(m^i+2,1)}$. 

With the above relation in mind, we are ready to invoke Lemma~\ref{lemma:martingale-union-all} to control $U_2$. 
To continue,  for any episode $j\leq k$, let us denote by $m(j)$ the index of the epoch in which episode $j$ happens (with slight abuse of notation). Let us set 
\begin{align*}
  W_{h+1}^j \coloneqq \overline{V}_{h+1}^{\nnext,j}, \qquad \text{and} \qquad u_h^j(s,a,N) \coloneqq \sum_{n=N_h^{(m(j)+1,1)}+1}^{N_h^{(m(j)+2,1)} \wedge N} \frac{\eta_n^N}{\widehat{N}_h^{\epo, m(j)}(s,a) \vee 1}. 
\end{align*}
As a result, we see that
\begin{align*}
  \|W_{h+1}^{j}\|_\infty \leq \|\overline{V}^{\nnext,j}_{h+1}\|_\infty \leq H \eqqcolon C_{\mathrm{w}}
\end{align*}
and the following fact (which will be established in Appendix~\ref{sec:proof:equ:stepsize-bound})
\begin{align}
  0 \leq u_h^j(s,a,N) = \sum_{n=N_h^{(m(j)+1,1)}+1}^{N_h^{(m(j)+2,1)} \wedge N} \frac{\eta_n^N}{\widehat{N}_h^{\epo, m(j)}(s,a) \vee 1} \leq \frac{64 e^2 \iota }{N \vee 1} \eqqcolon C_u\label{equ:stepsize-bound}
\end{align}
holds for all $(j,h,s,a) \in [K]\times [H]\times \cS\times \cA$ with probability at least $1-\delta$.

Given that $N = N_h^k(s,a) = N_h^k$, applying Lemma~\ref{lemma:martingale-union-all} with the above quantities, we can show that for any state-action pair $(s,a)\in\cS\times \cA$,
\begin{align}
  \left|U_2\right| &= \left|\sum_{i=1}^{N_h^k} \left(\sum_{n= N_h^{(m^i+1,1)}+1}^{N_h^{(m^i+2,1)} \wedge N_h^{k}} \frac{\eta_n^{N_h^k}}{\widehat{N}_h^{\epo,m^i} \vee 1}\right)\left(P_{h,s,a} - P_h^{k^i}\right) \overline{V}^{ \nnext,k^i}_{h+1}\right| = \left|\sum_{j=1}^k X_j\left(s,a, h, N_h^k\right)\right| \nonumber \\
  &\lesssim \sqrt{C_{\mathrm{u}} \log^2\frac{SAT}{\delta}}\sqrt{\sum_{i = 1}^{N_h^k(s,a)} u_h^{k_h^i(s,a)}(s,a,N) \Var_{h, s,a} \big(W_{h+1}^{k_h^i(s,a)}  \big)} + \left(C_{\mathrm{u}} C_{\mathrm{w}} + \sqrt{\frac{C_{\mathrm{u}}}{N \vee 1}} C_{\mathrm{w}}\right) \log^2\frac{SAT}{\delta} \nonumber\\
  &  \lesssim \sqrt{\frac{\iota^3}{N_h^k \vee 1}}\sqrt{\frac{1}{N_h^k \vee 1}\sum_{i= 1}^{N_h^k} \Var_{h, s,a}\big(\overline{V}^{ \nnext ,k^i}_{h+1}\big)} + \frac{H \iota^3 }{N_h^k \vee 1}\nonumber\\
  &  \lesssim \sqrt{\frac{\iota^3}{N_h^k \vee 1}}\sqrt{\sigma^{\re , k^{N_h^k}+1}_h(s,a) - \big(\mu^{\re , k^{N_h^k}+1}_h(s,a) \big)^2} + \frac{H \iota^3}{(N_h^k \vee 1)^{3/4}} .
  \label{lemma1:equ5}
\end{align}
To streamline the presentation of the analysis, we shall postpone the proof of \eqref{lemma1:equ5} to Appendix~\ref{sec:proof:eq:var-Vref}.

\paragraph{Step 3: summing up.} Combining the bounds in \eqref{lemma1:equ4} and \eqref{lemma1:equ5} yields that: for any state-action pair $(s,a)\in\cS\times \cA$,
\begin{align}
  \bigg| \sum_{n = 1}^{N_h^k(s,a)} \eta^{N_h^k(s,a)}_n \xi_h^{k^n}\bigg|  &\leq |U_1| +|U_2| \nonumber\\
  & \lesssim~\sqrt{\frac{H\iota^2}{N_h^k \vee 1}}\sqrt{\sigma^{\adv, k^{N_h^k}+1}_h(s,a) - \big(\mu^{\adv, k^{N_h^k}+1}_h(s,a) \big)^2} \nonumber \\
  &\quad + \sqrt{\frac{\iota^3}{N_h^k \vee 1}}\sqrt{\sigma^{\re , k^{N_h^k}+1}_h(s,a) - \big(\mu^{\re , k^{N_h^k}+1}_h(s,a) \big)^2} + \cb \frac{H^{7/4}\iota^2}{(N_h^k \vee 1)^{3/4}} + \cb\frac{H^2\iota^2}{N_h^k \vee 1}  \nonumber\\
  & \leq \overline{\sumb}^{k^{N_h^k}+1}_h(s,a) + \cb \frac{H^{7/4}\iota^2}{(N_h^k \vee 1)^{3/4}} + \cb\frac{H^2\iota^2}{N_h^k \vee 1} 
  \label{lemma1:equ6}
\end{align}
holds for some sufficiently large constant $\cb>0$, 
where the last line follows from the definition of $\overline{\sumb}^{k^{N_h^k}+1}_h(s,a)$ in line~\ref{eq:line-number-19} of Algorithm~\ref{algo:subroutine}.
As a consequence of the inequality~\eqref{lemma1:equ6}, for any $(s,a)\in\cS \times \cA$, one has 
\begin{align*}
   \bigg| \sum_{n = 1}^{N_h^k} \eta^{N_h^k}_n \xi_h^{k^n} \bigg| \le \overline{\sumb}^{k^{N_h^k}+1}_h(s,a) + \cb \frac{H^{7/4}\iota^2}{(N_h^k \vee 1)^{3/4}} + \cb\frac{H^2\iota^2}{N_h^k \vee 1} 
  \leq \sum_{n = 1}^{N_h^k} \eta^{N_h^k}_n \overline{b}^{k^n+1}_h,
\end{align*}
where the last inequality holds due to \eqref{lemma1:equ10}.
We have thus concluded the proof of Lemma~\ref{lem:lcb-adv-bonus-upper}. 

\subsubsection{Proof of inequality~\eqref{lemma1:equ4}}
\label{sec:proof:lemma1:equ4} 

To establish the inequality~\eqref{lemma1:equ4}, it is sufficient to consider the difference  
$$W_1 := \sum_{n = 1}^{N_h^k} \eta^{N_h^k}_n\Var_{h, s,a}(V^{k^n}_{h+1} - \overline{V}^{k^n}_{h+1}) - {\sigma}^{\adv, k^{N_h^k}+1}_h(s,a) + ({\mu}^{\adv, k^{N_h^k}+1}_h(s,a))^2. $$
Before continuing, it is easily verified that if $N_h^k= N_h^k(s,a) = 0$, the basic fact $\sum_{n = 1}^{{N_h^k}} \eta_n^{N_h^k} = 0$ leads to $W_1= 0$, and therefore, \eqref{lemma1:equ4} holds directly. The remainder of the proof is thus dedicated to controlling $W_1$ when $N_h^k= N_h^k(s,a) \geq 1$.
Recalling the definition in \eqref{lemma1:equ2}
\begin{equation}
\Var_{h, s,a}(V_{h+1}^{k^n} - \overline{V}_{h+1}^{k^n}) = P_{h, s,a}(V_{h+1}^{k^n} - \overline{V}_{h+1}^{k^n})^{2} - \left(P_{h, s,a}(V_{h+1}^{k^n} - \overline{V}_{h+1}^{k^n}) \right)^2, \label{eq:variance-nonnegative}
\end{equation}
we can take this result together with \eqref{eq:recursion_mu_sigma_adv} to yield
\begin{align}
W_1 & = \sum_{n = 1}^{N_h^k} \eta^{N_h^k}_n P_{h, s,a}(V_{h+1}^{k^n} - \overline{V}_{h+1}^{k^n})^{2} - \sum_{n = 1}^{N_h^k} \eta^{N_h^k}_n \left(P_{h, s,a}(V_{h+1}^{k^n} - \overline{V}_{h+1}^{k^n}) \right)^2  \nonumber \\
&\qquad \qquad+ \Big(\sum_{n = 1}^{{N_h^k}} \eta_n^{N_h^k} P^{k^n}_{h}(V^{k^n}_{h+1} - \overline{V}^{k^n}_{h+1})\Big)^2  - \sum_{n = 1}^{{N_h^k}} \eta_n^{N_h^k} P_h^{k^n}(V_{h+1}^{k^n} - \overline{V}_{h+1}^{k^n})^2 \nonumber\\
& \le  \underbrace{ \Bigg|\sum_{n = 1}^{{N_h^k}} \eta_n^{N_h^k}(P^{k^n}_{h}-P_{h, s,a})(V^{k^n}_{h+1} - \overline{V}^{ k^n}_{h+1})^{ 2} \Bigg| }_{=: W_{1}^1}  \nonumber \\
&\qquad \qquad+ \underbrace{ \Big(\sum_{n = 1}^{{N_h^k}} \eta_n^{N_h^k} P^{k^n}_{h}(V^{k^n}_{h+1} - \overline{V}^{ k^n}_{h+1})\Big)^2  - \sum_{n = 1}^{{N_h^k}} \eta_n^{N_h^k} \Big(P_{h, s,a}(V^{k^n}_{h+1} - \overline{V}^{ k^n}_{h+1})\Big)^2}_{=: W_{1}^2}. \label{equ:lemma4 vr 2}
\end{align}
It then boils down to control the above two terms in \eqref{equ:lemma4 vr 2} separately when $N_h^k = N_h^k(s,a) \geq 1$.

\paragraph{Step 1: controlling $W_1^1$.} 
To control $W_1^1$, we shall invoke Lemma~\ref{lemma:martingale-union-all} by setting 
    \begin{align*} 
      W_{h+1}^{i} \coloneqq (V_{h+1}^i - \overline{V}_{h+1}^i)^2, \qquad \text{and} \qquad u_h^i(s,a,N) \coloneqq \eta_{N_h^i(s,a)}^N \geq 0,
    \end{align*}
  which obey
  \begin{align*}
    \|W_{h+1}^{i}\|_\infty \leq \|\overline{V}_{h+1}^i \|_\infty^2 + \|V_{h+1}^i\|_\infty^2 \leq 2H^2 \eqqcolon C_\mathrm{w}.  
  \end{align*}
Invoking the facts in \eqref{U1-cu} and \eqref{eq:sum-uh-k-s-a-123}, we arrive at
\begin{align*}
  \frac{2H}{N \vee 1} &\eqqcolon C_{\mathrm{u}}
\end{align*}
and
\begin{align*}
  0 &\leq \sum_{n=1}^N u_h^{k_h^n(s,a)}(s,a, N) \leq 1, \qquad \forall (N,s,a) \in [K] \times \cS\times \cA.
\end{align*}

Therefore, choosing $N = N_h^k(s,a) = N_h^k$ for any $(s,a)$ and applying Lemma~\ref{lemma:martingale-union-all} with the above quantities, we arrive at,  with probability at least $1-\delta$,
\begin{align}
  |W_1^1| &= \Bigg|\sum_{n = 1}^{{N_h^k}} \eta_n^{N_h^k}(P^{k^n}_{h}-P_{h, s,a})(V^{k^n}_{h+1} - \overline{V}^{ k^n}_{h+1})^{ 2} \Bigg| = \left|\sum_{i=1}^k X_i\left(s,a, h, N_h^k\right)\right| \nonumber\\
  &\lesssim \sqrt{C_{\mathrm{u}} \log^2\frac{SAT}{\delta}}\sqrt{\sum_{n = 1}^{N_h^k} u_h^{k^n}(s,a, N_h^k) \Var_{h,s,a} \big(W_{h+1}^{k^n}  \big)} + \left(C_{\mathrm{u}} C_{\mathrm{w}} + \sqrt{\frac{C_{\mathrm{u}}}{N \vee 1}} C_{\mathrm{w}}\right) \log^2\frac{SAT}{\delta} \nonumber \\
  & \lesssim \sqrt{\frac{H}{N_h^k \vee 1}\iota^2}\sqrt{\sum_{n = 1}^{N_h^k} \eta^{N_h^k}_n \|W_{h+1}^{k^n}\|_\infty^2 } + \frac{{H^3}\iota^2}{N_h^k \vee 1}   \lesssim \sqrt{\frac{H^5}{N_h^k \vee 1}\iota^2} + \frac{H^3\iota^2}{N_h^k \vee 1}.\label{eq:W11-result}
\end{align}

\paragraph{Step 2: controlling $W_1^2$.} Observe that Jensen's inequality gives 
\begin{align}
\Big(\sum_{n = 1}^{{N_h^k}} \eta_n^{N_h^k} P_{h, s,a}(V^{k^n}_{h+1} - \overline{V}^{ k^n}_{h+1})\Big)^2  &\leq \sum_{n = 1}^{{N_h^k}} \eta_n^{N_h^k} \Big(P_{h, s,a}(V^{k^n}_{h+1} - \overline{V}^{ k^n}_{h+1})\Big)^2, \label{eq:extra-result1}
\end{align}
due to the fact $\sum_{n = 1}^{N_h^k} \eta_n^{N_h^k} = 1$ (see \eqref{eq:sum-eta-n-N} and \eqref{equ:learning rate notation}).
Plugging the above relation into \eqref{equ:lemma4 vr 2} gives 
\begin{align}
W_1^2 &\le\Big(\sum_{n = 1}^{{N_h^k}} \eta_n^{N_h^k} P^{k^n}_{h}(V^{k^n}_{h+1} - \overline{V}^{ k^n}_{h+1})\Big)^2 - \Big(\sum_{n = 1}^{{N_h^k}} \eta_n^{N_h^k} P_{h, s,a}(V^{k^n}_{h+1} - \overline{V}^{ k^n}_{h+1})\Big)^2 \nonumber\\
& = \Big(\sum_{n = 1}^{{N_h^k}} \eta_n^{N_h^k} (P^{k^n}_{h}-P_{h, s,a}) (V^{k^n}_{h+1} - \overline{V}^{ k^n}_{h+1})\Big)  \cdot \Big(\sum_{n = 1}^{{N_h^k}} \eta_n^{N_h^k} (P^{k^n}_{h}+P_{h, s,a}) (V^{k^n}_{h+1} - \overline{V}^{ k^n}_{h+1})\Big)  . \label{equ:lemma4vr4}
\end{align}
Note that the first term in \eqref{equ:lemma4vr4} is exactly $|U_1|$ defined in \eqref{eq:defn-U1-1}, which can be controlled by invoking \eqref{lemma1:equ4-sub} to achieve that, with probability at least $1-\delta$,
\begin{align}
& \Big| \sum_{n = 1}^{{N_h^k}} \eta_n^{N_h^k} (P^{k^n}_{h}-P_{h, s,a}) (V^{k^n}_{h+1} - \overline{V}^{ k^n}_{h+1})\Big| \nonumber \\
  & \lesssim \sqrt{\frac{H}{N_h^k \vee 1}\iota^2}\sqrt{\sum_{n = 1}^{N_h^k} \eta^{N_h^k}_n \Var_{h, s,a} \big(V^{k^n}_{h+1} - \overline{V}^{k^n}_{h+1} \big) } + \frac{H^2\iota^2}{N_h^k \vee 1}  \lesssim  \sqrt{\frac{H^3 \iota^2}{N_h^k \vee 1}} + \frac{H^2}{N_h^k \vee 1}\iota^2,
\end{align} 
where the final inequality holds since $\Var_{h, s,a} \big(V^{k^n}_{h+1} - \overline{V}^{k^n}_{h+1} \big)\lesssim H^2$ and the fact in \eqref{eq:sum-eta-n-N}. In addition, the second term in \eqref{equ:lemma4vr4} can be controlled straightforwardly by
\begin{equation*}
\Big|\sum_{n = 1}^{{N_h^k}} \eta_n^{N_h^k} \left(P^{k^n}_{h}+P_{h, s,a}\right) \left(V^{k^n}_{h+1} - \overline{V}^{ k^n}_{h+1}\right)\Big| \le \sum_{n = 1}^{{N_h^k}} \eta_n^{N_h^k} \left( \big\|P^{k^n}_{h}\big\|_1 +\big\|P_{h, s,a}\big\|_1 \right) \big\|V^{k^n}_{h+1} - \overline{V}^{ k^n}_{h+1} \big\|_{\infty} \leq 2H,
\end{equation*}
where we have used the fact in \eqref{eq:sum-eta-n-N}, $\big\|V^{k^n}_{h+1} -\overline{V}^{k^n}_{h+1}\big\|_{\infty}\leq H$ and $\big\|P^{k^n}_{h}\big\|_1 =\big\| P_{h, s,a}\big\|_1=1$.

Taking the above two facts collectively with \eqref{equ:lemma4vr4} yields
\begin{equation} \label{eq:bound_I32}
W_1^2 \lesssim  \sqrt{\frac{H^5 \iota^2}{N_h^k \vee 1}} + \frac{H^3 \iota^2}{N_h^k \vee 1}.
\end{equation}

\paragraph{Step 3: summing up.}
Plugging the results in \eqref{eq:W11-result} and \eqref{eq:bound_I32} back into \eqref{equ:lemma4 vr 2}, we have
\begin{align*} 
W_1 \leq W_1^1 + W_1^2 \lesssim \sqrt{\frac{H^5 \iota^2}{N_h^k \vee 1}} + \frac{H^3 \iota^2}{N_h^k \vee 1},
\end{align*}
which leads to the desired result \eqref{lemma1:equ4} directly.

\subsubsection{Proof of inequality~\eqref{equ:stepsize-bound}}\label{sec:proof:equ:stepsize-bound} 

To begin with, let us recall two pieces of notation that shall be used throughout this proof: 
\begin{enumerate}
  \item $m(j)$: the index of the epoch in which the $j$-th episode occurs.  
  \item $\widehat{N}_h^{\epo, m}(s,a)$: the value of  $\widehat{N}_h^{(m, L_m+1)}(s,a)$, representing the number of visits to $(s,a)$ in the entire $m$-th epoch with length $L_m = 2^m$.
\end{enumerate}
Applying \eqref{equ:binomial} and taking the union bound over $\left( m(j), h, s,a\right) \in [M]\times[H]\times\cS\times\cA$ yield
\begin{align}
  \widehat{N}_h^{\epo, m(j)}(s,a) \vee 1 
  \geq \frac{2^{m(j)}  d_h^{\mu}(s,a)}{8\log\left(\frac{SAT}{\delta}\right)} \label{eq:N-hat-mj}
\end{align}
 with probability at least $1-\delta/2$.

For any epoch $m$, if we denote by $k_{\mathrm{last}}(m)$  the index of the last episode  in the $m$-th epoch, we can immediately see that
\begin{align}
  k_{\mathrm{last}}(m) = \sum_{i=1}^m L_i = \sum_{i=1}^m 2^i = 2^{m+1}-2 \leq 2^{m+1}. \label{eq:k-last-def}
\end{align}
Applying \eqref{equ:binomial} again and taking the union bound over $\left( m(j), h, s,a\right) \in [M]\times[H]\times\cS\times\cA$, 
one can guarantee that for every $n \in [N_h^{(m(j)+1,1)}, N_h^{(m(j)+2,1)}]$,  with probability at least $1-\delta/2$,
\begin{align}
N_h^{(m(j)+1,1)} \leq n \leq N_h^{(m(j)+2,1)} &= N_h^{k_{\mathrm{last}}(m(j)+1)} \nonumber\\ 
&\leq N_h^{2^{m(j)+2}}  
\leq \begin{cases}
 e^2 2^{m(j)+2} d_h^{\mu}(s,a)  & \text{ if } 2^{m(j)+2} d_h^{\mu}(s,a) \geq \log\left(\frac{SAT}{\delta}\right)\\ 
2e^2 \log\left(\frac{SAT}{\delta}\right)& \text{ if } 2^{m(j)+2} d_h^{\mu}(s,a) \leq 2\log\left(\frac{SAT}{\delta}\right)
\end{cases} .\label{eq:N-mj-plus-2}
\end{align}

Combine the above results to yield
\begin{align}
  \begin{cases} \widehat{N}_h^{\epo, m(j)}(s,a) \vee 1 \overset{\mathrm{(i)}}{\geq} \frac{2^{m(j)}  d_h^{\mu}(s,a)}{8\log\left(\frac{SAT}{\delta}\right)}  \overset{\mathrm{(ii)}}{\geq} \frac{1}{32 e^2 \log\left(\frac{SAT}{\delta}\right)} n, & \text{ if } 2^{m(j)+2} \cdot d_h^{\mu}(s,a) \geq \log\left(\frac{SAT}{\delta}\right),\\
  \widehat{N}_h^{\epo,m(j)}(s,a) \vee 1  \geq 1 \overset{\mathrm{(iii)}}{\geq} \frac{1}{2e^2 \log\left(\frac{SAT}{\delta}\right)} n & \text{ if } 2^{m(j)+2}\cdot d_h^{\mu}(s,a) \leq 2\log\left(\frac{SAT}{\delta}\right),
  \end{cases}
\end{align}
where (i) follows from \eqref{eq:N-hat-mj}, (ii) and (iii) hold due to \eqref{eq:N-mj-plus-2}.
As a result, we arrive at
\begin{align*}
  \sum_{n=N_h^{(m(j)+1,1)}+1}^{N_h^{(m(j)+2,1)} \wedge N} \frac{\eta_n^N}{\widehat{N}_h^{\epo,m(j)}(s,a) \vee 1} &\leq \sum_{n=N_h^{(m(j)+1,1)}+1}^{N_h^{(m(j)+2,1)} \wedge N} \frac{32e^2 \log\left(\frac{SAT}{\delta}\right) \eta_n^N}{n} \\
  &\leq \sum_{n=N_h^{(m(j)+1,1)}+1}^{N} \frac{32e^2 \log\left(\frac{SAT}{\delta}\right) \eta_n^N}{n} \leq \frac{64 e^2 \log\left(\frac{SAT}{\delta}\right) }{N \vee 1},
\end{align*}
where the last inequality holds since $\sum_{i = 1}^{N} \frac{\eta^{N}_i}{i} \le \frac{2}{N \vee 1}$ (see Lemma~\ref{lemma:property of learning rate}).

\subsubsection{Proof of inequality~\eqref{lemma1:equ5}}\label{sec:proof:eq:var-Vref} 
In this subsection, we intend to control the following term
$$W_2 :=  \frac{1}{N_h^k \vee 1}\sum_{n = 1}^{{N_h^k}} \Var_{h, s,a}\left(\overline{V}^{\nnext, k^n}_{h+1}\right) - \left( {\sigma}^{\re, k^{N_h^k}+1}_h(s,a) -\left({\mu}^{\re, k^{N_h^k}+1}_h(s,a) \right)^2 \right) $$
for all $(s,a) \in \cS\times \cA$. 
First, it is easily seen that if $N_h^k= 0$, then we have $W_2 =0$ and thus \eqref{lemma1:equ5} is satisfied. Therefore, the remainder of the proof is devoted to verifying \eqref{lemma1:equ5}  when $N_h^k=N_h^k(s,a) \geq 1$. 

Combining the expression \eqref{eq:recursion_mu_sigma_ref} with the following definition
\begin{equation*}
\Var_{h, s,a}\left(\overline{V}_{h+1}^{\nnext, k^n} \right) = P_{h, s,a}\left(\overline{V}_{h+1}^{\nnext, k^n}\right)^{2} - \left(P_{h, s,a}\overline{V}_{h+1}^{\nnext, k^n}\right)^2,
\end{equation*} 
we arrive at
\begin{align}
W_2 &=  \frac{1}{{N_h^k} \vee 1}\sum_{n=1}^{N_h^k} \left(  P_{h, s,a}\left(\overline{V}_{h+1}^{\nnext, k^n}\right)^{2} - \left(P_{h, s,a}\overline{V}_{h+1}^{\nnext, k^n}\right)^2 \right) \nonumber \\  
& \qquad\qquad - \frac{1}{N_h^k \vee 1}\sum_{n = 1}^{{N_h^k}} P_h^{k^n}\left(\overline{V}_{h+1}^{\nnext, k^n}\right)^2 + \left(\frac{1}{N_h^k \vee 1}\sum_{n=1}^{N_h^k} P_h^{k^n} \overline{V}_{h+1}^{\nnext, k^n}\right)^2 \nonumber \\
& =  \underbrace{ \frac{1}{N_h^k \vee 1}\sum_{n = 1}^{{N_h^k}}\left(P_{h, s,a} - P^{k^n}_{h} \right)\left(\overline{V}^{\nnext, k^n}_{h+1}\right)^2 }_{=: W_2^1} + \underbrace{ \left(\frac{1}{N_h^k \vee 1}\sum_{n=1}^{N_h^k} P_h^{k^n} \overline{V}_{h+1}^{\nnext, k^n}\right)^2 - \frac{1}{N_h^k \vee 1}\sum_{n = 1}^{{N_h^k}} \left(P_{h, s,a}\overline{V}^{\nnext, k^n}_{h+1}\right)^2}_{=:W_2^2} . \label{equ:lemma4 vr 6}
\end{align}
In the sequel, we intend to control the terms in \eqref{equ:lemma4 vr 6} separately. 

\paragraph{Step 1: controlling $W_2^1$.}
The first term $W_2^1$ can be controlled by invoking Lemma~\ref{lemma:martingale-union-all} and set
\begin{align*}
  W_{h+1}^i \coloneqq \left(\overline{V}_{h+1}^{\nnext,i}\right)^2, \qquad \text{and} \qquad u_h^i(s,a,N) \coloneqq \frac{1}{N } \eqqcolon C_u. 
\end{align*}
To proceeding, with the fact
\begin{align*}
  \left\|W_{h+1}^{i}\right\|_\infty \leq \left\|\overline{V}^{\nnext,i}_{h+1}\right\|_\infty^2 \leq H^2 \eqqcolon C_{\mathrm{w}} 
\end{align*}
and $N = N_h^k(s,a) = N_h^k$, applying Lemma~\ref{lemma:martingale-union-all} with the above quantities, we have for all state-action pair $(s,a)\in\cS\times \cA$,
\begin{align}
  \left|W_2^1\right| &= \left|\frac{1}{N_h^k}\sum_{n = 1}^{{N_h^k}}\left(P_{h, s,a} - P^{k^n}_{h} \right)\left(\overline{V}^{\nnext, k^n}_{h+1}\right)^2 \right| = \left|\sum_{i=1}^k X_i\left(s,a, h, N_h^k\right)\right| \nonumber \\
  &\lesssim \sqrt{C_{\mathrm{u}} \log^2\frac{SAT}{\delta}}\sqrt{\sum_{n = 1}^{N_h^k(s,a)} u_h^{k_h^n(s,a)}(s,a,N) \Var_{h, s,a} \left(W_{h+1}^{k_h^n(s,a)}  \right)} + \left(C_{\mathrm{u}} C_{\mathrm{w}} + \sqrt{\frac{C_{\mathrm{u}}}{N }} C_{\mathrm{w}}\right) \log^2\frac{SAT}{\delta} \nonumber\\
  &  \lesssim \sqrt{\frac{\iota^2}{N_h^k }}\sqrt{\|W_{h+1}^{i}\|_\infty^2} + \frac{H^2 \iota^2 }{N_h^k }  \lesssim \sqrt{\frac{H^4 \iota^2 }{N_h^k}} + \frac{H^2\iota^2}{N_h^k}. \label{eq:W21-final-result}
\end{align}

\paragraph{Step 2: controlling $W_2^2$.}
Towards controlling $W_2^2$ in \eqref{equ:lemma4 vr 6}, we observe that by Jensen's inequality,
\begin{align*}
  \Big(\frac{1}{{N_h^k }}\sum_{n=1}^{N_h^k} P_{h, s,a} \overline{V}_{h+1}^{\nnext, k^n}\Big)^2  
&\leq \frac{1}{N_h^k }\sum_{n = 1}^{{N_h^k}} \left(P_{h, s,a}\overline{V}^{\nnext,k^n}_{h+1}\right)^2.
\end{align*}
Equipped with this relation, $W_2^2$ satisfies 
\begin{align}
W_2^2&\leq \left(\frac{1}{N_h^k }\sum_{n=1}^{N_h^k} P_h^{k^n} \overline{V}_{h+1}^{\nnext,k^n}\right)^2  - \left(\frac{1}{N_h^k }\sum_{n=1}^{N_h^k} P_{h, s,a} \overline{V}_{h+1}^{\nnext,k^n}\right)^2 \nonumber \\
&= \Big(\frac{1}{N_h^k }\sum_{n=1}^{N_h^k} \left( P_h^{k^n} -P_{h, s,a}\right)  \overline{V}_{h+1}^{\nnext,k^n}\Big) \cdot \Big(\frac{1}{N_h^k}\sum_{n=1}^{N_h^k} \left( P_h^{k^n} +P_{h, s,a}\right)  \overline{V}_{h+1}^{\nnext,k^n}\Big). \label{equ:lemma4vr5}
\end{align}

As for the first term in \eqref{equ:lemma4vr5}, let us set 
\begin{align*}
  W_{h+1}^i \coloneqq \overline{V}_{h+1}^{\nnext,i}, \qquad \text{and} \qquad u_h^i(s,a,N) \coloneqq \frac{1}{N } \eqqcolon C_u, 
\end{align*}
which satisfy 
\begin{align*}
  \big\|W_{h+1}^{i}\big\|_\infty \leq \big\|\overline{V}^{\nnext,i}_{h+1}\big\|_\infty \leq H \eqqcolon C_{\mathrm{w}}.
\end{align*}
For any $(s,a)$, Lemma~\ref{lemma:martingale-union-all} together with the above quantities and $N = N_h^k = N_h^k(s,a)$ gives 
\begin{align*}
  &\left|\frac{1}{N_h^k }\sum_{n=1}^{N_h^k} \left( P_h^{k^n} -P_{h, s,a}\right)  \overline{V}_{h+1}^{\nnext,k^n}\right| \\
  &\lesssim \sqrt{C_{\mathrm{u}} \log^2\frac{SAT}{\delta}}\sqrt{\sum_{n = 1}^{N_h^k(s,a)} u_h^{k_h^n(s,a)}(s,a,N) \Var_{h, s,a} \big(W_{h+1}^{k_h^n(s,a)}  \big)} + \left(C_{\mathrm{u}} C_{\mathrm{w}} + \sqrt{\frac{C_{\mathrm{u}}}{N }} C_{\mathrm{w}}\right) \log^2\frac{SAT}{\delta}\\
  &  \lesssim \sqrt{\frac{\iota^2}{N_h^k }}\sqrt{\left\|W_{h+1}^{k_h^n(s,a)}\right\|_\infty^2} + \frac{H \iota^2 }{N_h^k }  \lesssim \sqrt{\frac{H^2 \iota^2 }{N_h^k}} + \frac{H\iota^2}{N_h^k}
\end{align*}
with probability at least $1-\delta$. 
In addition, the second term can be bounded straightforwardly by
\begin{equation*}
\Big|\frac{1}{N_h^k }\sum_{n=1}^{N_h^k} \big( P_h^{k^n} +P_{h, s,a}\big)  \overline{V}_{h+1}^{\nnext,k^n}\Big| \le \frac{1}{N_h^k}\sum_{n = 1}^{{N_h^k}}  \left( \big\|P^{k^n}_{h}\big\|_1 +\big\|P_{h, s,a}\big\|_1 \right) \big\|\overline{V}_{h+1}^{\nnext,k^n}\big\|_{\infty} \leq 2H,
\end{equation*}
where the last inequality is valid since $\big\|\overline{V}_{h+1}^{\nnext,k^n}\big\|_{\infty}\leq H$ and $\big\|P^{k^n}_{h}\big\|_1 =\big\| P_{h, s,a}\big\|_1=1$. Substitution of the above two observations back into \eqref{equ:lemma4vr5} yields
\begin{equation} 
\label{eq:bound_I42}
W_2^2 \lesssim   \sqrt{\frac{H^4}{N_h^k \vee 1}\iota^2} + \frac{H^2}{N_h^k \vee 1} \iota^2.  
\end{equation}

\paragraph{Step 3: combining the above results.} Plugging the results in \eqref{eq:W21-final-result} and \eqref{eq:bound_I42} into \eqref{equ:lemma4 vr 6}, we reach
\begin{align*} 
W_2 \leq W_2^1 + W_2^2 \lesssim  \sqrt{\frac{H^4}{N_h^k \vee 1}\iota^2} + \frac{H^2}{N_h^k \vee 1}\iota^2,
\end{align*}
thus establishing the desired inequality~\eqref{lemma1:equ5}.

\bibliography{bibfileRL}
\bibliographystyle{apalike} 

\end{document}